\crefname{assumption}{Assumption}{Assumptions}
\crefname{lem}{Lemma}{Lemmas}
\crefname{thm}{Theorem}{Theorems}
\crefname{cor}{Corollary}{Corollaries}
\theoremstyle{plain}
\newtheorem{thm}{\protect\theoremname}[section]
\newtheorem{thm*}{\protect\theoremname}
\newtheorem{lem}{\protect\lemmaname}[section]
\newtheorem{prop}{\protect\propositionname}[section]
\newtheorem{cor}{\protect\corollaryname}[section]
\theoremstyle{definition}
\newtheorem{defn}{\protect\definitionname}
\theoremstyle{remark}
\newtheorem{rem}{\protect\remarkname}
\newtheorem*{rem*}{\protect\remarkname}
\newtheorem{assumption}{\protect\assumptionname}
\newtheorem*{proofsketch}{Proof sketch}
\apptocmd\appendix{\pretocmd\section{\clearpage}{}{}}{}{}
\author[1]{Heishiro Kanagawa\thanks{Correspondence: heishiro.kanagawa@gmail.com}}
\author[2]{Alessandro Barp}
\author[3]{\authorcr Arthur Gretton}
\author[4]{Lester Mackey}
\affil[1]{Newcastle University, UK}
\affil[2]{University College London}
\affil[3]{Gatsby Computational Neuroscience Unit, UCL}
\affil[4]{Microsoft Research New England}
\date{}
\providecommand{\assumptionname}{Assumption}
\providecommand{\corollaryname}{Corollary}
\providecommand{\definitionname}{Definition}
\providecommand{\lemmaname}{Lemma}
\providecommand{\propositionname}{Proposition}
\providecommand{\remarkname}{Remark}
\providecommand{\theoremname}{Theorem}
\newcommand{\ncref}[1]{\cref{#1}: \nameref*{#1}} %
\newcommand{\pcref}[1]{Proof of \ncref{#1}} %
\newcommand*{\nolink}[1]{%
  {\protect\NoHyper#1\protect\endNoHyper\xspace}%
}%
\global\long\def\datdim{d}%
\global\long\def\dimidx{i}%
\global\long\def\seqidx{n}%
\global\long\def\weight{w}%
\global\long\def\polyorder{q}%
\global\long\def\idmat{\mathrm{Id}}%
\global\long\def\plipspace#1{\mathrm{PL}_{#1}}%
\global\long\def\plipconst#1#2{\left\Vert #1\right\Vert _{\mathrm{PL},#2}}%
\global\long\def\degnpolycoef#1#2#3{\tilde{\pi}(#1)_{#2,#3}}%
\global\long\def\finitemomentspace#1{{\cal P}_{#1}}%
\global\long\def\nmodemul#1{\bar{\times}_{#1}}%
\global\long\def\ksd#1#2{\mathrm{KSD}_{#1,#2}}%
\global\long\def\dksd#1#2{\mathrm{DKSD}_{#1,#2}}%
\global\long\def\mmd#1{\mathrm{MMD}_{#1}}%
\global\long\def\calC{\mathcal{C}}%
\global\long\def\rkhs#1{\mathcal{H}_{#1}}%
\newcommand{\toL}[1]{\stackrel{L^{#1}}{\to}}
\begin{document}

\title{Controlling Moments with Kernel Stein Discrepancies }
\maketitle

\begin{abstract}
    Kernel Stein discrepancies (KSDs) 
    measure the quality of a distributional approximation and can be computed 
    even when the target density has an intractable normalizing constant.
    Notable applications include the diagnosis of approximate MCMC samplers and goodness-of-fit tests for unnormalized statistical models.
    The present work analyzes the convergence control properties of KSDs. 
    We first show that standard KSDs used for weak convergence control fail to control moment convergence. To address this limitation, we next provide sufficient conditions under which alternative diffusion KSDs control both moment and weak convergence. 
    As an immediate consequence we develop, for each $q > 0$, the first KSDs known to exactly characterize $q$-Wasserstein convergence. 

\end{abstract}
\global\long\def\score#1{\mathbf{s}_{#1}}%

\global\long\def\hatscore#1{\hat{\mathbf{s}}_{#1}}%

\global\long\def\barscore#1{\bar{\mathbf{s}}_{#1}}%

\global\long\def\tildescore#1{\tilde{{\bf s}}_{#1}}%

\global\long\def\calX{\mathcal{X}}%

\global\long\def\calD{\mathcal{D}}%

\global\long\def\EE{\mathbb{E}}%

\global\long\def\var{\mathrm{Var}}%
\global\long\def\cov{\mathrm{Cov}}%

\global\long\def\calY{\mathcal{Y}}%
\global\long\def\calF{\mathcal{H}}%

\global\long\def\calZ{\mathcal{Z}}%
\global\long\def\calF{\mathcal{F}}%

\global\long\def\inner#1#2{\left\langle #1,#2\right\rangle }%

\global\long\def\la{\langle}%

\global\long\def\ra{\rangle}%
\global\long\def\norm#1{\left\Vert #1\right\Vert }%

\global\long\def\verts#1{\lvert#1\rvert}%

\global\long\def\Verts#1{\lVert#1\rVert}%

\global\long\def\bignorm#1{\big\Vert#1\big\Vert}%

\global\long\def\Normal#1#2{\mathcal{N}(#1,#2)}%

\global\long\def\dto{\overset{d}{\to}}%
\global\long\def\pto{\overset{p}{\to}}%

\global\long\def\diag{\mathrm{diag}}%

\global\long\def\Pr{\mathrm{Pr}}%

\global\long\def\bfone{\mathbf{1}}%

\global\long\def\bfzero{\mathbf{0}}%

\global\long\def\iidsim{\overset{\mathrm{i.i.d.}}{\sim}}%

\global\long\def\indsim{\overset{\mathrm{ind.}}{\sim}}%

\global\long\def\calX{\mathcal{X}}%

\global\long\def\calD{\mathcal{D}}%

\global\long\def\Ex{\mathbb{E}}%

\global\long\def\inner#1#2{\left\langle #1,#2\right\rangle }%

\global\long\def\la{\langle}%

\global\long\def\ra{\rangle}%

\global\long\def\Verts#1{\lVert#1\rVert}%

\global\long\def\bignorm#1{\big\Vert#1\big\Vert}%

\global\long\def\do#1{\mathrm{do}(#1)}%

\global\long\def\given{\vert}%

\global\long\def\biggiven{\big\vert}%

\global\long\def\Biggiven{\Big\vert}%

\newcommand\independent{\protect\mathpalette{\protect\independenT}{\perp}}
\def\independenT#1#2{\mathrel{\rlap{$#1#2$}\mkern2mu{#1#2}}} 

\global\long\def\bbR{\mathbb{R}}%

\global\long\def\bbN{\mathbb{N}}%

\global\long\def\tr#1{\mathrm{tr}\left[#1\right]}%

\global\long\def\lflo{\lfloor}%

\global\long\def\rflo{\rfloor}%

\global\long\def\floor#1#2{\left\lfloor #1,#2\right\rfloor }%

\section{Introduction}\label{sec:intro}
\begin{sloppypar}
Consider the problem of computing the expectation $\EE_{P}[f]=\int f(x)\dd P(x)$ 
of a function $f$ on $\bbR^{\datdim}$ with respect to a probability
distribution $P.$ 
We will specifically focus on the scenario where $P$ is defined by a density function with an unknown normalization constant. 
In such cases, exact computation of the expectation is typically intractable, necessitating the use of approximation methods.
Numerical integration methods produce estimates of the form $\sum_{i=1}^\seqidx \weight_i f(x_i)$ by combining evaluations of the integrand. 
The associated weights $\weight_i$ and points $x_i$ can be considered as an approximating probability distribution $Q=\sum_{i=1}^\seqidx\weight_i \delta_{x_i}$ if $\sum_{i=1}^\seqidx \weight_i = 1$ and $\weight_i \geq 0$. 
Various techniques are available, and foremost among them is Markov chain Monte Carlo~\citep[MCMC,][]{Liu_2004} ensuring accurate estimates as $\seqidx$ tends to infinity.
\end{sloppypar}

The focus of this paper is on assessing the quality of such sample approximations. 
Our pursuit is motivated by the following two considerations. 
First, in practice, it is challenging to vouch for approximation quality at \emph{finite} sample sizes, even for established methods like MCMC. 
Second, 
practitioners are 
increasingly turning to asymptotically inexact methods that offer greater computational convenience at the expense of persistent inferential bias. 
One illustrative example can be found in modern Bayesian statistics, where 
exact MCMC becomes computationally unattractive due to a large number of datapoint terms or factors in the likelihood. %
In this setting, cheaper but asymptotically biased alternatives, such as approximate MCMC~\citep{Welling2011, Ma2015, Dal17, Durmus_2017} or variational Bayesian methods~\citep{Blei2017}, are often employed. 
Regardless of the type of method used, quantifying the quality
of approximation is desirable since we can then diagnose and compare candidate methods to obtain more accurate estimates. 
This objective demands a quality measure that is computable for a broad range of approximation methods.

This problem may be approached using \emph{Stein discrepancies}, a
class of measures of discrepancy between probability distributions
\citep{GorMac2015}. The key idea behind a Stein discrepancy is that
we can construct functions that are integrated to zero under the target distribution;
such functions may be used to measure the discrepancy of another distribution,
since a non-zero expectation under the distribution indicates its
deviation from the target. Formally, a Stein discrepancy of a distribution
$Q,$ with respect to target $P,$ is defined as the worst-case quantity
\[
\sup_{g\in{\cal {\cal G}}}\left|\EE_{Q}[{\cal T}_{P}g]\right|,
\]
where ${\cal T}_{P}$ and ${\cal G}$ are a \emph{Stein operator}
and a \emph{Stein set} (a function class in the domain of ${\cal T}_{P}$),
inducing functions ${\cal T}_{P}g$ whose expectations vanish under
$P.$ Different choices of the operator and the function class yield
distinct Stein discrepancies. A prototypical example is the Langevin
Stein operator \citep{GorMac2015,OatGirCho2017}, which may be defined
without the normalization constant; and two major classes based on
this operator are the graph Stein discrepancies \citep{GorMac2015,gorham2016measuring}
and the kernel Stein discrepancy (KSD) \citep{ChwStrGre2016,LiuLeeJor2016,OatGirCho2017,GorMac2017}.
Notably, it is possible to compute these discrepancies: the KSD circumvents
the supremum in the definition and admits a closed-form expression
involving kernel evaluations on samples, whereas the graph Stein discrepancy
involves solving a linear program, 
This computational advantage has sparked numerous statistical applications involving unnormalzed densities, 
including  
goodness-of-fit testing~\citep{ChwStrGre2016, LiuLeeJor2016, JitXuSzaFukGre2017,Kanagawa_2023}, 
parameter inference~\citep{BarpBriolDuncanEtAl2019Minimum,GrathwohlEtAl2020,Matsubara_2022},
and sampling~\citep{LiuLee2017,Chen2018, Chen2019, Hodgkinson2020, Riabiz_2022}. 
See the survey by \citet{Anastasiou_2023} for an overview. 

While successfully avoiding the intractable expectation, Stein discrepancies
may not be immediately interpreted in terms of the expectation $\EE_{P}[f]$
of interest. Prior work therefore investigated how Stein discrepancies
characterize closeness in expectation under suitable regularity conditions
on the integrand. 
With the exception of \citet{Barp2024}, the foregoing studies used Stein's method \citep{Stein1972bound}
to relate Stein discrepancies to integral probability metrics (IPMs)
\citep{Mueller1997Integral}, which measure the worst-case difference
in expectations with respect to given classes of functions.  \citet{GorMac2015}
showed that the Langevin graph Stein discrepancy controls the 1-Wasserstein
distance (the IPM defined by $1$-Lipschitz functions) for distantly
dissipative target distributions; \citet{gorham2016measuring} later
generalized this result to heavy-tailed targets with diffusion graph
Stein discrepancies. \citet{GorMac2017} proved that the Langevin
KSD with the inverse multi-quadratic kernel (IMQ) controls the bounded-Lipschitz
metric; \citet{Chen2018} offer other kernel choices. 
Feature Stein discrepancies by \citet{HugMac2018} can control the bounded-Lipschitz
metric, with estimators computable in near-linear time with respect to the sample size.
Using a distinct approach from the aforementioned works, \citet{Barp2024}
showed that the separating property of the KSD is equivalent to having
(tight) weak convergence control (i.e., convergence in expectations
of bounded continuous functions). 

Statistical analysis often requires handling unbounded functions; e.g., basic
statistical measures such as mean and variance are the expectation
of linear and quadratic functions. 
That said, it remains an open question 
to which class of unbounded integrands, Stein discrepancies are guaranteed to indicate the quality of expectation approximations.
This question has been in part tackled by works
on graph Stein discrepancies \citep{GorMac2015,gorham2016measuring}.
These analyses, however, are limited to linearly growing functions
-- it is unclear how these results extend to functions of faster
growth. Moreover, despite its computational appeal over the graph Stein discrepancies,
the KSD has only been shown to control weak convergence. 
These observations highlight a gap in our understanding of the broader applicability of Stein discrepancies.

This article aims to extend the reach of the KSD to functions of arbitrary
polynomial growth. In particular, we investigate conditions under
which the KSD controls the convergence of expectations of polynomially
growing continuous functions (\emph{$\polyorder$-Wasserstein convergence}, defined formally in Section \ref{subsec:KSD-convergence-and}). 
Our specific contributions are threefold. 
First, our analysis considers the diffusion kernel Stein discrepancy \citep{BarpBriolDuncanEtAl2019Minimum},
a generalization of the Langevin KSD \citep{ChwStrGre2016,LiuLeeJor2016,OatGirCho2017,GorMac2017}. This extension allows us to consider a broader target class, including heavy-tailed distributions, for which the Langevin KSD has proved inadequate to control weak convergence~\citep{GorMac2017, Barp2024}. 
Second, we show that standard KSDs used for controlling weak convergence fail to control moment convergence. %
As our third contribution, we address this limitation by identifying, for the first time, specific practical reproducing kernels that provide polynomial convergence control.
Consequently, we establish the first Stein discrepancies known to exactly control $\polyorder$-Wasserstein convergence for each $\polyorder>0$. 

The rest of the article is organized as follows. 
Section \ref{sec:Diffusion-kernel-Stein} introduces the KSD and its associated Stein operator. 
\cref{sec:Main-results} presents the motivating negative result of standard KSDs, followed by two main results addressing this shortcoming. 
In \cref{subsec:KSD-convergence-and}, 
our first result, building on the work of \citet{Barp2024}, establishes sufficient conditions on the kernel to facillitate the control of $\polyorder$-Wasserstein convergence by the KSD. 
In Section \ref{subsec:Rate-of--Wasserstein},
building on the finite Stein factor results of \citet{Erdogdu2018}, our
second result provides an explicit KSD upper bound on the IPM defined
by a class of pseudo-Lipschitz functions, a polynomial generalization
of Lipschitz functions. This bound enables us to describe a rate of Wasserstein convergence in terms of the KSD 
and to obtain an explicit bound on the $\polyorder$-Wasserstein distance.
Our experiments in \cref{sec:Numerical-illustration} confirm the claimed convergence
control property. We conclude in Section \ref{sec:Conclusion}. 

\subsection*{Notation and definitions}

We define the inner product $\la A, B\ra$ between multi-dimensional arrays $A,B \in \bbR^{\datdim_1\times \dots \times \datdim_n}$ 
by the sum of the elementwise product $\sum A_{\dimidx_1,\dots, \dimidx_n} B_{\dimidx_1, \dots, \dimidx_n}$, where $A_{\dimidx_1,\dots, \dimidx_n}$ and $B_{\dimidx_1,\dots, \dimidx_n}$ are the $(\dimidx_1, \dots, \dimidx_n)$-entry of the respective arrays. 
For $A\in \bbR^{\datdim_1\times \dots \times \datdim_n}$, we define its Frobenius norm by $\Verts{A}_{\mathrm{F}}=\sqrt{\la A, A\ra}$
and its operator norm by 
\[ \Verts{A}_{\mathrm{op}} \coloneqq \sup_{ \Verts{u^{(1)}}_2=\cdots=\Verts{u^{(n)}}_2=1 } \verts{ \la A, u^{(1)}\otimes \cdots \otimes u^{(n)} \ra }, \]
where $u^{(1)}\otimes \cdots \otimes u^{(n)} \in \bbR^{\datdim_1 \times \cdots \times \datdim_n}$ 
denotes the outer product of $(u^{(i)}\in\bbR^{\datdim_i})_{i=1}^{n}$; i.e., 
\[ 
    (u^{(1)}\otimes \cdots \otimes u^{(n)})_{\dimidx_1, \dots, \dimidx_n} = u^{(1)}_{\dimidx_1} \cdots u^{(n)}_{\dimidx_n} .
\]
The Euclidean norm of $a = (a_1, \dots, a_{\datdim}) \in \bbR^{\datdim}$ is denoted by $\Verts{a}_2 (=\Verts{a}_{\mathrm{op}}=\Verts{a}_{\mathrm{F}})$. 
With $\nabla = (\partial_1, \dots, \partial_{\datdim})^{\top}$, 
we define $\nabla^i$ to act on $g: \bbR^{\datdim} \to \bbR^{\datdim'}$ such that $\nabla^i g: \bbR^{\datdim} \to \bbR^{(\times_{k=1}^i \datdim) \times \datdim'}$ and $(\nabla^i g(x))_{j_1,\dots, j_{i}, j}=\partial_{j_1}\cdots \partial_{j_{i}} g_j(x)$, 
where $\partial_{\dimidx}$ stands for the partial derivative with respect to the $\dimidx$th coordinate. 
The divergence of a vector-valued function ${v}(x) = ({v}_1(x),\dots,{v}_{\datdim}(x))$ is defined as $\la \nabla, {v}(x) \ra \coloneqq \sum_{\dimidx=1}^\datdim \partial_{\dimidx} {v}_i(x)$. 
For $v\in \bbR^{\datdim_1}$ and $A\in\bbR^{\datdim_1\times \datdim_2}$, we define a vector-matrix product $\la v, A\ra \in \bbR^{\datdim_2}$ by $(\la v, A\ra)_i \coloneqq \sum_{j=1}^{\datdim_1} v_j A_{ji}$. 
Following this notation, for a matrix-valued function $A:\bbR^{\datdim} \to \bbR^{\datdim\times \datdim}$, we use $\la\nabla, A(x)\ra$ to denote its column-wise divergence, i.e., $\la\nabla,A(x)\ra_{i}=\sum_{j=1}^{\datdim}\partial_{j}A_{ji}(x)$. 
For a bivariate function $K:\bbR^{\datdim}\times\bbR^{\datdim}\to \bbR^{\datdim\times \datdim}$ and $\ell \in \{1,2\}$, we define $\partial_{\ell,i}K(a,b)$  
to be the elementwise partial derivative of $K$ at $(a,b)$ taken with respect to the $\ell$th argument and its $i$th coordinate. 

We denote by ${\cal P}$ the set of Borel probability measures on $\bbR^{\datdim}.$ Both $\int f\dd\mu$ and $\EE_{\mu}[f]$ stand for the expectation of a function $f$ with
respect to $\mu\in{\cal P}.$ 
For $k \geq 0$, the symbol $\calC^{k}(\bbR^{n})$ stands for the set of $\bbR^{n}$-valued
$k$-times continuously differentiable functions on $\bbR^{\datdim},$
and we simply write ${\cal C}^{k}$ for ${\cal \calC}^{k}(\bbR).$ 
We also define the set $\calC_{0}^{k}(\bbR^{n})\subset\calC^{k}(\bbR^{n})$ of
functions whose $j$th partial derivatives vanish at infinity for any $j\in\{0,\dots,k\}$, 
which is equipped with the norm 
$f\mapsto \max_{\verts{\bm{i}}\leq k} \sup_{x\in\bbR^{\datdim}}\Verts{\partial^{\bm{i}} f(x)}_2$, 
where $\partial^{\bm{i}} = \partial_{1}^{i_1}\cdots \partial_{d}^{i_d}$ with $\bm{i}=(i_1, \dots, i_d)$ and $\verts{\bm{i}}=i_1+\cdots+i_d$. 
We define $L^{\polyorder}(\bbR^{n},\mu)=\{f:\bbR^{\datdim}\to\bbR^{n}:\int\Verts f_{2}^{\polyorder}\dd\mu<\infty\};$
when $n=1,$ we similarly use the shorthand $L^{\polyorder}(\mu).$
The maximum (resp. minimum) of two real numbers $a,b$ is denoted
by $a\lor b$ (resp. $a\land b$); in particular, we use $(x)_{+}$ 
to denote $(0\lor x)$. 
Similarly, for a real-valued function $f$, we define $f^+(x)= (f(x)\lor0)$ and $f^{-}(x) = -(f(x)\land 
 0)$. 
For a set $S$ and $A\subset S$, we define the associated indicator function $1_A:S\to \bbR$ to yield $1$ on $A$ and $0$ otherwise. If $A\subset S$ is given in the form $\{s:S: f(s) \in B\}$, we also use the shorthand $1\{f(s)\in B\}$. 
The (order-$0$) \emph{generalized Fourier transform}~\citep[Definition 8.9]{Wendland2004} of (polynomially-increasing) function $\Phi\in \calC:\bbR^{\datdim}\to\bbR$ is defined as a measurable function $\hat{\Phi}:\bbR^{\datdim}\to \bbR$ satisfying the following: (a) $\hat{\Phi}$ is square integrable on every compact subset of $\bbR^{\datdim}\setminus\{0\}$ with respect to the Lebesgue measure, and (b) $\int \Phi(x) \hat{\gamma}(x) \dd x = \int \hat{\Phi}(x) \gamma(x) \dd x$, where $\gamma$ is an arbitrary Schwartz function and $\hat{\gamma}$ denotes its Fourier transform. 

\section{Diffusion kernel Stein discrepancies} \label{sec:Diffusion-kernel-Stein}

In this section, we recall the definition of the kernel Stein discrepancy
(KSD), a discrepancy measure derived by combining a diffusion Stein
operator and a reproducing kernel Hilbert space. 
In the process, we introduce our notation and assumptions
made throughout the paper. 

The \emph{diffusion Stein operator} \citep{gorham2016measuring,BarpBriolDuncanEtAl2019Minimum}
is a class of Stein operators that characterizes probability distributions
on $\bbR^{\datdim}$ where $\datdim\ge1.$ Suppose $P$ is a probability
distribution on $\bbR^{\datdim},$ defined by a density function $p$
with respect to the Lebesgue measure; we assume that $p$ is differentiable
and positive everywhere. Following \citet{BarpBriolDuncanEtAl2019Minimum},
we define a diffusion Stein operator ${\cal T}_{P}^{m}$ as an operator
that takes as input a vector-valued differentiable function $v:\bbR^{\datdim}\to\bbR^{\datdim}$
and outputs a real-valued function as follows: 
\begin{align}
{\cal T}_{P}^{m}v(x) & =\frac{1}{p(x)}\la\nabla,p(x)m(x)v(x)\ra,\label{eq:diff-Stein-op}
\end{align}
where $m:\bbR^{\datdim}\to\bbR^{\datdim\times\datdim}$ is a differentiable
matrix-valued function called a \emph{diffusion matrix}. 
A specific choice
of the diffusion matrix defines a particular Stein operator. The \emph{Langevin
Stein operator} \citep{GorMac2015} is a special case of this class 
and is given by taking the identity matrix $m\equiv\idmat.$ With
an appropriate choice, we may relate the operator to the generator
of an It\^{o} diffusion process, which will be critical in our analysis
in Section \ref{subsec:Rate-of--Wasserstein}. 
By the divergence theorem \citep[Theorem 2.36]{Pigola2014}, we have
$\EE_{P}\bigl[{\cal T}_{P}^{m}v\bigr]=0$ provided $mv\in L^{1}(\bbR^{\datdim},P)$, $\mathcal{T}_P^mv \in L^1(P)$, and $pmv\in\calC^{1}(\bbR^{\datdim})$.
Thus, the operator indeed generates zero-mean functions. 
Motivated by the final condition, $pmv\in\calC^{1}(\bbR^{\datdim})$, we will assume that $p\in\calC^{1}$ and $m\in{\cal C}^{1}(\bbR^{\datdim\times\datdim})$ hereafter. 

A Stein discrepancy is obtained once 
we specify a function class to be paired with a diffusion Stein operator. 
The KSD is a Stein discrepancy formed with functions from a \emph{reproducing kernel Hilbert space}~\citep[RKHS,][]{aronszajnTheoryReproducingKernels1950, Schwartz_1964}. 
Specifically, we make use of an RKHS $\rkhs{K}$ of $\bbR^{\datdim}$-valued functions. 
For a set $\calX$, a matrix-valued function $K: \calX \times \calX \to \bbR^{\datdim \times \datdim}$ is called a $\bbR^{\datdim}$-reproducing kernel if it satisfies $\sum_{i,j=1}^m\la a_i, K(x_i, x_j) a_j\ra \geq 0$ for all $m\geq1$, $x_1, \dots, x_m\in \calX$, and $a_1,\dots,a_m \in \mathbb{R}^{\datdim}$. 
A kernel $K$ uniquely determines an RKHS $\rkhs{K}$, a Hilbert space of $\bbR^{\datdim}$-valued functions on $\calX$ where point evaluation is a continuous linear functional.
We denote the inner product and canonical norm of the RKHS by $\la \cdot, \cdot\ra_{\rkhs{K}}$ and $\Verts{\cdot}_{\rkhs{K}} = \sqrt{\la \cdot, \cdot \ra_{\rkhs{K}}}$ respectively.  
The RKHS is also characterized by the following convenient 
\emph{reproducing property}: 
for each element $v \in \rkhs{K}$, 
$v_\dimidx(x) = \la v, \varphi_{\dimidx}(x)\ra_{\rkhs{K}}$, 
where $v_\dimidx$ is the $\dimidx$th component of $v$, 
$\varphi_{\dimidx}: x \mapsto K(\cdot, x)e_\dimidx \in \rkhs{K}$ 
with $e_\dimidx$ the $\dimidx$th standard basis in $\bbR^{\datdim}$, 
and $1\leq \dimidx \leq \datdim$. 
For a comprehensive treatment of vector-valued RKHSs, we refer the reader to \citet[Section 2.2]{Carmeli2006}. 

Equipped with the necessary ingredients, 
 we define the KSD as follows, following \citet[Definition 2]{Barp2024}: 
\begin{defn}[Kernel Stein discrepancy]
\label{defn:ksd}
Consider a target $P\in \mathcal{P}$ with positive density $p\in \calC^1$, 
and a matrix-valued kernel $K:\bbR^{\datdim}\times\bbR^{\datdim}\to\bbR^{\datdim\times\datdim}$
 with RKHS $\rkhs{K}$ satisfying the following conditions: 
    $\rkhs{K} \subset \calC^{1}(\bbR^{\datdim})$, 
    $\mathcal{T}_{P}^{\idmat}({\cal H}_{K})=\{\mathcal{T}_{P}^{\idmat}v:v\in{\cal H}_{K}\}\subset L^1(P)$, and 
    $\rkhs{K} \subset L^1(\bbR^{\datdim}, P)$. 
We define the \emph{kernel Stein discrepancy} $\ksd{K}{P}:\mathcal{P} \to [0,\infty]$ by 
\begin{equation}
\ksd KP(Q)\coloneqq \sup_{\Verts v_{\rkhs K}\leq1,\ (\mathcal{T}_P^{\idmat}v)^{-}\in L^1(Q)}\left|\EE_{X\sim Q}\left[{\cal T}_{P}^{\idmat}v(X)\right]\right|. \label{eq:ksd-def}
\end{equation}
\end{defn}
\noindent
This KSD is also referred to as the \emph{Langevin KSD}, as it is derived
from the Langevin Stein operator~${\cal T}_{P}^{\idmat}.$ 
The requirements on $\rkhs{K}$ in our definition guarantee the zero-mean property $\EE_{P}[\mathcal{T}_P^{\idmat}v]=0$ for each $v\in \rkhs{K}$. 
For measures with finite moments, treated in the coming sections, we can enforce the integrability conditions in the definition by choosing a kernel of suitable growth (see \cref{cor:Wass-imply-KSD-conv-kernel-cond}). 
Note that $\rkhs{K} \subset \calC^1(\bbR^{\datdim})$ iff $(x, y)\mapsto \partial_{1, \dimidx}\partial_{2, \dimidx}K(x,y)$ is separately continuous and locally bounded for all $\dimidx \in \{1,\dots, \datdim\}$~\citep[Theorem 2.11]{Micheli_2014}. 
Moreover, the restriction $(\mathcal{T}_P^{\idmat}v)^{-} \in L^1(Q)$ ensures that the KSD is well-defined on $\mathcal{P}$ and takes values in $[0,\infty]$.

Computational tractability is an appeal of the KSD. 
Indeed, the use of an RKHS yields a closed-form expression for the KSD.
To see this, let us define a scalar-valued kernel, \emph{a Stein kernel}
\emph{$k_{p,K}$ of $P$ with base kernel $K,$} by 
\begin{equation}
k_{p,K}(x,y)=\frac{1}{p(x)p(y)}\left\langle \nabla_{y},\la\nabla_{x},p(x)K(x,y)p(y)\ra\right\rangle ,\ x,y\in\bbR^{\datdim},\label{eq:stein-kernel}
\end{equation}
where $\nabla_x$ denotes applying the operator with respect to $x$. 
Note that since $p$ appears in both the numerator and the denominator, 
any unknown normalizing constant cancels out, allowing $k_{p, K}$ to be evaluated without knowing the constant. 
The RKHS~~$\rkhs{k_{p,K}}$ of the Stein kernel $k_{p,K}$ coincides
with $\mathcal{T}_{P}^{\idmat}({\cal H}_{K})$, 
and the KSD may be seen as \emph{the maximum mean discrepancy} (MMD)~\citep{GreBorRasSchSmo2012} defined by $\rkhs{k_{p,K}}$~\citep[Theorem 1]{Barp2024};
that is, the KSD is an IPM~\citep{Mueller1997Integral} (with one argument fixed to $P$)
\begin{align}
d_{{\cal F}}(P,Q)\coloneqq\sup_{f\in{\cal F}: f^{+} \in L^1(Q)\ \text{or}\ f^{-}\in L^1(Q)}\left|\int f\dd P-\int f\dd Q\right|, \label{eq:ipm-definition}
\end{align}
where $\mathcal{F}\subset L^1(P)$ is specified to be the unit ball $\{h\in \rkhs{k_{p,K}}: \Verts{h}_{\rkhs{k_{p,K}}}\leq 1\}$ of $\rkhs{k_{p,K}}$. 
Then, the squared KSD is expressed by a double integral \citep[Corollary 1]{Barp2024}: 
\begin{align}
\ksd KP(Q)^2 & =\int\int k_{p,K}(x,x')\dd Q(x)\dd Q(x'),\label{eq:dksd-sq-def}
\end{align}
provided ${\cal H}_{k_{p,K}}\subset L^{1}(Q)$. 
This expression is convenient particularly when $Q$ is given as
a finitely-supported distribution $Q_{\seqidx}=\sum_{i\leq \seqidx}w_{i}\delta_{x_{i}}$
with $\{x_{1},\dots,x_{\seqidx}\}\subset\bbR^{\datdim},$ $\delta_{x_{i}}$
denoting the Dirac measure having unit mass at $x_{i},$ and nonnegative
weights $w_{i}$ such that $\sum_{i\leq \seqidx}w_{i}=1.$ In this case,
we can compute the KSD exactly: the expression (\ref{eq:dksd-sq-def}) with $Q=Q_\seqidx$ 
results in 
\begin{equation}
    \ksd KP(Q_\seqidx)^2=\sum_{i,j=1}^{\seqidx}w_{i}w_{j}k_{p,K}(x_{i},x_{j}),\label{eq:ksdexact}
\end{equation}
which requires $O(n^2)$ fully parallelizable evaluations of $k_{p, K}$.\footnote{
In the worst case, evaluating the Stein kernel $k_{p,K}$ requires evaluating $O(\datdim^2)$ entries of the base kernel $K$ and its partial derivatives or only $O(\datdim)$ entries if $K$ is diagonal. In either case, the evaluations are again fully parallelizable. 
See Eq.~\eqref{eq:Stein-kernel-expanded} in 
\opt{aapnolink}{\nolink{\cref{subsec:Wass-conv-imply-KSD}} }%
\opt{aap,arxiv}{\cref{subsec:Wass-conv-imply-KSD} }%
with $m\equiv\idmat$ for an alternative expression of $k_{p,K}$. 
See also \cref{sec:Conclusion} for our discussion on the computational complexity of the KSD.}
The measure $Q_\seqidx$ may represent a weighted sample generated from a Monte Carlo algorithm, such as MCMC or sequential Monte Carlo~\citep{Moral2006, Chopin_2020}, or even a deterministic quadrature rule. 
When $Q_\seqidx$ is the empirical measure formed by independent samples from $Q$, 
the quantity $\eqref{eq:ksdexact}$ may alternatively be seen as the \textit{V}-statistic~\citep{Mises1947asymptotic, Ser2009} estimate of \eqref{eq:dksd-sq-def}. 
The KSD is thus a computable discrepancy measure; its computation only involves evaluating a target-dependent kernel on samples. 

In this article, we deal with the \emph{diffusion KSD} of \citet{BarpBriolDuncanEtAl2019Minimum}
\[
\dksd KP(Q)=\sup_{\Verts v_{\rkhs K}\leq1,\ (\mathcal{T}_P^m v)^{-}\in L^1(Q)}\left|\EE_{X\sim Q}\left[{\cal T}_{P}^{m}v(X)\right]\right|,
\]
which is defined by replacing the Langevin Stein operator in (\ref{eq:ksd-def})
with the general diffusion Stein operator (\ref{eq:diff-Stein-op}).
A benefit of a non-constant diffusion matrix $m$ is that it allows us to provide convergence control guarantees for a broader class of targets, including heavy-tailed ones (see \cref{subsec:heavy-tailed}). 
Nonetheless, the diffusion KSD may be treated as a specific instance of the
Langevin KSD: \citet{BarpBriolDuncanEtAl2019Minimum} shows that it
corresponds to the Langevin KSD defined by the kernel $K_{m}(x,y)=m(x)K(x,y)m(y)^{\top}$
tilted by the diffusion matrix $m,$ that is, $\dksd KP(Q)=\ksd{K_{m}}P(Q).$
This fact enables us to apply the theory for the Langevin KSD developed
by \citet{Barp2024}, on which we build our main result (\cref{thm:ksd-equiv-to-Wass-conv}). 

\section{KSD and $\polyorder$-Wasserstein convergence}\label{sec:Main-results}
\subsection*{High-level overview of the results}
We investigate the relationship between the KSD and
the quality of expectation approximation. 
In particular, we are interested in how a decrease in KSD relates to an improvement in the quality of the approximation. 
In this article, we focus on continuous integrands $f$ of polynomial growth. 
A natural starting point is to determine whether the KSD controls convergence in expectations of such functions. 
This question may be formally stated as follows: 
for a given sequence $(Q_{\seqidx})_{\seqidx\geq1}$ of approximating probability measures,
\[
\text{does convergence\ \ensuremath{\ksd KP}(\ensuremath{Q_{\seqidx}}) \ensuremath{\to} 0\ }\text{imply}\ \int f\dd Q_{\seqidx}\to\int f\dd P\ \text{?}
\]
The following sections present three results concerning this question. 
    In Section \ref{subsec:KSD-convergence-and}, we formally define our target mode of convergence ($\polyorder$-Wasserstein convergence) and present a negative result:  
    standard KSDs, when applied with kernels previously recommended for weak convergence control, fail to enforce $q$-Wasserstein convergence~(\cref{prop:ksd-conv-control-failure}).
    This observation highlights the importance of kernel choice. 
    To overcome this limitation, we establish sufficient conditions on the approximation and growth properties of the kernel, demonstrating the equivalence between KSD- and $\polyorder$-Wasserstein convergence under these new conditions~(\cref{thm:ksd-equiv-to-Wass-conv,prop:Wass-conv-implies-KSD}). 
    See also \cref{cor:default-kernel} and the surrounding discussion for a practical recommendation. 
In Section \ref{subsec:Rate-of--Wasserstein},
with additional assumptions, we obtain a rate of $\polyorder$-Wasserstein
convergence in terms of the KSD (\cref{thm:Matern-bound-linear,thm:Matern-bound-quad,thm:Wass-ksd-bound}). 
Together, these results establish
a theoretical foundation for the KSD as a quality measure for distribution approximation. 

\subsection{KSD convergence and its relation to Wasserstein convergence} \label{subsec:KSD-convergence-and}

We begin this section by formalizing our task. For fixed $\polyorder\in[0,\infty),$
define the class of Borel probability measures on $\bbR^{\datdim}$
with finite $\polyorder$th moments by 
\[
\finitemomentspace{\polyorder}\coloneqq\left\{ \mu\in{\cal P}:\int\Verts x_{2}^{\polyorder}\dd\mu(x)<\infty\right\} .
\]
We assume that our target $P$ is an element of $\finitemomentspace{\polyorder}$. 
Our goal is to relate KSD convergence to
the following type of convergence: 
\begin{defn}[$\polyorder$-Wasserstein convergence and $\polyorder$-growth functions]
Let $\polyorder\in[0,\infty)$. Suppose $P\in \finitemomentspace{\polyorder}$. 
For a sequence $(Q_{\seqidx})_{\seqidx\geq1}\subset\mathcal{P}$, 
the $\polyorder$-Wasserstein convergence $Q_{\seqidx} \toL{\polyorder} P$ is defined as the convergence 
\[
\int \verts{f}\dd Q_{\seqidx}\to\int \verts{f}\dd P 
\]
holding true for any continuous function $f:\bbR^{\datdim}\to\bbR$ of
\emph{$\polyorder$-growth}, i.e., such function satisfying the growth condition
\[
\verts{f(x)}\leq C\bigl(1+\Verts x_{2}^{\polyorder}\bigr)
\]
everywhere for some $C>0.$ 
\end{defn}

Equivalently, the above definition states that $Q_{\seqidx}$ eventually enters $\finitemomentspace{\polyorder}$, and then 
$\int f \dd Q_{\seqidx}$ is well-defined and converges to $\int f \dd P$, for each continuous $\polyorder$-growth function $f$. 
Thus, it extends the standard notion of \emph{weak convergence in $\finitemomentspace{\polyorder}$} to $\mathcal{P}$-valued sequences. 
To avoid possible
confusion, we will reserve the
term \emph{weak convergence} for the case $\polyorder=0,$ i.e., convergence 
with respect to all bounded continuous functions, and use the term 
$\polyorder$\emph{-Wasserstein convergence} for $\polyorder>0$. 
This choice is made because the aforementioned type of convergence is also metrized by the $\polyorder$-Wasserstein distance~\citep[Theorems 7.3 and 7.12]{Villani_2003}:
\begin{equation}
    W_{\polyorder}(P, Q) \coloneqq 
    \begin{cases}
    \left( \inf_{\pi \in \Pi(P, Q)} \EE_{X, Y \sim \pi}\bigl[\Verts{X-Y}_2^{\polyorder}\bigr] \right)^{1 \land (1/{\polyorder})} &\ \text{if}\ P, Q \in \finitemomentspace{\polyorder},\\
    \infty &\ \text{otherwise},
    \end{cases}\label{eq:Wass-dist-def}
\end{equation}
where $\Pi(P,Q)$ denotes the set of all possible couplings of the probability measures $P \text{ and } Q$.

To present our result, we make the following assumptions on the target density and the diffusion matrix:
\begin{assumption}[Dissipativity]
\label{assu:dissipativity} Let $b(x)=\la\nabla,p(x)m(x)\ra/\{2p(x)\}.$
For any $x\in\bbR^{\datdim},$ the target density $p$ and the diffusion
matrix $m$ satisfy 
\[
2\la b(x),x\ra+\tr{m(x)}\leq-\alpha\Verts x_{2}^{2}+\beta_{1}\Verts x_{2}+\beta_{0}
\]
where $\tr{m(x)}=\sum_{\dimidx=1}^{\datdim}m_{\dimidx\dimidx}(x)$
is the trace of $m(x),$  $\alpha>0,$ and $\beta_{1},\beta_{0}\geq0.$ 
\end{assumption}

\begin{assumption}[Growth]
\label{assu:poly-grow-coef}Let $b(x)=\la\nabla,p(x)m(x)\ra/\{2p(x)\}.$
The target density $p$ and the diffusion matrix $m$ satisfy 
for any $x\in\bbR^{\datdim},$  
\[
\Verts{b(x)}_{2}\leq\lambda_{b}\bigl(1+\Verts x_{2}\bigr),\ \Verts{m(x)}_{\mathrm{op}}\leq\lambda_{m}\bigl(c_{m}+\Verts x_{2}^{\polyorder_{m}+1}\bigr),
\]
where $c_{m}\geq0,$ $\lambda_{b},\lambda_{m}>0,$ and $\polyorder_{m}\in\{0,1\}.$ 
\end{assumption}

In essence, these two assumptions determine the tail behavior of the target distribution. 
To illustrate, suppose $m\equiv \idmat$; we have $2b(x)=\nabla \log p(x)$ in this case. 
Then, \cref{assu:dissipativity} expresses that the target density has %
strongly log concave tails, akin to those of a Gaussian  
(note that $2b(x)=-x$ for the standard multivariate Gaussian density). 
For example, a Gaussian mixture with shared covariance satisfies this condition~\citep[Example 3]{gorham2016measuring}, 
as well as the linear growth condition in~\cref{assu:poly-grow-coef}. 
For distributions with tails heavier than Gaussian, choosing an appropriate coercive diffusion matrix $m$ allows us to enforce 
the dissipativity \cref{assu:dissipativity}.
We illustrate this point using a Student's {\textit t}-distribution in \cref{subsec:heavy-tailed} and 
\opt{aapnolink}{\nolink{\cref{lem:dissipative-student}}. }%
\opt{aap,arxiv}{\cref{lem:dissipative-student}. }%
\cref{assu:dissipativity,assu:poly-grow-coef} can also be verified with the right choice of $m$ for a
multivariate Student's {\textit t}-regression posterior with a pseudo-Huber prior, as described by \citet[Example 4]{gorham2016measuring}. 

We will show that, for target distributions satisfying these assumptions, the KSD with a suitable kernel characterizes \emph{uniform integrability}, defined as follows: 
\begin{defn}[Eventual uniform integrability of $\polyorder$th moments]
\label{def:ui}
Let $\polyorder\in(0,\infty).$ A sequence $(Q_\seqidx)_{\seqidx \geq 1}\subset\mathcal{P}$
is said to have \emph{uniformly integrable $\polyorder$th moments} iff it
\emph{eventually} uniformly integrates $x\mapsto\Verts x_{2}^{\polyorder},$ i.e., 
\[
\lim_{r\to\infty}\limsup_{\seqidx\to\infty}\int_{\{\Verts x_{2}>r\}}\Verts x_{2}^{\polyorder}\dd Q_{\seqidx}(x)=0.
\]
\end{defn}
It is known that for weakly converging sequences, the $\polyorder$-Wasserstein
convergence is equivalent to the uniform integrability of $\polyorder$th
moments \citep[Theorem 7.12]{Villani_2003}. 
Intuitively, uniform integrability
is a condition enforcing that the probability mass does not diverge
too fast along the sequence. If $\polyorder=0,$ the above definition
is reduced to uniform tightness \citep[see, e.g.,][Chapter 9]{Dudley2002}.
Uniform integrability is a stricter condition in that it requires
the decay rate of the tail probability to stay faster than a degree-$\polyorder$
polynomial. 
Note that in contrast to the standard definition in the literature~\citep[see, e.g.,][Eq. 5.1.19]{Ambrosio2005}, 
our definition allows finitely many initial elements in the sequence to have infinite $\polyorder$th moments.

Under dissipativity assumptions as in \cref{assu:dissipativity}, 
prior work \citep{GorMac2017,Barp2024}
identified conditions for the KSD to enforce tightness and weak convergence,
whereas uniform integrability was not established. 
In fact, standard KSDs used for weak convergence control fall short of our purpose, as they can fail to enforce  uniform integrability and hence to control $\polyorder$-Wasserstein convergence: 

\newcommand{\ksdconvcontrolfailurename}{Standard KSDs fail to control moments}
\begin{prop}[\ksdconvcontrolfailurename]\label{prop:ksd-conv-control-failure}
    Let $\polyorder \in (0,\infty)$ and $P\in \finitemomentspace\polyorder$. 
    Let $K$ be a matrix-valued kernel satisfying the conditions in \cref{defn:ksd}. 
    Suppose 
    $%
    \sqrt{k_{p,K}(x,x)} = O(1+\Verts{x}^{\polyorder'}_2) 
    $ 
    for some $\polyorder'$ with $0 < \polyorder' < \polyorder$. 
    Then, $\ksd{K}{P}(Q_{\seqidx})\to 0$ does not imply $Q_{\seqidx}\toL{\polyorder} P$. %
    Consequently, under \cref{assu:poly-grow-coef},  
    if both $K(x,x)$  and $\{\partial_{1,\dimidx}\partial_{2, \dimidx}K(x,x)\}_{1\leq i \leq d}$  are bounded as functions of $x$, 
    the Langevin $\ksd{K}{P}(\cdot)$ cannot control $q$-Wasserstein convergence for $\polyorder > 1$. 
\end{prop}
The proof can be found in  
\opt{aapnolink}{\nolink{\cref{sec:q-wass-conv-necessary}}.}%
\opt{aap,arxiv}{\cref{sec:q-wass-conv-necessary}.}
Intuitively, this limitation arises from the lack of $\polyorder$-growth functions in the Stein RKHS. 
Indeed, under \cref{assu:poly-grow-coef}, the Stein kernel cannot have super-linear growth with a bounded base kernel, and hence the Langevin KSD is unable to control $\polyorder$th moment convergence for $\polyorder>1$. 
This negative result applies to the most common base kernels used with KSDs, 
including the IMQ and log-inverse kernels  recommended by \citet{GorMac2017} and \citet{Chen2018}, respectively; it also encompasses the B-spline, Gaussian, Mat\'{e}rn, sech, sinc, and Wendland's compactly supported kernels. 
Moreover, our experiment in \cref{subsec:non-conv-variance} demonstrates the inability of the IMQ-KSD to detect non-convergence in the second moment in practice. 
This observation motivates us to seek conditions under which the KSD controls both $\polyorder$th
moment uniform integrability and weak convergence. 
In the following, we first provide sufficient conditions for enforcing uniform integrability.
Then, building on the weak convergence result of \citet{Barp2024},
we prove that the resulting KSD controls $\polyorder$-Wasserstein convergence. 

A sufficient condition for enforcing $\polyorder$th moment uniform
integrability is given as follows: 
\begin{defn}[$\polyorder$-growth approximation]
\label{def:q-growth-approx} Let $\polyorder\in[0,\infty).$ A set
${\cal F}$ of real-valued functions is said to \emph{approximate $\polyorder$-growth}
if, for each $\varepsilon>0,$ there exists $r_{\varepsilon}>0$ and
a function $f_{\varepsilon}\in{\cal F}$ such that for any $x\in\bbR^{\datdim},$
\begin{equation}
f_{\varepsilon}(x) \geq\Verts x_{2}^{\polyorder}1\{\Verts x_{2}>r_{\varepsilon}\}-\varepsilon. \label{eq:def-dominating-indicator}
\end{equation}
We also say $\calF$ approximates $\polyorder$-growth with respect to $P$ 
if $\calF \subset L^1(P)$ and its $P$-centered version $\calF^P = \{f-\EE_P[f]: f\in \calF\}$ approximates $\polyorder$-growth. 
\end{defn}
Definition \ref{def:q-growth-approx} states that using an element of ${\cal F}$, we can either dominate or accurately approximate from below a function that is zero on a ball and grows like $\Verts x_{2}^{\polyorder}$ outside of that ball. 
When $\calF$ is $P$-centered, the case $\polyorder=0$ corresponds to the
indicator dominating property of \citet[Definition 4]{Barp2024},
which is used to enforce tightness with an IPM. 

\cref{lem:ui-IPM} below shows that an IPM can enforce uniform integrability if the defining function class approximates $\polyorder$-growth, which readily yields the conclusion for the KSD (\cref{cor:q-approx-uniform-integrability}). 
\begin{lem}[Controlling UI with growth approximation]
\label{lem:ui-IPM}
Fix $\polyorder\in[0,\infty)$. 
Let $\mathcal{F}$ be a real vector space of functions on $\bbR^\datdim$ and 
$\lambda: \mathcal{F} \to [0,\infty)$ be a function satisfying $\lambda(cf) \leq c \lambda(f)$ for any $c\in (0,\infty)$. 
If $\calF$ approximates $\polyorder$-growth with respect to $P\in\finitemomentspace\polyorder$,  
then, a sequence $(Q_{\seqidx})_{\seqidx\geq1}\subset{\cal P}$ has uniformly integrable $\polyorder$th moments
if $d_{\mathcal{B}}(P, Q_{\seqidx}) \to 0$, where $d_{\mathcal{B}}$ is the IPM \eqref{eq:ipm-definition} defined by $\mathcal{B} = \{f \in \mathcal{F}:  \lambda(f)\leq 1\}$. 
\end{lem}

\begin{proof}
For any $\varepsilon>0,$ by the $\polyorder$-growth approximation
property (\ref{eq:def-dominating-indicator}) of $\calF^P$, there exists $f_{\varepsilon}\in \calF$ 
and $r_{\varepsilon}>0$ such that 
\[
\int_{\{\Verts x_{2}>r_{\varepsilon}\}}\Verts x_{2}^{\polyorder}\dd Q_{\seqidx}(x)
\leq\int f_{\varepsilon} \dd Q_{\seqidx} - \int f_{\varepsilon}\dd P +\varepsilon  \leq \{1\lor \lambda(f_\varepsilon)\} d_{\mathcal{B}}(P, Q_{\seqidx})+\varepsilon.
\]
Taking the limit $\seqidx\to \infty$ concludes the proof. 
\end{proof}
\begin{cor}[Enforcing UI with KSD]
    \label{cor:q-approx-uniform-integrability} 
    Suppose $P$ and $K$ satisfy the conditions in \cref{defn:ksd}. 
    Assume that the Stein RKHS ${\cal T}_{P}^{m}({\cal H}_{K})$ approximates $\polyorder$-growth.
    Then, $(Q_{\seqidx})_{\seqidx\geq 1} \subset \mathcal{P}$ has uniformly integrable $\polyorder$th moments 
    if $\ksd{K_{m}}P(Q_{\seqidx})\to0$ as $n\to\infty$.
\end{cor}

We thus explore sufficient conditions on the reproducing kernel to guarantee $\polyorder$-growth approximation and weak convergence control. 
In what follows, we show that it suffices to use a kernel of the following form: 
\begin{defn}[Basic kernel form]
\label{def:kernel-form}
Let $\polyorder \in [0,\infty)$. 
Let $K:\bbR^{\datdim}\times\bbR^{\datdim}\to\bbR^{\datdim\times\datdim}$
be a matrix-valued kernel defined by 
\begin{equation}
K(x,y)=\weight_{\polyorder-1}(x)\left(L(x,y)+\bar{k}_{\mathrm{lin}}(x,y)\idmat\right)\weight_{\polyorder-1}(y)
,\label{eq:q-growth-approx-kernel}
\end{equation}
where, for $s \in \bbR$, $\weight_s(x)=\bigl(\tau^{2}+\Verts x_{2}^{2}\bigr)^{s/2}$ with $\tau>0$, 
$L$ is a matrix-valued kernel with~$\rkhs{L} \subset \calC^1(\bbR^{\datdim})$, 
and 
\[
\bar{k}_{\mathrm{lin}}(x,y)=\frac{k_{\mathrm{lin}}(x,y)}{\sqrt{k_{\mathrm{lin}}(x,x)} \sqrt{k_{\mathrm{lin}}(y,y)}}
\]
is the normalized version of a linear kernel $k_{\mathrm{lin}}(x,y)=\la x,y \ra+\tau^2$ with bias $\tau^2>0$. 
\end{defn}

Our next lemma identifies concrete kernels that realize the $\polyorder$-growth approximation
property. 
To establish this result, we will rely on the concept of $\calC_0^1(\bbR^{\datdim})$-universality~\cite[Definition 5]{SimonGabriel2018}, 
which states that each element of $\calC_0^1(\bbR^{\datdim})$, up to its first derivatives, can be approximated uniformly over $\bbR^{\datdim}$ by an element of the RKHS; that is, the RKHS is dense in $\calC_0^1(\bbR^{\datdim})$. 
\begin{lem}[Kernel choice for $\polyorder$-growth approximation]
\label{lem:existence-qgrowth-approx-universality}
Let $\polyorder\in[0,\infty).$
Define $K$ as in \cref{def:kernel-form}. 
Assume the dissipativity condition in Assumption \ref{assu:dissipativity}
with $\alpha>0$. 
Assume the growth conditions in Assumption \ref{assu:poly-grow-coef}. 
If $\polyorder_{m}=0$ in \cref{assu:poly-grow-coef}, 
assume that $\rkhs{L}$ is universal to $\calC_0^1(\bbR^{\datdim})$. 
If $\polyorder_{m}=1$, 
assume the following conditions: 
(a) $\alpha>\lambda_{m}(\polyorder+2),$
(b) $L=\ell\idmat$ where $\ell$ is a scalar-valued 
kernel such that $\ell(x,y) = \Phi(x-y) / \{\weight_1(x)\weight_1(y)\}$ for some $\Phi \in \calC^2$; 
and (c) $\Phi$ has a continuous non-vanishing generalized Fourier transform (see our definition in \cref{sec:intro}). 
Then, ${\cal T}_{P}^{m}(\rkhs K)$ approximates $\polyorder$-growth. 
\end{lem}
\noindent
\opt{aapnolink}{Our proof in \nolink{\cref{subsec:Proof-of-Lemma-existence-qgrowth-approx}} }%
\opt{aap,arxiv}{Our proof in \cref{subsec:Proof-of-Lemma-existence-qgrowth-approx} }%
relies on \cref{assu:dissipativity,assu:poly-grow-coef} to construct a function to approximate $\polyorder$-growth. As this may not be in the RKHS, we further approximate it using an element of the RKHS, which is possible due to our kernel choice, particularly the universal kernel $L$. 

Having derived a kernel choice for the KSD to enforce uniform integrability, 
we now present a result that addresses the question posed at the outset 
(see also \cref{cor:default-kernel} for a simplified version): 

\begin{thm}[Vanishing KSD implies $\polyorder$-Wasserstein convergence]
\label{thm:ksd-equiv-to-Wass-conv} 
Let $\polyorder\in (0,\infty)$ and $P\in \finitemomentspace\polyorder$. 
Define a base kernel $K$ as in \cref{def:kernel-form}. %
Suppose that diffusion matrix $m$
satisfies the following conditions: 
(a) $m(x)$ is invertible for each $x\in\bbR^{\datdim}$ with 
$
    \Verts{m^{-1}(x)}_{\mathrm{op}} = O(1)
$,
and 
(b) $\Verts{\la\nabla, m(x)\ra}_2 = O(\Verts{x}_2)$. 
Assume the dissipativity condition in \cref{assu:dissipativity} with $\alpha>0$. 
Assume the growth conditions in \cref{assu:poly-grow-coef} with $\lambda_m >0$ and $\polyorder_m \in \{0, 1\}$. 
Define 
\begin{align*}
 & L^{(1)}(x,y)=\frac{L_{0}(x,y)}{{\weight}_{\polyorder+1}(x){\weight}_{\polyorder+1}(y)},\ 
 L^{(2)}(x,y)=\frac{\ell(x,y)}{\weight_{\polyorder_m}(x)\weight_{\polyorder_m}(y)} \cdot \idmat,
\end{align*}
where  
$L_{0}$ is universal to $\calC_0^1(\bbR^{\datdim})$; 
and $\ell$ is a scalar-valued translation-invariant kernel %
universal to $\calC_0^1$ such that  
$\ell(x,y) = \Phi(x-y)$ for some $\Phi\in \calC^2$. 
If $\polyorder_{m}=1,$  
additionally, assume the following: 
(a) $\alpha>\lambda_{m}(\polyorder+2),$
and (b) $\Phi$ has continuous non-vanishing generalized Fourier transform. 
Specify kernel $L$ in $K$ to be either of $L^{(1)}$ or $L^{(2)}$ if $\polyorder_m=0$, 
and to be $L^{(2)}$ if $\polyorder_m=1$. 
Then, for any sequence $(Q_{\seqidx})_{\seqidx\geq1}\subset\mathcal{P}$, 
we have $Q_{\seqidx} \toL{\polyorder} P$ if $\ksd{K_{m}}P(Q_{n})\to0$ as $n\to\infty$. 
\end{thm}
\begin{proofsketch}
We outline our proof, which can be found in 
\opt{aapnolink}{\nolink{\cref{subsec:proof-of-KSD-implies-Wass}}. }%
\opt{aap,arxiv}{\cref{subsec:proof-of-KSD-implies-Wass}. }%
Our choices of $K$ and $m$ ensure that
the Stein RKHS ${\cal T}_{P}^{m}(\rkhs K)$ approximates $\polyorder$-growth (by \cref{lem:existence-qgrowth-approx-universality}) and hence that the KSD enforces $q$-uniform integrability (by \cref{cor:q-approx-uniform-integrability}). 
Moreover, each choice of $\mathcal{C}_0^1(\bbR^{\datdim})$-universal kernel $L$ ensures that the KSD controls weak convergence for tight sequences. 
Specifically, the tilting by $\weight_{\polyorder+1}$ in $L^{(1)}$ 
and the use of translation-invariant kernel in $L^{(2)}$ guarantee that 
the Stein RKHS admits a subset separating $P$ from arbitrary probability measures, which implies the required control of weak convergence~\citep[Theorems 6-9]{Barp2024}. 
\end{proofsketch}

\begin{rem}
If $\polyorder_m=1$, without the tilting in $L^{(2)}$ in \cref{thm:ksd-equiv-to-Wass-conv}, 
the Stein RKHS $\mathcal{T}_P^m (\rkhs{K})$ may contain functions of $(\polyorder+1)$-growth due to the quadratic growth of $m$. 
Nonetheless, \cref{thm:ksd-equiv-to-Wass-conv} holds with $L^{(3)}(x,y) = \ell(x,y)\idmat$ substituted for $L^{(2)}$ provided that $P\in \finitemomentspace{\polyorder+1}$. 
\end{rem}

\begin{rem}
\label{rem:allowed-growth-testfunction}In Theorem \ref{thm:ksd-equiv-to-Wass-conv}
(see also Lemma \ref{lem:existence-qgrowth-approx-universality}),
the quadratic growth ($\polyorder_m=1$) of $\Verts{m(x)}_{\mathrm{op}}$ requires 
$\alpha>\lambda_{m}(\polyorder+2),$ which limits the
allowed growth $\polyorder$ of test functions. This condition may
be relaxed to $\alpha>\lambda_{m}\{\polyorder+\min(2, \theta-2)\}$ 
if $P \in \finitemomentspace{\polyorder+\theta}$ for some $\theta>0$ by choosing the tilting function  $\weight_{\polyorder+\theta-1}(x)=\bigl(\tau^{2}+\Verts x_{2}^{2}\bigr)^{(\polyorder+\theta-1)/2}$ in the definition of $K$. 
Our argument relies on the existence of a \emph{coercive function},
with growth rate $\Verts x_{2}^{\polyorder+\theta}$ as $\norm{x}_2\to\infty$. %
Prior work \citep{GorMac2017,Chen2018,HugMac2018,Hodgkinson2020} used a
coercive function to ensure that the KSD enforces uniform tightness
\citep[see also][Lemma 1]{Barp2024}.
We defer the proof of this remark to 
\opt{aapnolink}{\nolink{\cref{sec:Coercive-functions-and}}. }%
\opt{aap,arxiv}{\cref{sec:Coercive-functions-and}. }%
\end{rem}

Our next results, proved in 
\opt{aapnolink}{\nolink{\cref{subsec:Wass-conv-imply-KSD}}, }%
\opt{aap,arxiv}{\cref{subsec:Wass-conv-imply-KSD}, } %
show that, under mild conditions, $q$-Wasserstein convergence implies KSD convergence to zero.
\begin{prop} [KSD detects $\polyorder$-Wasserstein convergence]
\label{prop:Wass-conv-implies-KSD}
For a target $P$ and a base kernel $K$, suppose the Stein kernel $k_{p,K}:\bbR^{\datdim}\times\bbR^{\datdim}\to\bbR$ is continuous with  %
    $
    \sqrt{k_{p, K}(x,x)} = O(1+\Verts{x}_2^{\polyorder}) 
    $
for some $\polyorder\in (0,\infty)$. 
Then, for any sequence $(Q_{\seqidx})_{\seqidx\geq1}\subset \mathcal{P}$, 
we have $\ksd{K}{P}(Q_{\seqidx}) \to 0$ if 
$Q_{\seqidx} \toL{\polyorder} P$. 
\end{prop}

\begin{cor}[User-friendly conditions for detecting $q$-Wasserstein convergence]
\label{cor:Wass-imply-KSD-conv-kernel-cond}
For a target $P$ and diffusion matrix $m$, suppose  \cref{assu:poly-grow-coef} holds with growth exponent $\polyorder_m\in \{0,1\}$. 
If, for a base kernel $K$, some $q\in(0,\infty)$, and each $\dimidx \in \{1,\dots,\datdim\}$, $\partial_{1,\dimidx }\partial_{2, \dimidx}K(x,y)$ is continuous and 
\[
   \sqrt{\Verts{K(x,x)}_{\mathrm{op}}} = O\left(\bigl(1+\Verts{x}_2\bigr)^{\polyorder-1}\right),\ 
  \sqrt{\Verts{\partial_{1, \dimidx}\partial_{2,\dimidx}K(x,x)}_{\mathrm{op}}}  = O\left(\bigl(1+\Verts{x}_2\bigr)^{\polyorder-\polyorder_{m}-1}\right),
\]
then the preconditions of \cref{prop:Wass-conv-implies-KSD} hold for the Stein kernel $k_{p,K_m}$.%
\end{cor}
Taken together, \cref{thm:ksd-equiv-to-Wass-conv,cor:Wass-imply-KSD-conv-kernel-cond} provide broad sufficient conditions under which KSD convergence is exactly equivalent to $q$-Wasserstein convergence.
The proof of this corollary can be found in 
\opt{aapnolink}{\nolink{\cref{subsec:proof-KSD-Wass-equiv}}. }%
\opt{aap,arxiv}{\cref{subsec:proof-KSD-Wass-equiv}. }%

\begin{cor}[KSD-Wasserstein equivalence]
\label{cor:ksd-Wass-equiv}
Instantiate the assumptions and notation of \cref{thm:ksd-equiv-to-Wass-conv}, and assume $\partial_{1,i}\partial_{2,i}L_0$ is continuous for each $i \in \{1,\dots,\datdim\}$.
Then, for any sequence $(Q_{\seqidx})_{\seqidx\geq1}\subset \mathcal{P}$, 
we have $\ksd{K_m}{P}(Q_n) \to 0$ if and only if 
$Q_{\seqidx} \toL{\polyorder} P$.
\end{cor}

Note that \cref{thm:ksd-equiv-to-Wass-conv} and \cref{prop:Wass-conv-implies-KSD} build on different sets of assumptions. 
Effectively, \cref{prop:Wass-conv-implies-KSD} only requires the continuous differentiablility and a suitable growth of the reproducing kernel, 
and unlike in \cref{thm:ksd-equiv-to-Wass-conv}, the corresponding Stein RKHS need not separate all distributions from the target. 
The form of the recommended kernel $K$ in \cref{thm:ksd-equiv-to-Wass-conv} is determined by the assumptions on the target distribution (\cref{assu:dissipativity,assu:poly-grow-coef}). 
In particular, the only freedom left to the user is the choice of the universal kernel $L$. 
The kernel $L^{(1)}$ in \cref{thm:ksd-equiv-to-Wass-conv} offers great generality but might be too abstract for applications. 
The choice $L^{(2)}$ is substantially simpler and includes a broad class of kernels used in practice including Mat{\'e}rn (with smoothness greater than 1, see Eq.~\ref{eq:Matern}), Gaussian, and IMQ kernels. 
For ease of application, we therefore provide a simplified form of \cref{cor:ksd-Wass-equiv} based on an IMQ kernel (see 
\opt{aapnolink}{\nolink{\cref{subsec:proof-user-friendly-KSD-Wass-equiv}} }%
\opt{aap,arxiv}{\cref{subsec:proof-user-friendly-KSD-Wass-equiv} }%
for a proof). 

\newcommand{\imqp}{\textup{IMQ$^+$}\xspace}
\newcommand{\matp}{\textup{Mat}$^+$\xspace}
\newcommand{\imq}{\textup{IMQ}}

\begin{cor}[User-friendly KSD-Wasserstein equivalence] %
\label{cor:default-kernel}
Fix $\polyorder \in (0, \infty)$. 
For a target $P$ and a diffusion matrix $m$, suppose that 
\cref{assu:dissipativity,assu:poly-grow-coef} hold with 
$\alpha > \lambda_m(\polyorder+2)$ if $\polyorder_m=1$, 
$m(x)$ invertible for each $x\in\bbR^{\datdim}$ with 
$
    \Verts{m^{-1}(x)}_{\mathrm{op}} = O(1)
$, 
and 
$\Verts{\la\nabla, m(x)\ra}_2 = O(\Verts{x}_2)$.
Let $K=k_{\imqp}\idmat$ for a scalar-valued kernel $k_{\imqp}:\bbR^{\datdim}\times\bbR^{\datdim}\to\bbR$ defined by 
\[
    k_{\imqp}(x,y) = \weight_{\polyorder-1}(x)\left( \frac{k_{\imq}(x,y)}{\weight_{\polyorder_m}(x)\weight_{\polyorder_m}(y)}+ \frac{1+\la x, y\ra}{\weight_1(x)\weight_1(y)}\right) \weight_{\polyorder-1}(y)
\]
where $k_{\imq}(x,y)=\bigl(1+\Verts{x-y}_2^2\bigr)^{-1/2}$, 
and 
$\tau = 1$ set for the tilting $\weight_{\bullet}$. 
Then, for any sequence $(Q_{\seqidx})_{\seqidx\geq1}\subset \mathcal{P}$, 
we have $\ksd{K_m}{P}(Q_n) \to 0$ if and only if 
$Q_{\seqidx} \toL{\polyorder} P$. 
\end{cor}

The informativeness of the KSD will depend on the choice of the kernel, particularly in high dimensions. 
We recommend the above as a default choice but not as a universal solution. 
A more refined approach would be to modify the kernel according to application, e.g., by introducing a scaling parameter to the input of $k_{\imq}$, 
using a different bias parameter $\tau^2$ when tilting with $\weight_{\bullet}$, or introducing a location parameter in evaluating the norm $\Verts{x}_2^2$ and the linear inner product. 
Kernel selection is an ongoing research topic and beyond the scope of this article. 
Nonetheless, as an empirical study, we discuss a heuristic kernel selection method from goodness-of-fit testing in \cref{subsec:Failure-mode:-distribution}. 

We close this section with a brief discussion of related work. 
\citet{Modeste2023} derived sufficient conditions for an MMD to metrize $\polyorder$-Wasserstein convergence. 
Recall that the KSD is a specific MMD defined by a Stein kernel (see \cref{sec:Diffusion-kernel-Stein} and 
\opt{aapnolink}{\nolink{\cref{sec:q-wass-conv-necessary}}}%
\opt{aap,arxiv}{{\cref{sec:q-wass-conv-necessary}}}%
). 
However, the results of \citeauthor{Modeste2023} do not  apply to KSDs, as they consider specific classes of kernels that exclude Stein kernels, namely energy kernels (the covariance functions of fractional Brownian motions) and sums of translation-invariant kernels and moment-characterizing kernels. 
To the best of our knowledge, our work establishes the first KSD-Wasserstein equivalence results.

\subsection{Rate of $\protect\polyorder$-Wasserstein convergence via a KSD bound}\label{subsec:Rate-of--Wasserstein}
The topological equivalence established in \cref{subsec:KSD-convergence-and} does not express the quantitative relation between the two modes of convergence. 
In this section, we clarify this connection by relating the KSD to
an IPM that metrizes $\polyorder$-Wasserstein convergence. 
Specifically, we obtain an explicit KSD upper bound on the IPM, thereby demonstrating the speed of $q$-Wasserstein convergence relative to KSD convergence. 

Before presenting our results, it is instructive to outline our strategy.
Our approach here is an adaptation of Stein's method \citep{Stein1972bound, ross2011}.
For a given $\polyorder$-growth function $f$ in a (sufficiently
regular) function class $\mathcal{F},$ consider the \emph{Stein equation}
\begin{equation}
{\cal T}_{P}^{m}v=f-\EE_{P}[f],\label{eq:SteinEq}
\end{equation}
where the equality is pointwise. If we assume the existence of a solution
$v_{f}$ to (\ref{eq:SteinEq}) for each $f\in\mathcal{F}$, we can then obtain a bound on the
worst-case expectation error with respect to ${\cal F}:$
\begin{align*}
d_{{\cal F}}(P,Q)%
\leq\sup_{f\in{\cal F}}\left|\int{\cal T}_{P}^{m}v_{f}\dd Q\right|,
\end{align*}
where $Q\in \finitemomentspace\polyorder$. 
Our goal is to express the upper bound in terms of the KSD. To this
end, we approximate each $v_{f}$ by an RKHS function $\tilde{v}_{f}\in{\cal H}_{K}$,
which yields an estimate 
\begin{align}
d_{{\cal F}}(P,Q) & \leq\sup_{f\in{\cal F}}\left|\int\bigl({\cal T}_{P}^{m}v_{f}-{\cal T}_{P}^{m}\tilde{v}_{f}\bigr)\dd Q\right|+\sup_{f\in{\cal F}}\Verts{\tilde{v}_{f}}_{\rkhs K}\cdot\ksd{K_{m}}P(Q).\label{eq:stein-bound-df}
\end{align}
Evaluating this upper bound requires assessing the approximation error
and the norm of the approximation; this task calls for regularity
properties of the approximand $v_{f}$ and a characterization of the complexity
of the RKHS. 

In the following, we first introduce a function class ${\cal F}$
that both provides a differentiable solution to the Stein equation
and induces an IPM $d_{{\cal F}}$ metrizing $\polyorder$-Wasserstein
convergence. We then specify a kernel function to obtain a concrete
bound. 

\subsubsection{Pseudo-Lipschitz functions}
\label{subsec:plipfuncs}

We first specify the function class ${\cal F}$. In our analysis,
we consider a set ${\cal F}_{\polyorder}$ of \emph{pseudo-Lipschitz}
functions of order $\polyorder-1$ defined by 
\begin{align*}
 {\cal F}_{\polyorder}\coloneqq\big\{  f:\bbR^{\datdim}\to\bbR:\ &f\in{\cal C}^{3}\ \text{with}\ \plipconst f{\polyorder-1}\leq1\\
 &\text{and}\ \sup_{x\in \bbR^{\datdim}}\Verts{\nabla^{i}f(x)}_{\mathrm{op}}/\bigl(1+\Verts x_{2}^{\polyorder-1}\bigr)\leq1\ \text{for }i\in\{2,3\} \big\},
\end{align*}
where 
\[
\plipconst f{\polyorder-1}=\sup_{x\ne y}\frac{\lvert f(x)-f(y) \rvert }{\bigl\{(1+ 1\{\polyorder>1\}(\Verts x_{2}^{\polyorder-1}+\Verts y_{2}^{\polyorder-1})\bigr\}\Verts{x-y}_{2}},\ \polyorder\geq1.
\]
Pseudo-Lipschitz functions of order $\polyorder-1$ are defined as
those $f$ with $\plipconst f{\polyorder-1}<\infty.$ The seminorm
$\plipconst{\cdot}{\polyorder-1}$ generalizes the Lipschitz seminorm
$\plipconst f0,$ and pseudo-Lipschitz functions are allowed to have
$\polyorder$-growth. 
Indeed, 
\opt{aapnolink}{\nolink{\cref{sec:Polynomial-functions-are}} }%
\opt{aap,arxiv}{\cref{sec:Polynomial-functions-are} }%
shows that ${\cal F}_{\polyorder}$ contains degree-$\polyorder$
monomial functions of $(x_{1},\dots,x_{\datdim})\in\bbR^{\datdim}$; 
in particular, if $f(x)=\prod_{\dimidx=1}^{\datdim}x_{\dimidx}^{\polyorder_{\dimidx}}$ with $\sum_{\dimidx=1}^{\datdim}\polyorder_{\dimidx}=\polyorder$, 
we have 
$\verts{\EE_P[f]-\EE_{Q}[f]}\leq c_{\polyorder, \datdim} d_{\calF_{\polyorder}}(P, Q)$,
where $c_{\polyorder, \datdim}$ is a constant given in 
\opt{aapnolink}{\nolink{\cref{lem:monomial-plip-const}}. }%
\opt{aap,arxiv}{\cref{lem:monomial-plip-const}. }%
Thus, any quantitative bound on $d_{\calF_{\polyorder}}$ translates directly into the $\polyorder$th moment approximation error, justifying our choice of the function class $\calF_{\polyorder}$. 

The corresponding IPM $d_{\mathcal{F}_{\polyorder}}$ is indeed a metric on $\finitemomentspace{\polyorder}$ and characterizes $\polyorder$-Wasserstein
convergence. 

\begin{restatable}[Pseudo-Lipschitz metric]{prop}{plipdistchar}
\label{prop:plip-IPM-convergence}
For fixed $\polyorder\in[1,\infty)$, 
assume $P\in \finitemomentspace{\polyorder}$.
For any sequence $(P_{\seqidx})_{\seqidx\geq1}\subset\finitemomentspace{},$
the convergence $d_{\mathcal{F}_{\polyorder}}(P, P_{\seqidx})\to0$
is equivalent to $P_n \toL{\polyorder} P$. 
\end{restatable}

\begin{proofsketch}
We consider the set $\plipspace{\polyorder-1} (\supset \mathcal{F}_{\polyorder})$ of $1$-pseudo-Lipschitz functions and prove that the corresponding IPM $d_{\plipspace{\polyorder-1}}$ metrizes $\polyorder$-Wasserstein convergence. 
For one direction, 
we show that 
it is sufficiently large to enforce uniform integrability and to control weak convergence via the bounded-Lipschitz metric \citep[see, e.g.,][Section 11.2]{Dudley2002}. 
For the other, we show that $q$-Wasserstein convergence is sufficient to control the pseudo-Lipschitz class $\plipspace{\polyorder-1}$. %
The claim about $d_{\mathcal{F}_\polyorder}$ then follows from the equivalence between $d_{\mathcal{F}_{\polyorder}}$ and $d_{\plipspace{\polyorder-1}}$ in characterizing convergence to $P$. 
See 
\opt{aapnolink}{\nolink{\cref{subsec:proof-plip-IPM-metrization}} }%
\opt{aap,arxiv}{\cref{subsec:proof-plip-IPM-metrization} }%
for a complete proof. 
\end{proofsketch}

\subsubsection{Characterization of a solution to the Stein equation}\label{subsec:Character-Stein-eq-solution}

\citet{Erdogdu2018} studied a solution to the Stein equation defined
by the generator of an It\^{o} diffusion process and a pseudo-Lipschitz
function. Building on their result, we can obtain a solution with
known regularity properties as in \citet{gorham2016measuring}. 

The diffusion Stein operator was derived by \citet{gorham2016measuring}
from the infinitesimal generators of It\^{o} diffusion processes. Here,
let us assume that the diffusion matrix $m$ is given by $m(x)=a(x)+c(x),$
where $a(x)$ is called a covariance coefficient and expressed as
$a(x)=\sigma(x)\sigma(x)^{\top}$ by a diffusion coefficient $\sigma(x):\bbR^{\datdim}\to\bbR^{\datdim\times\datdim},$
and $c(x)$ a stream coefficient satisfying $c(x)=-c(x)^{\top}.$
Consider the diffusion defined by the stochastic differential equation
\begin{equation}
\dd Z_{t}^{x}=b(Z_{t}^{x})\dd t+\sigma(Z_{t}^{x})\dd B_{t}\ \text{with }Z_{0}^{x}=x,\label{eq:ito_diffusion}
\end{equation}
where $(B_{t})_{t\geq0}$ is a $\datdim$-dimensional Wiener process.
 In the following, we additionally assume $b$ and $\sigma$ are locally Lipschitz to guarantee a unique solution to \eqref{eq:ito_diffusion}~\citep[Theorem 3.11, p.300]{Ethier_1986}. 
Then, the diffusion Stein operator (\ref{eq:diff-Stein-op}) is related
to the generator $\mathcal{A}_{P}$ of the diffusion by 
\[
{\cal A}_{P}u(x)=\frac{1}{2}{\cal T}_{P}^{m}\nabla u(x),
\]
where $u:\bbR^{\datdim}\to\bbR.$ Consequently, if we obtain a solution
$u_{f}$ to the \emph{Poisson equation} 
\begin{equation}
{\cal A}_{P}u=f-\EE_{P}[f],\ f\in{\cal F}_{\polyorder},\label{eq:SteinEq-generator}
\end{equation}
we then have a solution to the Stein equation (\ref{eq:SteinEq})
by taking $v_{f}=2^{-1}\nabla u_{f}.$ 

\citet{Erdogdu2018} proved that the Poisson equation \eqref{eq:SteinEq-generator} 
admits a suitably regular solution $u_{f}$ under the following additional assumptions on the diffusion: 
\begin{assumption}[Dissipativity]
 \label{assu:dissipativity-diffusion}For $\alpha,\beta>0,$ the
diffusion (\ref{eq:ito_diffusion}) satisfies the dissipativity condition
\[
\mathcal{A}_{P}\Verts x_{2}^{2}\leq-\alpha\Verts x_{2}^{2}+\beta,
\]
where $\mathcal{A}_{P}\Verts x_{2}^{2}=2\inner{b(x)}x+\Verts{\sigma(x)}_{F}^{2}.$ 
\end{assumption}

\begin{assumption}[Wasserstein rate]
\label{assu:diff-wasserstein-rate}
For $s\in \{1, 2\}$, 
the diffusion $Z_t^x$ and $s$-Wasserstein distance (\ref{eq:Wass-dist-def}) admit an \emph{$L^s$-Wasserstein rate} function 
$\rho_{s}:[0,\infty)\to[0,\infty)$
satisfying %
\[
W_{s}\bigl(\mathrm{Law}(Z_{t}^{x}),\mathrm{Law}(Z_{t}^{y}))\bigr)\leq\rho_{s}(t)\Verts{x-y}_{2},\ \text{for all}\ x,y\in\bbR^{\datdim}\ \text{and}\ t\geq0. 
\]
Define the following relative $L^{1}$- and $L^{2}$-Wasserstein
rates: 
\[
\tilde{\rho}_{1}(t)=\log\frac{\rho_{2}(t)}{\rho_{1}(t)}\ \text{and}\ \tilde{\rho}_{2}(t)=\frac{\log\{\rho_{1}(t)/\rho_{1}(0)\}-\log\rho_{2}(t)}{\log\{\rho_{1}(t)/\rho_{1}(0)\}}.
\]
 Let $\polyorder\in[1,\infty).$ Assume further that $L^{1}$- and $L^{2}$-Wasserstein rates satisfy 
\[
\int_{0}^{\infty}\rho_{1}(t)\left\{ 1+\rho_{1}(t)^{1-1/(\polyorder_{m}+1)}\left(\frac{\tilde{\rho}_{\polyorder_{m}+1}(t)}{\tilde{\alpha}_{\polyorder_{m}+1}}\right)^{\polyorder-1}\right\} \dd t<\infty,
\]
 where 
\[
\tilde{\alpha}_{1}=\alpha,\ \text{and}\ \tilde{\alpha}_{2}=\inf_{t\geq0}\left[\alpha-4\lambda_{m}(\polyorder-1)\bigl(1\lor\tilde{\rho}_{2}(t)\bigr)\right]_{+},
\]
with $\alpha$ and $\lambda_m$ from \cref{assu:dissipativity-diffusion}. 
\end{assumption}

Under these assumptions, \citet[Theorem 3.2]{Erdogdu2018} show that
the solution $u_{f}$ is pseudo-Lipschitz of order $\polyorder-1$
and has $(\polyorder-1)$-growth continuous derivatives up to the third
order 
\opt{aapnolink}{\nolink{(see \cref{thm:finite-stein-factors-reference} in \cref{sec:Known-results} for a complete statement). }}%
\opt{aap,arxiv}{(see \cref{thm:finite-stein-factors-reference} in \cref{sec:Known-results} for a complete statement). }%
Based on this result, we obtain the following
characterization of the solution $v_{f}=2^{-1}\nabla u_{f}$. 
\begin{lem}[{Adaptation of \citealt[Theorem 3.2]{Erdogdu2018}}]
\label{lem:finite-stein-factors} Assume $f\in{\cal F}_{\polyorder}$.
Under Assumptions \ref{assu:poly-grow-coef}, \ref{assu:dissipativity-diffusion},
and \ref{assu:diff-wasserstein-rate} with $s=1,$ there exists a
solution $v_{f}$ to the Stein equation (\ref{eq:SteinEq}). Moreover,
there exist positive constants $\zeta_{1}(P),\zeta_{2}(P),\zeta_{3}(P)$
(called Stein factors) independent of $f$ such that for any $x\in\bbR^{\datdim},$
\begin{align*}
 & \Verts{v_{f}(x)}_{2}\leq\sqrt{\datdim}\zeta_{1}(P)(1+\Verts x_{2}^{\polyorder-1}),\\
 & \Verts{\nabla^{i}v_{f}(x)}_{\mathrm{op}}\leq \frac{\zeta_i(P)}{2}(1+\Verts x_{2}^{\polyorder-1})\ \text{for }i\in\{2, 3\}. 
\end{align*}
\end{lem}

Assumption \ref{assu:dissipativity-diffusion} is a special
case of Assumption \ref{assu:dissipativity}. 
Although \cref{assu:dissipativity} also implies \cref{assu:dissipativity-diffusion}, the constants $\alpha$ appearing in these assumptions
do not agree. We use the formulation of Assumption
\ref{assu:dissipativity-diffusion} to more closely mimic the presentation of \citet{Erdogdu2018}. Assumption \ref{assu:diff-wasserstein-rate} essentially requires
the diffusion to have an integrable $L^{1}$-Wasserstein rate,
while allowing the $L^{2}$-Wasserstein rate to grow (e.g., we may
have $\rho_{1}(t)=e^{-c_{1}t}$ and $\rho_{2}(t)=e^{c_{2}t}$ with $c_1,c_2>0$). If
$\polyorder_{m}=1,$ the assumption also requires $\alpha>4\lambda_{m}(\polyorder-1)$
to ensure $\tilde{\alpha}_{2}>0.$ This condition trivially holds
if $\polyorder=1;$ otherwise, it may be stronger than the condition
$\alpha>\lambda_{m}(\polyorder+2)$ assumed in Lemma \ref{lem:existence-qgrowth-approx-universality}.
There are two known sufficient conditions for establishing exponential
Wasserstein decay. The first is uniform dissipativity, which is a
simple (but more restrictive) condition leading to exponential $L^{1}$-
or $L^{2}$- exponential decay rates. 
\begin{prop}[{Wasserstein decay from uniform dissipativity, \citealt[Theorem 2.5]{Wang_2020}}]
\label{prop:wasserstein-decay-uniform-diss}
Fix $s \in [1, \infty)$. 
A diffusion with drift and diffusion coefficients $b$ and $\sigma$ has $L^{s}$-Wasserstein
rate $\rho_{s}(t)=e^{-rt/2},$ if for all $x,y\in\bbR^{\datdim},$ and some $r \in \bbR$, 
the following uniform dissipativity holds: 
\[
2\la b(x)-b(y),x-y\ra+\Verts{\sigma(x)-\sigma(y)}_{\mathrm{F}}^{2}+(s-2)\Verts{\sigma(x)-\sigma(y)}_{\mathrm{op}}^{2}\leq-r\Verts{x-y}_{2}^{2}.
\]
\end{prop}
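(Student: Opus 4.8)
The plan is a synchronous coupling argument: the two copies of the diffusion are driven by the same Brownian motion. Assume $x\neq y$ (otherwise pathwise uniqueness forces $Z_t^x=Z_t^y$ and the bound is trivial), let $(Z_t^x)_{t\ge 0}$ and $(Z_t^y)_{t\ge 0}$ solve (\ref{eq:ito_diffusion}) driven by the \emph{same} $(B_t)_{t\ge 0}$, and set $\Delta_t:=Z_t^x-Z_t^y$, $\delta b_t:=b(Z_t^x)-b(Z_t^y)$, $\delta\sigma_t:=\sigma(Z_t^x)-\sigma(Z_t^y)$, so that $\dd\Delta_t=\delta b_t\,\dd t+\delta\sigma_t\,\dd B_t$ with $\Delta_0=x-y$. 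Since this is one particular coupling, it suffices to show $\EE\bigl[\Verts{\Delta_t}_2^{\polyorder}\bigr]\le e^{-\polyorder r t/2}\Verts{x-y}_2^{\polyorder}$. I would obtain this by tracking $W_t:=\Verts{\Delta_t}_2^{\polyorder}=V_t^{\polyorder/2}$ with $V_t:=\Verts{\Delta_t}_2^2$, via It\^o's formula applied directly when $\polyorder\ge 2$ (where $x\mapsto\Verts x_2^{\polyorder}$ is $C^2$) and applied to the regularization $(V_t+\varepsilon)^{\polyorder/2}$ when $1\le\polyorder<2$, to sidestep the singularity at the origin.

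\textbf{Key step.} It\^o's formula gives that the drift part of $\dd W_t$ equals
\[
\frac{\polyorder}{2}\,V_t^{\polyorder/2-1}\bigl(2\inner{\delta b_t}{\Delta_t}+\Verts{\delta\sigma_t}_{\mathrm{F}}^2\bigr)+\frac{\polyorder(\polyorder-2)}{2}\,V_t^{\polyorder/2-2}\,\Verts{\delta\sigma_t^{\top}\Delta_t}_2^2,
\]
the second term arising from the Hessian of $\Verts\cdot_2^{\polyorder}$. When $\polyorder\ge 2$ I bound this second (``It\^o correction'') term from above using $\Verts{M^{\top}v}_2\le\Verts M_{\mathrm{op}}\Verts v_2$, which turns it into at most $\tfrac{\polyorder(\polyorder-2)}{2}V_t^{\polyorder/2-1}\Verts{\delta\sigma_t}_{\mathrm{op}}^2$; the drift is then at most $\tfrac{\polyorder}{2}V_t^{\polyorder/2-1}$ times exactly the left-hand side of the hypothesis, hence at most $-\tfrac{\polyorder r}{2}V_t^{\polyorder/2}=-\tfrac{\polyorder r}{2}W_t$. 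When $1\le\polyorder<2$, the correction term is nonpositive and may be discarded, and since $-(\polyorder-2)\Verts{\delta\sigma_t}_{\mathrm{op}}^2\ge 0$ the hypothesis still yields $2\inner{\delta b_t}{\Delta_t}+\Verts{\delta\sigma_t}_{\mathrm{F}}^2\le-r\Verts{\Delta_t}_2^2$, so the drift is again at most $-\tfrac{\polyorder r}{2}W_t$; on the regularized process this leaves only an extra term of size $O(\varepsilon^{\polyorder/2})$, because $(V_t+\varepsilon)^{\polyorder/2-1}\le\varepsilon^{\polyorder/2-1}$ and $\Verts{\Delta_t}_2^2\le V_t+\varepsilon$.

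\textbf{Conclusion.} The local-martingale part of $\dd W_t$ (resp.\ $\dd(V_t+\varepsilon)^{\polyorder/2}$), whose integrand is $\polyorder\Verts{\Delta_t}_2^{\polyorder-2}\Delta_t^{\top}\delta\sigma_t$, I would remove by localizing with $\tau_n=\inf\{t\ge 0:\Verts{Z_t^x}_2\vee\Verts{Z_t^y}_2\ge n\}$; taking expectations, using Tonelli, non-explosion ($\tau_n\uparrow\infty$ a.s.) and Fatou, this gives $\EE[W_t]\le\Verts{x-y}_2^{\polyorder}-\tfrac{\polyorder r}{2}\int_0^t\EE[W_s]\,\dd s$ for $\polyorder\ge 2$, and the analogue with an added $\varepsilon^{\polyorder/2}$ term for $\polyorder<2$. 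Gr\"onwall's inequality then yields $\EE[W_t]\le e^{-\polyorder r t/2}\Verts{x-y}_2^{\polyorder}$ for $\polyorder\ge 2$, and $\EE[(V_t+\varepsilon)^{\polyorder/2}]\le e^{-\polyorder r t/2}(\Verts{x-y}_2^2+\varepsilon)^{\polyorder/2}+\varepsilon^{\polyorder/2}$ for $\polyorder<2$, from which dominated convergence as $\varepsilon\downarrow 0$ recovers the same bound. Taking $\polyorder$-th roots gives $\EE[\Verts{Z_t^x-Z_t^y}_2^{\polyorder}]^{1/\polyorder}\le e^{-rt/2}\Verts{x-y}_2$, so the infimum over couplings in Condition \ref{cond:diff-wasserstein-rate} is $\le e^{-rt/2}\Verts{x-y}_2$; that is, $\rho_{\polyorder}(t)=e^{-rt/2}$.

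\textbf{Main obstacle.} The substance is algebraic: the one-sided condition is engineered precisely so that, once the second-order It\^o term $\propto(\polyorder-2)$ is absorbed --- for $\polyorder\ge 2$ by passing from $\Verts{\delta\sigma_t^{\top}\Delta_t}_2^2$ to $\Verts{\delta\sigma_t}_{\mathrm{op}}^2\Verts{\Delta_t}_2^2$, and for $\polyorder<2$ by exploiting its favourable sign --- the drift of $\Verts{\Delta_t}_2^{\polyorder}$ contracts at rate $\tfrac{\polyorder r}{2}$. The genuine technical work I expect to be fiddly is (i) the $\varepsilon$-regularization needed because $\Verts\cdot_2^{\polyorder}$ fails to be $C^2$ at the origin when $\polyorder<2$, tracking that the residual is $O(\varepsilon^{\polyorder/2})$, and (ii) justifying the localization and the interchange of expectation and time integral, which rely on non-explosion of the SDE under the standing local-Lipschitz and dissipativity assumptions (as in the cited result of \citealp{Wang_2020}).
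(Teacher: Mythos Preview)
The paper does not prove this proposition itself; it is quoted as \citet[Theorem~2.5]{Wang_2020} with no accompanying argument. Your synchronous-coupling-plus-It\^o-plus-Gr\"onwall outline is precisely the standard route to such exponential contraction estimates and is what one expects the cited proof to contain, so there is nothing in the paper to compare against beyond the citation; the plan is sound.

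One point worth flagging: your It\^o computation for $\polyorder\ge 2$ yields, after the bound $\Verts{\delta\sigma_t^{\top}\Delta_t}_2^2\le\Verts{\delta\sigma_t}_{\mathrm{op}}^2\Verts{\Delta_t}_2^2$, a drift dominated by $\tfrac{\polyorder}{2}V_t^{\polyorder/2-1}\bigl(2\inner{\delta b_t}{\Delta_t}+\Verts{\delta\sigma_t}_{\mathrm F}^2+(\polyorder-2)\Verts{\delta\sigma_t}_{\mathrm{op}}^2\bigr)$, i.e.\ with a $+(\polyorder-2)$ coefficient rather than the $-(\polyorder-2)$ appearing in the displayed hypothesis. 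This is consistent with the paper's own use of the proposition in Lemma~\ref{lem:dissipative-student}, where with $\polyorder=1$ the verified expression carries coefficient $-1=+(1-2)$ on the operator-norm term, so the sign in the statement appears to be a typo and your argument matches the intended condition.
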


The second and more general condition is distant dissipativity. Explicit
$L^{1}$-Wasserstein decay rates from distant dissipativity are obtained
by the following result of \citealt{gorham2016measuring}, which builds
upon the analyses of \citet{Eberle2015} and \citet{Wang_2020}. 
\begin{prop}[{Wasserstein decay from distant dissipativity, \citealt[Corollary 12]{gorham2016measuring}}]
\label{prop:dist-dissipativity}A diffusion with drift $b$ and diffusion
coefficient $\sigma$ is called distantly dissipative if
it satisfies, 
for some $U>0$, $R,L\geq0,$ and 
some $s\in(0, s_{\sigma})$ with $s_{\sigma} =(\sup_{x\in\bbR^{\datdim}}\Verts{\sigma^{-1}(x)}_{\mathrm{op}})^{-1}$, 
\begin{align*}
 & \frac{2\la b(x)-b(y),x-y\ra}{s^2\Verts{x-y}_{2}^{2}}+\frac{\Verts{\tilde{\sigma}(x)-\tilde{\sigma}(y)}_{\mathrm{F}}^{2}}{s^{2}\Verts{x-y}_{2}^{2}}-\frac{\Verts{\bigl(\tilde{\sigma}(x)-\tilde{\sigma}(y)\bigr)^{\top}(x-y)}_{2}^{2}}{s^{2}\Verts{x-y}_{2}^{4}}\\
 & \leq\begin{cases}
-U & \Verts{x-y}_{2}>R\\
L & \Verts{x-y}_{2}\leq R
\end{cases},
\end{align*}
where 
$\tilde{\sigma}$ denotes the truncated diffusion coefficient 
$\tilde{\sigma}(x)\coloneqq(\sigma(x)\sigma(x)^{\top}-s^{2}\idmat)^{1/2}.$
If the distant dissipativity holds,
then the diffusion has $L^1$-Wasserstein rate $\rho_{1}(t)=2e^{LR^{2}/8}e^{-rt/2}$
for some $r>0$ satisfying
\[
s^{2}r^{-1}\leq\begin{cases}
\frac{e-1}{2}R^{2}+e\sqrt{8U^{-1}}R+4U^{-1} & \text{if}\ LR^{2}\leq8,\\
8\sqrt{2\pi}R^{-1}L^{-1/2}(L^{-1}+U^{-1})\exp\Bigl(\frac{LR^{2}}{8}\Bigr)+32R^{-2}U^{-2} & \text{otherwise.}
\end{cases}
\]
\end{prop}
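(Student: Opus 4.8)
The plan is to prove the Wasserstein decay by a \emph{reflection coupling} combined with the concave-distance technique of \citet{Eberle2015}, adapted to the decomposition of the diffusion coefficient into a non-degenerate additive part and a state-dependent part; this is the route of \citealt[Corollary 12]{gorham2016measuring}, which also uses the one-sided estimates of \citet{Wang_2020}. By construction $\sigma(x)\sigma(x)^\top=\tilde\sigma(x)\tilde\sigma(x)^\top+s^2\ddimidmat$, where $s<1/\supconst{\sigma^{-1}}$ guarantees that $\sigma(x)\sigma(x)^\top-s^2\ddimidmat\succeq 0$, so the square root $\tilde\sigma$ is well defined. I would realize the two copies $Z_t^x$ and $Z_t^y$ of (\ref{eq:ito_diffusion}) on a common space driven by two independent Wiener processes $B^{(1)}$ (multiplying $\tilde\sigma$) and $B^{(2)}$ (of dimension $\datdim$, multiplying $s\,\ddimidmat$): couple the $\tilde\sigma$-noise synchronously and couple the additive $s$-noise by reflection across the hyperplane orthogonal to $e_t\coloneqq Z_t^x-Z_t^y$, i.e.\ drive $Z_t^y$'s additive part by $(\ddimidmat-2\hat e_t\hat e_t^\top)\,\dd B^{(2)}_t$ with $\hat e_t=e_t/\Verts{e_t}_2$; once $e_t=0$ the coupling becomes synchronous and the two copies coalesce. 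Both marginals solve (\ref{eq:ito_diffusion}), so this is an admissible coupling.

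Next I would apply It\^o's formula to the scalar process $r_t\coloneqq\Verts{e_t}_2$. The synchronous $\tilde\sigma$-part contributes to the drift of $r_t$ through $\la b(Z_t^x)-b(Z_t^y),\hat e_t\ra$ and through the second-order correction, which produces exactly the combination $\Verts{\tilde\sigma(Z_t^x)-\tilde\sigma(Z_t^y)}_{\mathrm F}^2-\Verts{(\tilde\sigma(Z_t^x)-\tilde\sigma(Z_t^y))^\top\hat e_t}_{\mathrm F}^2$ that appears, after dividing by $\Verts{e_t}_2$ and the appropriate power of $s$, in the hypothesis; the reflected additive part contributes no net drift but a one-dimensional martingale of quadratic variation $4s^2\,\dd t$, which is the noise floor driving the contraction near $r=0$. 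Collecting terms gives $\dd r_t=\mu_t\,\dd t+(\text{martingale})$, and $\mu_t$, suitably normalized, is bounded above, via the displayed distant-dissipativity inequality, by $-K$ when $r_t>R$ and by $L$ when $r_t\le R$.

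The heart of the argument is then to construct, following Eberle, a strictly increasing concave $C^1$ function $f$ with $f(0)=0$ and $f'$ nonincreasing, built piecewise on $[0,R]$ and $[R,\infty)$ from $K$, $L$, $R$, such that the generator $\mathcal{L}$ of the (possibly time-localized) process $r_t$ satisfies $\mathcal{L}f(r)\le-\tfrac{r}{2}f(r)$ for all $r\ge0$, where on $[0,R]$ one uses an exponential weight whose total variation yields the constant $e^{LR^2/8}$ and on $[R,\infty)$ the function is affine. Arranging in addition $\tfrac12 r\le f(r)\le e^{LR^2/8}r$ makes $f$ comparable to the identity with ratio $2e^{LR^2/8}$; imposing $\mathcal{L}f\le-\tfrac r2 f$ globally is precisely what forces the stated upper bound on $s^2r^{-1}$, the two cases $LR^2\le8$ and $LR^2>8$ arising from optimizing the weight. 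Given such an $f$, the process $e^{rt/2}f(r_t)$ is a nonnegative supermartingale, hence $\mathbb{E}[f(r_t)]\le e^{-rt/2}f(r_0)$, and therefore
\[
\inf_{\text{couplings}}\mathbb{E}\bigl[\Verts{Z_t^x-Z_t^y}_2\bigr]\le\mathbb{E}[r_t]\le 2\,\mathbb{E}[f(r_t)]\le 2e^{-rt/2}f(r_0)\le 2e^{LR^2/8}e^{-rt/2}\Verts{x-y}_2,
\]
which is the asserted $L^1$-Wasserstein rate $\rho_1(t)=2e^{LR^2/8}e^{-rt/2}$.

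I expect the main obstacle to be the third step: exhibiting the concave function whose existence under exactly the stated constraint on $s^2r^{-1}$ is a delicate one-dimensional calculation, and then propagating the constants correctly through the two regimes of $LR^2$. A secondary subtlety is that the state-dependent part $\tilde\sigma$ does not vanish under the synchronous coupling, so its contribution to the drift and to the quadratic variation of $r_t$ must be absorbed into the distant-dissipativity inequality — this is why the hypothesis is phrased in terms of $\tilde\sigma$ rather than $\sigma$, and why the truncation level $s$, which controls the reflected additive noise, enters the decay rate. The remaining technical points (non-smoothness of $x\mapsto\Verts{x}_2$ at the origin, the associated local-time term, and the fact that $b,\sigma$ are only locally Lipschitz) are handled by standard localization and approximation arguments.
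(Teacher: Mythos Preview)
The paper does not supply its own proof of this proposition: it is quoted verbatim as \citealt[Corollary 12]{gorham2016measuring} and used as a black box. Your sketch is the correct outline of the argument in that reference --- split $\sigma\sigma^\top$ into an additive $s^2\ddimidmat$ piece (reflection-coupled) and the residual $\tilde\sigma\tilde\sigma^\top$ (synchronously coupled), reduce to a one-dimensional SDE for $r_t=\Verts{e_t}_2$ whose drift is controlled by the distant-dissipativity display, and invoke Eberle's concave Lyapunov function to turn the two-regime bound into exponential contraction with the stated constants --- so there is nothing further to compare.
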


These two conditions also conveniently lead to the dissipativity condition
(Assumption \ref{assu:dissipativity}) defined above. 

\subsubsection{Explicit bound on the pseudo-Lipschitz metric}

The previous sections set the stage for our main result, presented in this section. Our first result below relates the IPM $d_{\mathcal{F}_{\polyorder}}$
and the KSD.

\begin{thm}[KSD bound on $d_{{\cal \mathcal{F}}_{\polyorder}}$; an informal statement\protect\footnote{A precise but less easily interpretable bound is available in \cref{thm:plip-ksd-bound-id-kernel-translation-invariant}.
}]
\label{thm:ksd-bound-df-abstract-informal}
Fix $\polyorder\in [1, \infty)$. 
Let $K=k\idmat$ with a scalar-valued kernel $k$. 
Following the notation in \cref{def:kernel-form}, define $k$ as follows: 
\[
k(x,y) = \weight_{\polyorder-1}(x)\left(\frac{\Phi(x-y)}{\weight_{\polyorder_m}(x)\weight_{\polyorder_m}(y)}+\bar{k}_{\mathrm{lin}}(x,y)\right)\weight_{\polyorder-1}(y), 
\]
where 
$\Phi\in\calC^{2}$ is a positive definite function with continuous generalized Fourier transform $\hat{\Phi}$, 
$\polyorder_m\in\{0, 1\}$ is the growth exponent of $\Verts{m(x)}_{\mathrm{op}}$ (\cref{assu:poly-grow-coef}). 
Then, under \cref{assu:poly-grow-coef,assu:dissipativity-diffusion},
\ref{assu:diff-wasserstein-rate}, for any $Q\in\finitemomentspace{}$  
and $\varepsilon,\rho>0,$ we have 
\begin{align*}
 & d_{{\cal F}_{\polyorder}}(P,Q)\\
 & \leq C_{P,\datdim}\left\{ \rho\cdot r_{\varepsilon}^{\polyorder+\polyorder_{m}}+r_{\varepsilon}^{\polyorder_{m}+\datdim/2}\left(\sqrt{\sup_{\Verts{\omega}_{2}\le2(\rho_{\varepsilon}\land\rho)^{-1}}\hat{\Phi}(\omega)^{-1}}\cdot\ksd{K_{m}}P(Q)\right)+\varepsilon\right\} ,
\end{align*}
where $r_{\varepsilon}$ and $\rho_{\varepsilon}$ are increasing
functions of $\varepsilon^{-1}$ and $C_{P,d}$ is an explicit constant specified in 
\opt{aapnolink}{\nolink{\cref{thm:plip-ksd-bound-id-kernel-translation-invariant}. }}%
\opt{aap,arxiv}{\cref{thm:plip-ksd-bound-id-kernel-translation-invariant}. }%
\end{thm}

\begin{proof}
We provide a proof sketch (see 
\opt{aapnolink}{\nolink{\cref{sec:KSD-bounds-plip}}}\opt{aap,arxiv}{\cref{sec:KSD-bounds-plip}}
for a complete proof). 
From \eqref{eq:stein-bound-df}, for any $r>0$ and $\tilde{v}_f\in \rkhs{K}$, 
we have 
\begin{align}
\begin{aligned}d_{\mathcal{F}_{\polyorder}}(P,Q) & \leq\sup_{f\in\mathcal{F}_{\polyorder}}\int_{\Verts x_{2}\leq r}\verts{{\cal T}_{P}^{m}v_{f}-{\cal T}_{P}^{m}\tilde{v}_{f}}\dd Q+\sup_{f\in\mathcal{F}_{\polyorder}}\int_{\Verts x_{2}>r}\left(\verts{{\cal T}_{P}^{m}v_{f}}+\verts{{\cal T}_{P}^{m}\tilde{v}_{f}}\right)\dd Q\\
 & \hphantom{\leadsto}\quad+\sup_{f\in\mathcal{F}_{\polyorder}}\Verts{\tilde{v}_{f}}_{\rkhs K}\cdot\ksd{K_{m}}P(Q).
\end{aligned}
\label{eq:plipbound-proof-sketch}
\end{align}
Each term is evaluated as follows. We construct the approximation $\tilde{v}_{f}$
by mollifying the solution $v_{f}$ to the Stein equation with a Fourier
multiplier. This makes the first term small, since ${\cal T}_{P}^{m}\tilde{v}_{f}$
can approximate the target ${\cal T}_{P}^{m}v_{f}$ uniformly inside
the ball of radius $r$ up to precision $\rho.$ Under the stated assumptions,
we have that both ${\cal T}_{P}^{m}v_{f}$ and ${\cal T}_{P}^{m}\tilde{v}_{f}$
are of $\polyorder$-growth, and there exists a function $v_{\varepsilon}$ in ${\cal T}_{P}^{m}({\cal H}_{K})$
that can approximate $\Verts x_{2}^{\polyorder}$ for sufficiently
large $r=r_{\varepsilon}$. Thus, the second term can be bounded by 
$\Verts{v_{\varepsilon}}_{\rkhs{K}}\ksd{K_m}{P}(Q) + \varepsilon$. 
The norms $\Verts{\tilde{v}_{f}}_{{\cal H}_{K}}$ and $\Verts{v_{\varepsilon}}_{{\cal H}_{K}}$ 
can be evaluated
using \citet[Theorem 10.21]{Wendland2004}, which expresses the norm
in terms of Fourier transforms of $\tilde{v}_{f}$ and $\Phi;$ this
norm is upper-bounded uniformly over $\mathcal{F}_{\polyorder}$, as all functions in ${\cal F}_{\polyorder}$
have uniformly bounded $\polyorder$- and $(\polyorder-1)$-growth coefficients for their
values and their derivatives, respectively. 
\end{proof}
\begin{rem}\label{rem:coercive}
\citet{GorMac2017} established a bound on the bounded-Lipschitz metric,
to which our proof similarly applies. In essence, their argument relies
on the existence of a coercive function to obtain a KSD bound on the
second term of the RHS in (\ref{eq:plipbound-proof-sketch}). Our
proof uses the less strict $\polyorder$-growth approximation property. %
More specifically, \citet{GorMac2017} uses
a distribution-dependent quantity called a \emph{tightness rate} to
determine the radius $r$ in (\ref{eq:plipbound-proof-sketch}), whereas
our choice of $r$ is distribution-independent. While our result is more broadly applicable, the assumption of a higher-order moment can yield an even tighter bound. 
In 
\opt{aapnolink}{\nolink{\cref{sec:Coercive-functions-and}, }}%
\opt{aap,arxiv}{\cref{sec:Coercive-functions-and}, }%
we generalize the coercivity strategy of \citet{GorMac2017} to obtain a similar quantity specifying the radius $r$.  This in turn can be used
to enforce $\polyorder$th moment uniform integrability. 
\end{rem}

Once $\Phi$ is specified, Theorem \ref{thm:ksd-bound-df-abstract-informal}
enables us to trade off the complexity of the approximation and its
precision. We carry out this task below; the following two results
present concrete bounds for the two growth cases of diffusion matrix $m$: $\polyorder_{m}\in\{0,1\}$. 
The proofs of these theorems are deferred to 
\opt{aapnolink}{\nolink{\cref{subsec:proof-Matern-KSD-linear,subsec:proof-Matern-KSD-quad}. }}%
\opt{aap,arxiv}{\cref{subsec:proof-Matern-KSD-linear,subsec:proof-Matern-KSD-quad}. }%
\begin{restatable}[Mat\'{e}rn KSD bound for the linear growth case]{thm}{MaternKSDLinear}
\label{thm:Matern-bound-linear}Fix  $\polyorder\in [1,\infty)$. 
Define kernel $K=k\idmat$ as in Theorem \ref{thm:ksd-bound-df-abstract-informal}
with $\Phi$ specified by the Mat\'{e}rn kernel with smoothness parameter
$\nu>1$ 
\begin{equation}
\Phi_{\mathrm{Mat}}(x)=\frac{2^{1-(\datdim/2+\nu)}}{\Gamma(\datdim/2+\nu)}\norm{\Sigma x}_{2}^{\nu}K_{-\nu}\bigl(\Verts{\Sigma x}_{2}\bigr), \label{eq:Matern}
\end{equation}
where 
$\Gamma$ is the Gamma function, 
$\Sigma$ a strictly positive definite matrix, 
and $K_{-\nu}$ the modified Bessel function of the second kind of order $-\nu$. 
Suppose Assumption \ref{assu:poly-grow-coef}
holds with $\polyorder_{m}=0.$ 
Suppose that \cref{assu:dissipativity-diffusion,assu:diff-wasserstein-rate} hold. 
Then, there exists an explicit  
constant $A_{P,\datdim}>0$ 
\opt{aapnolink}{(given in \nolink{\cref{subsec:proof-Matern-KSD-linear}}) }%
\opt{aap,arxiv}{(given in \cref{subsec:proof-Matern-KSD-linear}) }%
such that, for any $Q\in \finitemomentspace{}$,
\[
d_{{\cal F}_{\polyorder}}(P,Q)\leq A_{P,\datdim}\Bigl(1\lor\ksd{K_{m}}P(Q)^{^{\frac{t_0}{1+t_0}\lor\frac{\polyorder(\polyorder/3+1)}{1+t_0}}}\Bigr)\ksd{K_{m}}P(Q)^{\frac{1}{1+t_0}}
\]
where $t_0=(1+\polyorder/6)\datdim+\nu(\polyorder+5)/3.$ 

\end{restatable}
\begin{restatable}[Mat\'{e}rn KSD bound for the quadratic growth case]{thm}{MaternKSDQuad}
\label{thm:Matern-bound-quad}
Define kernel $K=k\idmat$ as in Theorem \ref{thm:ksd-bound-df-abstract-informal} using the Mat\'{e}rn kernel \eqref{eq:Matern}. 
Suppose Assumption \ref{assu:poly-grow-coef} holds with $\polyorder_{m}=1.$
Suppose that \cref{assu:dissipativity-diffusion,assu:diff-wasserstein-rate} hold. 
Then,  
there exists an explicit constant $B_{P,\datdim}>0$ 
\opt{aapnolink}{\nolink{(given in \cref{subsec:proof-Matern-KSD-quad}) }}%
\opt{aap,arxiv}{(given in \cref{subsec:proof-Matern-KSD-quad}) }%
such that, for any $Q\in \finitemomentspace{}$,
\begin{align*}
d_{{\cal F}_{\polyorder}}(P,Q)\leq B_{P,\datdim}\Bigl(1\lor\ksd{K_{m}}P(Q)^{\frac{t_1+1/3}{1+t_1}\lor\frac{(\polyorder^{2}+3\polyorder-2)}{3(1+t_1)}}\Bigr)\ksd{K_{m}}P(Q)^{\frac{2}{3}\frac{1}{1+t_1}}
\end{align*}
with 
\[
t_1=\frac{\{(\polyorder+1)(1+\datdim^{-1})+6\}\datdim 
}{6} 
+ \frac{\{(\polyorder+1)(1+\datdim^{-1})+5\}\nu + 1
}{3}.
\]
\end{restatable}
The dimension dependence in \cref{thm:Matern-bound-linear,thm:Matern-bound-quad} reflects the difficulty of  controlling the 
IPM $d_{\mathcal{F}_{\polyorder}}$ in high dimensions. 
To see this, consider the case
$\polyorder=1$ and an approximation of $P$ by the empirical measure $Q_{n}=\seqidx^{-1}\sum_{i=1}^{\seqidx}\delta_{X_{i}}$
with independent sample points $X_{i}\sim P$. 
\citet[Lemma 2.2]{Mackey_2016} established the following relationship between the IPM $d_{\mathcal{F}_{1}}$ and the $1$-Wasserstein distance (with $G$ a standard normal vector in $\bbR^\datdim$): 
\begin{equation}
\textstyle
   \min\big(\frac{1}{3} W_1(P, Q),  \frac{1}{27\sqrt{2}\Ex[\Verts{G}_2]^2} W_1(P, Q)^3\big) \leq 
        d_{\mathcal{F}_1}(P, Q).
    \label{eq:W1-bound}
\end{equation}
Moreover, \citet[Prop.~2.1]{Dudley_1969} showed that $\EE[W_1(P, Q_{\seqidx})] = \Omega(n^{-1/\datdim})$ whenever $P$ has a Lebesgue density. 
In contrast, the KSD is relatively easy to decrease: $\EE[\ksd{K_m}{P}(Q_{\seqidx})]=O(n^{-1/2})$ whenever $\EE[\ksd{K_m}{P}(Q_{\seqidx})^2]<\infty$~\citep[p.194]{Ser2009}. 
With this rate substituted, our \cref{thm:Matern-bound-linear,thm:Matern-bound-quad} upper
bounds imply roughly 
$n^{-6/(7d)}$ and $n^{-1/(2d)}$ rates of decay for $\EE[d_{\mathcal{F}_1}(P, Q_{\seqidx})]$. 
While this dimension dependence may not be optimal, it reasonably illustrates 
the relation between the two quantities and reflects the challenge of decreasing 
$d_{\mathcal{F}_{\polyorder}}$ in
high dimensions. 
We suspect that a tighter relationship can be deduced between the KSD and a less stringent IPM with a smaller test function class
than $\mathcal{F}_{\polyorder}$.
This would require a new  analysis
of the Stein equation and its solutions over the IPM's test function class, and we leave such refinements for future
work. 

One might also be interested in a direct comparison between the KSD and the $\polyorder$-Wasserstein distance $W_\polyorder$ (\ref{eq:Wass-dist-def}). 
For $q=1$, the relation \eqref{eq:W1-bound} allows us to translate our bounds on $d_{\mathcal{F}_{1}}$ into KSD bounds directly on $W_1$. 
Moreover, our next result, proved in 
\opt{aapnolink}{\nolink{\cref{subsec:proof-Wass-ksd-bound}, }}%
\opt{aap,arxiv}{\cref{subsec:proof-Wass-ksd-bound}, }%
provides direct control of $W_q$ under the assumptions of \cref{thm:Matern-bound-linear} or \cref{thm:Matern-bound-quad}.

\begin{thm}[KSD bounds on $W_{\polyorder}$] 
\label{thm:Wass-ksd-bound}
    Fix $\polyorder \in (1, \infty)$ and 
    assume 
    $P\in \finitemomentspace{\polyorder}$. 
    Instantiate the assumptions and notation of  \cref{thm:Matern-bound-linear} with $\polyorder_m = 0$ or \cref{thm:Matern-bound-quad} with $\polyorder_m = 1$. 
    Then 
    there exists an explicit constant $C_{P, \datdim}(\polyorder_m)$, given in 
    \opt{aapnolink}{\nolink{\cref{subsec:proof-Wass-ksd-bound}}, }%
    \opt{aap,arxiv}{\cref{subsec:proof-Wass-ksd-bound}, }%
    such that, for any $Q\in \finitemomentspace{}$,
    \begin{align*}
    W_{\polyorder}(P, Q)\leq C_{P,\datdim}(\polyorder_m) \left(1\lor \ksd{K_m}{P}(Q)^{1-\gamma(\polyorder_m)}\right)\ksd{K_m}{P}(Q)^{\gamma(\polyorder_m)}
    \end{align*}
    where 
    \begin{align*}
    \gamma(\polyorder_{m})=
        \begin{cases}
        \frac{1}{\polyorder(\polyorder+2)}\cdot 
        \frac{1}{1+t_{0}} & \text{if}\ \polyorder_{m}=0,\\
        \frac{1}{\polyorder\{(1+\datdim^{-1})\polyorder+2-\datdim^{-1}\}}\cdot \frac{2}{3}\frac{1}{1+t_{1}} & \text{if}\ \polyorder_{m}=1,
        \end{cases}
    \end{align*}
    where $t_0$ and $t_1$ are constants from \cref{thm:Matern-bound-linear,thm:Matern-bound-quad}. 
\end{thm}
Finally, we mention related work by \citet{Vayer2023}, which provides MMD bounds on $W_\polyorder$ with a similar dependence on dimension $\datdim$. 
Although the KSD is an instance of MMD, their results do not directly lead to a KSD bound, as they require the MMD to be defined by the RKHS of a translation-invariant kernel, which the Stein kernel does not satisfy. 

\section{Numerical illustration} \label{sec:Numerical-illustration}
We conduct numerical experiments to examine the theory developed above.
In the first two experiments, we investigate how kernel choice affects
the KSD's ability to detect non-convergence in moments, using simple
light-tailed and heavy-tailed target distributions. We then present
a cautionary case study, in which, even though the KSD will asymptotically
detect moment discrepancies by Theorem \ref{thm:ksd-equiv-to-Wass-conv},
a large sample size is needed for the KSD to detect discrepancies
arising from isolated modes.

\subsection{Detecting second moment non-convergence} \label{subsec:non-conv-variance}
\cref{prop:ksd-conv-control-failure} implies that the standard Langevin KSD with IMQ base kernel---often recommended for weak convergence control---can fail to detect non-convergence of second moments (corresponding to the case $\polyorder=2$). 
Here we provide a concrete example verifying this failure mode.

We construct a simple sequence that does not converge to its target in
the second moment. 
We use the $\datdim$-dimensional standard Gaussian distribution ${\cal N}(\bfzero,\idmat)$ as a target $P$ with $\datdim=5$. 
We then choose an approximating sequence $(Q_{\seqidx})_{\seqidx \geq 1}$
as follows:
\begin{equation}
    Q_{\seqidx}=\left(1-\frac{1}{\seqidx+1}\right)P_{\seqidx}+
    \frac{1}{\seqidx+1}\delta_{x_\seqidx}\ \text{ with }\  P_{\seqidx} = \frac{1}{\seqidx}\sum_{j=1}^{\seqidx}\delta_{X_j}, \label{eq:approx-seq-second}
\end{equation}
where $x_{\seqidx} = \sqrt{\seqidx+1}\cdot \bfone$ with $\bfone=(1,\dots,1)$, and $\{X_{1},\dots,X_{\seqidx}\}\iidsim P$. 
By \citet[Theorem 11.4.1]{Dudley2002}, the sequence $Q_{\seqidx}$ converges weakly to $P$ almost surely, 
but it has the following (almost sure) biased limit: 
\begin{align*}
    \lim_{\seqidx\to\infty}\EE_{Y\sim Q_{\seqidx}}[Y\otimes Y] 
 & =\EE_{X\sim P}[X\otimes X]+\bfone \otimes \bfone. 
\end{align*}

We investigate how the KSD between $P$
and $Q_{\seqidx}$ changes along the sequence. 
We examine the kernel choice recommended in \cref{cor:default-kernel} by setting $\polyorder_m=0$ and $\polyorder=2$ (referred to as \imqp). 
We compare this choice against using $k_{\imq}$ alone, which corresponds to the failure case illustrated in \cref{prop:ksd-conv-control-failure}. 

\opt{arxiv}{\begin{figure}[H]}%
\opt{aap,aapnolink}{\begin{figure}[H]}%
\centering\includegraphics[scale=0.45]{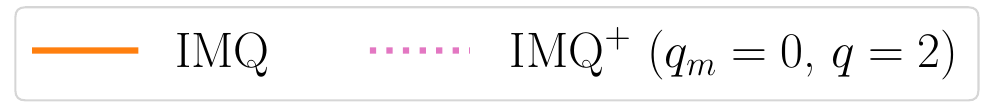}\\
\subfloat[Off-target sequence with non-converging second moments.]{\centering\includegraphics[width=0.45\columnwidth]{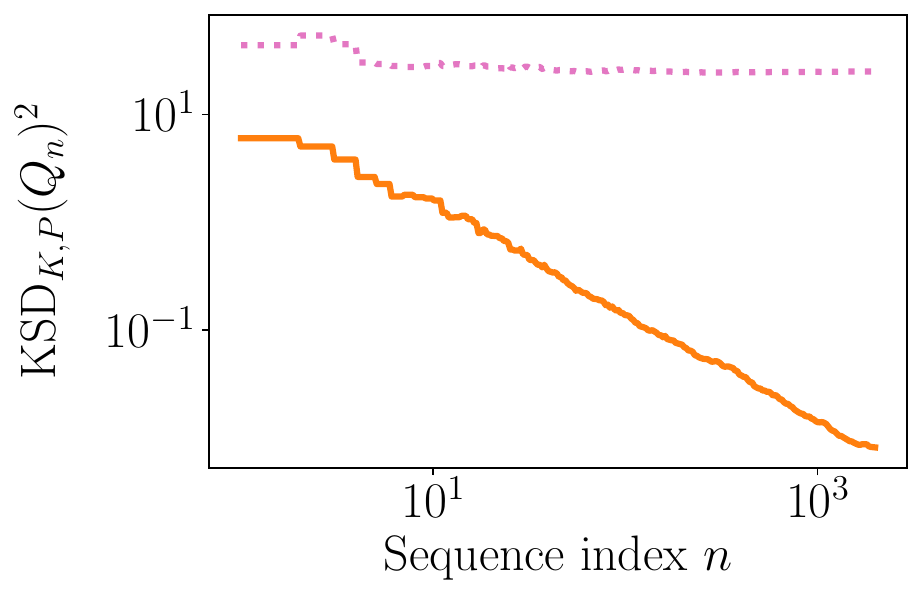}\label{fig:ksd-gauss-var-shift}}
\hfill{}\subfloat[On-target sequence formed by i.i.d samples from the target.]{\begin{centering}
\includegraphics[width=0.45\columnwidth]{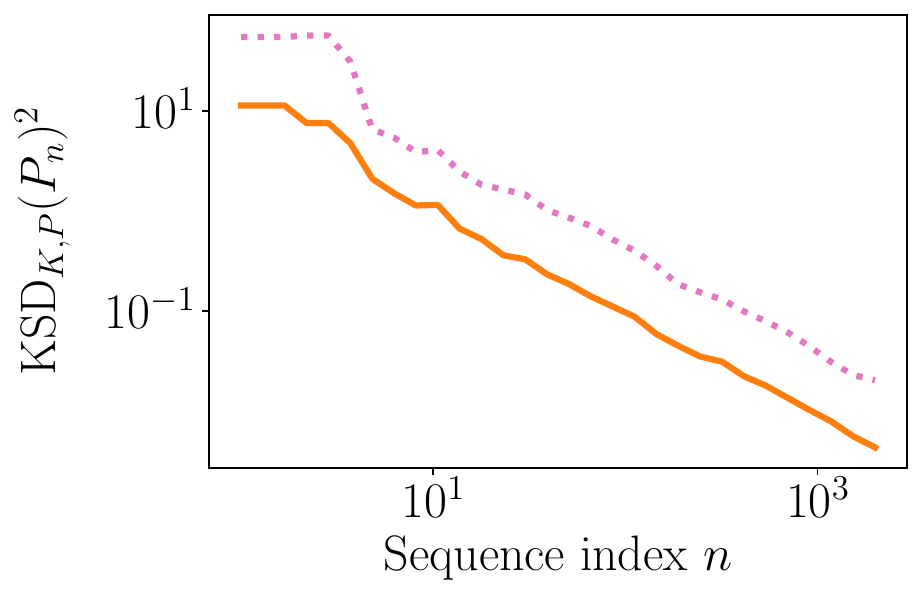}\label{fig:ksd-gauss-var-ontarget}
\par\end{centering}
}\caption{
\textbf{Importance of kernel choice for controlling second moments.}
(a) For a multivariate Gaussian target $P$, the standard IMQ KSD converges to $0$ for an off-target sequence $Q_{\seqidx}$ that does \textbf{not} converge to $P$ in $2$-Wasserstein distance.
Meanwhile, a KSD with the proposed \imqp kernel remains bounded away from $0$ as required by \cref{cor:default-kernel}. 
(b) For an on-target sequence $P_{\seqidx}$ drawn i.i.d.\ from $P$, both KSDs decay to zero, correctly detecting the $2$-Wasserstein convergence. 
See \cref{subsec:non-conv-variance} for more details. 
}
\end{figure}

\cref{fig:ksd-gauss-var-shift} shows the KSD's transition along
the approximating sequence for our two kernel choices. 
We can see that using the IMQ kernel alone, the KSD decays to zero even though the second moment is not converging to $P$. 
Fortunately, as proved in \cref{thm:ksd-equiv-to-Wass-conv}, our proposed base kernel does not suffer from this failing. 
Rather, its induced KSD enforces second-moment uniform integrability and hence remains bounded away from zero for the off-target sequence $Q_{\seqidx}$.
\cref{fig:ksd-gauss-var-ontarget} verifies that both KSDs converge to zero 
for an on-target sample sequence $P_{\seqidx}$ drawn i.i.d. from the target (see Eq.~\ref{eq:approx-seq-second}).   
This result concords with \cref{prop:Wass-conv-implies-KSD}, which guarantees that the KSD converges to zero whenever the approximating sequence converges to $P$ in the 2-Wasserstein sense.

\subsection{Detecting first moment non-convergence} \label{subsec:heavy-tailed}
Next, we turn to a heavy-tailed target distribution. 
Past work~\citep{GorMac2017, Barp2024} has shown that the IMQ Langevin KSD can fail to detect weak convergence for this target and hence can also fail to detect convergence in mean.
We exemplify this failure mode using a specific example and show how we can surmount it with a diffusion KSD. 

As our target $P,$ we use the standard multivariate \textit{t}-distribution
with the degrees of freedom $\nu>1$, which is defined by the density 
\[
p(x)=\frac{\Gamma\left(\frac{\nu+\datdim}{2}\right)}{\Gamma\left(\frac{\nu}{2}\right)\nu^{\frac{\datdim}{2}}\pi^{\frac{\datdim}{2}}}\left(1+\frac{\Verts x_{2}^{2}}{\nu}\right)^{-\frac{\nu+\datdim}{2}}; 
\]
we set $\datdim=5$ and $\nu=6$ in our experiment. 
We consider the following approximation sequence: 
\[
    Q_{\seqidx}=\left(1-\frac{1}{\seqidx+1}\right)P_{\seqidx}+\frac{1}{\seqidx+1}\delta_{x_{\seqidx}}, 
\]
where $x_{\seqidx} = (\seqidx+1)\cdot \bfone$. As in the previous section, this sequence has a limiting mean with bias $\bfone$. 

Here, we compare the IMQ Lanvegin KSD (LKSD) against a diffusion KSD (DKSD). 
As in the previous section, LKSD uses $K=k_{\imq}\idmat$ as a base kernel. 
To define the DKSD, we consider a diffusion matrix $m(x)=(1+\nu^{-1}\Verts x_{2}^{2})\idmat$. 
Recall that the DKSD may be seen as the Langevin KSD (Eq.~\ref{eq:ksd-def}) defined by the tilted kernel $K(x,y) = m(x) K_0(x,y) m(y)^{\top}$, where we use $K_0=k_{\imqp}\idmat$, the base kernel recommended in \cref{cor:default-kernel} with $\polyorder_m=1$ and $\polyorder=1$ (see the last paragraph of \cref{sec:Diffusion-kernel-Stein}). 

\begin{figure}[t]
\centering{}\includegraphics[scale=0.4]{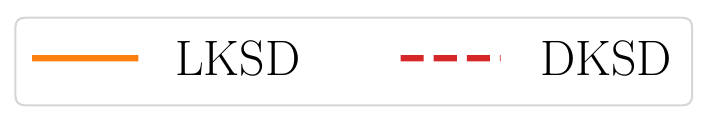}\bigskip{}
\\
\subfloat[Off-target sequence with non-converging means.]
{\begin{centering}
\includegraphics[width=0.45\columnwidth]{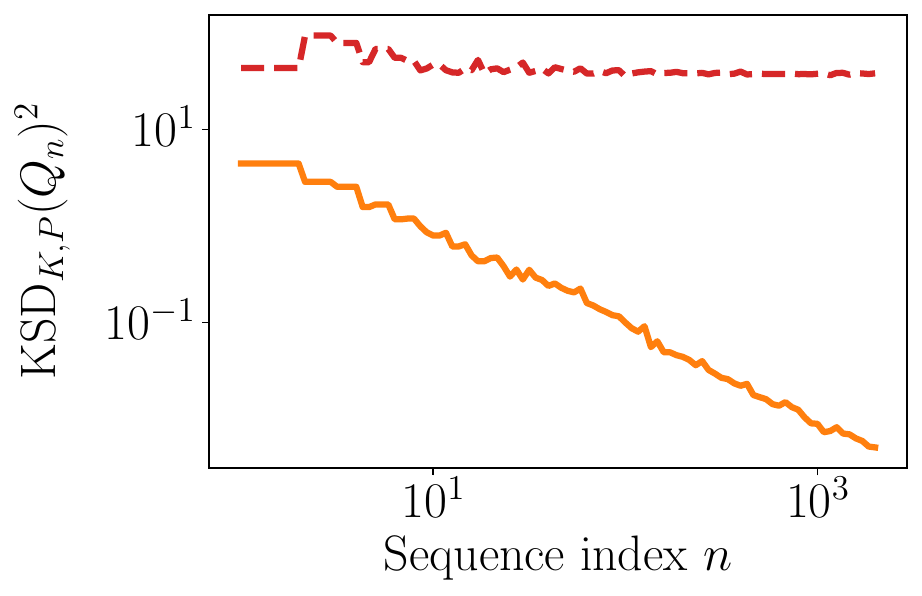}
\par\end{centering}
\label{fig:ksd-tdist-mean-shift}}
\hfill{}\subfloat[On-target sequence formed by i.i.d samples from the target.]{\centering{}\includegraphics[width=0.45\columnwidth]{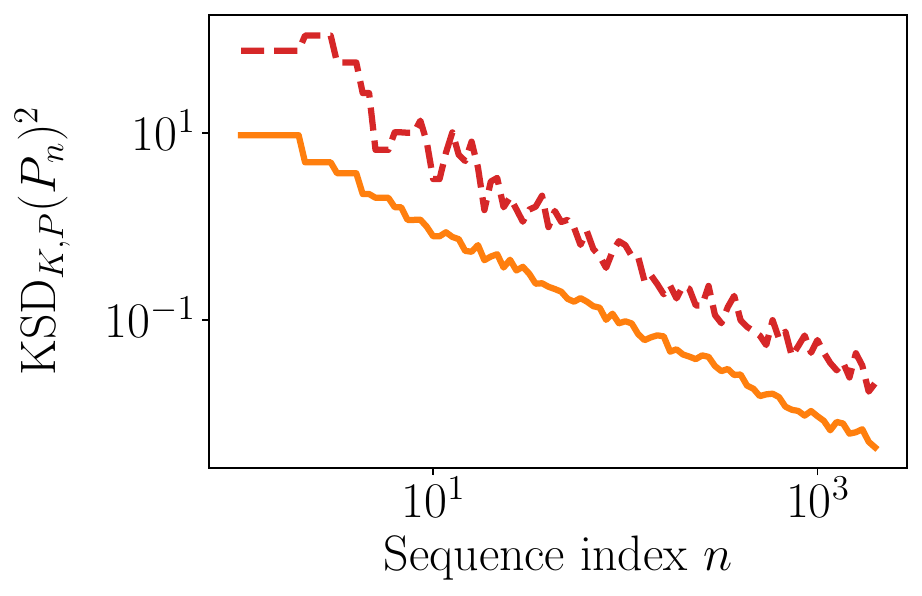}\label{fig:ksd-studentt-on-target}}
\caption{
\textbf{Importance of kernel choice for controlling first moments.}
(a) For a multivariate Student's \textit{t} target $P$, the standard IMQ Langevin KSD (LKSD) converges to $0$ for an off-target sequence $Q_{\seqidx}$ that does \textbf{not} converge to $P$ in $1$-Wasserstein distance.
Meanwhile, a diffusion KSD (DKSD) with an appropriate diffusion matrix $m$ and an \imqp base kernel remains bounded away from $0$ as required by \cref{thm:ksd-equiv-to-Wass-conv}. 
(b) For an on-target sequence $P_{\seqidx}$ drawn i.i.d.\ from $P$, both KSDs decay to zero, correctly detecting the $1$-Wasserstein convergence. 
See \cref{subsec:heavy-tailed} for more details. 
}
\label{fig:mean-shift-studentt}
\end{figure}

Figure \ref{fig:ksd-tdist-mean-shift} plots the two KSDs along the off-target sequence $Q_{\seqidx}$. 
We can see that the LKSD decays to zero. 
As \cref{prop:ksd-conv-control-failure} suggests, 
since the Student's \textit{t}-distribution has a decaying score function $\nabla\log p$, 
the corresponding Stein RKHS cannot have a growing function with a bounded base kernel, and thus is unable to enforce first-moment uniform integrability. 
In contrast, the DKSD sucessfuly detects the mean non-convergence. 
Indeed, our choice of the diffusion matrix $m$ satisfies our assumptions and enables us to cancel the decay of the score function: 
as we show in 
\opt{aapnolink}{\nolink{\cref{lem:dissipative-student}, }}%
\opt{aap,arxiv}{\cref{lem:dissipative-student}, }%
if $\nu>2,$ the \textit{t}-distribution is dissipative (Assumption
\ref{assu:dissipativity-diffusion}) %
and $\alpha=(1-2\nu^{-1});$ in fact, it satisfies the uniform dissipativity
condition in \cref{prop:wasserstein-decay-uniform-diss}.\footnote{
According to Theorem \ref{thm:ksd-equiv-to-Wass-conv},
the allowed choice of the growth $\polyorder$ of test functions is $\polyorder<\nu-4$. 
In fact, we may take $\polyorder<\nu+\theta-2$ with $0<\theta<1,$
since $P$ has moments up to order $\nu+\theta-2;$ see Remark \ref{rem:allowed-growth-testfunction}.
}
This result confirms our theory for the DKSD, as proved in \cref{thm:ksd-equiv-to-Wass-conv}. 
Finally, as in the previous section, \cref{fig:ksd-studentt-on-target} shows that both KSDs vanish for an on-target sample sequence $P_{\seqidx}$, demonstrating that the KSD detects the $1$-Wasserstein convergence, as \cref{prop:Wass-conv-implies-KSD} implies. 

\subsection{Cautionary tale: mixtures with isolated components} \label{subsec:Failure-mode:-distribution}

Our final experiment concerns the following distributions: 
\begin{equation}
P=\frac{1}{2}{\cal N}(\mu_{1},\idmat)+\frac{1}{2}{\cal N}(\mu_{2},\idmat),\ Q_{\pi}=\pi{\cal N}(\mu_{1},\idmat)+(1-\pi){\cal N}(\mu_{2},\idmat), \label{eq:Gaussian-mixtures}
\end{equation}
where $0\leq\pi\leq1$, $\mu_1, \mu_2 \in \bbR^{\datdim}$, and  
${\cal N}(\mu,\idmat)$ denotes a Gaussian distribution with mean $\mu$ and diagonal covariance $\idmat$. 
The two distributions $P$ and $Q_{\pi}$ share the mixture components, with the difference arising from the variation in the mixing weight $\pi$. 
Varying $\pi$ shifts the mean of $Q_{\pi}$, and one might hope that the KSD will detect this discrepancy. 
The target $P$ is indeed supported by our theory,
since Gaussian mixtures are known to be distantly dissipative \citep[Example 3]{gorham2016measuring}.
When the distance $\Verts{\mu_{1}-\mu_{2}}_{2}$ between
the two modes is large, however, the KSD is unable to capture the discrepancy
of the mixture ratio $\pi$ unless a very large sample size is observed.
Figure \ref{fig:mixture-sample-size} illustrates this point, using the kernel of \cref{cor:default-kernel}, where $\polyorder_m=0$ and $\polyorder=1$. 
This issue has been noted by \citet[Section 5.1]{gorham2016measuring}
and \citet{Wenliang2020}. 

\begin{figure}[t]
\centering{}\includegraphics[width=\columnwidth]{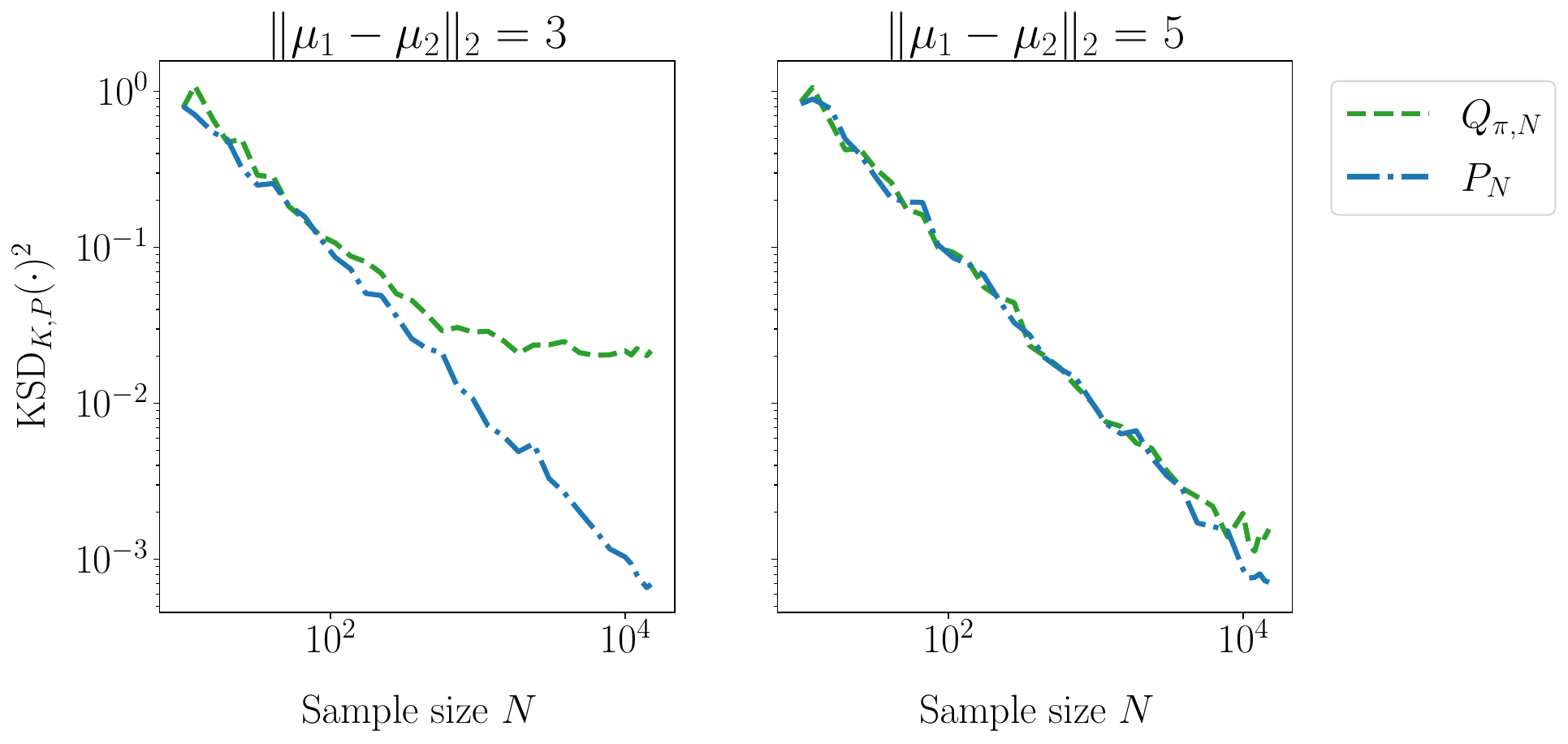}
\caption{
\textbf{Impact of isolated modes on non-convergence detection.}
For a two-component multivariate Gaussian mixture $P$ with mode separation 
$\protect\Verts{\mu_{1}-\mu_{2}}_{2}$, we compare
the \imqp KSD of an on-target sequence $P_N$ drawn i.i.d.\ from $P$ and an 
off-target sequence $Q_{\pi,N}$ drawn i.i.d.\ from a distribution that undersamples mode $1$ (with $\pi=1/10$). 
The larger the mode separation $\protect\Verts{\mu_{1}-\mu_{2}}_{2},$
the larger the sample size required to distinguish the on- and off-target sequences.
See \cref{subsec:Failure-mode:-distribution} for more details.}
\label{fig:mixture-sample-size}
\end{figure}

The insensitivity can be in part attributed to the RKHS's lack of adaptivity to the sample. 
We first examine this claim using fixed kernels, as in the experiment from \cref{fig:mixture-sample-size}. 
To this end, we consider the Langevin KSD with the base kernel of the form of \cref{def:kernel-form}, where $L=L^{(2)}$ in Theorem
\ref{thm:ksd-equiv-to-Wass-conv}, $\polyorder_m=0$, and $\polyorder=1$; 
this choice ensures that the Stein RKHS has a function of linear growth. 
We study two choices of the translation invariant kernel $\ell$~: the IMQ
kernel $k_{\imq}(x,x')=\bigl(1+\Verts{x-x'}_{2}^{2}/\sigma^{2}\bigr)^{-1/2}$
and the following Mat\'{e}rn ($\nu = 3/2$) kernel 
\[
k_{\mathrm{Mat}}(x,x')=\left(1+\frac{\sqrt{3}\Verts{x-x'}_{2}}{\sigma}\right)\exp\left(-\frac{\sqrt{3}}{\sigma}\Verts{x-x'}_{2}\right). 
\]
As in \cref{cor:default-kernel}, we term these two kernel choices \imqp and \matp, respectively. 
Here, to capture a sensible scale, we set the bandwidth $\sigma$ equal to the median (Euclidean) distance calculated from samples drawn from the target distribution $P$.

In this setting, we use an independent sample $\{X_{i}\}_{i=1}^{N}$ from $Q_{\pi}$ to form a sequence of empirical distributions $Q_{\pi,N}=N^{-1}\sum_{i\leq N}\delta_{X_{i}}$
and exactly compute the KSD between $P$ and $Q_{\pi,N}.$ In
the following, we set $\mu_{1}=-30\cdot\bfone,$ $\mu_{2}=-10\cdot\bfone$
and $\datdim=5.$ With $N$ fixed at $500,$ we vary the mixture ratio $\pi$
from $0$ to $1/2$ using a regular grid of size $30.$ We noticed
that the KSD's trajectory has different trends depending on the drawn
samples. We therefore repeat this procedure 100 times and provide
a summary. 

\cref{fig:ksd-mixture-noopt} presents density estimates of the distribution of KSD values computed from different sample draws, plotted against the mixture ratio. 
We observe that for both kernels, their KSDs do not
change along the sequence. 

\opt{aap,aapnolink}{
    \begin{figure}[t]
    \centering%
    \subfloat[KSDs with fixed length scale base kernels suffer from an insensitivity to changes in the Gaussian mixture proportion $\pi$. \label{fig:ksd-mixture-noopt}]
    {
        \centering\includegraphics[width=0.9\columnwidth]{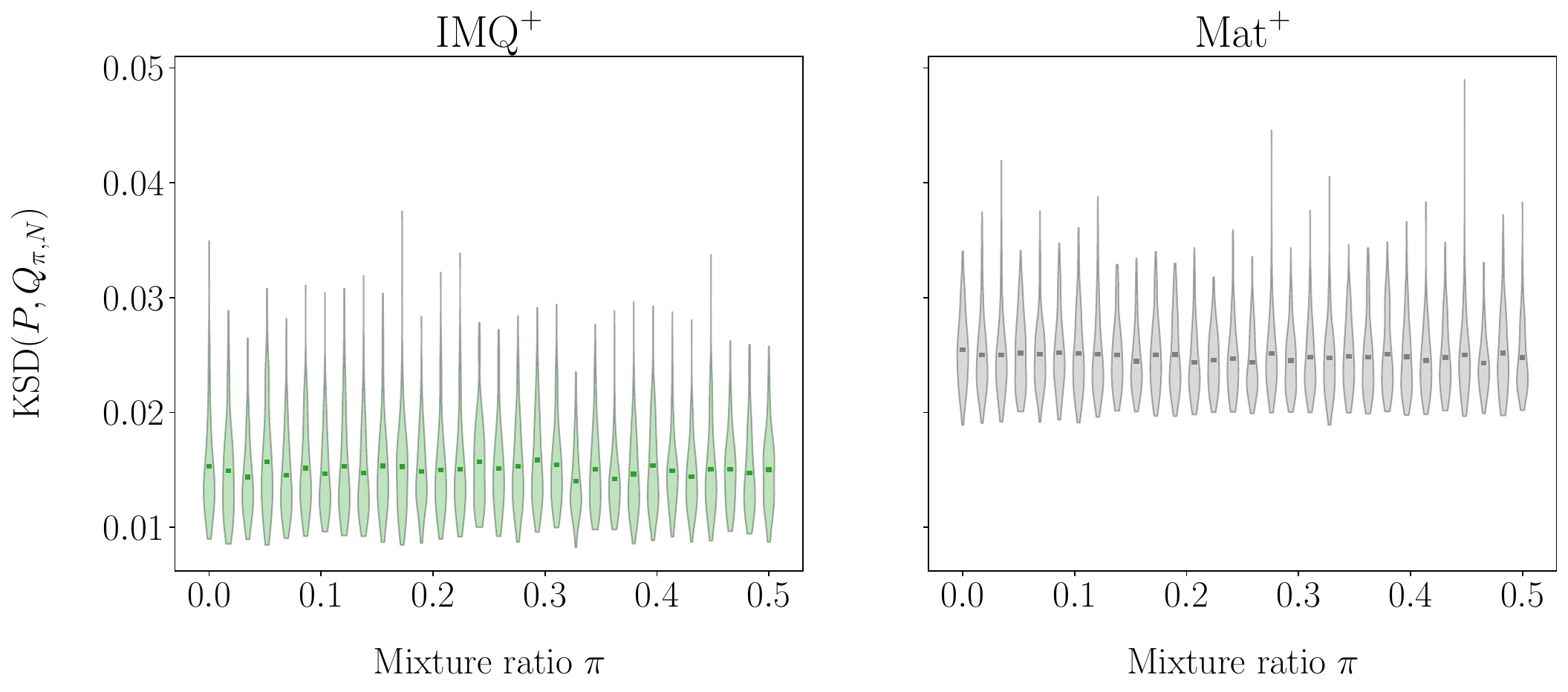}
    }\\ 
    \subfloat[
        KSDs with optimized length scale enjoy improved sensitivity to changes in the Gaussian mixture proportion $\pi$.\label{fig:ksd-mixture-opt}
    ]{
        \includegraphics[width=0.9\columnwidth]{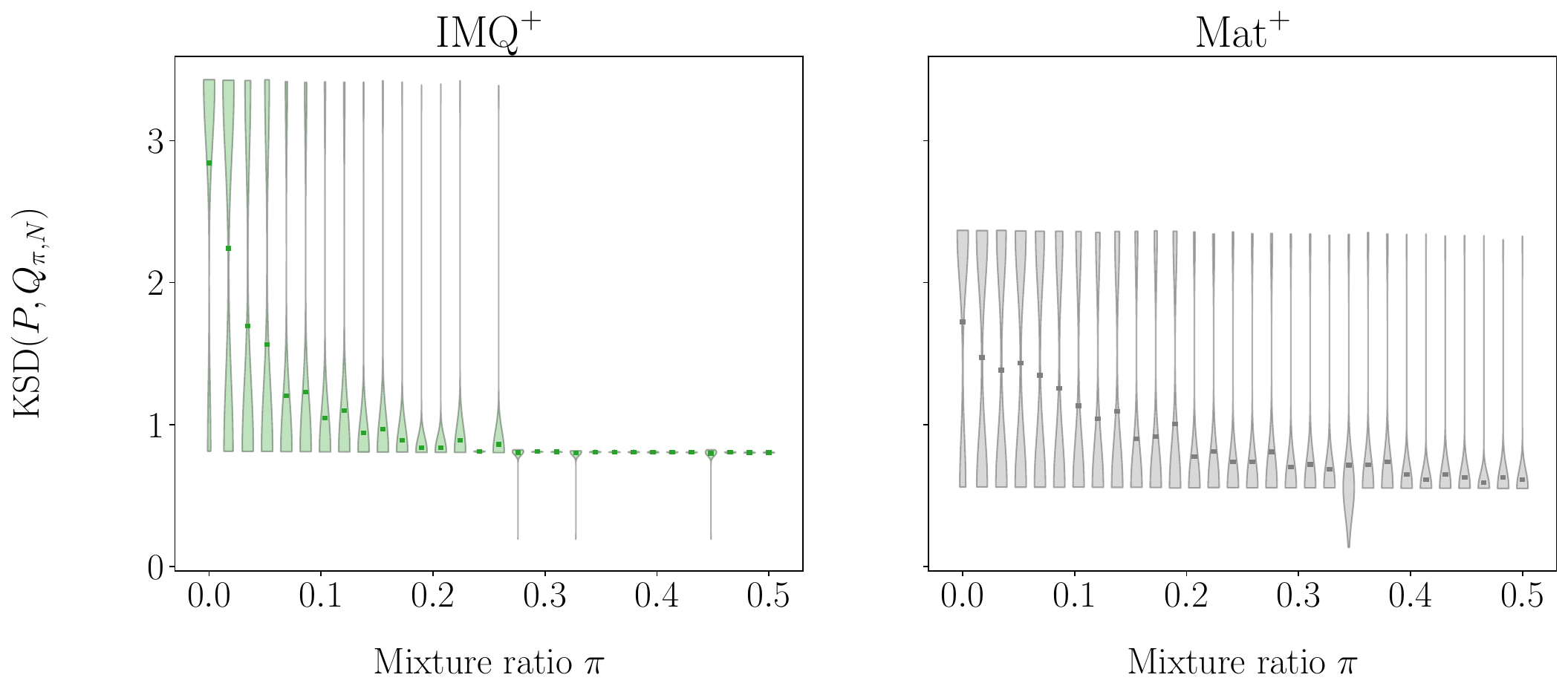}
    }\\
    \caption{\textbf{Impact of kernel length scale on detecting undersampled mixture components.}
        To assess the impact of adaptively selecting the length scale of the KSD base kernel, we plot both the distribution of KSD values and their mean value (as a bold dot) over independent samples of $N=500$ points drawn i.i.d.\ from the Gaussian mixture $Q_\pi$ in \cref{eq:Gaussian-mixtures} when the target $P=Q_{1/2}$. See \cref{subsec:Failure-mode:-distribution} for more details.
    }
    \end{figure}
}%

\opt{arxiv}{
    \begin{figure}[H]
    \centering\includegraphics[scale=0.4]{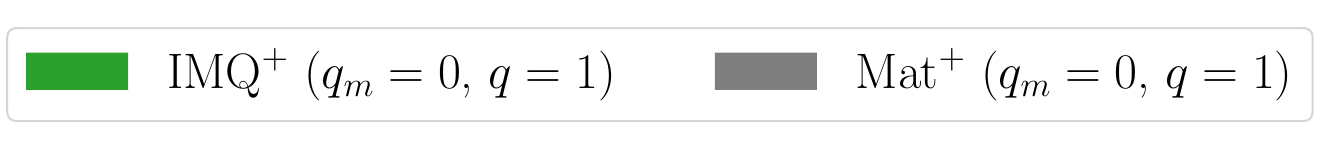}\\
    \subfloat[KSDs with fixed length scale base kernels suffer from an insensitivity to changes in the Gaussian mixture proportion $\pi$. \label{fig:ksd-mixture-noopt}]
    {
        \centering\includegraphics[width=0.9\columnwidth]{img/ksd_gauss_mixture_imq_vs_matern}
    }\\ 
    \subfloat[
        KSDs with optimized length scale enjoy improved sensitivity to changes in the Gaussian mixture proportion $\pi$.\label{fig:ksd-mixture-opt}
    ]{
        \includegraphics[width=0.9\columnwidth]{img/ksd_gauss_mixture_opt_imq_vs_matern}
    }\\
    \caption{\textbf{Impact of kernel length scale on detecting undersampled mixture components.}
        To assess the impact of adaptively selecting the length scale of the KSD base kernel, we plot both the distribution of KSD values and their mean value (as a bold dot) over independent samples of $N=500$ points drawn i.i.d.\ from the Gaussian mixture $Q_\pi$ in \cref{eq:Gaussian-mixtures} when the target $P=Q_{1/2}$. See \cref{subsec:Failure-mode:-distribution} for more details.
    }
    \end{figure}
}%
One can hope to improve this sensitivity by adjusting the length scale $\sigma$ adaptively. 
Here, following the approaches
in statistical hypothesis testing \citep{Gretton2012,Sutherland2016,Jitkrittum2016,JitXuSzaFukGre2017},
we consider choosing a bandwidth by optimizing the
objective $\ksd KP(Q_{\pi,N})^{2}/\sqrt{v_{H_{1}}},$ where $v_{H_{1}}=\var_{X\sim Q_{\pi,N}}\bigl[\EE_{X'\sim Q_{\pi,N}}[k_{p,K}(X,X')]\bigr].$
This objective is a known proxy for the power of the goodness-of-fit
test based on a KSD test statistic \citep{ChwStrGre2016,LiuLeeJor2016} and can be computed exactly.
The optimization procedure may be interpreted as seeking a bandwidth
that allows us to conclude $P\neq Q_{\pi,N}$ using as few sample
points as possible, without using the whole of $Q_{\pi,N}.$ We choose
a bandwidth that maximizes the objective from a regular grid between
$10^{-3}$ and $10^{3}$ (in the logarithmic scale with base $10$) with the grid size $20.$ 

Figure \ref{fig:ksd-mixture-opt} shows the result. Both kernels successfully manifest
decreasing trends. The IMQ kernel tends to take larger values near
$\pi=0,$ but stays at the same value past the point $\pi=0.3.$ In
contrast, the Mat\'{e}rn kernel keeps the decreasing trend and captures
the discrepancy. The Mat\'{e}rn kernel may therefore be thought of as
more stable and as inducing a more stringent discrepancy measure. 

\section{Discussion and conclusion}\label{sec:Conclusion}
In this paper, we presented necessary and sufficient conditions for the diffusion kernel Stein discrepancy to control both weak convergence and convergence of moments. 
These theoretical results provide guidance for choosing a reproducing kernel when applying the diffusion kernel Stein discrepancy in practice.

Nevertheless, several challenges remain for practical application. 
One limitation is the need to validate the assumptions on the target distribution, requiring analytical work. 
For example, dissipativity (\cref{assu:dissipativity}) may be challenging to establish if the target's analytical expression is not known (e.g., when the target is specified through a computer program). 
Developing a discrepancy measure that is adaptive to the target is desirable and left for future work. 

Another practical limitation concerns computational complexity. 
As shown in \eqref{eq:ksdexact}, computing the diffusion KSD requires evaluating pairwise interactions between sample points, leading to a computational cost that scales quadratically with the sample size. 
While the temporal costs can be substantially diminished by leveraging modern CPU and GPU parallelism, the quadratic complexity can still be onerous for very large sample sizes. 
In such circumstances, one can accurately approximate the KSD in near-linear time using techniques like Compress \citep{shetty2022distribution} and kernel thinning \citep{dwivedi2022generalized,dwivedi2024kernel} or follow the recipes of \citet{HugMac2018} to design a random feature Stein discrepancy that inherits the convergence control properties of a target KSD while being computable in near-linear time. 
Finally, our theoretical framework could be extended to settings that replace the (potentially expensive) exact evaluation of the log density gradient with a cheaper stochastic estimator, as in \citet{Gorham2020}.

\paragraph*{Acknowledgements}
HK was supported by the EPSRC grant [EP/W019590/1] and in part by the Gatsby Charitable Foundation. 
AG acknowledges financial support by the Gatsby Charitable Foundation. 
The authors thank Chris Oates for his helpful feedback, Carl-Johann Simon-Gabriel for the constructive discussions, and 
the three anonymous referees for their thoughtful reviews, which helped improve the manuscript.

\bibliographystyle{plainnat}
\bibliography{ref}

\newpage
\begin{appendix}
    \section{Proof of 
    Proposition~\lowercase{\ref{prop:ksd-conv-control-failure}}:\\ \ksdconvcontrolfailurename
}
\label{sec:q-wass-conv-necessary}
This appendix presents a proof of \cref{prop:ksd-conv-control-failure}. 
We provide a result that holds for a more general maximum mean discrepancy~\citep[MMD,][]{GreBorRasSchSmo2012}. 
The claim for the KSD follows once we address the MMD case, since the KSD is an specific instance of the MMD (see \cref{sec:Diffusion-kernel-Stein}). 

Recall that for a scalar-valued kernel, the MMD is defined as an IPM (see Eq. \ref{eq:ipm-definition} for the definition): $\mmd{k}(P, Q) = d_{\mathcal{B}_k} (P, Q)$, where 
$\mathcal{B}_k$ is the unit ball $ \{h\in \rkhs{k}: \Verts{h}_{\rkhs{k}}\leq 1 \}$ of the RKHS $\rkhs{k}$. 
If $\rkhs{k} \subset L^1(P) \cap L^1(Q)$, we have 
\[
    \mmd{k}(P,Q)^2 =  \int \int k(x,y) \dd \bigl(P-Q\bigr)(x) \dd \bigl(P-Q\bigr)(y). 
\]
For a finite signed measure $\mu$ such that $\rkhs{k}\subset L^1(\verts{\mu})$, we also denote 
\[
    \Verts{\mu}_{k} \coloneqq \sqrt{ \int \int k(x,y) \dd \mu(x) \dd \mu(y) }
\]
so that $\mmd{k}(P, Q) = \Verts{P-Q}_k$. 

Here, we generalize our setup to a metric space $(\calX, d)$, and consider $\polyorder$-Wasserstein convergence accordingly: 
\begin{defn}[$\polyorder$-Wasserstein convergence in metric spaces]
Let $({\cal X},d)$ be a separable metric space. We define the set
of Borel probability measures with finite $\polyorder$th moments
by 
\[
{\cal P}_{\polyorder}({\cal X})=\left\{ \mu:\int d(x,\bar{x})^{\polyorder}\dd\mu(x)<\infty\ \text{for some}\ \bar{x}\in{\cal X}\right\},
\]
and denote $\finitemomentspace{0}(\mathcal{X})$ by $\finitemomentspace{}(\mathcal{X})$.
A function $f:{\cal X}\to\bbR$ is said to be of $\polyorder$-growth
if there exists a constant $C>0$ such that 
\[
    \verts{f(x)}\leq C(1+d(x,x_{0})^{\polyorder})
\]
for some (and then any) $x_{0}\in{\cal X}.$ A sequence $(P_{\seqidx})_{\seqidx\geq1}\subset\finitemomentspace{}({\cal X})$
is said to converge to $P$ in $\polyorder$-Wasserstein convergence
if
\[
\int \verts{f}\dd P_{\seqidx}\to\int \verts{f}\dd P
\]
for any continuous function $f$ of $\polyorder$-growth. 
\end{defn}

We are ready to provide the result. 
The following proposition demonstrates that the MMD cannot detect moment non-convergence when its RKHS lacks $\polyorder$-growth functions. 
\begin{prop}[$\polyorder$-growth is necessary for enforcing $\polyorder$th moment convergence]
Let $(\calX, d)$ be an unbounded separable metric space. 
Let $\polyorder\in (0,\infty)$ and $P \in \finitemomentspace{\polyorder}(\calX)$. 
Let $k:\calX\times\calX \to \bbR$ be a kernel satisfying the following growth condition: 
there exist some $C, R>0$ and some $x_0$ such that 
\[
    \sqrt{k(x, x)} \leq C \{ 1+ d(x_0, x)^{\polyorder'} \}
\]
for all $x\in \calX$ with $d(x, x_0) > R$, where $0 < \polyorder' <\polyorder$. 
Then, vanishing $\mmd{k}(P, \cdot)$ does not imply $\polyorder$-Wasserstein convergence to $P$. 
\end{prop}
Before presenting a proof, we briefly remark that 
\cref{prop:ksd-conv-control-failure} holds as a corollary of this proposition. 
Indeed, under \cref{assu:poly-grow-coef}, from the expression of the Stein kernel (Eq. \ref{eq:Stein-kernel-expanded}), we have $\sqrt{k_{p,K}(x,x)} \leq C (1+\Verts{x}_2)$ for some constant $C>0$. 
As a result, for $\polyorder > 1$, the KSD cannot control $\polyorder$-Wasserstein convergence. 

\begin{proof}
We construct a sequence in $\finitemomentspace{\polyorder}(\calX)$ that converges to $P$ in MMD but not in $\polyorder$-Wasserstein convergence. 
Let $n\geq 1$ be an integer such that there exists a point $x_{\seqidx} \in \calX$ with 
$  (\seqidx-1)^{1/\polyorder} \leq d(x_0, x_{\seqidx}) \leq  \seqidx^{1/\polyorder}$. 
There are infinitely many such $n$ as $\calX$ is bounded otherwise. 
Let $\{\seqidx(j)\}_{j\geq1}$ be a sequence of such integers with $x_j$ the corresponding point. 
Then, define a sequence 
\[ 
    Q_{j} = \left( 1-\frac{1}{\seqidx(j)+1}\right) P + \frac{1}{\seqidx(j)+1}\delta_{x_{j}}. 
\]
For $j\geq1$ with $d(x_j, x_0) > R$, we have 
\begin{align*}
    \mmd{k}(P, Q_{j}) 
    & \leq  \frac{1}{\seqidx(j)+1}\Verts{P}_k + \frac{1}{\seqidx(j)+1} \Verts{\delta_{x_j}}_k\\
    & = \frac{1}{\seqidx(j)+1}\Verts{P}_k + \frac{1}{\seqidx(j)+1} \sqrt{k(x_j, x_j)}\\
    & \leq \frac{1}{\seqidx(j)+1}\Verts{P}_k + \frac{C}{\seqidx(j)+1} \left( 1+n(j)^{\polyorder'/\polyorder} \right) 
    \to 0\ (j \to \infty). 
\end{align*}
Thus, $Q_{j}$ converges to $P$ in MMD. 
On the other hand, for any continuous bounded function $f$, 
\begin{align*}
    \int f \dd Q_j 
    &=  \left( 1- \frac{1}{\seqidx(j)+1}\right )\int f \dd P + \underbrace{\frac{1}{n(j)+1} f(x_j)}_{\to 0} \\
    &\to \int f \dd P\ (j\to \infty), 
\end{align*}
where we have used the boundedness of $f$ to diminish the second term in the limit. Hence, $Q_{j}$ weakly converges to $P$. 
Moreover, we have 
\begin{align*}
    \int d(x_0, \cdot)^{\polyorder} \dd Q_{j} 
    & = \left( 1- \frac{1}{\seqidx(j)+1}\right )\int d(x_0, \cdot)^{\polyorder} \dd P + \frac{1}{\seqidx(j)+1} d(x_0, x_j)^\polyorder \\
    & \geq \left( 1- \frac{1}{\seqidx(j)+1}\right)\int d(x_0, \cdot)^{\polyorder} \dd P + \frac{ \seqidx(j) - 1}{\seqidx(j)+1}. 
\end{align*}
Taking the limit on both sides implies that $Q_j$ does not converge to $P$ in the $\polyorder$th moment. 
Thus, we have constructed the desired sequence, and the proof is complete. 
\end{proof}

\section{Approximating  $\protect\polyorder$-growth} \label{sec:results-q-growth-approximation}
In this appendix, it is shown that we can construct a Stein RKHS $\mathcal{T}_P^m (\rkhs{K})$ that approximates $\polyorder$-growth. 
Our strategy is as follows. 
First, we construct a vector-valued function $v_{\varepsilon}$ such that $\mathcal{T}_P^m v_{\varepsilon}$ approximates $\polyorder$-growth up to precition $\varepsilon$, i.e., $\mathcal{T}_P^m v_{\varepsilon}$ satisfies 
\[ \mathcal{T}_P^{m} v_{\varepsilon}(x) \geq \Verts{x}_2^{\polyorder} 1\{\Verts{x}_2 > r_{\varepsilon}\} - \varepsilon\]
for some ${r_{\varepsilon}}>0$ 
--- \cref{lem:norm-indicator-dominator} serves this purpose. 
    Note that $v_{\epsilon}$ may not belong to the RKHS. 
Thus, we approximate it using a function of the RKHS; we provide two such results in \cref{subsec:Proof-of-Lemma-existence-qgrowth-approx}. 

\subsection{Preparatory results}
\begin{lem}[Existence of $\polyorder$-growth approximation]
\label{lem:norm-indicator-dominator}For $r,\delta>0,$ define a $\calC^{2}$-function
$1_{r,\delta}(x)=h_{r,\delta}(\Verts x_{2})$ with 
\[
h_{r,\delta}(t)=\frac{f(r+\delta-t)}{f(r+\delta-t)+f(t-r)}
\]
and $f(t)=t^{3}1_{[0,\infty)}(t)$, $t\in \bbR$. 
Let $\polyorder\in [0,\infty).$ 
Assume the dissipativity condition in Assumption \ref{assu:dissipativity} with
$\alpha,\beta_{1},\beta_{0}.$ Suppose Assumption \ref{assu:poly-grow-coef}
holds for some constants $c_{m}>0$ and $\polyorder_{m}\in\{0,1\}$;
if $\polyorder_{m}=1,$ additionally assume $\alpha>\lambda_{m}(\polyorder+2).$
Then, for any $\varepsilon>0,$ there exist positive constants $r,\delta,\eta$
such that 
\[
\mathcal{T}_{P}^{m}(\tilde{1}_{r,\delta}v)(x)\geq\eta\Verts x_{2}^{\polyorder}1\{\Verts x_{2}\geq r+\delta\}-\varepsilon1\{r<\Verts x_{2}<r+\delta\}
\]
where $\tilde{1}_{r,\delta}=1-1_{r,\delta},$ $v(x)=-\weight_{\polyorder-2}(x)x$
with $\weight_{\polyorder-2}(x)=\bigl(\tau^{2}+\Verts x_{2}^{2}\bigr)^{(\polyorder-2)/2}$
and the constant $\eta$
is determined by the following quantities: $\alpha,$ $\beta_{1},$
$\beta_{0},$ $\lambda_{m},$ $\tau,$ $\polyorder_{m},$ and $\polyorder.$ 
In particular, we may take $r+\delta\geq1$ . 
\end{lem}

\begin{proof}
Given positive constants $r,\delta,$ our proof
separately considers the three regimes: $\Verts x_{2}\leq r,$ $r<\Verts x_{2}<r+\delta,$
and $\Verts x_{2}\geq r+\delta.$ We then choose suitable $r,\delta.$ 
Below we make use of the following relation that holds for differentiable functions $\gamma: \bbR^{\datdim} \to \bbR$ and $g: \bbR^{\datdim}\to \bbR^{\datdim}$: 
\begin{equation}
\mathcal{T}_P^m \bigl(\gamma g \bigr)(x)
=\gamma(x){\cal T}_{P}^{m}g(x)+\la m(x),\nabla \gamma(x)\otimes g(x)\ra.\label{eq:Stein-op-formula-tilting}
\end{equation}

First, if $\Verts x_{2}\leq r,$ we have $\tilde{1}_{r,\delta}(x)=0$
and thus $\mathcal{T}_{P}^{m}(\tilde{1}_{r,\delta}v)(x)=0,$ for any
choice of $r$ and any $v$ in the domain of $\mathcal{T}_{P}^{m}.$
In the following, we consider $v(x)=-\weight_{\polyorder-2}(x)x$ as claimed. 

In the regime $r<\Verts x_{2}<r+\delta,$ we show that for any $\varepsilon>0,$
we can choose $r,\delta>0$ such that $\mathcal{T}_{P}^{m}(\tilde{1}_{r,\delta}v)(x)\geq-\varepsilon.$
With $\Verts x_{2}=t$ and $\tilde{h}_{r,\delta}(t) = 1 - h_{r,\delta}(t)$,  we have 
\begin{align}
&\mathcal{T}_{P}^{m}(\tilde{1}_{r,\delta}v)(x) \nonumber \\
& =\tilde{1}_{r,\delta}(x){\cal T}_{P}^{m}v(x)-h_{r,\delta}^{'}\bigl(\Verts x_{2}\bigr)\la m(x),x\otimes v(x)\ra\frac{1}{\Verts x_{2}}\nonumber \\
&=\tilde{\weight}(t)\tilde{h}_{r,\delta}(t)\left\{ -\mathcal{T}_{P}^{m}x-\frac{2(\polyorder/2-1)t^{2}}{\tau^{2}+t^{2}}\frac{\inner{m(x)}{x\otimes x}}{t^{2}}\right. \nonumber \\
& \hphantom{=\tilde{\weight}(t)\bigl(1-h_{r,\delta}(t)\bigr)}\quad\left. -\frac{h'_{r,\delta}(t)}{\tilde{h}_{r,\delta}(t)}\frac{\inner{m(x)}{x\otimes x}}{t^{2}}t\right\} \nonumber \\
& \geq \tilde{\weight}(t)\tilde{h}_{r,\delta}(t)\left[\alpha t^{2}-\beta_{1}t-\beta_{0}-\lambda_{m}\left( \polyorder-2+\frac{h'_{r,\delta}(t)}{\tilde{h}_{r,\delta}(t)}t\right) (c_{m}+t^{\polyorder_{m}+1})\right]  \nonumber \\ 
\nonumber \\
&=\begin{aligned}\tilde{\weight}(t)\left[\tilde{h}_{r,\delta}(t)\cdot\left\{ \alpha t^{2}-\beta_{1}t-\beta_{0}-\lambda_{m}\left(\polyorder-2+\omega_{r,\delta}(t)\right)(c_{m}+t^{\polyorder_{m}+1})\right\} \right. \nonumber\\
\hphantom{=\tilde{w}(t)}\quad\left.-\lambda_{m}r \cdot h'_{r,\delta}(t)(c_{m}+t^{\polyorder_{m}+1})\right]
\end{aligned}
\nonumber \\
 & \geq\begin{aligned}\tilde{\weight}(t)\left[\frac{(t-r)^{3}}{(r+\delta-t)^{3}+(t-r)^{3}}\cdot\left\{ \alpha t^{2}-\beta_{1}t-\beta_{0}-\lambda_{m}(\polyorder+2)(c_{m}+t^{\polyorder_{m}+1})\right\} \right.\\
\hphantom{=\tilde{w}(t)}\quad\left.-\lambda_{m}r\frac{3\delta(r+\delta-t)^{2}(t-r)^{2}}{\{(r+\delta-t)^{3}+(t-r)^{3}\}^{2}}(c_{m}+t^{\polyorder_{m}+1})\right]
\end{aligned}
\label{eq:lowerbound-truncation}
\end{align}
where $\tilde{\weight}(t)=\bigl(\tau^{2}+t^{2}\bigr)^{\polyorder/2-1}$ 
and $\omega_{r,\delta}(t)\coloneqq3\delta(r+\delta-t)^{2}/\{(r+\delta-t)^{3}+(t-r)^{3}\}$; 
the first inequality is derived from the dissipativity
(\cref{assu:dissipativity})
and the growth condition on
$\Verts{m(x)}_{\mathrm{op}}$ (\cref{assu:poly-grow-coef}); to obtain the second inequality, we
have used $\omega_{r,\delta}(t)\leq4$ for $r < t < r+\delta$. 
In the lower bound \eqref{eq:lowerbound-truncation}, we have a quadratic function inside the braces in the first term,
where, by assumption, the quadratic term has positive coefficients
$\alpha$ and $\alpha-\lambda_{m}(\polyorder+2)$ for $\polyorder_{m}=0$
and $\polyorder_{m}=1,$ respectively. Thus, by choosing $r>0$ (independent
of $\delta)$ appropriately, we can make this quadratic function both
positive and increasing in the interval $(r,r+\delta).$ We denote
this value of $r$ by $r_{0}.$

The lower bound (\ref{eq:lowerbound-truncation}) may become negative,
but we show that its magnitude can be made arbitrarily small by choosing
$r$ and $\delta$ judiciously. To see this, note that the second
term of (\ref{eq:lowerbound-truncation}) dominates the first iff
\begin{equation}
\frac{\alpha}{3\lambda_{m}}\frac{\theta(1+\theta^{3})}{(1+\theta)^{2}}\frac{\delta}{r}\cdot\frac{t^{2}-\lambda_{m}(\polyorder+2)/\alpha\cdot t^{\polyorder_{m}+1}-(\beta_{1}/\alpha)t-\beta_{0}/\alpha-\lambda_{m}c_{m}(\polyorder+2)/\alpha}{t^{\polyorder_{m}+1}+c_{a}}\leq1,\label{eq:lowerbound-trunc-negative-cond}
\end{equation}
where $\theta=(t-r)/(r+\delta-t)\in(0,\infty).$ By taking sufficiently
large $r,$ we may assume that for any $t>r,$ 
\[
\frac{t^{\polyorder_{m}+1}+c_{m}}{t^{2}-\lambda_{m}(\polyorder+2)/\alpha\cdot t^{\polyorder_{m}+1}-(\beta_{1}/\alpha)t-\beta_{0}/\alpha-\lambda_{m}c_{m}(\polyorder+2)/\alpha}\leq C
\]
for some constant $C>0.$ If $\polyorder_{m}=0,$ we may choose any
$C>0,$ whereas if $\polyorder_{m}=1,$ we need $C>\{1-\lambda_{m}(\polyorder+2)/\alpha\}^{-1}.$
Indeed, such $r$ can be chosen by examining the following conditions:
\[
\begin{cases}
0\leq Ct^{2}-\bigl[C\bigl\{\frac{\beta_{1}}{\alpha}+\lambda_{m}\frac{\polyorder+2}{\alpha}\bigr\}+1\bigr]t-C\left\{ \frac{\beta_{0}}{\alpha}+\frac{\lambda_{m}c_{m}(\polyorder+2)}{\alpha}+\frac{c_{m}}{C}\right\}  & \text{if }\polyorder_{m}=0,\\
0\leq\bigl[C\{1-\frac{\lambda_m(\polyorder+2)}{\alpha}\}-1\bigr]t^{2}-C\frac{\beta_{1}}{\alpha}t-C\left\{ \frac{\beta_{0}}{\alpha}+\frac{\lambda_{m}c_{m}(\polyorder+2)}{\alpha}+\frac{c_{m}}{C} \right\}  & \text{if }\polyorder_{m}=1.
\end{cases}
\]
Let $r_{1}$ be a value of such $r.$ Using $r=r_{0}\lor r_{1},$
the inequality (\ref{eq:lowerbound-trunc-negative-cond}) implies
\begin{equation}
    \frac{\theta(1+\theta^{3})}{(1+\theta)^{2}}\leq\frac{3\lambda_{m}C}{\alpha}\frac{r}{\delta}. \label{eq:norm-indicator-theta-eval}
\end{equation}
Then, the magnitude of the second term is evaluated as follows: we
have 
\begin{align*}
 & \tilde{\weight}(t)=\left\{ \tau^{2}+\left(\frac{\theta(1+\theta^{3})}{(1+\theta)^{2}}\frac{1+\theta}{1+\theta^{3}}\delta+r\right)^{2}\right\} ^{\polyorder/2-1}\\
 & \leq\begin{cases}
\frac{1}{\tau^{\polyorder-2}} & \polyorder\leq2\\
\left\{ \tau^{2}+\left(\frac{4\lambda_{m}C}{\alpha}+1\right)^{2}r^{2}\right\} ^{\polyorder/2-1} & q>2
\end{cases},
\end{align*}
and 
\begin{align}
 & 3\lambda_{m}r\delta\frac{(r+\delta-t)^{2}(t-r)^{2}}{\{(r+\delta-t)^{3}+(t-r)^{3}\}^{2}}(c_{m}+t^{\polyorder_{m}+1})\nonumber \\
 & =3\lambda_{m}\frac{\theta^{2}}{(1+\theta)^{4}}\frac{(1+\theta)^{6}}{(1+\theta^{3}){}^{2}}\left\{ c_{m}+\left(\frac{\theta}{1+\theta}\delta+r\right)^{\polyorder_{m}+1}\right\} \frac{r}{\delta}\nonumber \\
 & =3\lambda_{m}\left(\frac{\theta(1+\theta^{3})}{(1+\theta)^{2}}\right)^{2}\frac{(1+\theta)^{6}}{(1+\theta^{3}){}^{4}}\left\{ c_{m}+\left(\frac{1+\theta}{1+\theta^{3}}\frac{\theta(1+\theta^{3})}{(1+\theta)^{2}}\delta+r\right)^{\polyorder_{m}+1}\right\} \frac{r}{\delta}\nonumber \\
 & \overset{\text{\eqref{eq:norm-indicator-theta-eval}}}{\leq}\frac{27\lambda_{m}^{3}C^{2}}{\alpha^{2}}\left(\frac{123+55\sqrt{5}}{32}\right)\left\{ c_{m}+\left(\frac{4\lambda_{m}C}{\alpha}+1\right)^{\polyorder_{m}+1}r^{\polyorder_{m}+1}\right\} \cdot\left(\frac{r}{\delta}\right)^{3}.\label{eq:lowerbound-trunc-magnitude}
\end{align}
The product of these provides the desired estimate of the magnitude.
For a given $r,$ as $\delta$ grows, the evaluation (\ref{eq:lowerbound-trunc-magnitude})
decays at the rate of $\delta^{-3}.$ Hence, for any $\varepsilon>0,$
we can choose sufficiently large $\delta$ such that the domination
by the second term in (\ref{eq:lowerbound-truncation}) is at most
by $\varepsilon;$ i.e., $\mathcal{T}_{P}^{m}\bigl(\tilde{1}_{r,\delta}v\bigr)(x)\geq-\varepsilon$
for $r<\Verts x_{2}<r+\delta.$ We denote such choice of $\delta$
by $\delta_{\varepsilon}(r).$ 

Finally, consider the regime $\Verts x_{2}\geq r+\delta,$ where $\mathcal{T}_{P}^{m}(\tilde{1}_{r,\delta}v)(x)=\mathcal{T}_{P}^{m}v(x).$
We show that a suitable value of $r+\delta$ yields $\mathcal{T}_{P}^{m}v(x)\geq\eta\Verts x_{2}^{\polyorder}$
for some $\eta>0.$ By Assumption \ref{assu:dissipativity} and the
growth condition on $\Verts{m(x)}_{\mathrm{op}},$ we have 
\begin{align*}
\mathcal{T}_{P}^{m}v(x) & \geq\tilde{w}(\Verts{x}_2)\left(\alpha\Verts x_{2}^{2}-\beta_{1}\Verts x_{2}-\beta_{0}-\lambda_{m}(q-2)\frac{\Verts x_{2}^{2}}{(\tau^{2}+\Verts x_{2}^{2})}\bigl(c_{m}+\Verts x_{2}^{\polyorder_{m}+1}\bigr)\right)\\
 & \geq\tilde{f}\bigl(\Verts x_{2}\bigr)\left(1+\frac{\tau^{2}}{\Verts x_{2}^{2}}\right)^{\frac{\polyorder}{2}-1}\Verts x_{2}^{\polyorder},
\end{align*}
where 
\[
\tilde{f}(t)=\left\{ \alpha-\frac{1}{t^{2}}\left(\beta_{1}t+\beta_{0}+\lambda_{m}(q-2)\bigl(c_{m}+t^{\polyorder_{m}+1}\bigr)1\{\polyorder>2\}\right)\right\} .
\]
Due to the monotonicity of $\tilde{f},$ we have $\delta_{0}$ such
that $\tilde{f}(\delta_{0})=0.$ By the requirement on $r_{0},$ it
turns out $r_{0}>\delta_{0}.$ Therefore, if $\polyorder\geq2,$ we
have ${\cal T}_{P}g(x)\geq\eta\Verts x_{2}^{\polyorder}$ for $\Verts x_{2}\geq r_{0}$
and $\eta=\tilde{f}(r_{0});$ if $\polyorder<2,$ we have $\mathcal{T}_{P}^{m}g(x)\geq\eta\Verts x^{\polyorder}$
for $\Verts x_{2}\geq r_{0}\lor\tau^{2}\bigl(2^{2/\verts{\polyorder-2}}-1\bigr)^{-1}$
and $\eta=2^{-1}\text{\ensuremath{\tilde{f}(r_{0})}}.$

The consideration for the above three regimes of $\Verts x_{2}$ suggests
that we should choose $\delta=\delta_{\varepsilon}(r),$ and $r=r_{0}\lor r_{1}$
if $\polyorder\ge2$ or else $r=r_{0}\lor r_{1}\lor\tau^{2}\bigl(2^{2/\verts{\polyorder-2}}-1\bigr)^{-1}$. This choice yields 
\begin{align*}
\mathcal{T}_{P}^{m}\bigl(\tilde{1}_{r,\delta}v\bigr)(x) & \geq\eta\Verts x_{2}^{\polyorder}1\{\Verts x_{2}\geq r+\delta\}-\varepsilon1\{r<\Verts x_{2}<r+\delta\}
\end{align*}
for any $x\in\bbR^{\datdim}.$
\end{proof}
\begin{cor}[Concrete choices of $r$ and $\delta$ in \cref{lem:norm-indicator-dominator}]
\label{cor:r-delta-expression}
Lemma \ref{lem:norm-indicator-dominator}
holds for 
\begin{align*}
r & =\begin{cases}
r_{1} & \polyorder\geq2\\
r_{1}\lor\tau^{2}\bigl(2^{2/\verts{\polyorder-2}}-1\bigr)^{-1} & 0\leq\polyorder<2
\end{cases}\\
\delta & =W_{r}r\cdot\varepsilon^{-\frac{1}{3}},
\end{align*}
where 
\[
r_{1}=\begin{cases}
\frac{\{\beta_{1}+\lambda_{m}(\polyorder+2)\}+1+\sqrt{[\{\beta_{1}+\lambda_{m}(\polyorder+2)\}+1]^{2}+4\alpha\left\{ \beta_{0}+\lambda_{m}c_{a}(\polyorder+2)+c_{m}\right\} }}{2\alpha} & \polyorder_{m}=0\\
\frac{2\beta_{1}+2\sqrt{\beta_{1}^{2}+2\tilde{\alpha}\{\lambda_{m}c_{m}(\polyorder+2)+\beta_{0}\}+\tilde{\alpha}c_{m}}}{2\tilde{\alpha}} & \polyorder_{m}=1,
\end{cases}
\]

\begin{align*}
&W_{r}^{3}=\\
&\begin{cases}
\bigl(\tau^{-(q-2)}\lor\bar{w}_{0}(r)\bigr)27\lambda_{m}^{3}\left(\frac{123+55\sqrt{5}}{32}\right)\left\{ c_{m}+\left(4\lambda_{m}+1\right)r\right\}  & \polyorder_{m}=0\\
\bigl(\tau^{-(q-2)}\lor\bar{w}_{1}(r)\bigr)\frac{4\cdot27\lambda_{m}}{\{\alpha-\lambda_{m}(\polyorder+2)\}^{2}}\left(\frac{123+55\sqrt{5}}{32}\right)\left\{ c_{m}+\left(\frac{8\lambda_{m}}{\alpha-\lambda_{m}(\polyorder+2)}+1\right)^{2}r^{2}\right\}  & \polyorder_{m}=1,
\end{cases}
\end{align*}
$\bar{w}_{0}(r)=\{\tau^{2}+\left(4\lambda_{m}+1\right)^{2}r^{2}\}^{\polyorder/2-1},$
and $\bar{w}_{1}(r)=\bigl[\tau^{2}+\bigl\{2\lambda_{m}/\bigl(\alpha-\lambda_{m}(\polyorder+2)\bigr)+1^{2}\bigr\} r^{2}\bigr]^{\polyorder/2-1}.$
Thus, $r+\delta=(1+W_{r}\varepsilon^{-\frac{1}{3}})r.$ In particular,
we may take $r\geq1$ to ensure $r+\delta>1.$ 
\end{cor}

\begin{proof}
We follow the notation in the proof of \cref{lem:norm-indicator-dominator}.
We consider the two cases $\polyorder_{m}=0$ and $\polyorder_{m}=1$
separately. 

We first examine the case $\polyorder_{m}=0.$ Here, by checking the
requirement on $r_{0},$ we obtain 
\begin{align*}
r_{0} & =\frac{\beta_{1}+\lambda_{m}(\polyorder+2)+\sqrt{\left(\beta_{1}+\lambda_{m}(\polyorder+2)\right)^{2}+4\alpha\left(\lambda_{m}c_{m}(\polyorder+2)+\beta_{0}\right)}}{2\alpha}
\end{align*}
 and with $C=\alpha$ in the proof of \cref{lem:norm-indicator-dominator}, 
the condition on $r_{1}$ implies 
\begin{align*}
r_{1} & =\frac{\{\beta_{1}+\lambda_{m}(\polyorder+2)\}+1+\sqrt{[\{\beta_{1}+\lambda_{m}(\polyorder+2)\}+1]^{2}+4\alpha\left\{ \beta_{0}+\lambda_{m}c_{a}(\polyorder+2)+c_{m}\right\} }}{2\alpha}>r_{0}.
\end{align*}
We then obtain $\delta_{\varepsilon}(r)=W_{r}r\varepsilon^{-1/3}$
with 
\[
W_{r}^{3}=\bar{w}(r)\cdot27\lambda_{m}^{3}\left(\frac{123+55\sqrt{5}}{32}\right)\left\{ c_{m}+\left(4\lambda_{m}+1\right)r\right\} 
\]
 and $\bar{w}(r)=\left\{ \tau^{2}+\left(4\lambda_{m}+1\right)^{2}r^{2}\right\} ^{\polyorder/2-1}\lor\tau^{-(q-2)}.$

Next, we similarly investigate the case $\polyorder_{m}=1.$ With
$\tilde{\alpha}=\alpha-\lambda_{m}(\polyorder+2),$ we have 
\[
r_{0}=\frac{\beta_{1}+\sqrt{\beta_{1}^{2}+4\tilde{\alpha}\{\lambda_{m}c_{m}(\polyorder+2)+\beta_{0}\}}}{2\tilde{\alpha}}
\]
and by choosing $C=2\alpha(\tilde{\alpha})^{-1},$ we obtain 
\begin{align*}
r_{1} & =\frac{2\beta_{1}+2\sqrt{\beta_{1}^{2}+2\tilde{\alpha}\{\lambda_{m}c_{m}(\polyorder+2)+\beta_{0}\}+\tilde{\alpha}c_{m}}}{2\tilde{\alpha}}\\
 & >r_{0}
\end{align*}
Thus, $\delta_{\varepsilon}=W_{r}r\varepsilon^{-\frac{1}{3}}$ with
\[
W_{r}^{3}=\bar{w}(r)\frac{4\cdot27\lambda_{m}^{3}}{\{\alpha-\lambda_{m}(\polyorder+2)\}^{2}}\left(\frac{123+55\sqrt{5}}{32}\right)\left\{ c_{m}+\left(\frac{8\lambda_{m}}{\alpha-\lambda_{m}(\polyorder+2)}+1\right)^{2}r^{2}\right\} 
\]
and 
\[
\bar{w}(r)=\left\{ \tau^{2}+\left(\frac{8\lambda_{m}}{\alpha-\lambda_{m}(\polyorder+2)}+1\right)^{2}r^{2}\right\} ^{\polyorder/2-1}\lor\tau^{-(\polyorder-2)}.
\]
\end{proof}

\subsection{\pcref{lem:existence-qgrowth-approx-universality}}
\label{subsec:Proof-of-Lemma-existence-qgrowth-approx}

Lemma \ref{lem:norm-indicator-dominator} provides a concrete function to approximate $q$-growth. 
To prove Lemma \ref{lem:existence-qgrowth-approx-universality}, we
approximate this function using an RKHS function. 
Our proof is divided into two parts:
the first proof deals with the general universal kernel, while the
second proof deals with the translation-invariant kernel case (see
Corollary \ref{cor:q-growth-trans-inv}). 

\subsubsection{Proof via universality}
\label{subsec:proof-via-universality}
\begin{lem}[Stein RKHS with universal kernel approximates $\polyorder$-growth]
Let $\polyorder\in[0,\infty).$ Let $K:\bbR^{\datdim}\times\bbR^{\datdim}\to\bbR^{\datdim\times\datdim}$
be a matrix-valued kernel defined by 
\[
K(x,y)=\weight_{\polyorder-1}(x)\left(L(x,y)+\bar{k}_{\mathrm{lin}}(x,y)\idmat\right) \weight_{\polyorder-1}(y),
\]
where $\weight_{\polyorder-1}(x)=\bigl(\tau^{2}+\Verts x_{2}^{2}\bigr)^{(\polyorder-1)/2}$
with $\tau>0;$ $L$ is universal $\calC_{0}^{1}\bigl(\bbR^{\datdim}\bigr);$
and 
\[
\bar{k}_{\mathrm{lin}}(x,y)=\frac{\tau^{2}+\la x,y\ra}{\sqrt{\tau^{2}+\Verts x_{2}^{2}}\sqrt{\tau^{2}+\Verts y_{2}^{2}}}.
\]
Then, under the same assumptions as in \cref{lem:norm-indicator-dominator}, 
${\cal T}_{P}^{m}(\rkhs K)$ approximates $\polyorder$-growth. 
\end{lem}

\begin{proof}
\cref{lem:norm-indicator-dominator} provides a concrete function approximating $\polyorder$-growth: 
for any $\varepsilon>0,$, there
exist positive constants $r,\delta,\eta$ such that 
\begin{equation}
\mathcal{T}_{P}^m\bigl(v-1_{r,\delta}v\bigr)(x)\geq\eta\Verts x_{2}^{\polyorder}1\{\Verts x_{2}\geq r+\delta\}-\frac{\eta\varepsilon}{4}1\{r<\Verts x_{2}<r+\delta\}\label{eq:indicator-dominator-target-property}
\end{equation}
where 
$v(x)=-\weight_{\polyorder-2}(x)x$, 
$r+\delta\geq1,$ and $1_{r,\delta}\in\calC^{2}$ is a function
defined in \cref{lem:norm-indicator-dominator} satisfying
$1_{r,\delta}(x)=1$ if $\Verts x_{2}\leq r+\delta,$ $1_{r,\delta}(x)=0$
if $\Verts x_{2}\geq r+\delta,$ and $0<1_{r,\delta}(x)<1,$ otherwise.
The proof would be complete if $v-1_{r,\delta}v$ were an element of $\rkhs{K}$. 
Our choice of kernel indeed implies $v\in\rkhs K$, as we detail in the next paragraph; 
however, the function $1_{r,\delta}v$ may not be an element of $\rkhs K$.
Thus, we approximate the latter using a function in the RKHS defined by kernel $L_{\weight_{\polyorder-1}}(x, y)=\weight_{\polyorder-1}(x)L(x,y)\weight_{\polyorder-1}(y)$. 

Let us first examine the claim $v\in\rkhs K$. 
First recall that for two kernels $K_1$, $K_2$, the RKHS $\rkhs{K_1+K_2}$ of the sum kernel $K_1 + K_2$ is given by functions $\{ h_1 + h_2: h_1\in \rkhs{K_1}, h_2 \in \rkhs{K_2}\}$~\citep[see, e.g.,][Proposition 3.1]{carmeli2010vector}; 
in particular, we have $\rkhs{K_1} \subset \rkhs{K_1+K_2}$ (the same goes for $\rkhs{K_2})$. 
Note that the RKHS of the kernel $\bar{k}_{\weight,\mathrm{lin}}(x,y)\coloneqq \weight_{\polyorder-1}(x)\weight_{\polyorder-1}(y) \bar{k}_{\mathrm{lin}}(x,y)$ contains tilted linear functions of the form $ \weight_{\polyorder-2}(x) \la a, x \ra$ with $a \in \mathbb{R}^{\datdim}$, since it is the sum of two scalar kernels, one of which is a linear kernel tilted by $\weight_{\polyorder-2}$. 
Each component of $v$ therefore belongs to $\rkhs{\bar{k}_{\weight,\mathrm{lin}}}$. 
Since $\rkhs{\bar{k}_{\weight, \mathrm{lin}} \idmat}$ consists of vector-valued functions with each component in $\rkhs{\bar{k}_{\weight, \mathrm{lin}}}$, we have $v \in \rkhs{\bar{k}_{\weight, \mathrm{lin}} \idmat} \subset \rkhs{K}$. 

We choose our approximation to $1_{r,\delta}v$ as follows. Define $\tilde{v}(x)=$$-1_{r,\delta}(x)\cdot \weight_{-1}(x)x$. 
so that $1_{r,\delta}v(x)=w_{\polyorder-1}(x)\tilde{v}(x).$ Since $\nabla1_{r,\delta}(x)$
is supported in $\{\Verts x\leq r+\delta\}$, $\tilde{v}$ is an element
of ${\cal C}_{0}^{1}(\bbR^{\datdim}).$ By the ${\cal C}_{0}^{1}(\bbR^{\datdim})$-universality
of the RKHS $\rkhs L,$ for $\rho>0$ (specified below), we have an
element $v_{\rho}=(v_{\rho}^{1},\dots,v_{\rho}^{\datdim})\in\rkhs L$
approximating $\tilde{v}$ such that 
\[
\sup_{x\in\bbR^{\datdim}}\left(\verts{v_{\rho}^{\dimidx}(x)-\tilde{v}^{\dimidx}(x)}\lor\max_{j\in\{1,\dots,\datdim\}}\verts{\partial_{x^{j}}v_{\rho}^{\dimidx}(x)-\partial_{x^{j}}\tilde{v}^{\dimidx}(x)}\right)\leq\rho.
\]
This in turn implies 
\begin{equation}
\sup_{x\in\bbR^{\datdim}}\Verts{v_{\rho}(x)-\tilde{v}(x)}_{2}\leq\sqrt{\datdim}\rho\ \text{and\ }\sup_{x\in\bbR^{\datdim}}\Verts{\nabla v_{\rho}(x)-\nabla\tilde{v}(x)}_{\mathrm{F}}\leq\datdim\rho.\label{eq:indicator-dominator-universality-precision}
\end{equation}
Note that we have, with the shorthand $\weight = \weight_{\polyorder-1}$, 
\begin{align*}
\mathcal{T}_{P}^{m}\bigl(v-\weight v_{\rho}\bigr)(x) & =\mathcal{T}_{P}^{m}\bigl(v-\weight\tilde{v}\bigr)(x)+\mathcal{T}_{P}^{m}\bigl(\weight\tilde{v}-\weight v_{\rho}\bigr)(x)\\
 & \geq\eta\Verts x_{2}^{\polyorder}1\{\Verts x_{2}\geq r+\delta\}-\frac{\eta\varepsilon}{4}1\{r<\Verts x_{2}<r+\delta\}\\
 & \hphantom{\geq}\quad-\verts{\mathcal{T}_{P}^{m}\bigl(\weight\tilde{v}-\weight v_{\rho}\bigr)(x)}, 
\end{align*}
and $\weight v_{\rho} \in \rkhs{L_{\weight_{\polyorder}}} \subset \rkhs{K}$. 
Thus, our goal is to choose $\rho$ such that the error $\verts{\mathcal{T}_{P}^{m}\bigl(\weight\tilde{v}-\weight v_{\rho}\bigr)(x)}$
grows more slowly than $\eta\Verts x_{2}^{\polyorder}$ if $\Verts x_{2}>r+\delta$
and can be made small if $\Verts{x}_2 \leq r+\delta$.

We evaluate the error of $\weight v_{\rho}$ when applied to the Stein
operator $\mathcal{T}_{P}^{m}$ (see also Eq.\ref{eq:Stein-op-formula-tilting}): 
\begin{align}
 & \left|\mathcal{T}_{P}^{m}\weight v_{\rho}(x)-\mathcal{T}_{P}^{m}\weight\tilde{v}(x)\right|\nonumber \\
 & \leq\begin{aligned} & \weight(x)\left(2\Verts{b(x)}_{2}\Verts{v_{\rho}(x)-\tilde{v}(x)}_{2}+\verts{\polyorder-1}\frac{\Verts x_{2}}{\tau^{2}+\Verts x^{2}}\Verts{m(x)}_{\mathrm{op}}\Verts{v_{\rho}(x)-\tilde{v}(x)}_{2}\right.\\
 & \hphantom{w(x)}\left.\quad+\text{\ensuremath{\sqrt{\datdim}}}\Verts{m(x)}_{\mathrm{op}}\Verts{\nabla v_{\rho}(x)-\nabla\tilde{v}(x)}_{\mathrm{F}}\right)
\end{aligned}
\label{eq:indicator-dominator-error-upperbound}
\end{align}
We bound the estimate (\ref{eq:indicator-dominator-error-upperbound})
to prove the statement for $\polyorder_{m}=0.$ 

Using the universality properties (\ref{eq:indicator-dominator-universality-precision}),
we can evaluate (\ref{eq:indicator-dominator-error-upperbound}) as
\begin{align*}
 & \verts{{\cal T}_{P}\weight v_{\rho}(x)-{\cal T}_{P}\weight\tilde{v}(x)}\\
 & \leq\sqrt{\datdim}\rho\cdot\weight(x)\left(2\Verts{b(x)}_{2}+\frac{\Verts x_{2}}{\tau^{2}+\Verts x^{2}}\verts{\polyorder-1}\Verts{m(x)}_{\mathrm{op}}+\datdim\Verts{m(x)}_{\mathrm{op}}\right)\\
 & \leq\rho\cdot\weight(x)\pi(\Verts x_{2}),
\end{align*}
where we have defined $\pi$ by 
\begin{align*}
\pi(\Verts x_{2}) & \coloneqq\sqrt{\datdim}\left\{ 2\lambda_{b}\bigl(1+\Verts x_{2}\bigr)+\lambda_{m}\frac{\Verts x_{2}}{\tau^{2}+\Verts x^{2}}(\polyorder-1)_{+}\bigl(c_{m}+\Verts x_{2}\bigr)+\lambda_{m}\datdim\bigl(c_{m}+\Verts x_{2}\bigr)\right\} \\
 & \leq\sqrt{\datdim}\left\{ 2\lambda_{b}\bigl(1+\Verts x_{2}\bigr)+\lambda_{m}(\polyorder-1)_{+}\bigl(c_{m}\tau^{-2}+1\bigr)+\lambda_{m}\datdim\bigl(c_{m}+\Verts x_{2}\bigr)\right\} 
\end{align*}
The function $\pi$ satisfies $\weight(x)\pi\bigl(\Verts x_{2}\bigr)<C\Verts x_{2}^{\polyorder}$
with 
\[
C=\bigl(1+\tau^{2}\bigr)^{\polyorder/2}\left[2\lambda_{b}+\lambda_{m}\datdim+\left\{ 2\lambda_{b}+\lambda_{m}(\polyorder-1)_{+}\left(c_{m}\tau^{-2}+1\right)+\lambda_{m}c_{m}\datdim\right\} \right]
\]
for $\Verts x_{2}>r+\delta\geq1.$ Consequently, 
\begin{equation}
\mathcal{T}_{P}^{m}\weight\tilde{v}(x)-\mathcal{T}_{P}^{m}\weight v_{\rho}(x)>-\rho C\Verts x_{2}^{\polyorder}1\{\Verts x_{2}>r+\delta\}-\rho\max_{\Verts x_{2}\leq r+\delta}\weight(x)\pi\bigl(\Verts x_{2}\bigr).\label{eq:moment-domination-intermediate-c01}
\end{equation}

With the evaluation (\ref{eq:moment-domination-intermediate-c01}),
we conclude the proof for the linear case $\polyorder_{m}=0.$ We
choose $\rho>0$ such that 
\[
\rho\leq\frac{1}{2}\eta C^{-1}\land\frac{\eta\varepsilon}{4}\left(\max_{\Verts x_{2}\leq r+\delta}\weight(x)\pi\bigl(\Verts x_{2}\bigr)\right)^{-1}
\]
to ensure $\mathcal{T}_{P}^{m}\weight\tilde{v}(x)-\mathcal{T}_{P}^{m}\weight v_{\rho}(x)\geq1/2\cdot\eta\Verts x_{2}^{\polyorder}1\{\Verts x_{2}>r+\delta\}-\eta\varepsilon/4.$
Therefore, 
\begin{align*}
\mathcal{T}_{P}^{m}\bigl(v-\weight v_{\rho}\bigr)(x) & =\mathcal{T}_{P}^{m}\bigl(v-\weight\tilde{v}\bigr)(x)+\mathcal{T}_{P}^{m}\bigl(\weight\tilde{v}-\weight v_{\rho}\bigr)(x)\\
 & \geq\eta\Verts x_{2}^{\polyorder}1\{\Verts x_{2}\geq r+\delta\}-\frac{\eta\varepsilon}{4}1\{r<\Verts x_{2}<r+\delta\}\\
 & \hphantom{\geq}\ -\frac{\eta}{2}\Verts x_{2}^{\polyorder}1\{\Verts x_{2}>r+\delta\}-\frac{\eta\varepsilon}{4}\\
 & \geq\frac{\eta}{2}\Verts x_{2}^{\polyorder}1\{\Verts x_{2}\geq r+\delta\}-\frac{\eta\varepsilon}{2}.
\end{align*}
The conclusion follows for 
$v_{\varepsilon}=2\eta^{-1}(v-\weight_{\polyorder-1}v_{\rho})$ 
and $r_{\varepsilon}=r+\delta.$ 

\end{proof}

\subsubsection{Convolution construction of $\protect\polyorder$-growth function}
In this section, we provide an alternative, constructive proof of RKHS $\polyorder$-growth approximation. 

\begin{lem}[Constructive approximation to the function from \cref{lem:norm-indicator-dominator}] 
\label{lem:q-approx-constructive} Let $\polyorder\geq0,$ and 
$\weight_{\polyorder-2}(x)=\bigl(\tau^{2}+\Verts x_{2}^{2}\bigr)^{(\polyorder-2)/2}$
where $\tau>0.$ 
Let $v(x)=-\weight_{\polyorder-2}(x)x$. 
For $\varepsilon>0,$ choose $r,\delta,\eta$ such that 
\[
\mathcal{T}_{P}^{m}(\tilde{1}_{r,\delta}v)(x)\geq\eta\Verts x_{2}^{\polyorder}1\{\Verts x_{2}\geq r+\delta\}-\varepsilon,
\]
where $\tilde{1}_{r,\delta}=1-1_{r,\delta}(x)$ and $1_{r,\delta}(x)$
is defined as in Lemma \ref{lem:norm-indicator-dominator}. Let $\varphi:\bbR^{\datdim}\to[0,\infty)$
be a probability density function. For $\rho>0$, define 
\[
T_{\varphi,\rho}\tilde{v}(x)=\rho^{-\datdim}\int\tilde{v}(x-y)\varphi(y/\rho)\dd y, 
\]
where $\tilde{v}(x)=-1_{r,\delta}(x)\weight_{-\theta}(x)\cdot x$ with $\theta \in [0,1]$. 
Assume the dissipativity condition in Assumption \ref{assu:dissipativity}
with $\alpha>0$. Assume that the growth conditions
in Assumption \ref{assu:poly-grow-coef} with $\lambda_m>0$ and $\polyorder_{m}\in\{0,1\}$. 
If $\polyorder_{m}=1,$ assume the following two conditions: (a) $\alpha>\lambda_{m}(\polyorder+2),$
and (b) there exists positive constants $r_{\varphi},$ $C_{\varphi},$
$\gamma\geq\datdim-1$ such that 
\[
\verts{\varphi(x)}\leq C_{\varphi}\Verts x_{2}^{-(1+\gamma)}
\]
for $\Verts x_{2}\geq r_{\varphi}.$ Suppose $\mu_{1}\coloneqq\int\Verts x_{2}\varphi(x)\dd x<\infty.$
Then, for any $\varepsilon>0,$ we may choose $\rho$ such that the
function $\mathring{v}_{\varepsilon}=2\eta^{-1}\bigl(v-\weight_{\polyorder+\theta-2} T_{\varphi,\rho}\tilde{v}\bigr)$
satisfies 
\[
\mathcal{T}_{P}^{m}\mathring{v}_{\varepsilon}(x)\geq\Verts x_{2}^{\polyorder}1\bigl\{\Verts x_{2}>r_{\varepsilon}\bigr\}-\varepsilon
\]
for any $x\in\bbR^{\datdim}$ and some $r_{\varepsilon}>0.$ In particular,
$\rho$ and $r_{\varepsilon}$ are given as follows: 
\[
\rho=\begin{cases}
    \frac{\eta}{4}(1\land\delta^{2})\Bigl(2\land\varepsilon r_{\varepsilon}^{-\{\polyorder\lor(1-\theta)\}}\Bigr)U_{3}^{-1} & \text{if}\ \polyorder_{m}=0,\\
    \frac{\eta}{4}(1\land\delta^{2})\left(2\tilde{C}_{\theta}\land U_{3,\theta}^{-1}\varepsilon r_{\varepsilon}^{-(\polyorder+1)}\right)\land1, & \text{if}\ \polyorder_{m}=1,
\end{cases}
\]
and 
\begin{align*}
r_{\varepsilon}=\begin{cases}
r+\delta & \polyorder_{m}=0,\\
2\bigl\{(r+\delta)\lor r_{\varphi}\bigr\}\lor r_{\eta}, & \polyorder_{m}=1,
\end{cases}
\end{align*}
respectively, where 
\begin{align*}
    \tilde{C}_{\theta}=\left\{ (\tau^{2}+1)^{\polyorder/2}\left(4\lambda_{b}+\lambda_{m}\verts{\polyorder+\theta-2}(c_{m}+1)\right)U_{1,\theta}\right\} ^{-1},
\end{align*}

$U_{1,\theta}=\mu_{1}\{ (1+\theta)\tau^{-\theta}+3\}$,
\[
U_{2,\theta}=\mu_{1}\left\{ (5\theta+\theta^2)\tau^{-\theta-1} + 6(1+\theta)\tau^{-\theta} + 72+\frac{35+13\sqrt{13}}{12}+6+2^{3+2/3} \right\}
\]
and 
\[
    U_{3,\theta}=\bigl(\tau^{2}+1\bigr)^{\verts{\polyorder+\theta-1}/2}\left\{ 2\lambda_{b}U_{1,\theta}\bigl(\tau^{-1}+1\bigr)+\lambda_{m}\left(\frac{\verts{\polyorder+\theta-2}}{2\tau} U_{1,\theta} + U_{2,\theta}\datdim\right)\bigl(c_{m}\tau^{-1}+1\bigr)\right\} .
\]

 and 
\[
    r_{\eta}=\left\{ 2^{2+\gamma}\cdot3(1+(1+\theta)\tau^{-\theta})\left(\tau^{2}+1\right)^{\polyorder/2}\cdot\lambda_{m}\datdim \bigl(c_{m}+1\bigr)(1\lor\delta^{-1})\eta^{-1}\cdot C_{\varphi}\frac{\pi^{\datdim/2}(r+\delta)^{\datdim}}{\Gamma(\datdim/2+1)}\right\} ^{1/\gamma}.
\]
\end{lem}

\begin{proof}
The proof essentially proceeds as in the proof of Lemma \ref{lem:existence-qgrowth-approx-universality}
in \cref{subsec:Proof-of-Lemma-existence-qgrowth-approx},
and we use the same notation. 
Our objective is to construct an approximation to $x\mapsto -1_{r,\delta}(x)\weight_{\polyorder-2}(x)x$. 
Let $v_{0}(x)=-\weight_{-\theta}(x)\cdot x$. 
The proof here differs in that we approximate $\tilde{v}=1_{r,\delta}v_{0}$
with 
\[
    T_{\varphi,\rho}\tilde{v}(x)=\frac{1}{\rho^{\datdim}}\int\tilde{v}(x-y)\varphi(y/\rho)\dd y,
\]
where $\rho>0$. 
We then show that $\weight_{\polyorder+\theta-2}T_{\varphi,\rho}\tilde{v}$ is the desired approximation, 
replacing its counterpart in the proof in Section \ref{subsec:Proof-of-Lemma-existence-qgrowth-approx};
we obtain $\rho$ and $r_{\varepsilon}$ explicitly. 

We first clarify approximation properties of $T_{\varphi,\rho}\tilde{v}.$
Note that $v_{0}$ satisfies 
\[
\Verts{v_{0}(x)}_{2}\leq \Verts{x}_2^{1-\theta}, \Verts{\nabla v_{0}(x)}_{\mathrm{op}}\leq (1+\theta)\tau^{-\theta},\ \text{and}\ \Verts{\nabla^{2}v_{0}(x)}_{\mathrm{op}}\leq (5\theta+\theta^2)\tau^{-\theta-1}. 
\]
In particular, $\nabla\tilde{v}$ is continuous and uniformly bounded,
and hence we have $\nabla T_{\varphi,\rho}\tilde{v}=T_{\varphi,\rho}\nabla\tilde{v}$
by the mean value theorem and \citep[Corollary A.5]{Dudley_1999}.
Applying Lemmas \ref{lem:convolution-diff-bound}, \ref{lem:convolution-grad-diff-bound},
\ref{lem:cutoff-ball-bound-C2} with $g=v_0$ and $\weight\equiv1$ (according
to the notation therein) implies that there exist some constants $U_{1,\theta}$, 
$U_{2,\theta}$ such that 
\begin{align*}
    & \Verts{T_{\varphi,\rho}\tilde{v}(x)-\tilde{v}(x)}_{2}  \leq \rho\bigl(1\lor\delta^{-1}\bigr)U_{1,\theta}(r+\delta)^{1-\theta}\ \text{and} \\
    & \Verts{\nabla T_{\varphi,\rho}\tilde{v}(x)-\nabla\tilde{v}(x)}_{\mathrm{op}}\leq\rho \bigl(1\lor\delta^{-2}\bigr)U_{2,\theta} (r+\delta)^{1-\theta}, 
\end{align*}
where
\begin{align*}
    & U_{1,\theta}=\mu_{1}\{ (1+\theta)\tau^{-\theta}+3\}\\
    & U_{2,\theta}=\mu_{1}\left\{ (5\theta+\theta^2)\tau^{-\theta-1} + 6(1+\theta)\tau^{-\theta} + 72+\frac{35+13\sqrt{13}}{12}+6 + 2^{3+2/3} \right\}
\end{align*}
and $\mu_{1}=\int\Verts x_{2}\varphi(x)\dd x$ (here, we have used $r\geq1$ and thus $r+\delta\geq1$, see \cref{cor:r-delta-expression}). 

In both cases $\polyorder_{m}=0$ and $\polyorder_{m}=1,$ we need
to choose $\rho$ such that we can bound the error of $T_{\varphi,\rho}\tilde{v}$
when applied to the Stein operator by $\eta\varepsilon/4,$ within
the ball of radius $r_{\varepsilon}.$ The error is evaluated as 
\begin{align*}
    \left|\mathcal{T}_{P}^{m}\bigl(\weight_{\polyorder+\theta-2}\tilde{v}\bigr)(x)-\mathcal{T}_{P}^{m}\bigl(\weight_{\polyorder+\theta-2}T_{\varphi,\rho}\tilde{v}\bigr)(x)\right|\leq\rho\bigl(1\lor\delta^{-2}\bigr)\cdot e\bigl(\Verts x_{2}\bigr),
\end{align*}
where 
\begin{align*}
    & e\bigl(\Verts x_{2}\bigr)\\
	& =(r+\delta)^{1-\theta}\bigl(\tau^{2}+\Verts x_{2}^{2}\bigr)^{(\polyorder+\theta-1)/2}\left\{ 2\lambda_{b}U_{1,\theta}(\tau^{-1}+1)\right.\\
    & \hphantom{(r+\delta)^{1-\theta}\bigl(\tau^{2}+\Verts x_{2}^{2}\bigr)^{(\polyorder+\theta-1)/2}}\left.\qquad+\lambda_{m}\left(\frac{\verts{\polyorder+\theta-2}}{2\tau}U_{1,\theta}+U_{2,\theta}\datdim\right)(c_{m}\tau^{-1}+\Verts x_{2}^{\polyorder_{m}})\right\}.
\end{align*}
Furthermore, we have $e\bigl(\Verts x_{2}\bigr)\leq U_{3,\theta}(r+\delta)^{1-\theta}(1\lor\Verts{x}_{2}^{\polyorder+\theta-1+\polyorder_{m}})$
with 
\[
    U_{3,\theta}=\bigl(\tau^{2}+1\bigr)^{\verts{\polyorder+\theta-1}/2}\left\{ 2\lambda_{b}U_{1}\bigl(\tau^{-1}+1\bigr)+\lambda_{m}\left(\frac{\verts{\polyorder+\theta-2}}{2\tau} U_{1,\theta} + U_{2,\theta}\datdim\right)\bigl(c_{m}\tau^{-1}+1\bigr)\right\} .
\]
If $r_{\varepsilon} \geq r+\delta \geq 1$, we have 
\[
    \max_{\Verts x_{2}\leq r_{\varepsilon}}e(\Verts x_{2})\leq U_{3,\theta} r_{\varepsilon}^{(\polyorder+\polyorder_{m}) \lor (1-\theta)}. 
\]
Therefore, for $\varepsilon>0$, we require 
\[
\rho\leq\frac{\eta\varepsilon}{4}U_{3,\theta}^{-1}(1\land\delta^{2})r_{\varepsilon}^{-\{(\polyorder+\polyorder_{m})\lor(1-\theta)\}}.
\]

An additional condition is imposed on $\rho$ to make sure that the absolute
error of ${\cal T}_{P}^{m}\bigl(\weight_{\polyorder+\theta-2}T_{\varphi,\rho}\tilde{v}\bigr)(x)$
is bounded by $\eta\Verts x_{2}^{\polyorder}/2$ for $\Verts x_{2}>r_{\varepsilon}\geq1.$
We first address the linear case $\polyorder_m=0$. 
Since $e\bigl(\Verts x_{2}\bigr) < U_{3,\theta}\Verts x_{2}^{\polyorder}$
for $\Verts x_{2} > r_{\varepsilon},$ 
we require $\rho\leq\eta U_{3,\theta}^{-1}(1\land\delta^{2})/2.$ With $\rho=\eta U_{3}^{-1}(1\land\delta^{2})(2\land\varepsilon r_{\varepsilon}^{-\{\polyorder\lor (1-\theta)\}})/4,$
the conclusion holds for $\mathring{v}_{\varepsilon}=2\eta^{-1}\bigl(v-\weight_{\polyorder+\theta-2}T_{\varphi,\rho}\tilde{v}\bigr)$
and $r_{\varepsilon}=(r+\delta).$

We next deal with the quadratic case $\polyorder_{m}=1.$ We first
require $\rho\leq1.$ Then, by the assumption on $\varphi,$ for $\Verts x_{2}\geq r_{\varphi},$
we have 
\begin{align*}
\rho^{-\datdim}\varphi(x/\rho) & \leq C_{\varphi}\rho^{\gamma+1-\datdim}\Verts x_{2}^{-(1+\gamma)}\\
 & \le C_{\varphi}\Verts x_{2}^{-(1+\gamma)}.
\end{align*}
By Lemma \ref{lem:convolution-decay-rate}, we have for $\Verts x_{2}\geq2\{(r+\delta)\lor r_{\varphi}\},$
\begin{align*}
&\Verts{\nabla T_{\varphi,\rho}\tilde{v}(x)}_{\mathrm{\mathrm{op}}} \\
& \leq2^{(1+\gamma)}\sup_{y\in\bbR^{\datdim}}\Verts{\nabla\tilde{v}(y)}_{\mathrm{op}}C_{\varphi}\lambda(\{\Verts x_{2}\leq r+\delta\})\Verts x_{2}^{-(1+\gamma)}\\
& \leq\underbrace{2^{(1+\gamma)}3(1+(1+\theta)\tau^{-\theta})C_{\varphi}\lambda(\{\Verts x_{2}\leq r+\delta\})}_{C_{\varphi,1}}(1\lor\delta^{-1})(r+\delta)^{1-\theta}\Verts x_{2}^{-(1+\gamma)},
\end{align*}
where $\lambda$ is the Lebesgue measure. 
For $\Verts x_{2}\geq r+\delta\geq1,$ letting $v_{\rho} \coloneqq T_{\varphi, \rho}\tilde{v}$ and using \cref{eq:Stein-op-formula-tilting} give 

\begin{align*}
    & \left|{\cal T}_{P}^{m}(\weight_{\polyorder+\theta-2}\tilde{v})(x)-{\cal T}_{P}^{m}(\weight_{\polyorder+\theta-2}v_{\rho})(x)\right| \\
	& \leq\weight_{\polyorder+\theta-2}(x)\left(2\Verts{b(x)}_{2}\Verts{\tilde{v}(x)-v_{\rho}(x)}_{2}+\Verts{m(x)}_{\mathrm{op}}\verts{\polyorder+\theta-2}\frac{\Verts x_{2}}{\tau^{2}+\Verts x_{2}^{2}}\Verts{\tilde{v}(x)-v_{\rho}(x)}_{2}\right) \\ 
	& \hphantom{\leq}\quad+\weight_{\polyorder+\theta-2}(x)\datdim\Verts{m(x)}_{\mathrm{op}}\Verts{\nabla v_{\rho}(x)}_{\mathrm{op}} \\
	& \leq\weight_{\polyorder+\theta-2}(x)\left(2\lambda_{b}(1+\Verts x_{2})+\frac{\lambda_{m}\verts{\polyorder+\theta-2}\Verts x_{2}}{\tau^{2}+\Verts x_{2}^{2}}\bigl(c_{m}+\Verts x_{2}^{2}\bigr)\right)U_{1,\theta}\rho(1\lor\delta^{-1})(r+\delta)^{1-\theta} \\ 
	& \hphantom{\leq}\quad+\weight_{\polyorder+\theta-2}(x)\lambda_{m}\datdim\bigl(c_{m}+\Verts x_{2}^{2}\bigr)C_{\varphi,1}(1\lor\delta^{-1})(r+\delta)^{1-\theta}\Verts x_{2}^{-(1+\gamma)} \\
	& \leq\Verts x_{2}^{\polyorder}\left(\tau^{2}+1\right)^{\polyorder/2}\left\{ 4\lambda_{b}+\lambda_{m}\verts{\polyorder+\theta-2}\left(c_{m}+1\right)\right\} U_{1,\theta}\rho(1\lor\delta^{-1}) \\
    & \hphantom{\leq}\quad+\Verts x_{2}^{\polyorder}\left(\tau^{2}+1\right)^{\polyorder/2}\cdot\lambda_{m}\datdim\left(c_{m}+1\right)C_{\varphi,1}(1\lor\delta^{-1})\Verts x_{2}^{-\gamma}.\\
\end{align*}
Then, we choose $r_{\eta}\geq r_{\varphi}\lor1$ such that 
\[
\left(\tau^{2}+1\right)^{\polyorder/2}\cdot\lambda_{m}\datdim\bigl(c_{m}+1\bigr)C_{\varphi,1}(1\lor\delta^{-1})r_{\eta}^{-\gamma}\leq\frac{\eta}{2};
\]
specifically, we use 
\begin{equation}
\begin{aligned}
 r_{\eta} & =\left\{ 2\left(\tau^{2}+1\right)^{\polyorder/2}\cdot\lambda_{m}\datdim\bigl(c_{m}+1\bigr)C_{\varphi,1}(1\lor\delta^{-1})\eta^{-1}\right\} ^{1/\gamma}\\
    & =\left[ \frac{2^{2+\gamma}}{\eta}\cdot3\left\{1+\frac{(1+\theta)}{\tau^{\theta}}\right\}\left(\tau^{2}+1\right)^{\polyorder/2}\cdot\lambda_{m}\datdim\bigl(c_{m}+1\bigr)(1\lor\delta^{-1})\cdot C_{\varphi}\frac{\pi^{\datdim/2}(r+\delta)^{\datdim}}{\Gamma(\datdim/2+1)}\right] ^{1/\gamma}.\\
\end{aligned}
\label{eq:rgamma}
\end{equation}
Following the proof of Lemma \ref{lem:existence-qgrowth-approx-universality},
we may choose $r_{\varepsilon}=2\{(r+\delta)\lor r_{\varphi}\}\lor r_{\eta}.$
With this choice, we require $\rho\leq\eta\tilde{C}(1\land\delta)/2,$
where
\[
    \tilde{C}=\left\{ (\tau^{2}+1)^{\polyorder/2}\left(4\lambda_{b}+\lambda_{m}\verts{\polyorder+\theta-2}(c_{m}+1)\right)U_{1,\theta}\right\} ^{-1}.
\]
Thus, we set $\rho$ as 
\[
\rho=\frac{\eta}{4}(1\land\delta^{2})\left(2\tilde{C}\land U_{3}^{-1}\varepsilon r_{\varepsilon}^{-(\polyorder+1)}\right)\land1,
\]
where $r_{\varepsilon}=2\{(r+\delta)\lor r_{\varphi}\}\lor r_{\eta}.$
With this value of $\rho,$ the conclusion holds for $\mathring{v}_{\varepsilon}=2\eta^{-1}\bigl(v-\weight_{\polyorder+\theta-2}T_{\varphi,\rho}\tilde{v}\bigr)$
and $r_{\varepsilon}=2\{(r+\delta)\lor r_{\varphi}\}\lor r_{\eta}.$
\end{proof}
\begin{lem}[Concrete expressions of constants in \cref{lem:q-approx-constructive}]
\label{lem:r-eta-expression}Let $\polyorder\geq0.$ Specify the convolution
function $\varphi$ in Lemma \ref{lem:q-approx-constructive} by $\varphi_{\polyorder}$
in Lemma \ref{lem:Bochner-Riesz-density}. Then, the quadratic case
of Lemma \ref{lem:q-approx-constructive} holds with 
\[
    r_{\varepsilon}=\left\{ 2\lor c_{P,\datdim,\polyorder,\tau,\theta}(1\lor\delta^{-1/\datdim})\right\} (r+\delta),
\]
where $c_{P,\datdim,\polyorder,\tau,\theta}$ is a constant given by 
\[
    \left[ 2^{\datdim+\polyorder+1}3\{1+(1+\theta)\tau^{-\theta}\}\left(\tau^{2}+1\right)^{\polyorder/2}\lambda_{m}\bigl(c_{m}+1\bigr)\eta^{-1}\frac{\Gamma(\polyorder/2+1)^{2}}{\mathrm{B}(\datdim/2,\polyorder+1)}\right]^{1/\datdim}
\]
\end{lem}

\begin{proof}
By the upper bound on the Bessel function \citep[\href{https://dlmf.nist.gov/10.14.1}{Eq. 10.14.1}]{DLMF}, we have 
\begin{align*}
\hat{\Phi}_{\polyorder}(x)^{2} & =2^{\polyorder}\Gamma(\polyorder/2+1)^{2}\Verts x_{2}^{-(\datdim+\polyorder)}J_{(\datdim+\polyorder)/2}\bigl(\Verts x_{2}\bigr)^{2}\\
& \leq2^{\polyorder-1}\Gamma(\polyorder/2+1)^{2}\Verts x_{2}^{-(\datdim+\polyorder)}.
\end{align*}
Thus, for $\Verts x_{2}\geq r_{\varphi_{\polyorder}}\coloneqq1,$
we have 
\[
\varphi_{\polyorder}(x)\leq C_{\varphi_{\polyorder}}\Verts x_{2}^{-(\datdim+1)}
\]
with 
\[
    C_{\varphi_{\polyorder}} = 2^{\polyorder-1}\Gamma(\polyorder/2+1)^{2}\frac{\Gamma(\datdim/2)}{\pi^{\datdim/2}\mathrm{B}(\datdim/2, \polyorder+1)}. 
\]
Hence, substituting these estimates ($\gamma = \datdim$ and $C_\varphi = C_{\varphi_{\polyorder}}$) into \eqref{eq:rgamma} gives 
\begin{align*}
    &(r+\delta)^{-1}r_{\eta}\\
    & =\left\{ 2^{2+\datdim}3\{1+(1+\theta)\tau^{-\theta}\}\left(\tau^{2}+1\right)^{\polyorder/2}\lambda_{m}\datdim\bigl(c_{m}+1\bigr)\eta^{-1} C_{\varphi_{\polyorder}}\frac{\pi^{\datdim/2}}{\Gamma(\datdim/2+1)}\right\} ^{1/\datdim}(1\lor\delta^{-1/\datdim})\\
    & =\left\{ 2^{\datdim+\polyorder+1}3\{1+(1+\theta)\tau^{-\theta}\}\left(\tau^{2}+1\right)^{\polyorder/2}\lambda_{m}\bigl(c_{m}+1\bigr)\eta^{-1}\frac{\Gamma(\polyorder/2+1)^{2}}{\mathrm{B}(\datdim/2,\polyorder+1)}\right\} ^{1/\datdim}(1\lor\delta^{-1/\datdim}).
\end{align*}
\end{proof}
\begin{cor}[Approximation from \cref{lem:q-approx-constructive} belongs to an RKHS] 
    \label{cor:q-growth-trans-inv}
    Let $\theta \in [0,1]$. 
    Let $\ell(x,y ) = \weight_{\theta-1}(x)\weight_{\theta-1}(y)\Phi(x-y)$ be a kernel  
defined by $\Phi \in \calC_0^1 \cap \calC^2$ with continuous non-vanishing generalized Fourier transform $\hat{\Phi}$. 
Consider the approximation $T_{\varphi,\rho}\tilde{v}$ in Lemma \ref{lem:q-approx-constructive}
defined by the density $\varphi=\varphi_{\polyorder}$ in Lemma \ref{lem:Bochner-Riesz-density}.
Then, each component of $\weight_{\theta-1}T_{\varphi,\rho}\tilde{v}$ is an element
of RKHS $\rkhs{\ell}.$ 
\end{cor}

\begin{proof}
The conclusion follows from Lemma \ref{lem:Bochner-Riesz-density}
and the RKHS norm expression of \citet[Theorem 10.21]{Wendland2004}
(see Lemma \ref{lem:RKHS-norm-translation-invariant} for a concrete bound
on the norm). 
\end{proof}

\section{Topological equivalence between KSD- and Wasserstein
convergence}
\label{sec:Topological-equivalence}

In Section \ref{subsec:KSD-convergence-and}, we established the equivalence
between KSD- and $\polyorder$-Wasserstein convergence. 
This appendix presents proofs for the results presented therein. 

\subsection{\pcref{thm:ksd-equiv-to-Wass-conv}}
\label{subsec:proof-of-KSD-implies-Wass}
Here, we provide a proof for Theorem \ref{thm:ksd-equiv-to-Wass-conv}. 
Recall that $\polyorder$-Wasserstein convergence of a sequence is
equivalent to its weak convergence and $\polyorder$th moment uniform
integrability. We therefore aim to show that the suggested kernel
choices induce KSDs such that their convergence to zero implies the latter pair of conditions. 
\cref{lem:existence-qgrowth-approx-universality} (with \cref{cor:q-approx-uniform-integrability}) specifies conditions
for enforcing uniform integrability. Thus, the proof is complete
if we additionally establish conditions for the KSD to control weak
convergence. 

Our proof relies on the following result of \citet{}: 
\begin{thm}[{\citealp[Adaptation of][Theorem 6]{Barp2024}}]
\label{thm:bounded-separation-adaptation}
Suppose $\mathcal{T}_{P}^{m}({\cal H}_{K})$
is bounded $P$-separating, i.e., if $Q\in{\cal P},$ and $\int h\dd Q=\int h\dd P$
for all $h\in\mathcal{T}_{P}^{m}(\rkhs K)$ with $\sup_{x\in\bbR^{\datdim}}\verts{h(x)}<\infty$,
then we have $Q=P.$ Then, for any tight sequence $(Q_{\seqidx})_{\seqidx\geq1}\subset\mathcal{P}$, 
we have $\ksd{K_{m}}P(Q_{\seqidx})\to0$ iff $Q_{\seqidx}$ converges
weakly to $P$ as $\seqidx\to\infty.$ 
\end{thm}

Since the uniform integrability implies the tightness of a sequence,
it suffices to show that the kernel of the form (\ref{eq:q-growth-approx-kernel})
can also induce the bounded $P$-separation property. We apply the
recipes provided in \citet[Theorems 7 and 9]{Barp2024}. 
We divide the proof into two parts according to the growth exponent $\polyorder_m$ of diffusion matrix $\Verts{m(x)}_{\mathrm{op}}$ 
(see \cref{assu:poly-grow-coef}). 

Before proceeding with the proofs, we define some concepts and then provide a support lemma. 
Recall that $\calC_c^1(\bbR^{\datdim})$ (resp. $\calC_b^1(\bbR^{\datdim})$) denotes the set of continuously differentiable $\bbR^{\datdim}$-valued functions that are compactly supported (resp. bounded) and have compactly supported partial derivatives (resp. bounded partial derivatives). 
We equip $\calC_c^1(\bbR^{\datdim})$ with the same norm as $\calC_0^1(\bbR^{\datdim})$ (see \cref{sec:intro}). 
We denote by $\calC_b^1(\bbR^{\datdim})_{\beta}$ the set $\calC_b^1(\bbR^{\datdim})$ equipped with the strict topology~\citep[Appendix A]{Buck_1958, Barp2024}. 
The topological dual of $\calC_0^1(\bbR^{\datdim})$ is denoted by $\mathcal{D}_{L^1}^1(\bbR^{\datdim})$. 
We say $\rkhs{K}$ is $\mathcal{D}_{L^1}^1(\bbR^{\datdim})$-characteristic if it uniquely embeds the elements of $\mathcal{D}_{L^1}^1(\bbR^{\datdim})$ into itself~\citep[see][Appendix C for the definition of embeddability]{Barp2024}. 

\begin{lem}[Matrix-valued tilting preserves characteristicity]
\label{lem:matrix-multi-characteristic}Let $A\in\calC_b^{1}(\bbR^{\datdim\times\datdim})$
such that $A(x)$ is invertible for each $x\in\bbR^{\datdim}.$ Suppose
a matrix-valued kernel $K$ with ${\cal H}_{K}\subset{\cal C}_{b}^{1}(\bbR^{\datdim})$
is $\mathcal{D}_{L^{1}}^{1}(\bbR^{\datdim})$-characteristic. Then,
$K_{A}(x,y)=A(x)K(x,y)A(y)^{\top}$ is also $\mathcal{D}_{L^{1}}^{1}(\bbR^{\datdim})$-characteristic. 
\end{lem}

\begin{proof}
Define $\phi_{A}: f\mapsto Af$, which is a continuous map from $\calC_b^1(\bbR^{\datdim})_{\beta}$  to itself. 
We also have $\phi_A(\calC^1_c(\bbR^{\datdim}))=\calC_c^1(\bbR^{\datdim})$, since for any $f\in\calC^1_c(\bbR^{\datdim})$, we have $g \in \calC_c^1(\bbR^{\datdim}): x\mapsto A(x)^{-1} f(x) $ satisfying $\phi_A(g) =f$. 
The claim follows from  $\calC_c^1(\bbR^{\datdim})$ being dense in $\calC_0^1(\bbR^{\datdim})$ and applying \citet[Theorem 14]{Barp2024} with $\phi=\phi_A$. 
\end{proof}
\subsubsection{Linear case $\protect\polyorder_m=0$}
We prove the claim for the two suggested kernels.

\paragraph{Proof for kernel $L^{(1)}$}
We first observe that \cref{assu:poly-grow-coef}, together with 
the assumptions $\limsup_{\Verts{x}_2 \to \infty} \Verts{m^{-1}(x)}_{\mathrm{op}}<\infty$
and $\Verts{\la\nabla, m(x)\ra}_2 = O(\Verts{x}_2)$, 
implies that we may take some constant $C>0$ such that 
\begin{equation}
\Verts{\score p(x)}_{2}\leq C\weight_1(x)
\label{eq:score-upper-bound}
\end{equation}
where $\score p(x)=\nabla\log p(x)$ and $\weight_1(x) = (\tau^2 + \Verts{x}_2^2)^{1/2}$. 

We use \citet[Theorem 7]{Barp2024} to show that $\mathcal{T}_{P}^{m}(\rkhs{K})$ is bounded $P$-separating. 
Since $\rkhs{L_0} \subset \calC_0^1(\bbR^{\datdim})$, by \citet[Lemma 8]{Barp2024}, 
the RKHS $\rkhs{L_0}$ is continuously embedded into $\calC_0^1(\bbR^{\datdim})$; therefore, the $\calC_0^1(\bbR^{\datdim})$-universality assumption is equivalent to $\mathcal{D}_{L^1}^1(\bbR^{\datdim})$-characteristicity by \citet[Theorem 6]{SimonGabriel2018}. 
Then, we examine the following kernel 
\[
    m(x)\weight_{\polyorder-1}(x)L^{(1)}(x,y)\weight_{\polyorder-1}(y)m(y)^{\top}=\frac{m(x)L_{0}(x,y)m(y)^{\top}}{\weight_{1}(x)^2\weight_{1}(y)^2} = \frac{L_{m/\weight_{1}}(x,y)}{\weight_1(x)\weight_1(y)}
\]
where $L_{m/\weight_1}(x,y)\coloneqq m(x)L_0(x,y)m(y)^{\top}/\{\weight_1(x)\weight_1(y)\}$. 
The scaled diffusion matrix $m/\weight_1$ is bounded and has bounded derivatives; the kernel $L_{m/\weight_1}$ has its RKHS included in ${\cal C}_{0}^{1}(\bbR^{\datdim})$, 
and it is again ${\cal D}_{L^{1}}^{1}(\mathbb{R}^{\datdim})$-characteristic by Lemma \ref{lem:matrix-multi-characteristic}. 
Using \eqref{eq:score-upper-bound}, we apply \citet[Theorem 7]{Barp2024} with $\theta(x)=C\weight_1(x)$
and $K(x,y)=C^{2}L_{m/\weight_1}(x,y)$, which implies the
$P$-bounded separability of $\mathcal{T}_{P}^{m}(\rkhs{\tilde{L}^{(1)}})$ 
where $\tilde{L}^{(1)}(x, y) = \weight_{\polyorder-1}(x) L^{(1)}(x, y) \weight_{\polyorder-1}(y)$. 
It follows from $\mathcal{T}_{P}^{m}(\rkhs{\tilde{L}^{(1)}})\subset\mathcal{T}_{P}^{m}(\rkhs{K})$
that $\mathcal{T}_{P}^{m}(\rkhs{K})$ is also $P$-bounded separating.
Hence, we have established that $\ksd{K_{m}}P(\cdot)$ controls tight
weak convergence. 

It then suffices to show that \cref{lem:existence-qgrowth-approx-universality}
applies, as it then follows that the KSD also enforces $\polyorder$th
moment uniform integrability. 
This claim immediately holds by the $\calC_0^1(\bbR^{\datdim})$-universality of kernel 
\[ 
\frac{L_{0}(x,y)}{{\weight}_{\polyorder+1}(x){\weight}_{\polyorder+1}(y)}, 
\] 
which can be verified by the same argument for $L_{m/\weight_1}$ above. 
Hence, we have proved the claim for $L=L^{(1)}$. 
\qed

\paragraph{Proof for kernel $L^{(2)}$}

Weak convergence control is shown as in the proof of \citet[Theorem 9]{Barp2024}.
By \citet[Theorem 8]{Barp2024}, there exists a translation-invariant
${\cal D}_{L}^{1}$-characteristic kernel $\ell_{s}\in{\cal C}^{(1,1)},$
and a positive-definite function $f$, satisfying $1/f\in{\cal C}^{1}$
and $\verts{f(x)}\lor\Verts{\nabla f(x)}_{2}=O\bigl(\exp(-\sum_{\dimidx=1}^{\datdim}\sqrt{\verts{x_{\dimidx}}})\bigr)$, 
such that $\rkhs{\ell_{f}}\subset\rkhs{\ell}$ with $\ell_{f}(x,y)=f(x)\ell_{s}(x,y)f(y).$
Let $L_f = \ell_{f}\idmat$ and $L_s = \ell_{s}\idmat$. 
Then, we have 
\begin{equation}
 m(x)\weight_{\polyorder-1}(x)L_{f}(x,y)\weight_{\polyorder-1}(y)m(y)^{\top}=\weight_1(x)^{-1}
\underbrace{\tilde{m}(x)L_{s}(x,y)\tilde{m}(y)^{\top}}_{\tilde{L}_{s}(x,y)}\weight_1(y)^{-1},    
\label{eq:kernel-transform-translation-invariant}
\end{equation}
where $\tilde{m}(x) = f(x)\weight_1(x)\weight_{\polyorder-1}(x)m(x)$. 
By the root-exponential decay of $f$ and the growth conditions 
on $m$ (Assumption \ref{assu:poly-grow-coef} and $\Verts{\la\nabla, m(x)\ra}_2 = O(\Verts{x}_2)$), 
the kernel $\tilde{L}_{s}$ is bounded and has bounded derivatives; 
it also follows from \citet[Proposition 4 (c)]{Barp2024}
and Lemma \ref{lem:matrix-multi-characteristic} that $\tilde{L}_{s}$
is $\mathcal{D}_{L^{1}}^{1}(\bbR^{\datdim})$-characteristic. 
Since $\Verts{\score p(x)}_2 \leq C\weight_1(x)$, 
applying \citet[Theorem 7]{Barp2024} as in the proof for $L^{(1)}$, 
we have that the Stein kernel induced by the LHS of \eqref{eq:kernel-transform-translation-invariant} 
is bounded, and the corresponding RKHS is bounded $P$-separating,
thus controlling tight weak convergence to $P.$ 
It follows from $\rkhs{\ell_{f}}\subset\rkhs{\ell}$ that $\mathcal{T}_{P}^{m}(\rkhs{K})$ is also bounded $P$-separating. 

The $\polyorder$-growth approximation property of $\mathcal{T}_{P}^{m}(\rkhs K)$
follows from the $\calC_0^1(\bbR^{\datdim})$-universality of $L^{(2)}$ and \cref{lem:existence-qgrowth-approx-universality}. 
The proof is complete. 
\qed

\subsubsection{Quadratic case $\protect\polyorder_m=1$}
The generalized Fourier transform $\hat{\Phi}$ acts as the density (with respect to the Lebesgue measure) of the finite measure appearing in the integral representation of $\Phi$ via Bochner's theorem~\citep[Theorem 6.6]{Wendland2004}. 
With $\hat{\Phi}$ non-vanishing, the measure has full support, and therefore the kernel $\ell$ is $\mathcal{C}_0^1$-universal~\citep[Theorem 17]{SimonGabriel2018}. 
Moreover, there exists a translation-invariant kernel with root-exponentially tilting~\citep[Theorem 8]{Barp2024}. 
The suggested kernel form $L^{(2)}$ is the same as the linear growth case up to the tilting $1/\sqrt{\tau^2+\Verts{x}_2^2}$, which does not affect the root-exponential tilting.
Thus, the rest of the proof follows essentially the same. 
\qed

\subsection{Wasserstein convergence implies KSD convergence}
\label{subsec:Wass-conv-imply-KSD}
This appendix provides proofs for \cref{prop:Wass-conv-implies-KSD} and \cref{cor:Wass-imply-KSD-conv-kernel-cond,cor:ksd-Wass-equiv}

\subsubsection{\pcref{prop:Wass-conv-implies-KSD}}
\label{subsec:proof-wass-conv-imply-KSD}
The proof essentially follows \citet[Theorem 2]{Barp2024}. 
Suppose $Q_{\seqidx}\to P$ in $\polyorder$-Wasserstein convergence. 
Then, for sufficiently large $\seqidx_0\geq1$, we have $Q_{\seqidx}\in \finitemomentspace\polyorder$ for each $\seqidx\geq \seqidx_0$. Thus, without loss of generality, we may assume $Q_{\seqidx}\in \finitemomentspace\polyorder$ for all $\seqidx\geq1$. 
For $Q\in \finitemomentspace{\polyorder}$, let $\tilde{Q}$ be a measure defined as $\tilde{Q}(A) = \int_A (1+\Verts{x}_2^{\polyorder})\dd Q(x)$. 
Then, the sequence $(\tilde{Q}_1, \tilde{Q}_2,\dots)$ weakly converges to $\tilde{P}$. 
Since the product measure of finite nonnegative measures is weakly continuous~\citep[Theorem 3.3, p.47]{Berg_1984}, 
we have  
\begin{align*}
 &\ksd{K}{P}(Q_{\seqidx})^2 \\
 &= \int \frac{k_{p,K}(x,y)}{(1+\Verts{x}_2^{\polyorder}) (1+\Verts{y}_2^{\polyorder})} \tilde{Q}_{\seqidx}\otimes \tilde{Q}_{\seqidx}\bigl(\dd(x,y)\bigr)\\
&\to \int \frac{k_{p,K}(x,y)}{(1+\Verts{x}_2^{\polyorder}) (1+\Verts{y}_2^{\polyorder})} \tilde{P}\otimes \tilde{P}\bigl(\dd(x,y)\bigr)= \ksd{K}{P}(P)^2 = 0\ (\seqidx\to \infty), 
\end{align*}
where, in the limit, we have used the continuity and the boundedness of the integrand, derived respectively from the continuity of $k_{p, K}$ and the $\polyorder$-growth of $x\mapsto \sqrt{k_{p,K}(x,x)}$. 
\qed

\subsubsection{\pcref{cor:Wass-imply-KSD-conv-kernel-cond}}
For a matrix-valued kernel $K$, let us define 
$K_1:\bbR^{\datdim}\times \bbR^{\datdim} \to \bbR^{\datdim\times\datdim\times\datdim}$ and 
$K_{12}:\bbR^{\datdim}\times \bbR^{\datdim} \to \bbR^{\datdim\times\datdim\times\datdim\times\datdim}$ by 
\[ 
K_1(a,b) = \nabla_x K(x,y)\vert_{x=a, y=b}\ \text{and}\ K_{12}(a,b ) = \nabla_x \otimes \nabla_y K(x,y)\vert_{x=a, y=b}. 
\]
By the derivative reproducing property~\citep[Lemma 4]{Barp2024}, these functions satisfy the following:
\begin{equation}
    \begin{aligned}
    (K_1(a, b))_{i,j,l} &\coloneqq \partial_{1,i}K_{jl}(a,b)\\
    &\leq \sqrt{ \la e_j, \partial_{1, i}\partial_{2, i}K(a, a) e_j\ra} \sqrt{ \la e_l, K(b,b)e_l \ra }\\
    \end{aligned}
    \label{eq:K1-upperbound}
\end{equation}
and
\begin{equation}
    \begin{aligned}
    (K_{12}(a, b))_{p,i,q,j} & \coloneqq \partial_{1,p}\partial_{2,i}K_{qj}(a,b)\\
    &\leq \sqrt{\la e_q, \partial_{ 1,p}\partial_{2,p}K(a,a) e_q\ra}  \sqrt{ \la e_j, \partial_{1, i }\partial_{2,i}K(b, b) e_j\ra}, \\
    \end{aligned}
    \label{eq:K12-upperbound}
\end{equation}
where $e_{\dimidx}$ denotes the $\dimidx$th standard basis in $\bbR^{\datdim}$. 

For a matrix-valued function $A:\bbR^{\datdim}\to\bbR^{\datdim\times\datdim}$, we define $\mathcal{T}_P^m A= \la\nabla, pmA\ra/p$.  
The Stein kernel is then expressed as $k_{p,K_m} = \mathcal{T}_{P}^{m,2}\mathcal{T}_{P}^{m,1} K$, where $\mathcal{T}_P^{m,1}$ and $\mathcal{T}_P^{m,2}$ denote applying $\mathcal{T}_P^m$ to the first and second arguments, respectively. 

Recall $2b(x)=\la\nabla,p(x)m(x)\ra/p(x)$. Since $\mathcal{T}_P^m A(x)= 2\la b(x), A(x)\ra + \la m(x)^\top\nabla, A(x)\ra$, we have 
\begin{align}
    \begin{aligned}
        k_{p, K_m}(x, y) &= 4\la b(x),K(x,y)b(y)\ra + 2\bigl\la b(y),\la m(x)^{\top}\nabla_{1},K(x,y)\ra\bigr\ra\\
        &\hphantom{=}+2\bigl\la b(x),\la m(y)^{\top}\nabla_{2},K(x,y)\ra\bigr\ra+\bigl\la m(y)^{\top}\nabla_{2},\la m(x)^{\top}\nabla_{1},K(x,y)\ra\bigr\ra.
    \end{aligned}
\label{eq:Stein-kernel-expanded}
\end{align}
In particular, from \cref{eq:Stein-kernel-expanded}, we obtain 
\begin{align*}
&k_{p,K_m}(x,x)\\
&\leq 4\Verts{b(x)}_{2}^2\Verts{K(x,x)}_{\mathrm{op}} + 4 \datdim \Verts{b(x)}_{2}\Verts{m(x)}_{\mathrm{op}} \Verts{K_1(x,x)}_{\mathrm{F}} + \datdim \Verts{m(x)}_{\mathrm{op}}^2 \Verts{K_{12}(x,x)}_{\mathrm{F}}\\
&\leq 4\Verts{b(x)}_{2}^2 \Verts{K(x,x)}_{\mathrm{op}} + 4\datdim^{2} \Verts{b(x)}_{2}
\sqrt{\Verts{K(x,x)}_{\mathrm{op}}}\Verts{m(x)}_{\mathrm{op}} 
\sqrt{ \sum_{\dimidx=1}^{\datdim} \Verts{ \partial_{1,\dimidx}\partial_{2,\dimidx}K(x, x) }_{\mathrm{op}}}\\
& \hphantom{\leq} \quad + \datdim^2 \Verts{m(x)}_{\mathrm{op}}^2 \sum_{\dimidx=1}^{\datdim} \Verts{\partial_{1, \dimidx}\partial_{2, \dimidx}K(x, x)}_{\mathrm{op}},
\end{align*}
where the second inequality follows from \eqref{eq:K1-upperbound} and \eqref{eq:K12-upperbound}. Thus, 
under \cref{assu:poly-grow-coef} and the supposed conditions on the kernel, we have 
the following evaluation: 
\begin{align*}
\sqrt{k_{p,K_m}(x,x)} &\leq 2C_0\lambda_b (1+\Verts{x}_2) (1+\Verts{x}_2)^{\polyorder-1} \\
&\hphantom{\leq}+ 2 \sqrt{\lambda_b \lambda_m C_0 C_1} \datdim^{5/4}\sqrt{ 
(1+\Verts{x}_2^{\polyorder_m+1}) (1+\Verts{x}_2)^{2\polyorder-\polyorder_m-1}
}\\
& \hphantom{\leq}+ C_1 \datdim \lambda_m (1+\Verts{x}_2^{\polyorder_m+1}) (1+\Verts{x}_2)^{\polyorder-\polyorder_m-1},
\end{align*}
which concludes the proof. 
\qed

\subsubsection{\pcref{cor:ksd-Wass-equiv}}
\label{subsec:proof-KSD-Wass-equiv}
Note that translation invariant kernels $\ell$ have both $\ell(x,x)$ and $\partial_{1,i}\partial_{2,i}\ell(x,x)$  bounded. 
Also, since $\rkhs{L_0}\subset \calC_0^1$, the boundedness of $L_0(x,x)$ and $\partial_{1,i}\partial_{2,i}L_0(x,x)$ holds~\citep[Lemma 3 (a)-(b)]{Barp2024}. The required growth conditions for the three kernel choices can be easily checked from the form (\cref{def:kernel-form}). 
\qed
\subsubsection{\pcref{cor:default-kernel}}
\label{subsec:proof-user-friendly-KSD-Wass-equiv}
The suggested kernel form $k_{\imqp}(x,y)$ corresponds to setting $\ell$ in $L^{(2)}$ to the IMQ kernel $k_{\imq}$. 
The IMQ kernel is a translation-invariant kernel defined by an infinitely differentiable function $\Phi$, and according to \citet[Theorem 8.15]{Wendland2004}, 
has a continuous nonvanishing generalized Fourier transform
\[
    \hat{\Phi}(\omega) = \frac{\sqrt{2}}{\Gamma(-\beta)} \Verts{\omega}_2^{-(\datdim-1)/2} K_{(\datdim-1)/2}(\Verts{\omega}_2),
\]
where $\Gamma$ is the Gamma function, and  $K_{(\datdim-1)/2}$ is the modified Bessel function of the second kind of order $(\datdim-1)/2$. 
Thus, $k_{\mathrm{IMQ}}$ satisfies the requirements in \cref{thm:ksd-equiv-to-Wass-conv}. 
\qed
\section{Coercive functions and $\protect\polyorder$-Wassserstein convergence
\label{sec:Coercive-functions-and}}

As mentioned in \cref{rem:coercive} in the main text, we may obtain a KSD that enforces $\polyorder$th moment uniform integrability by assuming a higher-order moment. 
This appendix provides results relevant to this claim. 

We begin with two definitions: 
\begin{defn}[Coercive functions]
Let $\polyorder_{0}\in[0,\infty).$ A function $f:\bbR^{\datdim}\to\bbR$
is said to be coercive of order $\polyorder_{0}$ if there exists
$r_{0}>0$ such that $\inf_{\Verts x_{2}\leq r_{0}}f(x)>\nu$ for
some $\nu\in\bbR;$ and $f(x)\geq\eta\Verts x_{2}^{\polyorder_{0}}$
for some $\eta>0$ if $\Verts x_{2}>r_{0}.$ 
\end{defn}

\begin{defn}[Integrability rate] 
\label{def:integrability-rate}
For $Q\in{\cal P}_{\polyorder}$ and $\varepsilon>0,$ we define the
$\polyorder$th moment integrability rate by 
\[
R_{\polyorder}(Q,\varepsilon)\coloneqq\inf\left\{ r\geq1:\int_{\{\Verts x_{2}>r\}}\Verts x_{2}^{\polyorder}\dd Q(x)\leq\varepsilon\right\} 
\]
\end{defn}

An order-$\polyorder_{0}$ coercive function approximates the $\polyorder_{0}$-growth
outside a ball, while it is only assumed to be bounded below inside
the ball. The integrability rate above represents the radius of
a ball, outside of which the tail moment integral becomes negligible.
Note that $Q\in{\cal P}_{\polyorder}$ is equivalent to having $R_{\polyorder}(Q,\varepsilon)<\infty$
for each $\varepsilon>0.$ In particular, if a sequence $\{Q_{\seqidx}\}_{n\geq1}$
does not have uniformly integrable $\polyorder$-th moments, the integrability
rate $R_{\polyorder}(Q_{\seqidx},\varepsilon)$ diverges. The case
$\polyorder=0$ corresponds to the tightness rate used by \citet[Appendix G]{GorMac2017}. 

We first show that if the Stein RKHS admits a coercive function, the
KSD then enforces the uniform integrability. 
\begin{lem}[KSD upper-bounds the integrability rate]
\label{lem:ksd-ubound-intrate} Let $\polyorder\in[0,\infty).$ Let
$\rkhs K$ be the RKHS of $\bbR^{\datdim}$-valued functions defined
by a matrix-valued kernel $K:\bbR^{\datdim}\times\bbR^{\datdim}\to\bbR^{\datdim\times\datdim}$
for which ${\cal T}_{P}^{m}({\cal H}_{K})$ exists. Assume that the
Stein RKHS ${\cal T}_{P}^{m}({\cal H}_{K})$ contains a coercive function
of order $\polyorder+\theta$ for some $\theta>0.$ Then, for sufficiently
small $\varepsilon>0,$ we have 
\begin{align*}
R_{\polyorder}(Q,\varepsilon) & \leq\left\{ 2\left(1+\frac{\polyorder}{\theta}\right)\left(\frac{\ksd{K_{m}}P(Q)-\nu}{\eta\varepsilon}\right)\right\} ^{\frac{1}{\theta}\lor\frac{\polyorder}{\theta}}. 
\end{align*}
Thus, for a sequence $(Q_{1},Q_{2}\dots)\subset{\cal P}$
we have 
\[
\limsup_{\seqidx\to\infty}\ksd{K_{m}}P(Q_{\seqidx})<\infty\Rightarrow\limsup_{\seqidx\to\infty}R_{\polyorder}(Q_{\seqidx},\varepsilon)<\infty.
\]
In particular, if the sequence $\{Q_{1},Q_{2}\dots\}$ does not have
uniformly integrable $\polyorder$th moments, then $\ksd{K_{m}}P(Q_{\seqidx})$
diverges as $\seqidx\to\infty.$ 
\end{lem}

\begin{proof}
Let $f(x)=\Verts x_{2}^{\polyorder}1\{\Verts x_{2}>r\}.$ We consider
the integral 
\[
\int_{\{\Verts x_{2}>r\}}\Verts x_{2}^{\polyorder}\dd Q(x)=\int f(x)\dd Q(x)=\int_{0}^{\infty}Q(\{f(x)>t\})\dd t.
\]
By dividing the range of the integral, we obtain 
\begin{align*}
\int_{0}^{\infty}Q(\{f(x)>t\})\dd t & =r^{\polyorder}Q(\{\Verts x_{2}>r\})+\int_{r^{\polyorder}}^{\infty}Q(\{f(x)>t\})\dd t\\
 & =r^{\polyorder}Q(\{\Verts x_{2}>r\})+\int_{r}^{\infty}Q\bigl\{\Verts x_{2}>t^{1/\polyorder}\bigr\}\dd t,
\end{align*}
where we regard the second term as zero when $\polyorder=0.$ 

We evaluate the tail probabilities in terms of the KSD. By assumption,
we have a function $v\in\rkhs K$ such that ${\cal T}_{P}^{m}v$ is
a coercive function of order $\polyorder+\theta;$ i.e., there exists
$r_{0}>0$ such that $\mathcal{T}_{P}^{m}v(x)\geq\eta\Verts x_{2}^{\polyorder+\theta}$
for $\Verts x_{2}>r_{0}$ and $\inf_{\Verts x_{2}\leq r_{0}}\mathcal{T}_{P}^{m}v(x)>\nu,$
where $\eta>0$ and $\nu\in\bbR.$ Following the proof of \citet[Lemma 17]{GorMac2017},
we define $\gamma(r)=\inf\{{\cal T}_{P}^{m}v(x)-\nu:\Verts x_{2}\geq r\}.$
Define $\text{\ensuremath{r_{\gamma}}}\coloneqq r_{0}\lor2(\verts{\nu}/\eta)^{1/(\polyorder+\theta)}.$
It is straightforward to check that for $r\geq r_{\gamma},$ we have
$\gamma(r)\geq\eta r^{\polyorder+\theta}/2.$ By Markov's inequality,
\begin{align*}
Q(\{\Verts x_{2} & >r\})\leq\frac{\EE_{X\sim Q}\gamma(\Verts X_{2})}{\gamma(r)}\leq\frac{\EE_{X\sim Q}[\mathcal{T}_{P}^{m}v(X)-\nu]}{\gamma(r)}\leq\frac{\ksd{K_{m}}P(Q)-\nu}{\gamma(r)}.
\end{align*}
These observations yield the following estimate of the above integral:
\begin{align*}
\int_{\{\Verts x_{2}>r\}}\Verts x_{2}^{\polyorder}\dd Q(x) & =r^{\polyorder}Q(\{\Verts x_{2}>r\})+1_{\{\polyorder>0\}}\int_{r}^{\infty}Q(\{\Verts x_{2}>t^{1/\polyorder}\})\dd t\\
 & \leq r^{\polyorder}\frac{\ksd{K_{m}}P(Q)-\nu}{\gamma\bigl(r\bigr)}+1_{\{\polyorder>0\}}\int_{r}^{\infty}\frac{\ksd{K_{m}}P(Q)-\nu}{\gamma\bigl(t\bigr)}\dd t\\
 & \leq2\frac{\ksd{K_{m}}P(Q)-\nu}{\eta r^{\theta}}+21_{\{\polyorder>0\}}\int_{r}^{\infty}\frac{\ksd{K_{m}}P(Q)-\nu}{\eta t^{1+\theta/\polyorder}}\dd t\\
 & \leq2\frac{\ksd{K_{m}}P(Q)-\nu}{\eta}\left(\frac{1}{r^{\theta}}+\frac{\polyorder}{\theta}\frac{1}{r^{\theta/\polyorder}}1_{\{\polyorder>0\}}\right)\\
 & \leq2\left(1+\frac{\polyorder}{\theta}\right)\frac{\ksd{K_{m}}P(Q)-\nu}{\eta}\frac{1}{r^{\theta\land\theta/\polyorder}}
\end{align*}
where we assume $r\ge r_{\gamma}\lor1$ and use the convention $\theta/0=\infty.$
Therefore, for $\epsilon>0,$ by taking sufficiently large $r_{\epsilon}\geq1$
such that 
\[
2\left(1+\frac{\polyorder}{\theta}\right)\frac{\ksd{K_{m}}P(Q)-\nu}{\eta}\frac{1}{r_{\varepsilon}^{\theta\land\theta/\polyorder}}\leq\varepsilon\ \text{and}\ r_{\varepsilon}\geq r_{\gamma},
\]
we have 
\[
\int_{\Verts x_{2}>r_{\epsilon}}\Verts x_{2}^{\polyorder}\dd Q(x)\leq{\cal \varepsilon}.
\]
Therefore, the $\polyorder$th moment integrability rate $R_{\polyorder}(Q,\varepsilon)$
satisfies
\[
R_{\polyorder}(Q,\varepsilon)\leq\left\{ 2\left(1+\frac{\polyorder}{\theta}\right)\left(\frac{\ksd{K_{m}}P(Q)-\nu}{\eta\varepsilon}\right)\right\} ^{\frac{1}{\theta}\lor\frac{\polyorder}{\theta}}\vee r_{\gamma}.
\]
For sufficiently small $\varepsilon,$ the KSD term dominates $r_{\gamma}.$
Thus, the claim has been proved. 
\end{proof}
A similar bound on the tightness rate was obtained by \citet[Lemma 17]{GorMac2017}. 
\cref{lem:ksd-ubound-intrate} establishes a result similar to
\cref{cor:q-approx-uniform-integrability}. A major difference
is that \cref{cor:q-approx-uniform-integrability} does not require $P \in \finitemomentspace{\polyorder+\theta}$.  
\cref{lem:ksd-ubound-intrate} essentially states that the KSD controls a higher-order moment; 
that is, under the coercivity assumption, the condition $\limsup_{\seqidx\to\infty} \ksd{K_m}{P}(Q_{\seqidx})<\infty$ implies $\limsup_{\seqidx\to \infty} \EE_{X\sim Q_{\seqidx}} [\Verts{X}_2^{\polyorder+\theta}]<\infty$, and hence controls $\polyorder$th moment uniform integrability. 
Note that while we may take as small $\theta$ as possible for enforcing uniform integrablity, assuming a higher order moment provides a benefit of a tighter bound on $R_{\polyorder}(Q, \varepsilon)$. 

We can construct an RKHS admitting a coercive function.
\begin{lem}[Existence of coercive functions] 
\label{lem:existence-coercive} Let $\polyorder\in[0,\infty).$ Assume
the dissipativity condition in Assumption \ref{assu:dissipativity}
with $\alpha,\beta_{1},\beta_{0}.$ Suppose Assumption \ref{assu:poly-grow-coef}
holds for some constants $c_{m}>0$ and $\polyorder_{m}\in\{0,1\}$;
if $\polyorder_{m}=1,$ additionally assume $\alpha>\lambda_{m}(\polyorder-2).$
Let $v(x)=-w_{\polyorder-2}(x)x$ with $\weight_{\polyorder-2}(x)=\bigl(\tau^{2}+\Verts x_{2}^{2}\bigr)^{(\polyorder-2)/2}$. 
Then, there exists $r>0$ $\mathcal{T}_{P}^{m}v(x)\geq\eta\Verts x_{2}^{\polyorder},$
and $\min_{\Verts x_{2}\leq r_{0}}\mathcal{T}_{P}^{m}v(x)>-\infty,$
where $\eta$ is determined by the following quantities: $\alpha,$
$\beta_{1},$ $\beta_{0},$ $\lambda_{m},$ $\tau,$ $\polyorder_{m},$
and $\polyorder.$ 
\end{lem}

\begin{proof}
The proof proceeds as in that of Lemma \ref{lem:norm-indicator-dominator};
specifically, the argument for the regime $\Verts x_{2}>r+\delta$
in Lemma \ref{lem:norm-indicator-dominator} shows that there exists
$\tilde{r}>0$ such that $\mathcal{T}_{P}^{m}v(x)\geq\eta\Verts x_{2}^{\polyorder}$
for some $\eta>0$ if $\Verts x_{2}>\tilde{r}.$ Since $v$ is
continuously differentiable, $\mathcal{T}_{P}^{m}v(x)$ is continuous,
which implies that there exists a minimum inside the ball of radius
$\tilde{r}.$ 
\end{proof}
\begin{lem}[Stein RKHS admits coercive functions]
\label{lem:uniform-integrability-coercive}
Fix $\polyorder \in [0, \infty)$. 
Define $K:\bbR^{\datdim}\times\bbR^{\datdim}\to\bbR^{\datdim\times\datdim}$ by 
\begin{equation}
    K(x,y)=L(x,y)+\weight_{\polyorder-2}(x)k_{\mathrm{lin}}(x,y) \weight_{\polyorder-2}(y)\idmat,\label{eq:rkhs-coercive}
\end{equation}
where $\weight_{\polyorder-2}(x)=\bigl(\tau^{2}+\Verts x_{2}^{2}\bigr)^{(\polyorder-2)/2}$
with $\tau>0$; 
$L$ is a matrix-valued kernel with $\rkhs{L} \subset \calC^1(\bbR^{\datdim})$, and 
$k_{\mathrm{lin}}(x,y)= \la x,y\ra + \tau^{2}$. 
Assume the dissipativity condition in Assumption \ref{assu:dissipativity}
with $\alpha,\beta_{1},\beta_{0}.$ Assume the growth conditions in
Assumption \ref{assu:poly-grow-coef}. If $\polyorder_{m}$ in Assumption
\ref{assu:poly-grow-coef} is equal to $1,$ additionally assume that
$\alpha>\lambda_{m}(\polyorder-2).$ Then, ${\cal T}_{P}^{m}(\rkhs K)$
admits a coercive function of order $\polyorder$. 
\end{lem}

\begin{proof}
The function $v=-\tilde{w}(x)x$ is an element of the RKHS of the
second kernel in (\ref{eq:rkhs-coercive}). The conclusion follows
from Lemma \ref{lem:existence-coercive}. 
\end{proof}
We end this appendix by establishing a result analogous to Theorem
\ref{thm:ksd-equiv-to-Wass-conv}. 
\begin{thm}[KSD convergence implies $\polyorder$-Wasserstein convergence] 
Fix $\polyorder, \theta \in (0,\infty)$. 
Suppose $P \in \finitemomentspace{\polyorder+\theta}$. 
Define base kernel K as in \eqref{eq:q-growth-approx-kernel} 
using tilting $\weight_{\polyorder+\theta-1}(x)=\bigl(\tau^{2}+\Verts x_{2}^{2}\bigr)^{(\polyorder+\theta-1)/2}$. 
Specify the kernel $L$ by either of the following two kernels: 
\begin{align*}
 & L^{(1)}(x,y)=\frac{L_{0}(x,y)}{{\weight}_{\polyorder+\theta+\polyorder_m}(x)\weight_{\polyorder+\theta+\polyorder_m}(y)},\ L^{(2)}(x,y)=\frac{\ell(x,y)}{\weight_{\polyorder_m}(x)\weight_{\polyorder_m}(y)} \idmat,
\end{align*}
where 
$L_{0}$ is $\mathcal{C}_{0}^{1}(\bbR^{\datdim})$-universal, and 
$\ell$ is a scalar-valued translation-invariant ${\calC}_{0}^{1}$-universal 
kernel. 
If $\polyorder_{m}=1$ in \cref{assu:poly-grow-coef}, assume $\alpha>\lambda_{m}(\polyorder+\theta-2)$. 
Then, for any sequence $(Q_{\seqidx})_{\seqidx\geq1}\subset\mathcal{P}$, under \cref{assu:poly-grow-coef} and \cref{assu:dissipativity}, 
$Q_{\seqidx}\toL{\polyorder} P$ if $\ksd{K_{m}}P(Q_{\seqidx})\to0$ ($\seqidx\to\infty)$. 
\end{thm}

\begin{proof}
The proof is essentially the same as that of Theorem \ref{thm:ksd-equiv-to-Wass-conv}:
the kernel choice ensures weak convergence control, along with the
uniform integrability guarantee by Lemma \ref{lem:uniform-integrability-coercive}. 
\end{proof}

\section{Polynomial functions are pseudo-Lipschitz functions \label{sec:Polynomial-functions-are}}

We show that the pseudo-Lipschitz class ${\cal F}_{\polyorder}$, used in \cref{subsec:Rate-of--Wasserstein}, suffices for characterizing
convergence in moments. 
\begin{lem}[Monomial functions are pseudo-Lipschitz]
\label{lem:monomial-plip-const}
Let $\polyorder\geq1$ be an integer. Let $\bm{\polyorder}=(\polyorder_{1},\dots,\polyorder_{\datdim})\in\{0,\dots,\polyorder\}^{\datdim}$
be a multi-index such that $\sum_{\dimidx=1}^{\datdim}\polyorder_{\dimidx}=\polyorder.$
Then, $x^{\bm{\polyorder}}\coloneqq\prod_{\dimidx=1}^{\datdim}x_{\dimidx}^{\polyorder_{\dimidx}}$
is pseudo-Lipschitz of order $\polyorder-1.$ Its pseudo-Lipschitz
constant $\plipconst{x^{\bm{\polyorder}}}{\polyorder-1}$ is bounded
by $(1\lor\polyorder/2) \Verts{\bm{\polyorder}}_2$. 
Moreover, it satisfies 
\begin{align*}
\sup_{x\in\bbR^{\datdim}}\frac{\Verts{\nabla^{j}x^{\bm{\polyorder}}}_{\mathrm{op}}}{1+\Verts x_{2}^{\polyorder-1}} & \leq \datdim^{j/2} \max_{\dimidx\in\{1,\dots, \datdim\}} \frac{q_{\dimidx}!}{\{1_{\{q_{\dimidx} \geq j\}}(q_i-j)\}!}. 
\end{align*} 
Thus, $x\mapsto c_{\polyorder,\datdim}^{-1}x^{\bm{\polyorder}}$ is an element of $\calF_{\polyorder}$ (see \cref{subsec:plipfuncs})
where 
\[
c_{\polyorder, \datdim} = (1\lor\polyorder/2) \Verts{\bm{\polyorder}}_2
\lor \max_{j=2,3} d^{j/2} 1\{\polyorder\geq j\}\max_{i\in\{1, \dots, \datdim\}} \frac{ \polyorder_i! }{ \{1_{\{q_i\geq j\}}(q_i-j)\}! },
\]
and $0!\coloneqq 1$.
\end{lem}

\begin{proof}
Let $f_{\bm{\polyorder}}: x \mapsto x^{\bm{\polyorder}}$. Note that we have 
$
\Verts{\nabla f_{\bm{\polyorder}}(x)}_2 \leq 
\Verts{\bm{\polyorder}}_2 \Verts{x}^{\polyorder-1}_2 
$.
Thus, by the mean value theorem, we obtain 
\begin{align*}
    \verts{x^{\bm{\polyorder}} - y^{\bm{\polyorder}}} 
    & \leq \Verts{\bm{\polyorder}}_2 \Verts{x-y}_2 \int_0^1 \Verts{tx + (1-t)y}^{\polyorder-1}_2 \dd t\\
    & \leq \Verts{\bm{\polyorder}}_2 (1\lor \polyorder/2) \bigl(1 + \Verts{x}_2^{\polyorder-1} + \Verts{y}_2^{\polyorder-1} \bigr)\Verts{x-y}_2. 
\end{align*}

Next we check the growth of the 
$j$-th derivatives. We assume $\polyorder\geq j$ below, as the derivatives
are zero otherwise. Note that we have
\begin{align*}
(\nabla^{j}x^{\bm{n}})_{l_{1},\dots,l_{j}} 
& = \prod_{\dimidx=1}^{\datdim}\frac{\polyorder_{\dimidx}!}{(\polyorder_{\dimidx}-m_{\dimidx})!}\cdot x_{\dimidx}^{\polyorder_{\dimidx}-m_{\dimidx}}\cdot 1{\{\polyorder_{\dimidx}\geq m_{\dimidx}\}} \\
& \leq \Verts{x}_2^{\polyorder-1} \max_{\dimidx\in\{1,\dots, \datdim\}} \frac{q_{\dimidx}!}{\{1_{\{q_{\dimidx} \geq j\}}(q_i-j)\}!}
\end{align*}
where $m_{\dimidx} = \sum_{k=1}^j 1\{\polyorder_{k}=\dimidx\}$, for at least one index $\dimidx$ satisfies $\polyorder_{\dimidx} \geq m_{\dimidx}$. 
Therefore, 
\begin{align*}
\sup_{x\in\bbR^{\datdim}}\frac{\Verts{\nabla^{j}x^{\bm{\polyorder}}}_{\mathrm{op}}}{1+\Verts x_{2}^{\polyorder-1}} & \leq \datdim^{j/2} \max_{\dimidx\in\{1,\dots, \datdim\}} \frac{q_{\dimidx}!}{\{1_{\{q_{\dimidx} \geq j\}}(q_i-j)\}!}. 
\end{align*} 
 
\end{proof}

\section{Characterization of pseudo-Lipschitz metrics\label{sec:Characterization-of-pseudo-Lipsc}}
In this appendix, we first prove that pseudo-Lipschitz IPMs on a general metric space metrize $\polyorder$-Wasserstein convergence 
(see \cref{sec:q-wass-conv-necessary} for the definition of $\polyorder$-Wasserstein convergence). 
We then provide a proof of \cref{prop:plip-IPM-convergence}. 

We first define necessary concepts. 

\begin{defn}[Uniform integrability]
Let $({\cal X},d)$ be a separable metric space. 
A sequence $(P_{\seqidx})_{\seqidx\geq1}\subset\finitemomentspace{}({\cal X})$
is said to have uniformly integrable $\polyorder$th moments if
\[
\lim_{r\to\infty}\limsup_{\seqidx\to\infty}\int_{d(x,x_{0})\geq r}d(x,x_{0})^{\polyorder}\dd P_{\seqidx}(x)=0
\]
for some (and then any) $x_{0}\in{\cal X}.$ 
\end{defn}

\begin{defn}[Pseudo-Lipschitz functions on general metric spaces]
\label{def:plip-metric-space}Let $({\cal X},d)$ be a metric space.
For $V:{\cal X}\to[0,\infty),$ a real-valued function $f$ on ${\cal X}$
is said to be $V$-pseudo-Lipschitz if it satisfies 
\[
\Verts f_{\mathrm{L},V}\coloneqq\sup_{x\neq y}\frac{\verts{f(x)-f(y)}}{\left(1+V(x)+V(y)\right)d(x,y)}<\infty.
\]
For $\polyorder\geq0,$ we say that a $V$-pseudo-Lipschitz function
$f$ is of \emph{order} $\polyorder$ if $V$ satisfies the
following conditions for some point $x_{0}$ in ${\cal X}$: (a) for
any $x\in{\cal X},$ and some $A_{0}\geq0,$ $A_{1}>0,$ we have $V(x)\leq A_{0}+A_{1}d(x,x_{0})^{\polyorder},$
and (b) if $\polyorder>0,$ there exists $B>0$ and $R_{V}\geq0$
such that $V(x)>Bd(x,x_{0})^{\polyorder}$ if $d(x,x_{0})>R_{V}.$
\end{defn}

The following lemma provides a trivial example of a pseudo-Lipschitz function. 
\begin{lem}[Metric power is pseudo-Lipschitz]
\label{lem:metric-is-pLip}Let $d$ be a metric on a set ${\cal X}.$
For $\polyorder\in [1,\infty)$ and $x_{0}\in{\cal X},$ the function $x\mapsto d(x,x_{0})^{\polyorder}$
is $V$-pseudo-Lipschitz with $V(x)=d(x,x_{0})^{\polyorder-1};$ thus,
it is of order $\polyorder-1.$ We have $\Verts{d(\cdot,x_{0})}_{\mathrm{L},V}=1$
if $1\leq\polyorder<2,$ and $\Verts{d(\cdot,x_{0})}_{\mathrm{L},V}\leq\polyorder/2$
if $\polyorder\geq2.$ 
\end{lem}

\begin{proof}
The case $\polyorder=1$ follows from the triangle inequality. Consider
$\polyorder>1.$ By the mean value theorem, for any two real numbers
$a,b$, we have 
\begin{align*}
\verts{a^{\polyorder}-b^{\polyorder}} & \leq\polyorder\verts{a-b}\int_{0}^{1}\verts{(1-t)a+tb}^{\polyorder-1}\dd t
\end{align*}
If $1<\polyorder<2,$ by the triangle inequality and $(\alpha+\beta)^{\polyorder-1}\leq\alpha^{\polyorder-1}+\beta^{\polyorder-1}$
for $\alpha,\beta\geq0,$ we have 
\begin{align*}
\verts{a^{\polyorder}-b^{\polyorder}} & \leq\left(1+\verts a^{\polyorder-1}+\verts b^{\polyorder-1}\right)\verts{a-b},
\end{align*}
whereas if $\polyorder>2,$ by Jensen's inequality, 
\[
\verts{a^{\polyorder}-b^{\polyorder}}\leq\frac{\polyorder}{2}\left(1+\verts a^{\polyorder-1}+\verts b^{\polyorder-1}\right)\verts{a-b}. 
\]
Thus, since $\verts{d(x,x_{0})-d(y,x_{0})}\leq d(x,y),$ letting $a=d(x,x_{0})$
and $b = d(y,x_{0})$ gives 
\[
\verts{d(x,x_{0})^{\polyorder}-d(y,y_{0})^{\polyorder}}\leq C\bigl(1+d(x,x_{0})^{\polyorder-1}+d(y,y_{0})^{\polyorder-1}\bigr)d(x,y),
\]
where $C=1$ if $1\leq\polyorder<2,$ and $C=\polyorder/2$ if $\polyorder\geq2.$ 
\end{proof}
The following proposition shows that $\polyorder$-Wasserstein convergence is induced by the IPM that is defined
by a pseudo-Lipschitz class. 
\begin{prop}[Pseudo-Lipschitz metric metrizes $\polyorder$-Wasserstein convergence]
\label{prop:plip-metric-Wasserstein-conv}
Let $({\cal X},d)$ be a separable metric space. 
Fix $\polyorder\geq1$. 
Let $P \in \finitemomentspace{\polyorder}(\mathcal{X})$, 
and $d_{\plipspace{\polyorder-1}(V)}$
be the IPM defined by the subset $\plipspace{\polyorder-1}(V)=\{f:\calX\to\bbR:\Verts f_{\mathrm{L},V}\leq1\}$
of $V$-pseudo Lipschitz functions of order $\polyorder-1.$ For a
sequence of probability measures $(P_{\seqidx})_{\seqidx\geq 1}\subset\finitemomentspace{}({\cal X}),$
the following statements are equivalent: (a) $d_{\plipspace{\polyorder-1}(V)}(P, P_{\seqidx})\to0$
as $\seqidx\to\infty,$ and (b) $P_{\seqidx}$ converges to $P$ in
the sense of $\polyorder$-Wasserstein convergence. 
\end{prop}

\begin{proof}
We may assume $(P_n)_{\seqidx \geq 1} \subset \finitemomentspace{\polyorder}(\mathcal{X})$, since if either of (a) or (b) holds, then $(P_{\seqidx})_{\seqidx \geq \seqidx_0} \subset \finitemomentspace{\polyorder}(\mathcal{X})$ for some $\seqidx_0 \geq 1$.

$\text{(a)}\ \Rightarrow\ \text{(b):}$ By \citet[Theorem 7.12]{Villani_2003},
it suffices to show that $(P_{\seqidx})_{\seqidx\geq1}$ converges
weakly to $P$ and $\EE_{P_{\seqidx}}[d(\cdot,x_{0})^{\polyorder}]\to\EE_{P}[d(\cdot,x_{0})^{\polyorder}].$
For the first claim, note that the set $\plipspace{\polyorder-1}(V)$
contains the set $\mathrm{BL}=\{f:{\cal X}\to\bbR:\Verts f_{\mathrm{BL}}\coloneqq\Verts f_{\infty}+\Verts f_{\mathrm{L}}\leq1\}$
where $\Verts f_{\infty}=\sup_{x\in{\cal X}}\verts{f(x)}$ and $\Verts f_{\mathrm{L}}=\sup_{x\neq y}\verts{f(x)-f(y)}/d(x,y).$
The convergence $d_{\plipspace{\polyorder-1}}(P, P_{\seqidx})\to0$
thus implies $d_{\mathrm{BL}}(P, P_{\seqidx})\to0.$ The sequence
converges to $P$ weakly, since $d_{\mathrm{BL}}$ metrizes  
weak convergence \citep[Theorem 11.3.3]{Dudley2002}. To prove the second
claim, note that by Definition \ref{def:plip-metric-space}, for $\polyorder>1,$
there exist a point $x_{0}\in{\cal X},$ and constants $B>0,$ $R_{V}\geq0$
such that $V(x)\geq Bd(x,x_{0})^{\polyorder-1}$ if $d(x,x_{0})>R_{V}.$
Then, $f_{\polyorder}(x)=d(x,x_{0})^{\polyorder}$ satisfies, for
any $x,y\in{\cal X},$ 
\begin{align*}
\frac{\verts{f_{\polyorder}(x)-f_{\polyorder}(y)}}{\bigl(1+V(x)+V(y)\bigr)} & \leq\frac{1+f_{\polyorder-1}(x)+f_{\polyorder-1}(x)}{1+V(x)+V(y)}\frac{\verts{f_{\polyorder}(x)-f_{\polyorder}(y)}}{1+f_{\polyorder-1}(x)+f_{\polyorder-1}(y)}\\
 & \leq\left(1+2\sup_{x\in{\cal X}}\frac{d(x,x_{0})^{\polyorder-1}}{1+V(x)}\right)\frac{\verts{f_{\polyorder}(x)-f_{\polyorder}(y)}}{1+f_{\polyorder-1}(x)+f_{\polyorder-1}(y)}\\
 & \leq C_{\polyorder}\cdot\frac{\verts{f_{\polyorder}(x)-f_{\polyorder}(y)}}{1+f_{\polyorder-1}(x)+f_{\polyorder-1}(y)}
\end{align*}
with 
\[
C_{\polyorder}=\begin{cases}
3 & \text{if}\ \polyorder=1\\
\left(1+2\bigl(R_{V}^{\polyorder-1}\lor B^{-1})\right) & \text{if}\ \polyorder>1.
\end{cases}
\]
This observation together with \cref{lem:metric-is-pLip} shows
that $f_{\polyorder}$ is $V$-pseudo-Lipschitz. The conclusion follows
from $d_{\plipspace{\polyorder-1}(V)}(P, P_{\seqidx})\to0$ as $\seqidx\to\infty.$ 

$\text{(b)}\ \Rightarrow\ \text{(a):}$ Our goal is to show that the
following quantity can be made arbitrarily small by taking sufficiently
large $\seqidx:$
\begin{align*}
 &\sup_{f\in\plipspace{\polyorder-1}(V)}\left|\int f\dd P_{\seqidx}-\int f\dd P\right| \\
& =\sup_{f\in \plipspace{\polyorder-1}(V)}\left|\int\bigl(f-f(x_{0})\bigr)\dd(P_{\seqidx}-P)+\underbrace{\int f(x_{0})\dd(P_{\seqidx}-P)}_{=0}\right|\\
 & =\sup_{f\in \plipspace{\polyorder}(V)}\left|\int\bar{f}\dd(P_{\seqidx}-P)\right|,    
\end{align*}
where $\bar{f}=f-f(x_{0}),$ and $x_{0}$ is a point such that $V(x)\leq A_{0}+A_{1}d(x,x_{0})^{\polyorder-1}$
for any $x\in{\cal X},$ and some $A_{0}\geq0,$ $A_{1}>0.$ 

As a preparatory step, we clarify some properties of $\bar{f}.$ For
any $x,y\in{\cal X},$ we have 
\begin{align*}
\verts{\bar{f}(x)} & \leq\bigl(1+V(x_{0})+V(x)\bigr)d(x,x_{0})\\
 & \leq\bigl(1+A_{0}+V(x_{0})+A_{1}d(x,x_{0})^{\polyorder-1}\bigr)d(x,x_{0})
\end{align*}
and
\begin{align*}
\verts{\bar{f}(x)-\bar{f}(y)} & \leq\bigl(1+V(x)+V(y)\bigr)d(x,y)\\
 & \leq\bigl(1+2A_{0}+A_{1}d(x,x_{0})^{\polyorder-1}+A_{1}d(y,x_{0})^{\polyorder-1}\bigr)d(x,y).
\end{align*}
These estimates imply the following: (i) the growth of $\bar{f}$
is of $d(\cdot,x_{0})^{\polyorder},$ and (ii) the restriction of $\bar{f}$
to a closed ball ${\cal B}_{r}(x_{0})=\{x\in{\cal X}:d(x,x_{0})\leq r\}$
is a bounded Lipschitz function. By \citep[Proposition 11.2.3]{Dudley2002},
the restricted function $\bar{f}_{\given{\cal B}_{r}(x_{0})}$ can
be extended to a function $\tilde{f}$ on ${\cal X}$ without changing
its bounded-Lipschitz norm. In particular, the bounded-Lipschitz norm
$\Verts{\tilde{f}}_{\mathrm{BL}}$ can be bounded by a constant depending
only on $r.$ 

For $R>0,$ consider $g_{R}(x)\coloneqq\bigl(1-d(x,{\cal B}_{R}(x_{0}))\bigr)_{+},$
where $d(x,F)\coloneqq\inf_{x\in F}d(x,F)$ for a set $F;$ the function
satisfies $1_{{\cal B}_{R}(x_{0})}\leq g_{R}\leq1_{{\cal B}_{R+1}(x_{0})},$
and is Lipschitz by \citep[Proposition 11.2.2]{Dudley2002}. For $f\in\plipspace{\polyorder-1}(V),$
consider the bounded-Lipschitz extension $\tilde{f}$ of $f-f(x_{0})$
restricted to the closed ball ${\cal B}_{R+1}(x_{0}).$ Since the
product of bounded Lipschitz functions is again bounded-Lipschitz
\citep[Proposition 11.2.1]{Dudley2002}, the function $\tilde{f}g_{R}$
is bounded-Lipschitz with 
\[
    \Verts{\tilde{f}g_{R}}_{\mathrm{BL}}\leq 
    \Verts{\tilde{f}}_{\mathrm{BL}}\Verts{g_R}_{\mathrm{BL}}\leq  C_R,
\]
where $C_R > 0$ is a constant that does not depend on the choice of $f$. 

We are ready to bound the quantify of interest. For a given $R>0$
and any $\varepsilon>0,$ using the weak convergence assumption, take
$\seqidx$ large enough so that 
\[
d_{\mathrm{BL}}(P_{\seqidx},P)=\sup_{f\in\mathrm{BL}}\left|\int f\dd P_{\seqidx}-\int f\dd P\right|<\frac{\varepsilon}{C_{R}},
\]
which is possible as the bounded Lipschitz metric $d_{\mathrm{BL}}$
metrizes weak convergence \citep[Theorem 11.3.3]{Dudley2002}. The
definition of $R$ has been left unspecified; by \citet[Theorem 7.12]{Villani_2003},
the sequence $\{P_{1},P_{2},\cdots,\}$ has uniformly integrable $\polyorder$-th
moments, and we may therefore take $R\geq1$ such that 
\[
\sup_{\seqidx\geq1}\int_{\{d(x,x_{0})>R\}}d(x,x_{0})^{\polyorder}\dd P_{\seqidx}(x)\lor\int_{\{d(x,x_{0})>R\}}d(x,x_{0})^{\polyorder}\dd P(x)<\varepsilon.
\]
Then, for any $\varepsilon>0,$ we obtain 
\begin{align*}
 & \sup_{f\in\plipspace{\polyorder-1}(V)}\left|\int\bar{f}\dd(P_{\seqidx}-P)\right|\\
 & =\sup_{f\in\plipspace{\polyorder-1}(V)}\left|\int\bar{f}\{(1-g_{R})+g_{R}\}\dd(P_{\seqidx}-P)\right|\\
 & \leq\sup_{f\in\plipspace{\polyorder-1}(V)}\int_{{\cal X}\setminus{\cal B}_{R}(x_{0})}\verts{\bar{f}}\dd(P_{\seqidx}+P)+\sup_{f\in\plipspace{\polyorder-1}(V)}\left|\int\tilde{f}g_{R}\dd(P_{\seqidx}-P)\right|\\
 & \leq C\sup_{f\in\plipspace{\polyorder-1}(V)}\int_{{\cal X}\setminus{\cal B}_{R}(x_{0})}d(x,x_{0})^{\polyorder}\dd(P_{\seqidx}+P)(x)+C_{R}d_{\mathrm{BL}}(P_{\seqidx},P)\\
 & \leq(2C+1)\varepsilon,
\end{align*}
where $C=1+A_{0}+A_{1}+V(x_{0}),$ and the first inequality is due
to $\tilde{f}\equiv\bar{f}$ on $\mathcal{B}_{R+1}(x_{0}).$ 
\end{proof}
\begin{rem*}
    When $d$ is the discrete metric $d(x,y)=1\{x\neq y\}$, the IPM defined by 1-$V$-pseudo-Lipschitz functions is (up to a multiplicative constant) the weighted total variation distance with weight $1+2V$~\citep[Lemma 2.1]{Hairer_2011}. 
The above use of order-$(\polyorder-1)$-$V$-pseudo Lipschitz functions may be considered as an alternative weighted-seminorm generalization of the 1-Wasserstein distance.  
\end{rem*}

\subsection{\pcref{prop:plip-IPM-convergence}}
\label{subsec:proof-plip-IPM-metrization}
We address the following result for the class ${\cal F}_{\polyorder}$ presented in the main text. 
We first provide a proof for this result and then prove a lemma used in the proof. 

\plipdistchar*

\begin{proof}
Suppose $P_{\seqidx} \toL{\polyorder} P$. 
Then, by \cref{prop:plip-metric-Wasserstein-conv} with $V(x)=\Verts x_{2}^{\polyorder-1}$, 
we have $d_{\plipspace{\polyorder-1}(V)}(P, P_\seqidx) \to 0$, leading to $d_{\calF_{\polyorder}}(P, P_\seqidx)\to 0$ as 
$d_{\calF_{\polyorder}}(P, P_\seqidx) \leq d_{\plipspace{\polyorder-1}(V)}(P, P_\seqidx)$. 
For the other direction, suppose $d_{\mathcal{F}_{\polyorder}}(P, P_\seqidx)\to 0$. 
Then, by \cref{lem:smooth-plip-upperbounds-plip} with $\mu=P$,  
$d_{\plipspace{\polyorder-1}(V)}(P, P_{\seqidx}) \to 0$. 
By another application of \cref{prop:plip-metric-Wasserstein-conv}, we have $P_{\seqidx} \toL{\polyorder} P$. 
\end{proof}

\begin{lem}[{IPM $d_{\mathcal{F}_{\polyorder}}$ upper-bounds $d_{\plipspace{\polyorder-1}}$; an extension of \citealp[Lemma 2.2]{Mackey_2016}}]
\label{lem:smooth-plip-upperbounds-plip}
Fix $\polyorder \in [1, \infty)$. 
Assume $P \in \mathcal{P}_{\polyorder}$.
Let $\plipspace{\polyorder-1} = \plipspace{q-1}(\Verts{\cdot}_2^{\polyorder-1})$ be a set of $1$-pseudo-Lipscitz functions on $\bbR^{\datdim}$. 
Then, the following relation holds: 
if $\polyorder=1$, 
\begin{align*}
   W_1(P,Q) = d_{\plipspace{0}}(P, Q) \leq 3 \left\{
        d_{\mathcal{F}_1}(P, Q) \lor 
        \left( d_{\mathcal{F}_1}(P, Q) \sqrt{2}\Ex[\Verts{G}_2]^2\right)^{1/3}
    \right\},
\end{align*}
and if $\polyorder>1$, 
\begin{align*}
   d_{\plipspace{\polyorder-1}}(P, Q) \leq C_{\mu}
    \bigl(1\lor d_{\mathcal{F}_\polyorder}(P,Q)^{1\lor(\polyorder+1)/3}\bigr)
    d_{\mathcal{F}_\polyorder}(P,Q)^{1/3}.
\end{align*}
Here, $G$ is a standard normal vector with $\Ex[\Verts{G}_2^k] = 2^{k/2} \Gamma\bigl( (\datdim+k)/2 \bigr)/\Gamma(\datdim/2)$ for any $k \geq  1$; 
and $C_{\mu}$ is a constant that depends on $\mu \in \{P,Q\}\cap \finitemomentspace{\polyorder-1}$, defined by 
\begin{align*}
    C_{\mu} 
    =  \left(\frac{A}{M_\mu}\right)^{1/3}\left[
           2M_\mu\left\{
                1 \lor \left( \frac{A}{M_\mu} \right)^{(\polyorder+1)/3}
           \right\}+ B (1+c_{\polyorder-1})\Ex_G \bigl[ \Verts{G}_2\bigr]  
        \right],
\end{align*} 
with 
$A = c_{\polyorder-1}\sqrt{2}(1+\Ex[\Verts{G}_2^{2(\polyorder-1)}]^{1/2})$, 
$c_{\polyorder-1} = 1\lor2^{(\polyorder-1)-1}$, 
\begin{equation*}
B = \frac{(\polyorder-1)(\polyorder+1)}{2} \lor \frac{\verts{(\polyorder-1)(\polyorder-3)(\polyorder+7)}}{4}, 
\end{equation*}
and 
\begin{align*}
   M_\mu = 2\left[ \Ex[\Verts{G}_2]   \left\{1 + (1+c_{\polyorder-1})\int (1+\Verts{x}_2^2)^{(\polyorder-1)/2}\dd \mu(x) \right\}  +  c_{\polyorder-1} \Ex[\Verts{G}_2^{\polyorder}] \right].
\end{align*}
\end{lem}

\begin{proof}
    We address the case $\polyorder>1$ since $\polyorder=1$ is the statement of \citet[Lemma 2.2]{Mackey_2016}.
    We may assume $Q \in \finitemomentspace{\polyorder}$ as otherwise the conclusion is trivial. 
    Fix $f \in \plipspace{\polyorder-1}$. 
    We have 
    \begin{align*}
        \left|\int f\dd(P -Q)\right| 
        \leq \left|\int (f - f_t)\dd(P -Q)\right| + \left\lvert \int f_t  \dd (P-Q)\right\rvert,
   \end{align*}
   where, for $t>0$, $f_t \in \calC^{\infty}$ is a smoothed version of $f$, defined by $f_t(x) = \Ex_G[f(x+ t G)]$ using a standard normal vector $G$. 
   Below, we evaluate each term in the upper bound. 

   For the first term, by the pseudo-Lipschitzness of $f$, we have 
   \begin{align*}
       \verts{ f_t(x) - f(x)} 
       &\leq  \Ex_G \bigl[ (1 + \Verts{x}_2^{\polyorder-1} + \Verts{x+tG}_2^{\polyorder-1})\Verts{tG}_2\bigr]\\
       &\leq  t (C_1 \Verts{x}_2^{\polyorder-1} + \Ex_{G}[\Verts{G}_2]) +  t^{\polyorder} C_2
   \end{align*}
   where  $C_1 = \Ex_G \bigl[ \Verts{G}_2\bigr]  (1+c_{\polyorder-1})$, 
   $C_{2} = c_{\polyorder-1}\Ex_G \bigl[\Verts{G}_2^{\polyorder}\bigr]$, 
   and $c_{\polyorder-1} = 1\lor2^{(\polyorder-1)-1}$.
   Also, with $\weight(x) = (1+\Verts{x}_2^2)^{(\polyorder-1)/2}$, 
   \begin{align*}
       \int \Verts{x}_2^{\polyorder-1} \dd Q(x) 
       \leq \int  \weight(x) \dd Q(x) 
       \leq B d_{\mathcal{F}_{\polyorder}}(P, Q)
            + \int \weight(x) \dd P(x)
   \end{align*}
   where $B$ is a constant such that $B^{-1}\weight \in \mathcal{F}_{\polyorder}$; 
   we may choose $B = \max((\polyorder-1)(\polyorder+1)/2, \verts{(\polyorder-1)(\polyorder-3)(\polyorder+7)}/4)$ by \cref{lem:multi-quad-derivatives}. 
   Note that the same argument applies with the roles $Q$ and $P$ swapped. 
   Thus, 
    \begin{align*}
        &\left \lvert \int \bigl(f_t(x) - f(x)\bigr)\dd (P-Q) (x)\right\rvert
        \leq \int \verts{f_t(x) - f(x)}\dd (P+Q) (x)\\
        &\leq tC_1 \int \Verts{x}_2^{\polyorder-1}\dd (P+Q) (x)
            + 2t \Ex[\Verts{G}_2] + 2t^{\polyorder} C_2\\
        &\leq tC_1 \int \{\Verts{x}_2^{\polyorder-1} + \weight(x)\}\dd P (x) 
            + t B C_1  d_{\mathcal{F}_{\polyorder}}(P, Q)
            + 2t \Ex[\Verts{G}_2] + 2t^{\polyorder} C_2\\
        &\leq M_P(t\lor t^{\polyorder}) + tBC_1 d_{\mathcal{F}_{\polyorder}}(P, Q),
   \end{align*}
   where $M_P = 2( \Ex[\Verts{G}_2] +   \Ex_P[\weight]  C_1 +  C_2)$. 

    To evaluate the second term, define 
    $
        b_t = C (t^{\polyorder-1} \lor \sqrt{2} t^{-2})
    $,  where $ C = c_{\polyorder-1} (1 + \Ex[\Verts{G}_2^{2(\polyorder-1)}]^{1/2}$. 
    The constant $b_t$ represents a bound on the maximum of (weighted) derivatives of $f_t$ to guarantee $b_t^{-1} f_t \in \mathcal{F}_{\polyorder}$ (see \cref{lem:plip-Gauss-conv-derivative-bounds}). 
    The second term is thus bounded as 
    \begin{equation*}
        \left\lvert \int f_t  \dd (P-Q)\right\rvert \leq b_t d_{\mathcal{F}_{\polyorder}}(P, Q). 
    \end{equation*}

    The estimates above yield, with $D = d_{\mathcal{F}_{\polyorder}}(P,Q)$, 
    \begin{align*}
        d_{\plipspace{\polyorder-1}}(P, Q) 
        &\leq \inf_{t>0} M_P(t\lor t^{\polyorder})+ B C_1 Dt +  b_t D\\
        &\leq \inf_{t>0} M_Pt (1 \lor t^{\polyorder-1})+ BC_1 Dt +  \sqrt{2} CDt^{-2}(1 \lor t^{q+1})\\
        &\leq 2M_Pt_* (1 \lor t^{\polyorder+1})+ B C_1 Dt_* \\
        &\leq \left(\frac{\sqrt{2}C}{M_P}\right)^{1/3}\left[
           2M_P\left\{
                1 \lor \left( \frac{\sqrt{2}C}{M_P}D \right)^{(\polyorder+1)/3}
           \right\}+ B C_1 D
        \right] D^{1/3},
    \end{align*}
    where, in the third line, $t_*$ is chosen such that $M_Pt_* = \sqrt{2}CD t_*^{-2}$. 
\end{proof}

\section{KSD bounds on pseudo-Lipschitz metric}
\label{sec:KSD-bounds-plip}

Here, we collect proofs of 
\cref{thm:ksd-bound-df-abstract-informal,thm:Matern-bound-linear,thm:Matern-bound-quad}, 
which provide KSD bounds on the pseudo-Lipschitz metric $d_{\mathcal{F}_{\polyorder}}$ (see their respective proofs in \cref{subsec:proof-plip-KSD-bound-ti,subsec:proof-Matern-KSD-linear,subsec:proof-Matern-KSD-quad}). 
In the following, without loss of generality, we assume $Q \in \finitemomentspace{\polyorder}$, since otherwise both sides of the inequalities (in \cref{thm:ksd-bound-df-abstract-informal,thm:Matern-bound-linear,thm:Matern-bound-quad}) become $\infty$. 

For ease of presentation, we restate the proof strategy. If we assume
the existence of a solution $v_{f}$ to the Stein equation (\ref{eq:SteinEq}),
we can obtain a bound on the worst-case expectation error with respect
to ${\cal F}_{\polyorder}:$
\begin{align*}
d_{{\cal F}_{\polyorder}}(P,Q)=\sup_{f\in{\cal F}_{\polyorder}}\left|\int f\dd P-\int f\dd Q\right|\leq\sup_{f\in{\cal F}_{\polyorder}}\left|\int{\cal T}_{P}^{m}v_{f}\dd Q\right|.
\end{align*}
Our goal is to express the upper bound in terms of the KSD. To this
end, we approximate each $v_{f}$ by an RKHS function $\tilde{v}_{f}\in{\cal H}_{K}$,
which yields an estimate 
\begin{align*}
 & d_{{\cal F}_{\polyorder}}(P,Q)\\
 & \leq\sup_{f\in{\cal F}}\int\left|{\cal T}_{P}^{m}v_{f}-{\cal T}_{P}^{m}\tilde{v}_{f}\right|\dd Q+\sup_{f\in{\cal F}}\Verts{\tilde{v}_{f}}_{\rkhs K}\cdot\ksd{K_{m}}P(Q)\\
 & \leq\sup_{f\in{\cal F}}\underbrace{\int_{\Verts x_{2}>r}\left(\verts{{\cal T}_{P}^{m}v_{f}}+\verts{{\cal T}_{P}^{m}\tilde{v}_{f}}\right)\dd Q}_{\mathrm{(A)}}+\sup_{f\in{\cal F}}\underbrace{\int_{\Verts x_{2}\leq r}\left|{\cal T}_{P}^{m}v_{f}-{\cal T}_{P}^{m}\tilde{v}_{f}\right|\dd Q}_{\mathrm{(B)}}\\
 & \vphantom{\leq}\quad+\sup_{f\in{\cal F}}\underbrace{\Verts{\tilde{v}_{f}}_{\rkhs K}}_{(\mathrm{C})}\cdot\ksd{K_{m}}P(Q). 
\end{align*}
Each term is evaluated as follows. In Lemmas \ref{lem:integral-eval-stein-solution}
\ref{lem:integral-eval-rkhs}, we show that the term (A) can be expressed
in terms of the KSD assuming the $\polyorder$-growth approximation
property of the Stein RKHS
$\mathcal{T}_{P}^{m}(\rkhs K)$  (\cref{def:q-growth-approx}). In \cref{lem:convolution-Stein-error} %
we show that the term (B) can be made arbitrarily small if we choose
approximation $\tilde{v}_{f}$ appropriately. Note that a better approximation
typically increases the complexity (the term C) of the function, measured
in the RKHS norm. We explicitly construct $\tilde{v}_{f}$ in Section
\ref{subsec:Constructive-approximation-of-Stein} so that we can evaluate
the norm and compare it against the approximation precision. Section
\ref{subsec:Proof-of-Theorem-KSD-bound-IPM} combines these results
to provide our final results. 

\subsection{Preparatory lemmas }

\begin{lem}[Tail integral bound by KSD; the Stein equation]
\label{lem:integral-eval-stein-solution} Suppose that ${\cal T}_{P}^{m}(\rkhs K)$
approximates $\polyorder$-growth; for $\varepsilon>0,$ let $v_{\varepsilon}$
be a function such that ${\cal T}_{P}v_{\varepsilon}$ is $\polyorder$-growth
approximation function, and $r_{\varepsilon}$ be the corresponding
radius of the indicator set. Suppose $f\in{\cal F}_{\polyorder}.$
Then, the solution $v_{f}$ to ${\cal T}_{P}^{m}v_{f}=f-\EE_{P}[f]$
in Lemma \ref{lem:finite-stein-factors} satisfies 
\[
\int_{\Verts x_{2}>r_{\varepsilon}\lor1}\left|{\cal T}_{P}^{m}v_{f}(x)\right|\dd Q(x)\leq C \left(\Verts{v_{\varepsilon}}_{\rkhs K}\ksd{K_{m}}P(Q)+\varepsilon\right),
\]
where 
\[
C=2\left( 1+\lambda_{b}\zeta_{1}\sqrt{\datdim}+\lambda_{m}c_{m}\zeta_{2}\datdim\right). 
\]
\end{lem}

\begin{proof}
First, note that the solution $v_{f}$ satisfies 
\begin{align*}
\verts{\mathcal{T}_{P}^{m}v_{f}(x)-\mathcal{T}_{P}^{m}v_{f}(0)}=\verts{f(x)-f(0)}\leq\bigl(1+\Verts x_{2}^{\polyorder-1}\bigr)\Verts x_{2},
\end{align*}
where the inequality is due to the $1$-pseudo-Lipschitzness of $f.$
By the growth conditions in Assumption \ref{assu:poly-grow-coef},
we have
\begin{align*}
\verts{\mathcal{T}_{P}^{m}v_{f}(0)} & \leq2\Verts{b(0)}_{2}\Verts{v_{f}(0)}_{2}+\datdim\Verts{m(0)}_{\mathrm{op}}\Verts{\nabla v_{f}(0)}_{\mathrm{op}}\\
 & \leq2\left(\lambda_{b}\zeta_{1}\sqrt{\datdim}+\lambda_{m}\zeta_{2}c_m\datdim \right).
\end{align*}
These imply 
\begin{align*}
 & \int_{\Verts x_{2}>r_{\varepsilon}\lor1}\left|\mathcal{T}_{P}^{m}v_{f}(x)\right|\dd Q(x)\\
 & \leq\int_{\Verts x_{2}>r_{\varepsilon}\lor1}\left|\mathcal{T}_{P}^{m}v_{f}(x)-\mathcal{T}_{P}^{m}v_{f}(0)\right|\dd Q(x)+\verts{\mathcal{T}_{P}^{m}v_{f}(0)}\int_{\Verts x_{2}>r_{\varepsilon}\lor1}\dd Q(x)\\
 & \leq2\left( 1+\lambda_{b}\zeta_{1}\sqrt{\datdim}+\lambda_{m}\zeta_{2}c_m\datdim\right) \left(\int_{\Verts x_{2}>r_{\varepsilon}}\Verts x_{2}^{\polyorder}\dd Q(x)\right)\\
 & \leq 2\left( 1+\lambda_{b}\zeta_{1}\sqrt{\datdim}+\lambda_{m}\zeta_{2}c_m\datdim\right) 
 \left(\int\mathcal{T}_{P}^{m}v_{\varepsilon}\dd Q+\varepsilon\right)\\
 & \leq 2\left( 1+\lambda_{b}\zeta_{1}\sqrt{\datdim}+\lambda_{m}\zeta_{2}c_m\datdim\right)
 \left(\Verts{v_{\varepsilon}}_{\rkhs K}\ksd{K_{m}}P(Q)+\varepsilon\right),
\end{align*}
where the second inequality follows from $\Verts x_{2}^{\polyorder'}<\Verts x_{2}^{\polyorder}$
for $\Verts x_{2}>1$ and $\polyorder'<\polyorder;$ the third inequality
is due to the $\polyorder$-growth approximation property of $\mathcal{T}_{P}^{m}v_{\varepsilon}.$
\end{proof}
\begin{lem}[Tail integral bound by KSD; general functions)]
\label{lem:integral-eval-rkhs} Suppose that ${\cal T}_{P}^{m}(\rkhs K)$
approximates $\polyorder$-growth; for $\varepsilon>0,$ let $v_{\varepsilon}$
be a function such that ${\cal T}_{P}^{m}v_{\varepsilon}$ is a $\polyorder$-growth
approximation function, and $r_{\varepsilon}$ be the corresponding
radius of the indicator set. Let $v\in{\cal C}^{1}(\bbR^{\datdim})$
be fixed. Assume that $v$ satisfies $\Verts{v(x)}_{2}\leq C_{0}\Verts x_{2}^{\polyorder-1}$
for $\Verts x_{2}\geq r_{0}$ and $C_{0}>0.$ For $\polyorder_{m}$
in Assumption \ref{assu:poly-grow-coef}, for $\Verts x_{2}\geq r_{0},$
assume further $\Verts{\nabla v(x)}_{\mathrm{F}}\leq C_{1}\Verts x_{2}^{\polyorder-1}$if
$\polyorder_{m}=0;$ and $\Verts{\nabla v(x)}_{\mathrm{F}}\leq C_{1}\Verts x_{2}^{\polyorder-2}$
if $\polyorder_{m}=1.$ Then, under Assumption \ref{assu:poly-grow-coef},
we have 
\[
\int_{\Verts x_{2}>r_{\varepsilon}\lor r_{0}\lor1}\verts{\mathcal{T}_{P}^{m}v(x)}\dd Q(x)\leq C\left(\Verts{v_{\varepsilon}}_{\rkhs K}\ksd{K_{m}}P(Q)+\varepsilon\right),
\]
where 
\[
C = 4\lambda_{b}C_{0}+\lambda_{m}(c_{m}+1)C_{1}\sqrt{\datdim}.
\]
\end{lem}

\begin{proof}
By assumption, we have for $\Verts x_{2}\geq r_{\varepsilon}\lor r_{0}$
\begin{align*}
\verts{\mathcal{T}_{P}^{m}v(x)} & \leq2\Verts{b(x)}_{2}\Verts{g(x)}_{2}+\sqrt{\datdim}\Verts{m(x)}_{\mathrm{op}}\Verts{\nabla g(x)}_{\mathrm{F}}\\
 & \leq2\lambda_{b}C_{0}\bigl(\Verts x_{2}^{\polyorder-1}+\Verts x_{2}^{\polyorder}\bigr)+\lambda_{m}C_{1}\sqrt{\datdim}\bigl(c_{m}\Verts x_{2}^{\polyorder-1-\polyorder_{m}}+\Verts x_{2}^{\polyorder}\bigr).
\end{align*}
Therefore, we have 
\begin{align*}
 & \int_{\Verts x_{2}>r_{\varepsilon}\lor r_{0}\lor1}\verts{\mathcal{T}_{P}^{m}v(x)}\dd Q(x)\\
 & \leq\int_{\Verts x_{2}>r_{\varepsilon}\lor r_{0}\lor1}2\lambda_{b}C_{0}\bigl(\Verts x_{2}^{\polyorder-1}+\Verts x_{2}^{\polyorder}\bigr)\dd Q(x)+\lambda_{m}C_{1}\sqrt{\datdim}\int_{\Verts x_{2}>r_{\varepsilon}\lor r_{0}\lor1}c_{m}\Verts x_{2}^{\polyorder-1-\polyorder_{m}}+\Verts x_{2}^{\polyorder}\dd Q(x)\\
 & <\left(4\lambda_{b}C_{0}+(c_{m}+1)\lambda_{m}C_{1}\sqrt{\datdim}\right)\int_{\Verts x_{2}>r_{\varepsilon}}\Verts x_{2}^{\polyorder}\dd Q(x)\\
 & \leq\left(4\lambda_{b}C_{0}+(c_{m}+1)\lambda_{m}C_{1}\sqrt{\datdim}\right)\left(\int\mathcal{T}_{P}^{m}v_{\varepsilon}(x)\dd Q+\varepsilon\right)\\
 & \leq\left(4\lambda_{b}C_{0}+(c_{m}+1)\lambda_{m}C_{1}\sqrt{\datdim}\right)\left(\Verts{v_{\varepsilon}}_{\rkhs K}\ksd{K_{m}}P(Q)+\varepsilon\right),
\end{align*}
where the second inequality follows from $\Verts x_{2}^{\polyorder'}<\Verts x_{2}^{\polyorder}$
for $\Verts x_{2}>1$ and $\polyorder'<\polyorder;$ the third inequality
is due to the $\polyorder$-growth approximation property of ${\cal T}_{P}^{m}v_{\varepsilon}.$ 
\end{proof}

\subsection{Constructive approximation of solutions to the Stein equation \label{subsec:Constructive-approximation-of-Stein}}
\begin{lem}[Constructive approximation via convolution]
    \label{lem:convolution-Stein-error} Let $\polyorder\geq1,$ and $\theta \in [0,1]$. 
    Let $\weight$ be the shorthand for $w_{\polyorder+\theta-2}(x)=\bigl(\tau^{2}+\Verts x_{2}^{2}\bigr)^{(\polyorder+\theta-2)/2}$
with $\tau>0.$ Let $\varphi:\bbR^{\datdim}\to[0,\infty)$ be a probability
density function satisfying $\int\Verts x_{2}^{\polyorder}\varphi(x)\dd x<\infty.$
Let $v$ be a function satisfying the growth Lemma \ref{lem:finite-stein-factors}.
For $\rho>0$ and $r\geq1,$ define 
\[
T_{\varphi,\rho}\tilde{v}_{r}^{\weight}(x)=\rho^{-\datdim}\int\tilde{v}_{r}^{\weight}(x-y)\varphi(y/\rho)\dd y,\ \text{and}\ \tilde{v}_{r}^{\weight}(x)=\frac{1_{r,1}(x)}{\weight(x)}v(x),
\]
where $1_{r,1}\in{\cal C}^{2}:\bbR^{\datdim}\to[0,1]$ defined in
Lemma \ref{lem:cutoff-ball-bound-C2} vanishing outside $\{\Verts x_{2}\leq r+1\}$
for $r>0.$ Then, for $\Verts x_{2}\leq r,$ 
\begin{align*}
\verts{\mathcal{T}_{P}^{m}v(x)-\mathcal{T}_{P}^{m}\bigl(wT_{\varphi,\rho}\tilde{v}_{r}^{\weight}\bigr)(x)}\leq C\rho(1\lor\rho)^{(\polyorder-1)}r^{\polyorder+\polyorder_{m}},
\end{align*}
where $C$ is a constant defined by 
\[
C=2^{\polyorder}\left\{ 2\tilde{u}_{P,\datdim,\weight}^{(1)}\lambda_{b}+\lambda_{m}\left(\sqrt{\datdim}\tilde{u}_{P,\datdim,\weight}^{(2)}+\tilde{u}_{P,\datdim,\weight}^{(1)}\frac{\polyorder-1}{2\tau}\right)\bigl(c_{m}+1\bigr)\right\} 
\]
using constants 
\[
\tilde{u}_{P,\datdim,\weight}^{(1)}=\sqrt{\datdim}\zeta_{1}\frac{\polyorder-1}{2\tau}\mu_{\polyorder}+\bigl(2^{-1}\zeta_{2}+3\sqrt{\datdim}\zeta_{1}\big)\mu_{1},
\]
 
\begin{align*}
\tilde{u}_{P,\datdim,\weight}^{(2)} & =\left[\frac{\polyorder-1}{2\tau}\left\{ \mu_{\polyorder}+(\polyorder-1)\left((4+\tau)\tau^{\polyorder-3}+ \frac{(\polyorder-1)^2}{\tau^{\polyorder}(1\land\tau)} \right)\mu_{1}\right\} \left(2^{-1}\zeta_{2}+3\zeta_{1}\sqrt{\datdim}\right)\right.\\
 & \hphantom{=}\quad+\left.\left(\mu_{1}+(\polyorder-1)\mu_{\polyorder}/(2\tau)\right)\left\{ 2^{-1}\zeta_{3}+3\zeta_{2}+\left(72+(35+13\sqrt{13})/12+6+3^{4}7^{-1}\right)\sqrt{\datdim}\zeta_{1}\right\} \right],
\end{align*}
 and 
\[
\mu_{\polyorder}=\int\Verts x_{2}\left\{ 1+2\left(1+\Verts{x/\tau}_{2}^{2}\right)\right\} ^{(\polyorder-1)/2}\varphi(x)\dd x.
\]
\end{lem}

\begin{proof}
By Lemma \ref{lem:multiquadratic-helper}, we have 
\begin{align*}
B_{\weight}(\rho z) & \coloneqq\sup_{x\in\bbR^{\datdim},u\in[0,1]}\frac{\weight(x)}{\weight(x-u\rho z)}\\
 & \leq\left(1+2\left(1+\Verts{\rho z/\tau}_{2}^{2}\right)\right)^{(\polyorder-1)/2}\\
 & \leq(1\lor\rho)^{\polyorder-1}\left\{ 1+2\left(1+\Verts{z/\tau}_{2}^{2}\right)\right\} ^{(\polyorder-1)/2}, 
\end{align*}
\[
M_{\weight}\coloneqq\sup_{x\in\bbR^{\datdim}}\Verts{\nabla\log\weight(x)}_{2}\leq\frac{\polyorder-1}{2\tau},
\]
\[
N_{\weight}\coloneqq\sup_{x\in\bbR^{\datdim}}\weight(x)^{-1}\norm{\nabla^{2}\log\weight(x)}_{2}=(\polyorder-1)(4+\tau)\tau^{-2},
\]
and 
\[
O_{\weight}\coloneqq\sup_{x\in\bbR^{\datdim}}\weight(x)^{-1}\norm{\nabla\log\weight(x)}_{2}^2\leq \frac{(\polyorder-1)^2}{\tau^{\polyorder}(1\land\tau)}.
\]
By Corollaries \ref{cor:convolution-diff-bound-stein}, \ref{cor:convolution-grad-diff-bound-stein}
and Lemma \ref{lem:cutoff-ball-bound-C2}, for any $x\in\bbR^{\datdim},$
we have  

\begin{align*}
\Verts{T_{\varphi,\rho}\tilde{v}^{\weight}(x)-\tilde{v}^{\weight}(x)}_{\mathrm{2}} & \leq\frac{\rho(1\lor\rho)^{(\polyorder-1)}\{1+(r+1){}^{\polyorder-1}\}}{\weight(x)}\tilde{u}_{P,\datdim,\weight}^{(1)},
\end{align*}
and 
\[
\Verts{\nabla T_{\varphi,\rho}\tilde{v}^{\weight}(x)-\nabla\tilde{v}^{\weight}(x)}_{\mathrm{op}}\leq\frac{\rho(1\lor\rho)^{(\polyorder-1)}\{1+(r+1)^{\polyorder-1}\}}{\weight(x)}\tilde{u}_{P,\datdim,\weight}^{(2)},
\]
where 
\[
\tilde{u}_{P,\datdim,\weight}^{(1)}=\sqrt{\datdim}\zeta_{1}\frac{\polyorder-1}{2\tau}\mu_{\polyorder}+\bigl(2^{-1}\zeta_{2}+3\sqrt{\datdim}\zeta_{1}\big)\mu_{1},
\]
 
\begin{align*}
 &\tilde{u}_{P,\datdim,\weight}^{(2)}\\
 & =\left[\frac{\polyorder-1}{2\tau}\left\{ \mu_{\polyorder}+(\polyorder-1)\left((4+\tau)\tau^{\polyorder-3}+ \frac{(\polyorder-1)^2}{\tau^{\polyorder}(1\land\tau)} \right)\mu_{1}\right\} \left(2^{-1}\zeta_{2}+3\zeta_{1}\sqrt{\datdim}\right)\right.\\
 & \hphantom{=}\quad+\left.\left(\mu_{1}+(\polyorder-1)\mu_{\polyorder}/(2\tau)\right)\left\{ 2^{-1}\zeta_{3}+3\zeta_{2}+\left(72+(35+13\sqrt{13})/12+6+3^{4}7^{-1}\right)\sqrt{\datdim}\zeta_{1}\right\} \right],
\end{align*}
and 
\[
\mu_{\polyorder}=\int\Verts x_{2}\left\{ 1+2\left(1+\Verts{x/\tau}_{2}^{2}\right)\right\} ^{(\polyorder-1)/2}\varphi(x)\dd x.
\]
For $\Verts x_{2}\leq r,$ with $v^{\weight}(x)=v(x)/\weight(x),$
\begin{align*}
 & \verts{\mathcal{T}_{P}^{m}\bigl(wv^{\weight}\bigr)(x)-\mathcal{T}_{P}^{m}\bigl(wT_{\varphi,\rho}\tilde{v}^{\weight}\bigr)(x)}\\
 & \leq w(x)\left\{ \bigl|\mathcal{T}_{P}^{m}\bigl(v^{\weight}(x)-T_{\varphi,\rho}\tilde{v}^{\weight}(x)\bigr)\bigr|+\frac{\polyorder-1}{2\tau}\Verts{m(x)}_{\mathrm{op}}\Verts{T_{\varphi,\rho}\tilde{v}^{\weight}(x)-\tilde{v}^{\weight}(x)}_{2}\right\} \\
 & \leq2\rho(1\lor\rho)^{(\polyorder-1)}(r+1)^{\polyorder-1}\left\{ 2\lambda_{b}\tilde{u}_{P,\datdim,\weight}^{(1)}\bigl(1+r\bigr)+\lambda_{m}\left(\datdim\tilde{u}_{P,\datdim,\weight}^{(2)}+\frac{\polyorder-1}{2\tau}\tilde{u}_{P,\datdim,\weight}^{(1)}\right)\bigl(c_{m}+r{}^{\polyorder_{m}+1}\bigr)\right\} .
\end{align*}
Thus, if $\Verts x_{2}\leq r$ and $r\geq1,$ 
\begin{align*}
 & \verts{\mathcal{T}_{P}^{m}\bigl(wv^{\weight}\bigr)(x)-\mathcal{T}_{P}^{m}\bigl(wT_{\varphi,\rho}\tilde{v}^{\weight}\bigr)(x)}\\
 & \leq\rho(1\lor\rho)^{(\polyorder-1)}2^{\polyorder}\left\{ 2\tilde{u}_{P,\datdim,\weight}^{(1)}\lambda_{b}+\lambda_{m}\left(\datdim\tilde{u}_{P,\datdim,\weight}^{(2)}+\tilde{u}_{P,\datdim,\weight}^{(1)}\frac{\polyorder-1}{2\tau}\right)\bigl(c_{m}+1\bigr)\right\} r^{\polyorder+\polyorder_{m}}.
\end{align*}
 
\end{proof}
\begin{lem}[Growth of convolution approximation]
\label{lem:Stein-convolution-decay-linear}
Let $\polyorder\geq1$ and $\theta \in [0,1].$ 
Let $w$ be the short hand for $\weight_{\polyorder+\theta-1}(x)=\bigl(\tau^{2}+\Verts x_{2}^{2}\bigr)^{(\polyorder+\theta-2)/2}.$
Let $\varphi$ be a probability density.  For
$\rho>0$ and $r\geq1,$ define 
\[
T_{\varphi,\rho}\tilde{v}_{f}^{\weight}(x)=\rho^{-\datdim}\int\tilde{v}_{f}^{\weight}(x-y)\varphi(y/\rho)\dd y,\ \text{where}\ \tilde{v}_{f}^{\weight}(x)=\frac{1_{r,1}(x)}{\weight(x)}v_{f}(x)
\]
and $1_{r,1}\in{\cal C}^{2}$ is a differentiable indicator defined
in Lemma \ref{lem:cutoff-ball-bound-C2}, $v_{f}$ is in the solution
to the Stein equation $\mathcal{T}_{P}^{m}v=f-\EE_{P}[f]$ satisfying
the conditions in Lemma \ref{lem:finite-stein-factors}, and $v_{f}^{w}(x)=v_{f}(x)/w(x).$
Then, for $\Verts x_{2}\geq r,$ 
\[
\Verts{\weight(x)T_{\varphi,\rho}\tilde{v}_{f}^{\weight}(x)}_{2}\leq A_{0}\Verts x_{2}^{\polyorder-1}\ \text{and}\ \left\Vert \nabla\Bigl(\weight(x)T_{\varphi,\rho}\tilde{v}_{f}^{\weight}(x)\Bigr)\right\Vert _{\mathrm{F}}\leq A_{1}\Verts x_{2}^{\polyorder-1},
\]
where 
\[
    A_{0}=2^{2-\theta}\zeta_{1}\sqrt{\datdim}\bigl(\tau^{2}+1)^{(\polyorder-\theta)/2},
\]
and 
\[
    A_{1}=2^{2-\theta}\left\{ \left(\frac{\polyorder-1}{\tau}+3\right)\zeta_{1}\sqrt{\datdim}+2^{-1}\zeta_{2}\right\} (\tau^{2}+1)^{(\polyorder-\theta)/2}.
\]
\end{lem}

\begin{proof}
For $\Verts x_{2}\geq r,$ we have 
\begin{align*}
\Verts{w(x)T_{\varphi,\rho}\tilde{v}_{f}^{\weight}(x)}_{2}
& \leq\weight(x)\sup_{x\in\bbR^{\datdim}}\frac{\Verts{v_{f}(x)}_{2}}{\weight(x)}1_{r,1}(x)\\
& \leq\weight(x) \cdot 2^{2-\theta}\zeta_{1}\sqrt{\datdim} (\tau^2+1)^{(1-\theta)/2}r^{1-\theta}\\
& \leq 2^{2-\theta}\zeta_{1}\sqrt{\datdim}\bigl(\tau^{2}+1)^{(\polyorder-\theta)/2} \Verts x_{2}^{\polyorder-1}.
\end{align*}
We also have 
\begin{align*}
\left\Vert \nabla\tilde{v}_{f}^{\weight}(x)\right\Vert _{\mathrm{F}} & \leq\frac{\Verts{\nabla\log\weight(x)}_{2}\Verts{v_{f}(x)}_{2}1_{r,1}(x)}{\weight(x)}+\frac{1_{r,1}(x)}{w(x)}\Verts{\nabla v_{f}(x)}_{\mathrm{F}}+\frac{\Verts{\nabla1_{r,1}(x)}_{2}\Verts{v_{f}(x)}_{2}}{w(x)}\\
 & \leq\weight(x)^{-1}\left\{ \left(\frac{\polyorder-1}{2\tau}+3\right)\zeta_{1}\sqrt{\datdim}+2^{-1}\zeta_{2}\right\} \bigl(1+\Verts x_{2}^{\polyorder-1}\bigr) 1\{\Verts{x}_2\leq r+1\}\\
 & \leq 2^{2-\theta}\left\{ \left(\frac{\polyorder-1}{2\tau}+3\right)\zeta_{1}\sqrt{\datdim}+2^{-1}\zeta_{2}\right\} (\tau^2+1)^{(1-\theta)/2} r^{1-\theta}
\end{align*}

Since each element of $\nabla\tilde{v}_{f}^{\weight}$ is continuous
and bounded, we can differentiate under the integral sign \citep[Corollary A.5]{Dudley_1999};
i.e., 
\[
\nabla T_{\varphi,\rho}\tilde{v}_{f}^{\weight}(x)=T_{\varphi,\rho}\nabla\tilde{v}_{f}^{\weight}(x).
\]
Thus, 
\begin{align*}
&\left\Vert \nabla\Bigl(\weight(x)T_{\varphi,\rho}\tilde{v}_{f}^{\weight}(x)\Bigr)\right\Vert _{\mathrm{F}} \\
 & \leq\weight(x)\left(\left\Vert \nabla\log\weight(x)\otimes T_{\varphi,\rho}\tilde{v}_{f}^{\weight}(x)\right\Vert _{\mathrm{F}}+\left\Vert \nabla T_{\varphi,\rho}\tilde{v}_{f}^{\weight}(x)\right\Vert _{\mathrm{F}}\right)\\
 & \leq2^{2-\theta}\weight(x)\left\{\left(\frac{\polyorder-1}{\tau}+3\right)\zeta_{1}\sqrt{\datdim} + 2^{-1}\zeta_{2}\right\}(\tau^2+1)^{(1-\theta)/2}r^{1-\theta}\\
 & \leq2^{2-\theta}\left\{ \sqrt{\datdim}\left(\frac{\polyorder-1}{\tau}+3\right)\zeta_{1}+2^{-1}\zeta_{2}\right\} (\tau^{2}+1)^{(\polyorder-\theta)/2}\Verts x_{2}^{\polyorder-1}.
\end{align*}
\end{proof}
\begin{lem}[Slower gradient growth of convolution approximation]
\label{lem:Stein-convolution-decay-quad}
Let $\polyorder\geq1$ and $\theta\in[0,1]$.
Let $\weight$ be the shorthand for $\weight_{\polyorder +\theta-2}(x)=\bigl(\tau^{2}+\Verts x_{2}^{2}\bigr)^{(\polyorder+\theta-2)/2}.$
Let $\varphi_{\polyorder}$
defined as in Lemma \ref{lem:Bochner-Riesz-density}. For $\rho>0$
and $r\geq1,$ define 
\[
T_{\varphi_{\polyorder},\rho}\tilde{v}_{f}^{\weight}(x)=\rho^{-\datdim}\int\tilde{v}_{f}^{\weight}(x-y)\varphi_{\polyorder}(y/\rho)\dd y,\ \text{where}\ \tilde{v}_{f}^{\weight}(x)=\frac{v_{f}(x)}{\weight(x)}1_{r,1}(x)
\]
and $1_{r,1}\in{\cal C}^{2}$ is a differentiable indicator defined
in Lemma \ref{lem:cutoff-ball-bound-C2}, $v_{f}$ is in the solution
to the Stein equation $\mathcal{T}_{P}^{m}v=f-\EE_{P}[f]$ satisfying
the conditions in Lemma \ref{lem:finite-stein-factors}, and $v_{f}^{w}(x)=v_{f}(x)/w(x)$.
Then, for $\Verts x_{2}>2(r+1),$ 
\[
\left\Vert \nabla\Bigl(\weight(x)T_{\varphi_{\polyorder},\rho}\tilde{v}_{f}^{\weight}(x)\Bigr)\right\Vert _{\mathrm{F}}\leq C_{P,\datdim,\weight}\Verts x_{2}^{\polyorder-2}
\]
 where $C_{P,\datdim,\weight}$ is a constant given by 
\[
    \frac{2^{1+\theta}\Gamma(\polyorder/2+1)^{2}}{\mathrm{B}(\datdim/2,\polyorder+1)}\left\{ \left(\frac{\polyorder-1}{\tau}+3\right)\frac{\zeta_{1}}{\sqrt{\datdim}}+\frac{\zeta_{2}}{2\datdim}\right\} (\tau^{1-\polyorder}+1)(\tau^{1-\theta}+1)\left(1+\tau^{2}\right)^{\frac{\polyorder+\theta-2}{2}}.
\]
$\Gamma$ is the Gamma function, and $\mathrm{B}$ is the beta function.
\end{lem}

\begin{proof}
We collect some preliminary evaluations required for the required estimate. 
First, by the upper bound on the Bessel function \citep[\href{https://dlmf.nist.gov/10.14.1}{Eq. 10.14.1}]{DLMF}, we have 
\begin{align*}
\hat{\Phi}_{\polyorder}(x)^{2} & =2^{\polyorder}\Gamma(\polyorder/2+1)^{2}\Verts x_{2}^{-(\datdim+\polyorder)}J_{(\datdim+\polyorder)/2}\bigl(\Verts x_{2}\bigr)^{2}\\
& \leq 2^{\polyorder-1}\Gamma(\polyorder/2+1)^{2}\Verts x_{2}^{-(\datdim+\polyorder)}.
\end{align*}
Thus, for $\Verts x_{2}>r+1,$ we have 
\begin{align*}
    \varphi_{\polyorder}(x) & \leq \frac{2^{\polyorder-1}\pi^{-\datdim/2}\Gamma(\datdim/2)\Gamma(\polyorder/2+1)^{2}}{\mathrm{B}(\datdim/2, \polyorder+1)} \Verts x_{2}^{-(\datdim+\polyorder)}\\
    & \leq \underbrace{ \frac{2^{\polyorder-1}\pi^{-\datdim/2}\Gamma(\datdim/2)\Gamma(\polyorder/2+1)^{2}}{\mathrm{B}(\datdim/2, \polyorder+1)} (r+1)^{-(\datdim+\polyorder-\theta)}}_{\eqqcolon C} \Verts x_{2}^{-\theta}.
\end{align*}
Since we have 
\begin{align*}
    \Verts{\tilde{v}_{f}^{\weight}(x)}_{2} & \leq \sqrt{\datdim}\zeta_{1}(\tau^{-(\polyorder-1)}+1\bigr)1\{\Verts x_{2}\leq r+1\} \weight_{1-\theta}(x), 
\end{align*}
by Lemma \ref{lem:convolution-decay-rate}, we obtain, for $\Verts x_{2}\geq2(r+1),$
\begin{align*}
    \norm{T_{\varphi_{\polyorder},\rho}\tilde{v}_{f}^{\weight}(x)}_{2} & \leq2^{\theta}C\int\Verts{\tilde{v}_{f}^{\weight}(x)}_{2}\dd x\cdot\Verts x_{2}^{-\theta}\\
    & \leq2^{\theta+1}C\zeta_{1}\datdim^{-1/2}(\tau^{1-\polyorder}+1)(\tau^{1-\theta}+1)\frac{\pi^{\datdim/2}}{\Gamma(\datdim/2)}(r+1)^{\datdim+1-\theta}\Verts x_{2}^{-\theta}.
\end{align*}
Also, 
\begin{align*}
\left\Vert \nabla\tilde{v}_{f}^{\weight}(x)\right\Vert _{\mathrm{F}} & \leq\frac{\Verts{\nabla\log\weight(x)}_{2}\Verts{v_{f}(x)}_{2}1_{r,1}(x)}{\weight(x)}+\frac{1_{r,1}}{w(x)}\Verts{\nabla v_{f}(x)}_{\mathrm{F}}+\frac{\Verts{\nabla1_{r,1}(x)}_{2}\Verts{v_{f}(x)}_{2}}{w(x)}\\
 & \leq\weight(x)^{-1}\left\{ \left(\frac{\polyorder-1}{2\tau}+3\right)\zeta_{1}\sqrt{\datdim} + 2^{-1}\zeta_{2}\right\} (1+\Verts x_{2}^{\polyorder-1}\bigr)1\{\Verts x_{2}\leq r+1\}\\
 & \leq \weight_{1-\theta}(x)\left\{\left(\frac{\polyorder-1}{2\tau}+3\right)\zeta_{1}\sqrt{\datdim}+2^{-1}\zeta_{2}\right\} (\tau^{1-\polyorder}+1\bigr)1\{\Verts x_{2}\leq r+1\}. 
\end{align*}
Since each element of $\nabla\tilde{v}_{f}^{\weight}$ is continuous
and bounded, we can differentiate under the integral sign \citep[Corollary A.5]{Dudley_1999};
i.e., $\nabla T_{\varphi,\rho}\tilde{v}_{f}^{\weight}(x)=T_{\varphi,\rho}\nabla\tilde{v}_{f}^{\weight}(x).$
Thus, Lemma \ref{lem:convolution-decay-rate} implies 
\begin{align*}
 & \left\Vert \nabla T_{\varphi_{\polyorder},\rho}\tilde{v}_{f}^{\weight}(x)\right\Vert _{\mathrm{F}}\\
 & \leq 2^{\theta}C\int\Verts{\nabla\tilde{v}_{f}^{\weight}(x)}_{\mathrm{F}}\dd x\cdot\Verts x_{2}^{-\theta}\\
 & \leq 2^{\theta+1}C\left\{ \sqrt{\datdim}\left(\frac{\polyorder-1}{2\tau}+3\right)\zeta_{1}+\frac{\zeta_{2}}{2}\right\} \frac{ \bigl(\tau^{1-\polyorder}+1\bigr)\bigl(\tau^{1-\theta}+1\bigr)\pi^{\datdim/2}}{\datdim \Gamma(\datdim/2)}(r+1)^{\datdim+1-\theta}\Verts x_{2}^{-\theta}.
\end{align*}

Using the above estimates, the quantity of interest is therefore evaluated as 
\begin{align*}
\left\Vert \nabla\Bigl(\weight(x)T_{\varphi_{\polyorder},\rho}\tilde{v}_{f}^{\weight}(x)\Bigr)\right\Vert _{\mathrm{F}} & =\weight(x)\left(\left\Vert \nabla\log\weight(x)\otimes T_{\varphi_{\polyorder},\rho}\tilde{v}_{f}^{\weight}(x)\right\Vert _{\mathrm{F}}+\left\Vert \nabla T_{\varphi_{\polyorder},\rho}\tilde{v}_{f}^{\weight}(x)\right\Vert _{\mathrm{F}}\right)\\
 & \leq\weight(x)\left(\frac{\polyorder-1}{2\tau}\left\Vert T_{\varphi_{\polyorder},\rho}\tilde{v}_{f}^{\weight}(x)\right\Vert _{2}+\left\Vert \nabla T_{\varphi_{\polyorder},\rho}\tilde{v}_{f}^{\weight}(x)\right\Vert _{\mathrm{F}}\right)\\
 & \leq C_{P,\datdim,\weight}\Verts x_{2}^{\polyorder-2}
\end{align*}
for $\Verts x_{2}>2(r+1),$ where $C_{P,\datdim,\weight}$ is a constant given by 
\[
    \frac{2^{1+\theta}\Gamma(\polyorder/2+1)^{2}}{\mathrm{B}(\datdim/2,\polyorder+1)}\left\{ \left(\frac{\polyorder-1}{\tau}+3\right)\frac{\zeta_{1}}{\sqrt{\datdim}}+\frac{\zeta_{2}}{2\datdim}\right\} (\tau^{1-\polyorder}+1)(\tau^{1-\theta}+1)\left(1+\tau^{2}\right)^{\frac{\polyorder+\theta-2}{2}}.
\]
\end{proof}

\subsection{\pcref{thm:ksd-bound-df-abstract-informal}\label{subsec:Proof-of-Theorem-KSD-bound-IPM}}
\label{subsec:proof-plip-KSD-bound-ti}
This section elaborates on the informal statement made in Theorem \ref{thm:ksd-bound-df-abstract-informal}
in the main text; our proof proceeds as outlined in its proof sketch. 

\subsubsection{Expectation difference bound for a single function }
\begin{lem}[Expectation difference bound for a single function] 
\label{thm:plip-ksd-bound-basic} 
Let $\polyorder\geq1$, $\theta \in [0,1]$, 
and $\weight(x)=\bigl(\tau^{2}+\Verts x_{2}^{2}\bigr)^{(\polyorder+\theta-2)/2}.$
Let $K:\bbR^{\datdim}\times\bbR^{\datdim}\to\bbR^{\datdim\times\datdim}$
be a matrix-valued kernel. Let $\varphi_{\polyorder}$ be a probability
density function defined in Lemma \ref{lem:Bochner-Riesz-density}.
Suppose $\mathcal{T}_{P}^{m} (\rkhs K)$ approximates $\polyorder$-growth;
for $\varepsilon>0,$ let $v_{\varepsilon}$ be a function such that
${\cal T}_{P}^{m}v_{\varepsilon}$ is a $\polyorder$-growth approximation
function, and $r_{\varepsilon}\geq1$ be the corresponding radius of the
indicator. Then, under Assumptions \ref{assu:poly-grow-coef},
\ref{assu:dissipativity-diffusion}, \ref{assu:diff-wasserstein-rate},
for any $f\in{\cal F}_{\polyorder},$ $\rho>0,$ and $\varepsilon>0,$
we have 
\begin{align*}
 & \left|\int f\dd P-\int f\dd Q\right|\\
 & \leq C\left(\Verts{v_{\varepsilon}}_{\rkhs K}\ksd{K_{m}}P(Q)+\varepsilon\right)+C_{P,\datdim,\weight}^{(3)}\rho(1\lor\rho)^{(\polyorder-1)}\cdot r_{\varepsilon}^{\polyorder+\polyorder_{m}}+\norm{\weight T_{\varphi_{\polyorder},\rho}\tilde{v}_{f}^{\weight}}_{\rkhs K}\ksd{K_{m}}P(Q)
\end{align*}
where 
\[
C=\begin{cases}
C_{P,\datdim}^{(1)}+C_{P,\datdim}^{(2)} & \text{if}\ \polyorder_{m}=0,\\
C_{P,\datdim}^{(1)}+\dot{C}_{P,\datdim}^{(2)}r_{\varepsilon}+\ddot{C}_{P,\datdim}^{(2)} & \text{if}\ \polyorder_{m}=1,
\end{cases}
\]
with $C_{P,\datdim}^{(1)},$ $C_{P,\datdim}^{(2)},$ $C_{P,\datdim,\weight}^{(3)},$
$\dot{C}_{P,\datdim}^{(2)},$ and $\ddot{C}_{P,\datdim}^{(2)}$ positive
constants independent of $f$ (given in the proof); and 
\[
T_{\varphi_{\polyorder},\rho}\tilde{v}_{f}^{\weight}(x)=\rho^{-\datdim}\int\tilde{v}_{f}^{\weight}(x-y)\varphi_{\polyorder}(y/\rho)\dd y,\ \text{and}\ \tilde{v}_{f}^{\weight}(x)=\frac{1_{r_{\varepsilon},1}(x)}{\weight(x)}v_{f}(x),
\]
with $1_{r_{\varepsilon},1}\in{\cal C}^{2}:\bbR^{\datdim}\to[0,1]$
defined in Lemma \ref{lem:cutoff-ball-bound-C2} vanishing outside
$\{\Verts x_{2}\leq r_{\varepsilon}+1\}.$ 
\end{lem}

\begin{proof}
For $f\in{\cal F}_{\polyorder},$ let $v_{f}$ be the solution to
the Stein equation $\mathcal{T}_{P}^{m}v_{f}=f-\EE_{P}[f]$ as in
Lemma \ref{lem:finite-stein-factors}. For any function $v\in\rkhs K$ with $ (\mathcal{T}_P^mv)_{+}\in L^1(Q)$, 
we have
\begin{align}
\left|\int\mathcal{T}_{P}^{m}v_{f}\dd Q\right| & \leq\left|\int\bigl(\mathcal{T}_{P}^{m}v_{f}-\mathcal{T}_{P}^{m}v\bigr)\dd Q\right|+\left|\int\mathcal{T}_{P}^{m}v\dd Q\right|\nonumber \\
 & \leq\left|\int\bigl(\mathcal{T}_{P}^{m}v_{f}-\mathcal{T}_{P}^{m}v\bigr)\dd Q\right|+\Verts v_{\rkhs K}\ksd{K_{m}}P(Q).\label{eq:integral-eval-solution-upperbound}
\end{align}
We split the domain of the integral (the first term) and evaluate as follows: 

\begin{align}
\left|\int\bigl(\mathcal{T}_{P}^{m}v_{f}-\mathcal{T}_{P}^{m}v\bigr)\dd Q\right| & =\left|\int_{\Verts x_{2}>r_{\varepsilon}}+\int_{\Verts x_{2}\leq r_{\varepsilon}}\bigl(\mathcal{T}_{P}^{m}v_{f}(x)-\mathcal{T}_{P}^{m}v(x)\bigr)\dd Q(x)\right|\nonumber \\
 & \begin{aligned}\leq & \int_{\Verts x_{2}>r_{\varepsilon}}\verts{\mathcal{T}_{P}^{m}v_{f}(x)}\dd Q(x)+\int_{\Verts x_{2}>r_{\varepsilon}}\verts{\mathcal{T}_{P}^{m}v(x)}\dd Q(x)\\
 & \quad+\int_{\Verts x_{2}\leq r_{\varepsilon}}\verts{\mathcal{T}_{P}^{m}v_{f}(x)-\mathcal{T}_{P}^{m}v(x)}\dd Q(x).
\end{aligned}
\label{eq:Stein-bound-split-integral}
\end{align}

We first deal with the first term of the upper bound \eqref{eq:Stein-bound-split-integral}.
By Lemma \ref{lem:integral-eval-stein-solution}, we have 
\begin{align*}
\int_{\Verts x_{2}>r_{\varepsilon}}\verts{\mathcal{T}_{P}^{m}v_{f}(x)}\dd Q(x)\leq C_{P,\datdim}^{(1)}\left(\Verts{v_{\varepsilon}}_{\rkhs K}\ksd{K_{m}}P(Q)+\varepsilon\right),
\end{align*}
where $C_{P,\datdim}^{(1)}=2(1+\lambda_{b}\zeta_{1}\sqrt{\datdim}+\lambda_{m}\zeta_{2}c_m\datdim).$ 

We next evaluate the third term of (\ref{eq:Stein-bound-split-integral})
by specifying $v$ to be an approximation of $v_{f}.$ For $\rho>0,$
define 
\[
T_{\varphi_{\polyorder},\rho}\tilde{v}_{f}^{\weight}(x)=\rho^{-\datdim}\int\tilde{v}_{f}^{\weight}(x-y)\varphi_{\polyorder}(y/\rho)\dd y,\ \text{where}\ \tilde{v}_{f}^{\weight}(x)=\frac{1_{r_{\varepsilon},1}(x)}{\weight(x)}v_{f}(x)
\]
and $1_{r_{\varepsilon},1}\in{\cal C}^{2}:\bbR^{\datdim}\to[0,1]$
is a differentiable indicator defined in Lemma \ref{lem:cutoff-ball-bound-C2}
satisfying $1_{r_{\varepsilon},1}(x)=1$ if $\Verts x_{2}\leq r_{\varepsilon}$
and $1_{r_{\varepsilon},1}(x)=0$ if $\Verts x_{2}\ge r_{\varepsilon}+1.$
Then, by Lemma \ref{lem:convolution-Stein-error}, the function $\weight T_{\varphi_{\polyorder},\rho}\tilde{v}_{f}^{\weight}$
satisfies, for $\Verts x_{2}\leq r_{\varepsilon},$ 
\begin{align*}
\verts{\mathcal{T}_{P}^{m}v_{f}(x)-\mathcal{T}_{P}^{m}\bigl(\weight T_{\varphi_{\polyorder},\rho}\tilde{v}_{f}^{\weight}\bigr)(x)}\leq C_{P,\datdim,\weight}^{(3)}\rho(1\lor\rho)^{(\polyorder-1)}\cdot r_{\varepsilon}^{\polyorder+\polyorder_{m}},
\end{align*}
where 
\[
C_{P,\datdim,\weight}^{(3)}=2^{\polyorder}\left\{ 2\tilde{u}_{P,\datdim,\weight}^{(1)}\lambda_{b}+\lambda_{m}\left(\datdim\tilde{u}_{P,\datdim,\weight}^{(2)}+\frac{\polyorder-1}{2\tau}\tilde{u}_{P,\datdim,\weight}^{(1)}\right)\bigl(c_{m}+1\bigr)\right\} .
\]

Finally, we address the second term of (\ref{eq:Stein-bound-split-integral}).
We separately consider the two cases of the growth of $\Verts{m(x)}_{\mathrm{op}}:$
$\polyorder_{m}=0$ and $\polyorder_{m}=1.$ For the linear growth
case $\polyorder_{m}=0$, note that by \cref{lem:Stein-convolution-decay-linear},
there exist positive constants $A_{0}$ and $A_{1}$ such that for
$\Verts x_{2}\geq r_{\varepsilon}$, 
\[
\norm{w(x)T_{\varphi_{\polyorder},\rho}\tilde{v}_{f}^{\weight}(x)}_{2}\leq A_{0}\Verts x_{2}^{\polyorder-1}\ \text{and}\ \left\Vert \nabla\bigl(w(x)T_{\varphi_{\polyorder},\rho}\tilde{v}_{f}^{\weight}(x)\bigr)\right\Vert _{\mathrm{F}}\leq A_{1}\Verts x_{2}^{\polyorder-1}.
\]
Thus, by Lemma \ref{lem:integral-eval-rkhs}, $v=wT_{\varphi_{\polyorder},\rho}\tilde{v}_{f}^{\weight}$
satisfies 
\begin{align*}
\int_{\Verts x_{2}>r_{\varepsilon}}\verts{\mathcal{T}_{P}^{m}v(x)}\dd Q(x)\leq C_{P,\datdim}^{(2)}\left(\Verts{v_{\varepsilon}}_{\rkhs K}\ksd{K_{m}}P(Q)+\varepsilon\right),
\end{align*}
where $C_{P,\datdim}^{(2)}=2\lambda_{b}A_{0}+(c_{m}+1)\lambda_{m}A_{1}\sqrt{\datdim}.$

In the case of the quadratic growth $\polyorder_{m}=1,$ by 
\cref{lem:Stein-convolution-decay-quad}, we have 
\[
\left\Vert \nabla\Bigl(w(x)T_{\varphi_{\polyorder},\rho}\tilde{v}_{f}^{\weight}(x)\Bigr)\right\Vert _{\mathrm{F}}\leq C_{P,\datdim,\weight}\Verts x_{2}^{\polyorder-2}
\]
for some constant $C_{P,\datdim,\weight}>0$ and $\Verts x_{2}>2(r_{\varepsilon}+1).$
We decompose the integral as follows: 
\begin{align*}
 & \int_{\Verts x_{2}>r_{\varepsilon}}\verts{\mathcal{T}_{P}^{m}v(x)}\dd Q(x)\\
 & =\int_{r_{\varepsilon}<\Verts x_{2}\leq2(r_{\varepsilon}+1)}\verts{\mathcal{T}_{P}^{m}v(x)}\dd Q(x)+\int_{\Verts x_{2}>2(r_{\varepsilon}+1)}\verts{\mathcal{T}_{P}^{m}v(x)}\dd Q(x).
\end{align*}
 Using  \cref{lem:Stein-convolution-decay-linear}, the first
integral on the RHS is evaluated as 
\begin{align*}
 & \int_{r_{\varepsilon}<\Verts x_{2}\leq2(r_{\varepsilon}+1)}\verts{\mathcal{T}_{P}^{m}v(x)}\dd Q(x)\\
 & \leq\int_{r_{\varepsilon}<\Verts x_{2}\leq2(r_{\varepsilon}+1)}\left(2\Verts{b(x)}_{2}\Verts{v(x)}_{2}+\sqrt{\datdim}\Verts{m(x)}_{\mathrm{op}}\Verts{\nabla v(x)}_{\mathrm{F}}\right)\dd Q(x)\\
 & \leq\int_{r_{\varepsilon}<\Verts x_{2}\leq2(r_{\varepsilon}+1)}\left(2\lambda_{b}A_{0}\bigl(1+\Verts x_{2}\bigr)\Verts x_{2}^{\polyorder-1}+\lambda_{m}A_{1}\sqrt{\datdim}\bigl(c_{m}+\Verts x_{2}^{2}\bigr)\Verts x_{2}^{\polyorder-1}\right)\dd Q(x)\\
 & \leq\left(4\lambda_{b}A_{0}+2\lambda_{m}A_{1}\sqrt{\datdim}(1+c_{m})(2r_{\varepsilon}+1)\right)\int_{\Verts x_{2}>r_{\varepsilon}}\Verts x_{2}^{\polyorder}\dd Q(x)\\
 & \leq\dot{C}_{P,\datdim}^{(2)}r_{\varepsilon}\left(\Verts{v_{\varepsilon}}_{\rkhs K}\ksd{K_{m}}P(Q)+\varepsilon\right),
\end{align*}
where the transition from the third line to the fourth follows from
$\Verts x_{2}\leq2(r_{\varepsilon}+1)$ and $\Verts x_{2}>r_{\varepsilon}\geq1,$
and $\dot{C}_{P,\datdim}^{(2)}=6\{2\lambda_{b}A_{0}+\lambda_{m}A_{1}\sqrt{\datdim}(1+c_{m})\}.$
The second integral is evaluated using Lemma \ref{lem:integral-eval-rkhs}
as 
\[
\int_{\Verts x_{2}>2(r_{\varepsilon}+1)}\verts{\mathcal{T}_{P}^{m}v(x)}\dd Q(x)\leq\ddot{C}_{P,\datdim}^{(2)}\left(\Verts{v_{\varepsilon}}_{\rkhs K}\ksd{K_{m}}P(Q)+\varepsilon\right),
\]
 where $\ddot{C}_{P,\datdim}^{(2)}=2\lambda_{b}A_{0}+(c_{m}+1)\lambda_{m}C_{P,\datdim,\weight}\sqrt{\datdim}.$

Combining these estimates, we obtain
\begin{align*}
 & \left|\int\mathcal{T}_{P}^{m}v_{f}-\mathcal{T}_{P}^{m}\Bigl(wT_{\varphi_{\polyorder},\rho}\tilde{v}_{f}^{\weight}\Bigr)\dd Q\right|\\
 & \leq\begin{cases}
\bigl(C_{P,\datdim}^{(1)}+C_{P,\datdim}^{(2)}\bigr)\left(\Verts{v_{\varepsilon}}_{\rkhs K}\ksd{K_{m}}P(Q)+\varepsilon\right)+C_{P,\datdim,\weight}^{(3)}\rho(1\lor\rho)^{(\polyorder-1)}r_{\varepsilon}^{\polyorder} & \text{if}\ \polyorder_{m}=0,\\
\left(C_{P,\datdim}^{(1)}+\dot{C}_{P,\datdim}^{(2)}r_{\varepsilon}+\ddot{C}_{P,\datdim}^{(2)}\right)\left(\Verts{v_{\varepsilon}}_{\rkhs K}\ksd{K_{m}}P(Q)+\varepsilon\right)+C_{P,\datdim,\weight}^{(3)}\rho(1\lor\rho)^{(\polyorder-1)}\cdot r_{\varepsilon}^{\polyorder+1} & \text{if}\ \polyorder_{m}=1.
\end{cases}
\end{align*}
\end{proof}

\subsubsection{Evaluating the RKHS norm of convolution approximations }

We denote by $L^{1}$ and $L^{2}$ by the respective Lebesgue spaces
of absolutely integrable and square integrable functions; i.e., $L^{i}=L^{i}(\bbR,\lambda)$
for $i\in\{1,2\}$ with $\lambda$ the $\datdim$-dimensional Lebesgue
measure. 
Recall that by $\hat{f}$, we denote the Fourier transform of $f\in L^1 \cup L^2$. 

\begin{lem}[Modified Bochner-Riesz multiplier kernel]
\label{lem:Bochner-Riesz-density}Let 
\[
\varphi_{\polyorder}(x)=\frac{\hat{\Phi}_{\polyorder}(x)^{2}}{\Verts{\hat{\Phi}_{\polyorder}}_{L^{2}}^{2}}
\]
where 
\[
\hat{\Phi}_{\polyorder}(x)=2^{\polyorder/2}\Gamma(\polyorder/2+1)\Verts x_{2}^{-(\datdim+\polyorder)/2}J_{(\datdim+\polyorder)/2}\bigl(\Verts x_{2}\bigr),
\]
and $J_{(\datdim+\polyorder)/2}$ denotes the Bessel function of the first kind with order $(\datdim+\polyorder)/2$. 
Then, 
\[
\hat{\varphi}_{\polyorder}(\omega)\leq\bigl(2\pi\bigr)^{-\datdim/2}\frac{\mathrm{B}(\datdim/2,\polyorder/2+1)}{\mathrm{B}(\datdim/2,\polyorder+1)}1\{\Verts{\omega}_{2}\leq2\},
\]
where $\mathrm{B}$ is the beta function. 
\end{lem}

\begin{proof}
By Theorem \ref{thm:Bochner-Riesz-mean-func}, the function $\hat{\Phi}_{\polyorder}$
is the Fourier transform of 
\[
\Phi_{\polyorder}(t)=\begin{cases}
\bigl(1-\Verts t_{2}^{2}\bigr)^{\polyorder/2} & \text{if}\ \Verts t_{2}\leq1,\\
0 & \text{if}\ \Verts t_{2}>1.
\end{cases}
\]
By the Plancherel's theorem, we have 
\begin{align*}
\Verts{\hat{\Phi}_{\polyorder}}_{L^{2}}^{2} & =\Verts{\Phi_{\polyorder}}_{L^{2}}^{2}\\
 & =\int_{\Verts x_{2}\leq1}(1-\Verts x_{2}^{2})^{\polyorder}\dd x\\
 & =\frac{2\pi^{\datdim/2}}{\Gamma(\datdim/2)}\int_{[0,1]}r^{\datdim-1}(1-r^{2})^{\polyorder}\dd r\\
 & =\frac{\pi^{\datdim/2}}{\Gamma(\datdim/2)}\mathrm{B}(\datdim/2,\polyorder+1).
\end{align*}
Thus, by the convolution theorem \citep[Theorem 5.16]{Wendland2004}, 
we obtain for $\Verts{\omega}_{2}\leq2,$
\begin{align*}
\hat{\varphi}_{\polyorder}(\omega) & =\bigl(2\pi\bigr)^{-\datdim/2}\frac{\Gamma(\datdim/2)}{\pi^{\datdim/2}}\frac{1}{\mathrm{B}(\datdim/2,\polyorder+1)}\cdot\Phi_{\polyorder}*\Phi_{\polyorder}(\omega)\\
 & \leq\bigl(2\pi\bigr)^{-\datdim/2}\frac{\mathrm{B}(\datdim/2,\polyorder/2+1)}{\mathrm{B}(\datdim/2,\polyorder+1)},
\end{align*}
where the inequality is obtained by observing 
\begin{align*}
\Phi_{\polyorder}*\Phi_{\polyorder}(\omega) & =\int\Phi_{\polyorder}(\omega')\Phi_{\polyorder}(\omega-\omega')\dd\omega'\\
 & \leq\int\Phi_{\polyorder}(\omega')\dd\omega'\\
 & =\frac{2\pi^{\datdim/2}}{\Gamma(\datdim/2)}\int_{[0,1]}r^{\datdim-1}(1-r^{2})^{\polyorder/2}\dd r\\
 & =\frac{\pi^{\datdim/2}}{\Gamma(\datdim/2)}\mathrm{B}(\datdim/2, \polyorder/2+1).
\end{align*}
The claim for the case $\Verts{\omega}_{2}>2$ holds since $\Phi_{\polyorder}*\Phi_{\polyorder}(\omega)=0$
in the given regime. 
\end{proof}
\begin{lem}[RKHS norm evaluation of tilted convolution]
\label{lem:RKHS-norm-translation-invariant} Let $\weight:\bbR^{\datdim}\to(0,\infty).$
Let 
\[
k(x,y)=\text{\ensuremath{\Phi_{\weight}(x-y)+\ell(x,y)}},
\]
where 
\[
\Phi_{\weight}(x,y)=\weight(x)\weight(y)\Phi(x-y),
\]
with $\Phi$ a positive definite function with a generalized Fourier transform $\hat{\Phi}$, and $\ell$ is an arbitrary scalar-valued positive
definite kernel. For $\polyorder\geq1,$ let $\varphi_{\polyorder}$
defined as in Lemma \ref{lem:Bochner-Riesz-density}. For $\rho>0,$
define 
\[
T_{\varphi_{\polyorder},\rho}f(x)=\rho^{-\datdim}\int f(x-y)\varphi_{\polyorder}(y/\rho)\dd y,
\]
where $f:\bbR^{\datdim}\to\bbR$ is in $L^{1}\cap L^{2}.$ Then, 
\[
\norm{\weight T_{\varphi_{\polyorder},\rho}f}_{\rkhs k}\leq(2\pi)^{-\datdim/4}\Verts f_{L^{2}}\cdot\sqrt{\sup_{\Verts{\omega}_{2}\le2\rho^{-1}}\hat{\Phi}(\omega)^{-1}}.
\]
\end{lem}

\begin{proof}
According to \citet{aronszajnTheoryReproducingKernels1950}, because
$k$ is the sum of two kernels, the RKHS norm of $\rkhs k$ of a (scalar-valued)
function $h\in\rkhs k$ is given by
\[
\Verts h_{\rkhs k}^{2}
=\min\{\Verts{h_{1}}_{\rkhs{\Phi_{\weight}}}^{2}+\Verts{h_{2}}_{\rkhs{\ell}}^{2}:h_1\in\rkhs{\Phi_{\weight}},h_{2}\in\rkhs{\ell} \text{ s.t. } h = h_1+h_2\}.
\]
Thus, we can evaluate the norm as 
\[
\Verts{\weight T_{\varphi_{\polyorder},\rho}f}_{\rkhs k}\leq\Verts{\weight T_{\varphi_{\polyorder},\rho}f}_{\rkhs{\Phi_{\weight}}}=\norm{T_{\varphi_{\polyorder},\rho}f}_{\rkhs{\Phi}}.
\]
Using the representation of the RKHS norm $\Verts{\cdot}_{\rkhs{\Phi}}$
\citep[Theorem 10.21]{Wendland2004}, we obtain 
\begin{align*}
\norm{T_{\varphi_{\polyorder},\rho}f}_{\rkhs{\Phi}}^{2} & =(2\pi)^{-\datdim/2}\int\frac{\verts{\widehat{T_{\varphi_{\polyorder},\rho}f}(\omega)}^{2}}{\hat{\Phi}(\omega)}\dd\omega\\
 & =(2\pi)^{-\datdim/2}\int\frac{(2\pi)^{\datdim}\verts{\widehat{f}(\omega)}^{2}\verts{\hat{\varphi}_{\polyorder}(\rho\omega)}^{2}}{\hat{\Phi}(\omega)}\dd\omega\\
 & \leq(2\pi)^{-\datdim/2}\sup_{\Verts{\omega}_{2}\le2\rho^{-1}}\hat{\Phi}(\omega)^{-1}\cdot\Verts{(2\pi)^{\datdim/2}\hat{f}\hat{\varphi}_{\polyorder}}_{L^{2}}^{2}\\
 & =(2\pi)^{-\datdim/2}\sup_{\Verts{\omega}_{2}\le2\rho^{-1}}\hat{\Phi}(\omega)^{-1}\cdot\Verts{\widehat{T_{\varphi_{\polyorder},\rho}f}}_{L^{2}}^{2}
\end{align*}
where the second and last equalities are due to the convolution theorem
\citep[Theorem 5.16]{Wendland2004}; the inequality follows
from the bounded support of $\hat{\varphi}_{\polyorder}$ (Lemma \ref{lem:Bochner-Riesz-density}).
Since $\varphi_{\polyorder}$ is in $L^{1}$, and $f\in L^{1}\cap L^{2},$
by \citep[Theorem 1.3]{EliasM.Stein1971}, we have 
\[
\norm{T_{\varphi_{\polyorder},\rho}f}_{L^{j}}\leq\Verts f_{L^{j}}\Verts{\varphi_{\polyorder}}_{L^{1}}<\infty\ \text{for}\ j\in\{1,2\},
\]
i.e., $T_{\varphi_{\polyorder},\rho}f\in L^{1}\cap L^{2}.$ Thus, the Fourier transform in the first line is well-defined. 
Moreover, by the Plancherel theorem \citep[Corollary 5.25]{Wendland2004}, 
the norm of $\widehat{T_{\varphi_{\polyorder},\rho}f}$ is evaluated as 
\begin{align*}
\Verts{\widehat{T_{\varphi_{\polyorder},\rho}f}}_{L^{2}} & =\Verts{T_{\varphi_{\polyorder},\rho}f}_{L^{2}}\\
 & \leq\Verts f_{L^{2}}\Verts{\varphi_{\polyorder}}_{L^{1}}=\Verts f_{L^{2}}.
\end{align*}
\end{proof}
\begin{lem}[Norm evaluation of $\polyorder$-growth approximator]
\label{lem:norm-q-growth-approx}Consider the same assumptions as
in Lemma \ref{lem:q-approx-constructive}; define the same symbols.
Let $\polyorder\geq1.$ For $\varepsilon>0,$ let $\mathring{v}_{\varepsilon}$
be the function in Lemma \ref{lem:q-approx-constructive} obtained
using the density $\varphi_{\polyorder}$ in Lemma \ref{lem:Bochner-Riesz-density}
such that ${\cal T}_{P}\mathring{v}_{\varepsilon}(x)\geq\Verts x^{\polyorder}1\{\Verts x_{2}>r_{\varepsilon}\}-\varepsilon$
with $r_{\varepsilon}\geq1.$ Let $K=k\idmat$ with 
\[
    k(x,y)=\weight_{\polyorder-2}(x)\weight_{\polyorder-2}(y)\bigl(\weight_{\theta}(x)\weight_{\theta}(y)\Phi(x-y)+k_{\mathrm{lin}}(x,y)\bigr),
\]
for $\theta \in [0,1]$. 
Then, 
\begin{align*}
\Verts{\mathring{v}_{\varepsilon}}_{\rkhs K} 
&\leq2\eta^{-1}\left(\sqrt{\datdim}+C^{(4)}_{\datdim}r_{\varepsilon}^{\datdim/2+1-\theta}\sqrt{\sup_{\Verts{\omega}_{2}\le2\rho^{-1}}\hat{\Phi}(\omega)^{-1}}\right)
\end{align*}
where 
\[
    C_{\datdim}^{(4)}=2^{3/2-\theta}\sqrt{\frac{2^{\datdim/2}}{\Gamma(\datdim/2)}},
\]
and $\rho>0$ is given as in Lemma \ref{lem:q-approx-constructive}. 
\end{lem}

\begin{proof}
Recall that $\mathring{v}_{\varepsilon}$ is given as 
\[
\mathring{v}_{\varepsilon}(x)= 2\eta^{-1}\weight_{\polyorder-2}(x)\left(-x - \weight_{\theta}(x)T_{\varphi_{\polyorder},\rho}\tilde{v}_{0}(x)\right)
\]
where $\tilde{v}_0(x)$ is a truncated version of $x\mapsto -\weight_{\theta}(x)^{-1} x$ with 
\[
\sup_{x\in\bbR^{\datdim}}\Verts{\tilde{v}_{0}(x)}_{2}\leq (r_{\varepsilon}+1)^{1-\theta}
\]
supported in $\{\Verts x_{2}\leq r_{\varepsilon}+1\}.$ 
We have
\begin{align*}
\Verts{(\tilde{v}_{0})_{\dimidx}}_{L^{2}} & \leq  (r_{\varepsilon}+1)^{1-\theta}\sqrt{\lambda\{\Verts x_{2}\leq r_{\varepsilon}+1\}}\\
 & =\sqrt{\frac{\pi^{\datdim/2}}{\Gamma(\datdim/2+1)}(r_{\varepsilon}+1)^{\datdim+2(1-\theta)}},
\end{align*}
where $(\tilde{v}_{0})_{\dimidx}$ is $\dimidx$-th component of $\tilde{v}_{0},$
and $\lambda$ is the $\datdim$-dimensional Lebesgue measure. Also,
for each $\dimidx\in\{1,\dots,\datdim\},$ we have 
\[
    \norm{x\mapsto \weight_{-1}(x)x_i}_{\rkhs{\bar{k}_{\mathrm{lin}}}}=\norm{x_{\dimidx}}_{\rkhs{k_{\mathrm{lin}}}}=1,
\]
where $x_{\dimidx}$ is the $\dimidx$-th coordinate of $x,$ and
$k_{\mathrm{lin}}(x,y)=\la x,y\ra+\tau^{2}.$ Note that for an $\bbR^{\datdim}$-valued
function $v$, we have 
\[
\Verts v_{\rkhs{k\idmat}}=\sqrt{\sum_{\dimidx=1}^{\datdim}\Verts{v_{\dimidx}}_{\rkhs k}^{2}},
\]
where $v_{\dimidx}$ is the $\dimidx$-th component function. Thus,
by Lemma \ref{lem:RKHS-norm-translation-invariant} and the norm characterization
for sum kernels, we have 
\begin{align*}
\Verts{\mathring{v}_{\varepsilon}}_{\rkhs K} & \leq2\eta^{-1}\left(\sqrt{\datdim}+(2\pi)^{-\datdim/4}\sqrt{\datdim}\cdot\sqrt{\frac{\pi^{\datdim/2}}{\Gamma(\datdim/2+1)}(2r_{\varepsilon})^{\datdim+2(1-\theta)}}\cdot\sqrt{\sup_{\Verts{\omega}_{2}\le2\rho^{-1}}\hat{\Phi}(\omega)^{-1}}\right)\\
& \leq2\eta^{-1}\left(\sqrt{\datdim}+2^{(3/2-\theta)}\sqrt{\frac{2^{\datdim/2}}{\Gamma(\datdim/2)}}r_{\varepsilon}^{\datdim/2+(1-\theta)}\sqrt{\sup_{\Verts{\omega}_{2}\le2\rho^{-1}}\hat{\Phi}(\omega)^{-1}}\right).
\end{align*}
\end{proof}
\begin{lem}[Norm of approximate solution to Stein equation]
\label{lem:norm-stein-eq-approx}
Define symbols as in \cref{thm:plip-ksd-bound-basic}. For $f\in{\cal F}_{\polyorder}$
and $\rho>0,$ we have 
\[
\norm{wT_{\varphi_{\polyorder},\rho}\tilde{v}_{f}^{\weight}}_{\rkhs{k\idmat}}\leq C_{P,\datdim,\weight}^{(5)}\cdot r_{\varepsilon}^{\datdim/2+1-\theta}\sqrt{\sup_{\Verts{\omega}_{2}\le2\rho^{-1}}\hat{\Phi}(\omega)^{-1}},
\]
where 
\[
    C_{P,\datdim,\weight}^{(5)}=2^{3/2-\theta}\cdot\zeta_{1}\bigl(\tau^{-(\polyorder-1)}+1\bigr)\sqrt{\frac{2^{\datdim/2}}{\Gamma(\datdim/2)}} (\tau^2+1)^{(1-\theta)/2}.
\]
\end{lem}

\begin{proof}
Since $\tilde{v}_{f}^{\weight}$ is supported in $\{\Verts x_{2}\leq r_{\varepsilon}+1\}$
and satisfies 
\[
    \sup_{x\in\bbR^{\datdim}}\Verts{\tilde{v}_{f}^{\weight}(x)}_{2}\leq 2^{1-\theta}\sqrt{\datdim}\zeta_{1}(\tau^{-(\polyorder-1)}+1)(\tau^2+1)^{(1-\theta)/2}r_{\varepsilon}^{1-\theta}
\]

the claim follows from Lemma \ref{lem:RKHS-norm-translation-invariant}
and 
\[
\Verts v_{\rkhs{k\idmat}}=\sqrt{\sum_{\dimidx=1}^{\datdim}\Verts{v_{\dimidx}}_{\rkhs k}^{2}},
\]
where $v$ is an $\bbR^{\datdim}$-valued function, and $v_{i}$ is
the $\dimidx$-th component function. 
\end{proof}

\subsubsection{Bound for translation-invariant kernels }
\begin{thm}[KSD bound for general translation-invariant kernels]
\label{thm:plip-ksd-bound-id-kernel-translation-invariant} 
Let $\polyorder\geq1$ and $\theta \in [0,1]$. 
Let $K=k\idmat$ with scalar-valued kernel 
\[
    k(x,y)=\weight_{\polyorder-2}(x)\weight_{\polyorder-2}(y)\bigl(\weight_{\theta}(x)\weight_{\theta}(y)\Phi(x-y)+k_{\mathrm{lin}}(x,y)\bigr),
\]
where $\Phi$ is assumed to have continuous non-vanishing generalized Fourier transform. 
Under Assumptions \ref{assu:poly-grow-coef}, \ref{assu:dissipativity-diffusion},
\ref{assu:diff-wasserstein-rate}, there exist $r_{\varepsilon}\geq1$
and $\rho_{\varepsilon}>0$ such that for any $\varepsilon,\rho>0,$
\begin{align*}
 & d_{{\cal F}_{\polyorder}}(P,Q)\\
 & \leq C_{P,\datdim}(\polyorder_{m})r_{\varepsilon}^{\polyorder_{m}}\left(2\eta^{-1}\left( \sqrt{\datdim}+C_{\datdim}^{(4)}r_{\varepsilon}^{\datdim/2+1-\theta}\sqrt{\sup_{\Verts{\omega}_{2}\le2\rho_{\varepsilon}^{-1}}\hat{\Phi}(\omega)^{-1}}\right)\cdot\ksd{K_{m}}P(Q)+\varepsilon\right)\\
 & \hphantom{\leq}\quad+C_{P,\datdim,\weight}^{(3)}\cdot\rho(1\lor\rho)^{\polyorder-1}\cdot r_{\varepsilon}^{\polyorder+\polyorder_{m}}+C_{P,\datdim,\weight}^{(5)}\cdot r_{\varepsilon}^{\datdim/2+1-\theta}\sqrt{\sup_{\Verts{\omega}_{2}\le2\rho^{-1}}\hat{\Phi}(\omega)^{-1}}\cdot\ksd{K_{m}}P(Q),
\end{align*}
where 
\[
C_{P,\datdim}(\polyorder_{m})=\begin{cases}
C_{P,\datdim}^{(1)}+C_{P,\datdim}^{(2)} & \text{if}\ \polyorder_{m}=0,\\
C_{P,\datdim}^{(1)}+\dot{C}_{P,\datdim}^{(2)}+\ddot{C}_{P,\datdim}^{(2)} & \text{if}\ \polyorder_{m}=1;
\end{cases}
\]
the constants defining $C_{P,\polyorder}(\polyorder_{m})$ are
given in Theorem \ref{thm:plip-ksd-bound-basic};
$C_{\datdim}^{(4)}$ and $C_{P,\datdim, \weight}^{(5)}$ are given in \cref{lem:norm-q-growth-approx,lem:norm-stein-eq-approx}, respectively. 
In particular, for
a sequence $\{Q_{\seqidx}\}_{\seqidx=1}^{\infty}\subset{\cal P}$, 
the convergence $\ksd{K_{m}}P(Q_{n})\to0$ implies $d_{{\cal F}_{\polyorder}}(P,Q_{\seqidx})\to0$
as $n\to\infty.$ 
\end{thm}

\begin{proof}
The conclusion follows from \cref{thm:plip-ksd-bound-basic} using
Lemmas \ref{lem:q-approx-constructive}, 
\ref{lem:norm-q-growth-approx}, 
and \ref{lem:norm-stein-eq-approx}. 
\end{proof}

\subsubsection{\pcref{thm:Matern-bound-linear}}
\label{subsec:proof-Matern-KSD-linear}

\MaternKSDLinear*
\begin{proof}
We use Theorem \ref{thm:plip-ksd-bound-id-kernel-translation-invariant}.
Since, according to \citet[Theorem 6.13]{Wendland2004}, 
we have $\hat{\Phi}(\omega)=\verts{\Sigma}^{-1}\bigl(1+\Verts{\Sigma^{-1}\omega}_{2}^{2}\bigr)^{-\datdim/2-\nu}$
with $\verts{\Sigma}$ the determinant of $\Sigma$,  
we obtain 
\begin{align*}
\sup_{\Verts{\omega}_{2}\le\lambda}\hat{\Phi}(\omega)^{-1} & =\sup_{\Verts{\omega}_{2}\le\lambda}\verts{\Sigma}\bigl(1+\Verts{\Sigma^{-1}\omega}_{2}^{2}\bigr)^{\datdim/2+\nu}\\
 & \leq\verts{\Sigma}\bigl(1+\Verts{\Sigma^{-1}}_{\mathrm{op}}^{2}\lambda^{2}\bigr)^{\datdim/2+\nu}
\end{align*}
for any $\lambda>0.$ By Lemma \ref{lem:q-approx-constructive} with $\theta=1$, we
have 
\begin{align*}
\rho_{\varepsilon} & =\frac{\eta}{4}(1\land\delta^{2})(2\land\varepsilon r_{\varepsilon}^{-\polyorder})U_{3}^{-1}
\end{align*}
for some constant $U_{3}>0.$ Then, with $s=\datdim/2+\nu,$ 
\begin{align*}
 & \sqrt{\sup_{\Verts{\omega}_{2}\le8\eta^{-1}\rho_{\varepsilon}^{-1}}\hat{\Phi}(\omega)^{-1}}\\
 & =\sqrt{\verts{\Sigma}}\left\{ 1+2^{6}\eta^{-2}\Verts{\Sigma^{-1}}_{\mathrm{op}}^{2}U_{3}^{2}(1\lor\delta^{-4})\bigr(1\lor\varepsilon^{-2}r_{\varepsilon}^{2\polyorder}\bigr)\right\} ^{s/2}.
\end{align*}
 By Corollary \ref{cor:r-delta-expression}, we have $\delta=rW_{r}\varepsilon^{-\frac{1}{3}},$
and $r_{\varepsilon}=r+\delta=(1+W_{r}\varepsilon^{-\frac{1}{3}})r$
for some constant $r\ge1$ and $W_{r}>0.$ Thus, 
\begin{align*}
1\lor\delta^{-4} & \leq\{1\lor(rW_{r})^{-4}\}(1\lor\varepsilon^{-4/3}),
\end{align*}
and 
\[
r_{\varepsilon}\leq r\bigl(1+W_{r}\bigr)\bigl(1\lor\varepsilon^{-1/3}\bigr).
\]
Therefore, 
\begin{align*}
 & \sqrt{\sup_{\Verts{\omega}_{2}\le8\eta^{-1}\rho_{\varepsilon}^{-1}}\hat{\Phi}(\omega)^{-1}}\\
 & \leq C_{P,\Phi,r}\bigl\{(1\lor\varepsilon^{-4/3})(1\lor\varepsilon^{-2}\lor\varepsilon^{-2(\polyorder/3+1)}\bigr)\bigr\}^{s/2}\\
 & \leq C_{P,\Phi,r}(1\lor\varepsilon^{-s(\polyorder+5)/3}\bigr)
\end{align*}
 with 
\[
C_{P,\Phi,r}=\sqrt{\verts{\Sigma}}\left(1\lor2^{6}\eta^{-2}\Verts{\Sigma^{-1}}_{\mathrm{op}}^{2}U_{3}^{2}\right)\{1\lor(rW_{r})^{-4}\}^{s/2}\{r(1+rW_{r})\}^{s\polyorder}.
\]

Also, 
\begin{align*}
 & \sqrt{\sup_{\Verts{\omega}_{2}\le2\rho^{-1}}\hat{\Phi}(\omega)^{-1}}\\
 & \leq\sqrt{\verts{\Sigma}}\bigl(1+4\Verts{\Sigma^{-1}}_{\mathrm{op}}^{2}\rho^{-2}\bigr)^{s/2}\\
 & \leq\sqrt{\verts{\Sigma}}\rho^{-s}\bigl(\rho^{2}+4\Verts{\Sigma^{-1}}_{\mathrm{op}}^{2}\bigr)^{s/2}\\
 & \leq\sqrt{\verts{\Sigma}}\bigl(1+4\Verts{\Sigma^{-1}}_{\mathrm{op}}^{2}\bigr)^{s/2}\bigl(1\lor\rho^{-s}\bigr).
\end{align*}
Define the following constants: 
\begin{align*}
A^{(0)} & =\bigl(C_{P,\datdim}^{(1)}+C_{P,\datdim}^{(2)}\bigr)\\
 & =2(1+\lambda_{b}\zeta_1\sqrt{\datdim}+4\lambda_{m}\zeta_{2}c_m\datdim)\\
 & \hphantom{+}+2\sqrt{\datdim}\bigl(\tau^{2}+1)^{(\polyorder-1)/2}\left[2\lambda_{b}\zeta_{1}+\lambda_m(c_{m}+1)\left\{ \left(\frac{\polyorder-1}{\tau}+3\right)\zeta_{1}\sqrt{\datdim}+\frac{\zeta_{2}}{2}\right\} \right], 
\end{align*}

\begin{align*}
A^{(1)} & =2\eta^{-1}\sqrt{\datdim}A^{(0)}C_{\datdim}^{(4)}\{r(1+W_{r})\}^{\datdim/2}C_{P,\Phi,r}\\
 & =2\eta^{-1}\sqrt{\datdim}A^{(0)}\sqrt{\frac{2^{\datdim/2}}{\Gamma(\datdim/2+1)}}\{r(1+W_{r})\}^{\datdim/2}\\
 & \hphantom{=}\quad\cdot\sqrt{\verts{\Sigma}}\left(1\lor2^{6}\eta^{-2}\Verts{\Sigma^{-1}}_{\mathrm{op}}^{2}U_{3}^{2}\right)\{1\lor(rW_{r})^{-4}\}^{s/2}\{r(1+rW_{r})\}^{s\polyorder},
\end{align*}
 
\begin{align*}
A^{(2)} & =C_{P,\datdim,\weight}^{(3)}r\bigl(1+W_{r}\bigr)\bigr\}^{\polyorder},\\
\end{align*}
 and 
\begin{align*}
A^{(3)} & =C_{P,\datdim.\polyorder,\weight}^{(5)}r^{\datdim/2}\bigl(1+W_{r}\bigr)^{\datdim/2}\sqrt{\verts{\Sigma}}\bigl(1+4\Verts{\Sigma^{-1}}_{\mathrm{op}}^{2}\bigr)^{s/2}\\
 & =\sqrt{\datdim}\zeta_{1}\bigl(\tau^{-(\polyorder-1)}+1\bigr)\sqrt{\frac{\bigl\{2^{1/2}r\bigl(1+W_{r}\bigr)\bigr\}^{\datdim}\bigl(1+4\Verts{\Sigma^{-1}}_{\mathrm{op}}^{2}\bigr)^{s}}{\Gamma(\datdim/2+1)}}.
\end{align*}

By Theorem \ref{thm:plip-ksd-bound-id-kernel-translation-invariant},
with $S=\ksd{K_{m}}P(Q),$ we obtain 
\begin{align}
 & \sup_{f\in{\cal F}_{\polyorder}}\left|\int\mathcal{T}_{P}^{m}v_{f}\dd Q\right|\nonumber \\
 & \begin{aligned} & \leq2\eta^{-1}\sqrt{\datdim}A^{(0)}S+A^{(1)}\bigl(1\lor\varepsilon^{-\datdim/6}\bigr)(1\lor\varepsilon^{-s(\polyorder/3+5/3)})S+A^{(0)}\varepsilon\\
 & \hphantom{\leq}\quad+A^{(2)}\rho(1\lor\rho)^{\polyorder-1}\cdot\bigl(1\lor\varepsilon^{-\polyorder/3}\bigr)+A^{(3)}\varepsilon^{-\datdim/6}\bigl(1\lor\rho^{-s}\bigr)S.
\end{aligned}
\label{eq:Stein-bound-Matern-intermediate-linear-growth}
\end{align}
We let $\rho=\varepsilon^{\polyorder/3+1}.$ Then, the upper bound
becomes 
\begin{align*}
 & 2\eta^{-1}\sqrt{\datdim}A^{(0)}S+A^{(1)}\bigl(1\lor\varepsilon^{-\datdim/6}\bigr)(1\lor\varepsilon^{-s(\polyorder/3+5/3)}\bigr)S+A^{(0)}\varepsilon\\
 & \quad+A^{(2)}(1\lor\varepsilon^{\polyorder/3+1})^{\polyorder-1}\cdot\bigl(\varepsilon^{\polyorder/3}\lor1\bigr)\varepsilon+A^{(3)}\varepsilon^{-\datdim/6}\bigl(1\lor\varepsilon^{-s(\polyorder/3+1)}\bigr)S.
\end{align*}
We choose 
\[
\varepsilon=S^{\frac{1}{1+t}}\ \text{with}\ t=\datdim/6+s(\polyorder+5)/3. 
\]
Then, we obtain the final bound
\begin{align*}
 & d_{{\cal F}_{\polyorder}}(P,Q)\\
 & \leq\varepsilon\left\{ 2\eta^{-1}\sqrt{\datdim}A^{(0)}\varepsilon^{t}+A^{(1)}\bigl(1\lor\varepsilon^{-t}\bigr)\varepsilon^{t}+A^{(0)}\right.\\
 & \hphantom{\leq\mathcal{S}^{\frac{1}{1+\datdim/6+s(\polyorder/3+1)}}}\quad\left.+A^{(2)}(1\lor\varepsilon^{\polyorder/3+1})^{\polyorder-1}\cdot\bigl(\varepsilon^{\polyorder/3}\lor1\bigr)+A^{(3)}\bigl(\varepsilon^{-s(\polyorder/3+1)}\lor1\bigr)\varepsilon^{t-\datdim/6}\right\} \\
 & \leq S^{\frac{1}{1+t}}\left\{ 2\eta^{-1}\sqrt{\datdim}A^{(0)}S^{\frac{t}{1+t}}+A^{(1)}\bigl(S^{\frac{t}{1+t}}\lor1\bigr)+A^{(0)}\right.\\
 & \hphantom{\leq\mathcal{S}^{\frac{1}{1+\datdim/6+s(\polyorder/3+1)}}}\quad\left.+A^{(2)}(S^{\frac{(\polyorder/3+1)(\polyorder-1)}{1+t}}\lor1)\cdot\bigl(S^{\frac{\polyorder/3}{1+t}}\lor1\bigr)+A^{(3)}\bigl(S^{\frac{t-\datdim/6-s(\polyorder/3+1)}{1+t}}\lor S^{\frac{t-\datdim/6}{1+t}}\bigr)\right\} \\
 & \leq\bigl\{(2\eta^{-1}\sqrt{\datdim}+1)A^{(0)}+A^{(1)}+A^{(2)}+A^{(3)}\bigr\}\bigl(S^{\frac{t}{1+t}\lor\frac{\polyorder(\polyorder/3+1)}{1+t}}\lor1\bigr)S^{\frac{1}{1+t}}.
\end{align*}
Letting $A_{P,\datdim}=(2\eta^{-1}\sqrt{\datdim}+1)A^{(0)}+A^{(1)}+A^{(2)}+A^{(3)}$ concludes the proof. 
\end{proof}

\subsubsection{\pcref{thm:Matern-bound-quad}} %
\label{subsec:proof-Matern-KSD-quad}

\MaternKSDQuad*
\begin{proof}
The proof proceeds as in Theorem \ref{thm:Matern-bound-linear}. By
applying Lemma \ref{lem:q-approx-constructive} with $\theta=0$, we have 
\begin{align*}
\rho_{\varepsilon} & =\frac{\eta}{4}\left((1\land\delta^{2})\left(2\tilde{C}\land U_{3}^{-1}\varepsilon r_{\varepsilon}^{-(\polyorder+1)}\right)\land\frac{4}{\eta}\right)\\
 & \geq\frac{\eta}{4}(1\land\delta^{2})\left(2\tilde{C}\land U_{3}^{-1}\varepsilon r_{\varepsilon}^{-(\polyorder+1)}\land\frac{4}{\eta}\right)
\end{align*}
for some constants $U_{3},\tilde{C}>0;$ we also have. 
\begin{align*}
r_{\varepsilon} & =\left\{ 2\lor c_{P,\datdim,\polyorder,\tau}(1\lor\delta^{-1/\datdim})\right\} (r+\delta)\\
 & \leq\tilde{c}(1\lor\varepsilon^{-1/3\datdim})(1\lor\varepsilon^{-1/3})
\end{align*}
with $\tilde{c}=(2\lor c_{P,\datdim,\polyorder,\tau})r(1+W_{r})\bigl(1\lor W_{r}^{-1/\datdim}\bigr).$
Then, with $s=\datdim/2+\nu$, 
\begin{align*}
 & \sqrt{\sup_{\Verts{\omega}_{2}\le2\rho_{\varepsilon}^{-1}}\hat{\Phi}(\omega)^{-1}}\\
 & \leq\verts{\Sigma}\left\{ 1+\Verts{\Sigma^{-1}}_{\mathrm{op}}^{2}2^{6}\eta^{-2}(1\lor\delta^{-4})\bigl(4^{-1}\tilde{C}^{-2}\lor U_{3}^{2}\varepsilon^{-2}r_{\varepsilon}^{2(\polyorder+1)}\lor\eta^{2}/2^{4}\bigr)\right\} ^{s/2}\\
 & \leq C_{P,\Phi,r}\left\{ (1\lor\delta^{-4})\left(1\lor\varepsilon^{-2}r_{\varepsilon}^{2(\polyorder+1)}\right)\right\} ^{s/2}\\
 & \leq C_{P,\Phi,r}\{1\lor(rW_{r})^{-2s}\}(1\lor\tilde{c}^{s(\polyorder+1)})\left(1\lor\varepsilon^{-s\{(\polyorder+1)(1+\datdim^{-1})+5\}/3}\right)
\end{align*}
where 
\[
C_{P,\Phi,r}=\verts{\Sigma}\left[1+\Verts{\Sigma^{-1}}_{\mathrm{op}}^{2}2^{3}\eta^{-2}\left(\tilde{C}^{-2}\lor U_{3}^{2}\lor\eta^{2}/2^{4}\right)\right]^{s/2}.
\]

Define the following constants: 
\begin{align*}
 & A^{(0)}=\tilde{c}\left(C_{P,\datdim}^{(1)}+\dot{C}_{P,\datdim}^{(2)}+\ddot{C}_{P,\datdim}^{(2)}\right),\\
 & A^{(1)}=2\eta^{-1}\sqrt{\datdim}A^{(0)}C_{\datdim}^{(4)}\tilde{c}^{\datdim/2+1}C_{P,\Phi,r}\{1\lor(rW_{r})^{-2s}\}(1\lor\tilde{c}^{s(\polyorder+1)}),\\
 & A^{(2)}=C_{P,\datdim,\weight}^{(3)}\tilde{c}^{\polyorder+1}\\
 & A^{(3)}=C_{P,\datdim,\weight}^{(5)}\tilde{c}^{\datdim/2+1}\sqrt{\verts{\Sigma}}\bigl(1+4\Verts{\Sigma^{-1}}_{\mathrm{op}}^{2}\bigr)^{s/2}.
\end{align*}
Theorem \ref{thm:plip-ksd-bound-id-kernel-translation-invariant}
implies, with $S=\ksd{K_{m}}P(Q),$

\begin{align*}
 & d_{{\cal F}_{\polyorder}}(P,Q)\\
 & \leq\left\{ 2\eta^{-1}\sqrt{\datdim}A^{(0)}(1\lor\varepsilon^{-1/3})+A^{(1)}(1\lor\varepsilon^{-t})(1\lor\varepsilon^{-1/3})\right\} \cdot S+A^{(0)}\varepsilon(1\lor\varepsilon^{-1/3})\\
 & \hphantom{\leq}\quad+A^{(2)}\rho(1\lor\rho)^{\polyorder-1}\cdot(1\lor\varepsilon^{-(\polyorder+1)/3})+A^{(3)}\cdot(1\lor\varepsilon^{-\datdim/6})\bigl(1\lor\rho^{-s}\bigr)\cdot S
\end{align*}
where 
\[
t=\frac{s\{(\polyorder+1)(1+\datdim^{-1})+5\}}{3}+\frac{\datdim+2}{6}.
\]
With $\rho=\text{\ensuremath{\varepsilon^{\polyorder/3+1}}}$ and
$\varepsilon=S^{1/(1+t)},$ the above upper bound on $d_{{\cal F}_{\polyorder}}$
is evaluated as follows: 
\begin{align*}
 & \left\{ 2\eta^{-1}\sqrt{\datdim}A^{(0)}(1\lor\varepsilon^{1/3})\varepsilon^{t}+A^{(1)}(1\lor\varepsilon^{t+1/3})\right\} \varepsilon^{2/3}+A^{(0)}\varepsilon^{2/3}(1\lor\varepsilon^{1/3})\\
 & \quad+A^{(2)}(1\lor\varepsilon^{\polyorder/3+1})^{\polyorder-1}\cdot(\varepsilon^{(\polyorder+1)/3}\lor1)\varepsilon^{2/3}+A^{(3)}\cdot(1\lor\varepsilon^{-s(\polyorder/3+1)-\datdim/6}\bigr)\cdot\varepsilon^{t+1}\\
 & \leq\varepsilon^{2/3}\left[\left\{ 2\eta^{-1}\sqrt{\datdim}A^{(0)}(\varepsilon^{t}\lor\varepsilon^{t+1/3})+A^{(1)}(1\lor\varepsilon^{t+1/3})\right\} +A^{(0)}(1\lor\varepsilon^{1/3})\right.\\
 & \hphantom{\leq\varepsilon^{2/3}}\quad\left.+A^{(2)}(1\lor\varepsilon^{(\polyorder^{2}+3\polyorder-2)/3})+A^{(3)}\cdot(\varepsilon^{t+1/3}\lor\varepsilon^{t+1/3-s(\polyorder/3+1)-\datdim/6}\bigr)\right]\\
 & \leq\left(2\eta^{-1}\sqrt{\datdim}A^{(0)}+A^{(1)}+A^{(0)}+A^{(2)}+A^{(3)}\right)(1\lor S^{\frac{t+1/3}{1+t}\lor\frac{(\polyorder^{2}+3\polyorder-2)}{3(1+t)}})S^{\frac{2}{3}\frac{1}{1+t}}.
\end{align*}
Letting $B_{P, \datdim} = 2\eta^{-1}\sqrt{\datdim}A^{(0)}+A^{(1)}+A^{(0)}+A^{(2)}+A^{(3)}$ concludes the proof. 
\end{proof}
\section{Bounding Wasserstein distances}
In this appendix, we provide a proof for \cref{thm:Wass-ksd-bound}. 
We begin with a supporting lemma and then provide the proof. 

\begin{lem}
    For $\polyorder= (1, \infty)$, 
    let $P, Q \in \finitemomentspace{\polyorder}$. 
    For any coupling $\pi\in\Pi(P,Q)$ between $P$ and $Q$, 
    let $(X, Y)$ be random variables jointly distributed according to $\pi$, 
    and $Z = \Verts{X-Y}_2$. 
    Suppose we are given a function set $\mathcal{F}$ that approximates $\polyorder$-growth with respect to $P$. 
    Then, for any $\varepsilon > 0$, 
    \begin{equation}
        \Ex[ Z^{\polyorder} ] \leq 2^{\polyorder-1} \left( 
        3 r_{\varepsilon}^{\polyorder-1} \Ex[Z] + 2 \varepsilon + \Ex_Q[f_{\varepsilon} ] - \Ex_P[f_{\varepsilon}]
        \right). \label{eq:bound-Wq-W1-prelim}
    \end{equation}
    where $f_{\varepsilon} \in \mathcal{F}$ is $\polyorder$-growth approximation,  
    and $r_{\varepsilon} > 0 $ is the corresponding radius. 
\end{lem}
\begin{proof}
First note that, with $r=r_{\varepsilon}$, 
\begin{align*}
    &\Ex\bigl[ \Verts{Y}_2^{\polyorder}1\{Z > r\} \bigr] \\
    &= \Ex\bigl[ \Verts{Y}_2^{\polyorder}1\{Z > r, \Verts{Y}_2 \leq r \} \bigr] 
        + \Ex\bigl[ \Verts{Y}_2^{\polyorder}1\{Z > r, \Verts{Y}_2 > r \} \bigr] \\
    &\leq r^{\polyorder} Q (Z > r)  + \Ex\bigl[ \underbrace {\Verts{Y}_2^{\polyorder}1\{\Verts{Y}_2 > r \}  -f_{\varepsilon}(Y)}_{\leq \varepsilon - \Ex_{P}[f_{\varepsilon}]}+ f_{\varepsilon}(Y) \bigr] \\
    &\leq  r^{\polyorder-1} \Ex \bigl[ Z \bigr] + \varepsilon + \Ex_{Q}[f_{\varepsilon}] - \Ex_{P}[f_{\varepsilon}]. 
\end{align*}
where we have used the $\polyorder$-growth approximation property and Markov's inequality to obtain the final line. 
The inequality for $\Ex[\Verts{X}_2^{\polyorder}$ is given with $P$ substituted for $Q$. 
The evaluation above leads to the desired inequality: 
\begin{align*}
    \Ex[ Z^{\polyorder} ]
    &= \Ex[ Z^{\polyorder} 1\{Z \leq r_{\varepsilon}\} ] + \Ex[ Z^{\polyorder}1\{Z > r_{\varepsilon}\} ] \\
    &\leq r_{\varepsilon}^{\polyorder-1} \Ex[Z] + 2^{\polyorder-1}\left (
        \Ex[ \Verts{X}_2^{\polyorder}1\{Z > r_{\varepsilon}\} ] + \Ex[ \Verts{Y}_2^{\polyorder}1\{Z > r_{\varepsilon}\} ]
      \right)\\
    &\leq 2^{\polyorder-1} \left( 
        3 r_{\varepsilon}^{\polyorder-1} \Ex[Z] + 2 \varepsilon + \Ex_Q[f_{\varepsilon} ] - \Ex_P[f_{\varepsilon}]
        \right). 
\end{align*}
\end{proof}

Taking the infimum over all couplings on both sides of \eqref{eq:bound-Wq-W1-prelim} therefore yields a bound on the $\polyorder$-Wasserstein distance $W_{\polyorder}$: 
\begin{align*}
    W_{\polyorder}(P, Q)^{\polyorder}
    &\leq 2^{\polyorder-1}  
        \left( 
        3 r_{\varepsilon}^{\polyorder-1} W_1(P, Q) + 2\varepsilon +  
        \Ex_Q[f_{\varepsilon}] - \Ex_P[f_{\varepsilon}]
        \right ). 
\end{align*}
This indicates that using an IPM induced by (possibly a subset of) $\mathcal{F}$, 
one can further upper bound $W_1(P,Q)$ and control 
$\Ex_Q[f_{\varepsilon}] - \Ex_P[f_{\varepsilon}]$ to obtain an estimate of $W_{\polyorder}$. 
We will carry out this task below for a Stein RKHS to derive a KSD bound. 

\begin{rem}
In fact, one can obtain an inequality similar to \eqref{eq:bound-Wq-W1-prelim} that holds for any $\polyorder > 0$: 
\begin{align*}
    \Ex[ Z^{\polyorder} ]
    &\leq 2^{\polyorder} \left( 
        2^{-1}r_{\varepsilon}^{\polyorder-1} \Ex[ (Z\land 2r_{\varepsilon})] 
        +  2\varepsilon + \Ex_{Q}[f_{\varepsilon}] - \Ex_P[f_{\varepsilon}] 
    \right)
\end{align*}
After taking the infimum, we can control the first term in the round brackets by the (scaled) bounded-Lipschitz metric $r_{\varepsilon}^{\polyorder} d_{\mathrm{BL}}(P, Q)$ (see the proof of \cref{prop:plip-metric-Wasserstein-conv} for the definition). 
Therefore, an upper bound is given by any IPM that controls the bounded-Lipschitz metric and is defined by a function class approximating $\polyorder$-growth. 
For our purpose, we use the slightly tighter \eqref{eq:bound-Wq-W1-prelim}. 
\end{rem}

\subsection{\pcref{thm:Wass-ksd-bound}}\label{subsec:proof-Wass-ksd-bound}
Taking the infimum over all possible couplings in \eqref{eq:bound-Wq-W1-prelim}, with $\mathcal{F} = \mathcal{T}_P^m (\rkhs{K})$, yields 
\begin{equation}
    W_{\polyorder}(P, Q)^{\polyorder}
    \leq 2^{\polyorder-1}  
        \left( 
        3 r_{\varepsilon}^{\polyorder-1} W_1(P, Q) + 2\varepsilon +   \Verts{v_{\varepsilon}}_{\rkhs{K}} \ksd{K_m}{P}(Q) 
        \right ) \label{eq:KSD-Wq-bound-prelim}
\end{equation}
for any $\varepsilon > 0$, 
where $v_{\varepsilon} \in \rkhs{K}$ is a function that induces a $\polyorder$-growth approximation $f_{\varepsilon} = \mathcal{T}_P^m(v_{\varepsilon})$. 

We obtain a bound by choosing an appropriate $\varepsilon$. 
In the following, we separately consider the two growth cases of $\Verts{m(x)}_{\mathrm{op}}$: $\polyorder_m \in \{0, 1\}$. 
Before proceeding, we collect results used to evaluate \eqref{eq:KSD-Wq-bound-prelim}. 
For our kernel choices, by applying \cref{lem:norm-q-growth-approx} with $\theta = 1-\polyorder_m$, 
\begin{equation}
    \begin{aligned}
\Verts{v_{\varepsilon}}_{\rkhs K} 
&\leq2\eta^{-1}\left(\sqrt{\datdim}+
    2^{1/2+\polyorder_m}\sqrt{\frac{2^{\datdim/2}}{\Gamma(\datdim/2)}}
    r_{\varepsilon}^{\datdim/2+\polyorder_m}\sqrt{\sup_{\Verts{\omega}_{2}\le2\rho^{-1}}\hat{\Phi}(\omega)^{-1}}\right)
    \end{aligned}\label{eq:rkhs-norm-estimate-Wq-bound}
\end{equation}
Here, $v_{\varepsilon}$, $r_{\varepsilon}$, 
and $\rho>0$ are given as in Lemma \ref{lem:q-approx-constructive}. 
These objects depend on $\polyorder_m$, and we derive bounds accordingly. 
We also make use of \citet[Lemma 2.2]{Mackey_2016}, which provides the following bound on $W_1$: 
\begin{equation}
    \begin{aligned}
     W_1(P,Q) 
     &\leq 3 \left\{
        d_{\calF_1}(P, Q)\lor 
        \left( \sqrt{2}\Ex[\Verts{G}_2]^2 d_{\calF_{1}}(P, Q)\right)^{1/3}
    \right\}. %
    \end{aligned}
   \label{eq:W1-bound-alt}
\end{equation}

\subsubsection{Linear case $\polyorder_m=0$}
Let $s = \datdim /2 + \nu$.
From \cref{subsec:proof-Matern-KSD-linear}, 
we have 
\[
r_{\varepsilon}\leq C_{1}\bigl(1\lor\varepsilon^{-1/3}\bigr)
\]
and 
\begin{align*}
 \sqrt{\sup_{\Verts{\omega}_{2}\le8\eta^{-1}\rho_{\varepsilon}^{-1}}\hat{\Phi}(\omega)^{-1}}
 \leq C_{2}(1\lor\varepsilon^{-s(\polyorder+5)/3}\bigr)
\end{align*}
where 
\begin{align*}
    &C_{1} = r\bigl(1+W_{r}\bigr), \\
    &C_{2} =\sqrt{\verts{\Sigma}}\left(1\lor2^{6}\eta^{-2}\Verts{\Sigma^{-1}}_{\mathrm{op}}^{2}U_{3}^{2}\right)\{1\lor(rW_{r})^{-4}\}^{s/2}\{r(1+rW_{r})\}^{s\polyorder}, 
\end{align*}
with $\eta$, $r$, $U_3$ and $W_r$ being constants from \cref{lem:q-approx-constructive} and  \cref{cor:r-delta-expression}.

Substituting these estimates into the RHS in \eqref{eq:KSD-Wq-bound-prelim} gives 
\begin{align*}
        &r_{\varepsilon}^{\polyorder-1} W_1(P, Q) + 2\varepsilon +   \Verts{v_{\varepsilon}}_{\rkhs{K}} \ksd{K_m}{P}(Q) \\
        &\leq C_{1}^{\polyorder-1} (1\lor \varepsilon^{-(\polyorder-1)/3})
        W_1(P, Q)
        + 2\varepsilon \\
        &\quad + 2\eta^{-1} \left\{
            \sqrt{\datdim} + 
            2^{1/2}\sqrt{\frac{2^{\datdim/2}}{\Gamma(\datdim/2)}}
            C_1^{\datdim/2}
            C_2 
            (1\lor \varepsilon^{-t})
        \right\}S,
\end{align*}
where $t = (1+\polyorder/6)\datdim + \nu (\polyorder+5)/3$. 

As $\calF_1 \subset \calF_{\polyorder}$, the estimate \eqref{eq:W1-bound-alt} together with 
\cref{thm:Matern-bound-linear} yields  
\begin{equation}
    \begin{aligned}
     W_1(P,Q) 
    &\leq 3 \Bigl( A_{P,\datdim} \lor (\sqrt{2}A_{P,\datdim} \Ex[\Verts{G}_2]^2)^{1/3}\Bigr)
    (1 \lor S^{\frac{2}{3(1+t)}})
    S^{\frac{1}{3(1+t)}}
    \end{aligned} \label{eq:W1-KSD-bound-linear}
\end{equation}

Taking $\varepsilon=S^{\frac{1}{(\polyorder+2)(t+1)}}$ together with \eqref{eq:W1-KSD-bound-linear} results in the following bound on $W_{\polyorder}$
\begin{align*}
    &W_{\polyorder}(P,Q)^{\polyorder}\\
    &\leq 2^{\polyorder-1} \left[
    9C_{1}^{\polyorder-1}\Bigl(A_{P,\datdim}\lor(\sqrt{2}A_{P,\datdim}\Ex[\Verts G_{2}]^{2})^{1/3}\Bigr)(1\lor S^{-\frac{\polyorder-1}{3(\polyorder+2)(1+t)}})(1\lor S^{\frac{2}{3(1+t)}})S^{\frac{1}{3(1+t)}} 
    \right.\\
    &\hphantom{\leq}\left.
        +2S^{\frac{1}{(\polyorder+2)(1+t)}}+2\eta^{-1}\left\{ \sqrt{\datdim}+2^{1/2}\sqrt{\frac{2^{\datdim/2}}{\Gamma(\datdim/2)}}C_{1}^{\datdim/2}C_{2}(1\lor S^{-\frac{t}{1+t}\frac{1}{\polyorder+2}})\right\} S\right]\\
    &\leq C^{(0)}_{P, \datdim} (1\lor S^{1-\frac{1}{(\polyorder+2)(1+t)}})S^{\frac{1}{(\polyorder+2)(1+t)}}
\end{align*}
where 
\[
    C^{(0)}_{P,\datdim}=2^{\polyorder}\left\{ \frac{9}{2}C_{1}^{\polyorder-1}\Bigl(A_{P,\datdim}\lor(\sqrt{2}A_{P,\datdim}\Ex[\Verts G_{2}]^{2})^{1/3}\Bigr)+1+\frac{1}{\eta}\Bigl(\sqrt{\datdim}+2^{1/2}\sqrt{\frac{C_1^{\datdim}2^{\datdim/2}}{\Gamma(\datdim/2)}}C_{2}\Bigr)\right\} 
\]
Taking $C_{P,\datdim}(\polyorder_m) = \left(C_{P,\datdim}^{(0)}\right)^{1/\polyorder}$ completes the proof. 
\qed
\subsubsection{Quadratic case $\polyorder_m=1$}
The proof proceeds as in \cref{subsec:proof-Matern-KSD-quad}.
By \cref{lem:r-eta-expression}, we have 
\begin{align*}
r_{\varepsilon} & =\left\{ 2\lor c_{P,\datdim,\polyorder,\tau}(1\lor\delta^{-1/\datdim})\right\} (r+\delta)\\
 & \leq\tilde{c}(1\lor\varepsilon^{-1/3\datdim})(1\lor\varepsilon^{-1/3})
\end{align*}
with $\tilde{c}=(2\lor c_{P,\datdim,\polyorder,\tau})r(1+W_{r})\bigl(1\lor W_{r}^{-1/\datdim}\bigr).$
With $s=\datdim/2+\nu$, by applying \cref{lem:q-approx-constructive} with $\theta=0$, we have 
\begin{align*}
 \sqrt{\sup_{\Verts{\omega}_{2}\le2\rho_{\varepsilon}^{-1}}\hat{\Phi}(\omega)^{-1}}\leq C_3\left(1\lor\varepsilon^{-s\{(\polyorder+1)(1+\datdim^{-1})+5\}/3}\right)
\end{align*}
where 
\[
    C_3 = C_{P,\Phi,r}\{1\lor(rW_{r})^{-2s}\}(1\lor\tilde{c}^{s(\polyorder+1)})
\]
is defined by constants
\[
    C_{P,\Phi,r}=\verts{\Sigma}\left[1+\Verts{\Sigma^{-1}}_{\mathrm{op}}^{2}2^{3}\eta^{-2}\left(\tilde{C}^{-2}\lor U_{3}^{2}\lor\eta^{2}/2^{4}\right)\right]^{s/2}, 
\]
$U_{3}$, and $\tilde{C}=\tilde{C}_{0}>0$ from \cref{lem:q-approx-constructive}. 
With $S = \ksd{K_m}{P}(Q)$, substituting these estimates into \eqref{eq:KSD-Wq-bound-prelim} and \eqref{eq:rkhs-norm-estimate-Wq-bound} provide the following intermedidate bound on $W_{\polyorder}$: 
\begin{equation}
    \begin{aligned}
    W_{\polyorder}(P,Q)^{\polyorder}  
    &\leq2^{\polyorder-1}\left(3r_{\varepsilon}^{\polyorder-1}W_{1}(P,Q)+2\varepsilon+\Verts{v_{\varepsilon}}_{\rkhs K}S\right)\\
    &\leq2^{\polyorder-1}\left\{ 3\tilde{c}^{\polyorder-1}(1\lor\varepsilon^{-\tilde{t}})W_{1}(P,Q)+2\varepsilon\right.\\
    &\hphantom{\leq2^{\polyorder-1}}\quad \left.+\frac{2}{\eta}\left(\sqrt{\datdim}+2^{3/2}\sqrt{\frac{2^{\datdim/2}}{\Gamma(\datdim/2)}}\tilde{c}^{\datdim/2+1}C_{3}(1\lor\varepsilon^{-t})\right)S\right\} 
    \end{aligned}\label{eq:Wq-KSD-bound-intermediate}
\end{equation}
where 
\[
    t=\frac{s\{(\polyorder+1)(1+\datdim^{-1})+5\}}{3}+\frac{\datdim+2}{6}, 
\]
and $\tilde{t} = (\polyorder-1)(1+d^{-1})/3$. 
Moreover, the estimate \eqref{eq:W1-bound-alt} together with 
\cref{thm:Matern-bound-quad} yields  
\begin{equation}
    \begin{aligned}
     W_1(P,Q) 
    &\leq 3 \Bigl( B_{P,\datdim} \lor (\sqrt{2}B_{P,\datdim} \Ex[\Verts{G}_2]^2)^{1/3}\Bigr)
    (1 \lor S^{\frac{4}{9(1+t)}})
    S^{\frac{2}{9(1+t)}}
    \end{aligned} \label{eq:W1-KSD-bound-quad}.
\end{equation}

Combining \eqref{eq:Wq-KSD-bound-intermediate} and \eqref{eq:W1-KSD-bound-quad} provides the final bound: 
\begin{align*}
    W_{\polyorder}(P,Q)^{\polyorder}	
    &\leq2^{\polyorder-1}\left\{ 9\tilde{c}^{\polyorder-1}\Bigl(B_{P,\datdim}\lor(\sqrt{2}B_{P,\datdim}\Ex[\Verts G_{2}]^{2})^{1/3}\Bigr)(1\lor S^{\frac{6(1+\tilde{t})-1}{9(1+t)(1+\tilde{t})}})S^{\frac{2}{9(1+t)}\cdot\frac{1}{1+\tilde{t}}}\right.\\
        &\hphantom{\leq2^{\polyorder-1}}\left.2S^{\frac{2}{9(1+t)}\cdot\frac{1}{1+\tilde{t}}}+\frac{2}{\eta}\Bigl(\sqrt{\datdim}+2^{3/2}\sqrt{\frac{2^{\datdim/2}}{\Gamma(\datdim/2)}}\tilde{c}^{\datdim/2+1}C_{3}(1\lor S^{1-\frac{2}{9(1+t)}\cdot\frac{1}{1+\tilde{t}}})\Bigr)S^{\frac{2}{9(1+t)}\cdot\frac{1}{1+\tilde{t}}}\right\} \\
    &\leq C_{P,\datdim}^{(1)}(1\lor S^{1-\frac{1}{\tilde{\polyorder}}\frac{2}{3(1+t)}\cdot})S^{\frac{1}{\tilde{q}}\frac{2}{3}\frac{1}{1+t}\cdot},
\end{align*}
where 
\begin{align*}
    C_{P,\datdim}^{(1)}=2^{\polyorder}\left\{ \frac{9}{2}\tilde{c}^{\polyorder-1}\Bigl(B_{P,\datdim}\lor(\sqrt{2}B_{P,\datdim}\Ex[\Verts G_{2}]^{2})^{1/3}\Bigr)+1+\frac{1}{\eta}\left(\sqrt{\datdim}+2^{3/2}\sqrt{\frac{2^{\datdim/2}}{\Gamma(\datdim/2)}}\tilde{c}^{\datdim/2+1}C_{3}\right)\right\},
\end{align*}
and
\[
    \tilde{\polyorder} = (1+d^{-1})\polyorder+(2-d^{-1}).
\]
Taking $C_{P,\datdim}(\polyorder_m) = \left(C_{P,\datdim}^{(1)}\right)^{1/\polyorder}$ completes the proof. 
\qed

\section{Results concerning approximation via convolution }

This section collects results concerning convolution. Specifically,
we consider the convolution operator $T_{\varphi,\rho}$ defined as
\[
T_{\varphi,\rho}f(x)=\frac{1}{\rho^{\datdim}}\int f(x-y)\varphi(y/\rho)\dd y,
\]
where $f\in L^{1}(\bbR^{\datdim'},\lambda),$ $\varphi\in L^{1}\bigl(\lambda\bigr)$
with $\int\varphi(x)\dd x=1,$ $\lambda$ the $\datdim$-dimensional
Lebesgue measure, and $\rho>0.$ 
\begin{lem}[Approximation error] 
\label{lem:convolution-diff-bound} Let $g\in{\cal C}^{1}\bigl(\bbR^{\datdim}).$
For $r\in(0,\infty],$ let $\iota_{r}\in{\cal C}^{1}:\bbR^{\datdim}\to [0,1]$ 
such that $\iota_{r}(x)=0$ for $\Verts x_{2}>r.$ Let $\tilde{g}_{r}=\iota_{r}g.$
For $w\in{\cal C}^{1}:\bbR^{\datdim}\to(0,\infty),$ define 
\[
B_{\weight}(z)=\sup_{x\in\bbR^{\datdim},u\in[0,1]}\weight(x)/\weight(x-uz)\ \text{and}\ M_{\weight}=\sup_{x\in\bbR^{\datdim}}\Verts{\nabla\log\weight(x)}_{2}.
\]
Let $\varphi:\bbR^{\datdim}\to[0,\infty]$ be a function such that
$\int\varphi(x)\dd x=1,$ $\mu_{\varphi,\rho,\weight}=\int\Verts x_{2}B_{\weight}(\rho x)\varphi(x)\dd x<\infty,$
and $\mu_{\varphi}=\int\Verts x_{2}\varphi(x)\dd x.$ With fixed $\rho>0,$
define the convolution 
\[
T_{\varphi,\rho}\tilde{g}_{r}^{\weight}(x)=\frac{1}{\rho^{\datdim}}\int\tilde{g}_{r}^{w}(x-z)\varphi(z/\rho)\dd z,\ \text{where}\ \tilde{g}_{r}^{\weight}(x)\coloneqq\frac{\tilde{g}_{r}(x)}{w(x)}.
\]
Then, for any $x\in\bbR^{\datdim},$ we have 
\begin{align*}
 & \Verts{T_{\varphi,\rho}\tilde{g}_{r}^{\weight}(x)-\tilde{g}_{r}^{\weight}(x)}_{2}\\
 & \leq\frac{\rho}{\weight(x)}\left\{ M_{\weight}\mu_{\varphi,\rho,\weight}\sup_{\Verts y_{2}\leq r}\Verts{g(y)}_{2}+\mu_{\varphi}\Bigl(\sup_{\Verts y_{2}\leq r}\norm{\nabla g(y)}_{\mathrm{op}}+C_{r,1}\sup_{\Verts y_{2}\leq r}\norm{g(y)}_{\mathrm{2}}\Bigr)\right\} ,
\end{align*}
where $C_{r,1}=\sup_{x\in\bbR^{\datdim}}\Verts{\nabla\iota_{r}(x)}_{2}.$ 
\end{lem}

\begin{proof}
Let $Z$ be a random variable whose law is a distribution having $\varphi$
as the density with respect to the Lebesgue measure. By the definition
of $T_{\varphi,\rho}\tilde{g}_{r}^{\weight},$ we have 
\[
T_{\varphi,\rho}\tilde{g}_{r}^{\weight}(x)-\tilde{g}_{r}^{\weight}(x)=\EE_{Z}\left[\tilde{g}_{r}(x-\rho Z)\left\{ \frac{1}{\weight(x-\rho Z)}-\frac{1}{\weight(x)}\right\} +\frac{\tilde{g}_{r}(x-\rho Z)-\tilde{g}_{r}(x)}{\weight(x)}\right].
\]
To bound each quantity inside the expectation, we derive their norm
estimates as follows. First, we have 

\begin{align*}
\Verts{\tilde{g}_{r}(x-\rho Z)-\tilde{g}_{r}(x)}_{2} & \leq\rho\Verts{\nabla\tilde{g}_{r}(x-t\rho Z)}_{\mathrm{op}}\Verts Z_{2}\\
 & \leq\rho\Verts Z_{2}\sup_{y\in\bbR^{\datdim}}\Verts{\nabla\tilde{g}_{r}(y)}_{\mathrm{op}}\\
 & \leq\rho\Verts Z_{2}\left(\max_{\Verts y_{2}\leq r}\norm{\nabla g(y)}_{\mathrm{op}}+\max_{\Verts y_{2}\leq r}\norm{g(y)}_{\mathrm{2}}\cdot\sup_{y\in\bbR^{\datdim}}\Verts{\nabla\iota_{r}(y)}_{2}\right),
\end{align*}
where the first line is due to the mean value theorem with $t\in(0,1).$
Similarly, we obtain 
\begin{align}
\left|\frac{1}{\weight(x-\rho Z)}-\frac{1}{\weight(x)}\right| & \leq\rho\Verts Z_{2}\norm{\nabla_{x}\frac{1}{w(x-t\rho Z)}}_{2}\nonumber \\
 & \leq\frac{\rho\Verts Z_{2}}{\weight(x)}\frac{\weight(x)}{\weight(x-t\rho Z)}\norm{\nabla_{x}\log w(x-t\rho Z)}_{2}\\
 & \leq\frac{\rho\norm Z_{2}}{\weight(x)}B_{\weight}(\rho Z)M_{\weight},\nonumber 
\end{align}
where $t\in(0,1).$ Combining these evaluations yields
\begin{align*}
\Verts{T_{\varphi,\rho}\tilde{g}_{r}^{\weight}(x)-\tilde{g}_{r}^{\weight}(x)}_{2} & \leq\EE_{Z}\left[\Verts{\tilde{g}_{r}^{\weight}(x-\rho Z)}_{2}\left|\frac{1}{\weight(x-\rho Z)}-\frac{1}{\weight(x)}\right|\right]\\
 & \hphantom{\leq}\quad+\frac{1}{\weight(x)}\EE_{Z}\left[\Verts{\tilde{g}_{r}^{\weight}(x-\rho Z)-\tilde{g}_{r}^{\weight}(x)}_{2}\right]\\
 & \leq\frac{\rho}{\weight(x)}\left[M_{\weight}\EE_{Z}\Bigl[\norm Z_{2}B_{\weight}(\rho Z)\Bigr]\max_{\Verts y_{2}\leq r}\Verts{g(y)}\right.\\
 & \hphantom{\leq\frac{\rho}{\weight(x)}}\left.\quad+\mathbb{E}_{Z}\Verts Z_{2}\left(\max_{\Verts y_{2}\leq r}\norm{\nabla g(y)}_{\mathrm{op}}+\max_{\Verts y_{2}\leq r}\norm{g(y)}_{\mathrm{2}}\cdot\sup_{y\in\bbR^{\datdim}}\Verts{\nabla\iota_{r}(y)}_{2}\right)\right]
\end{align*}
\end{proof}
\begin{cor}[Approximation error with respect to solutions of Stein equations]
\label{cor:convolution-diff-bound-stein}Let $v:\bbR^{\datdim}\to\bbR^{\datdim}$
be a function satisfying the growth conditions in Lemma \ref{lem:finite-stein-factors}.
Define symbols as in Lemma \ref{lem:convolution-diff-bound}. Then, we have
\begin{align*}
 & \Verts{T_{\varphi,\rho}\tilde{v}_{r}^{\weight}(x)-\tilde{v}_{r}^{\weight}(x)}_{2}\\
 & \leq\frac{\rho(1+r^{\polyorder-1})}{\weight(x)}\left[\sqrt{\datdim}\zeta_{1}M_{\weight}\mu_{\varphi,\rho,\weight}+\mu_{\varphi}\bigl(2^{-1}\zeta_{2}+\sqrt{\datdim}\zeta_{1}C_{r,1}\big)\right]
\end{align*}
for any $x\in\bbR^{\datdim}.$ 
\end{cor}

\begin{lem}[Derivative approximation error]
\label{lem:convolution-grad-diff-bound} Let $g\in{\cal C}^{2}(\bbR^{\datdim}).$
For $r\in(0,\infty],$ let $\iota_{r}\in{\cal C}^{2}:\bbR^{\datdim}\to[0,1]$
such that $\iota_{r}(x)=0$ for $\Verts x_{2}>r.$ Let $\tilde{g}_{r}=\iota_{r}g.$
For $w\in{\cal C}^{2}:\bbR^{\datdim}\to(0,\infty),$ define the following quantities: 
\[
B_{\weight}(z)=\sup_{x\in\bbR^{\datdim},u\in[0,1]}\weight(x)/\weight(x-uz)
\]
\[
M_{\weight}=\sup_{x\in\bbR^{\datdim}}\Verts{\nabla\log\weight(x)}_{2},\ N_{\weight}=\sup_{x\in\bbR^{\datdim}} \frac{\Verts{\nabla^{2}\log\weight(x)}_{\mathrm{op}}}{\weight(x)}
\]
and
\[
O_{\weight} = \sup_{x\in\bbR^{\datdim}} \frac{\Verts{\nabla\log\weight(x)}_2^2}{\weight(x)}. 
\]
Let $\varphi:\bbR^{\datdim}\to[0,\infty]$ be a function such that
$\int\varphi(x)\dd x=1,$ $\mu_{\varphi,\rho,\weight}=\int\Verts x_{2}B_{\weight}(\rho x)\varphi(x)\dd x<\infty,$
and $\mu_{\varphi}=\int\Verts x_{2}\varphi(x)\dd x.$ With fixed $\rho>0,$
define the convolution 
\[
T_{\varphi,\rho}\tilde{g}_{r}^{\weight}(x)=\frac{1}{\rho^{\datdim}}\int\tilde{g}_{r}^{w}(x-z)\varphi(z/\rho)\dd z,\ \text{where}\ \tilde{g}_{r}^{\weight}(x)\coloneqq\frac{\tilde{g}_{r}(x)}{w(x)}.
\]
Assume $\nabla T_{\varphi,\rho}\tilde{g}_{r}^{\weight}(x)=T_{\varphi,\rho}\nabla\tilde{g}_{r}^{\weight}(x)$
for each $x\in\bbR^{\datdim}.$ Then, for any $x\in\bbR^{\datdim},$
we have 
\begin{align*}
 & \Verts{\nabla T_{\varphi,\rho}\tilde{g}_{r}^{\weight}(x)-\nabla\tilde{g}_{r}^{\weight}(x)}_{\mathrm{op}}\\
 & \leq\frac{\rho}{\weight(x)}\left[ \left\{M_{\weight}\mu_{\varphi,\rho,\weight}+\left(N_{\weight}+O_{\weight}\right)\mu_{\varphi}\right\}\left(\sup_{\Verts y_{2}\leq r}\norm{\nabla g(y)}_{\mathrm{op}}+C_{r,1}\sup_{\Verts y_{2}\leq r}\norm{g(y)}_{2}\right)\right.\\
 & \hphantom{\leq\frac{\rho}{\weight(x)}}\quad\left.+(\mu_{\varphi}+M_{\weight}\mu_{\varphi,\rho,\weight})\sup_{\Verts y_{2}\leq r}\left(\Verts{\nabla^{2}g(y)}_{\mathrm{op}}+2C_{r,1}\Verts{\nabla g(y)}_{\mathrm{op}}+C_{r,2}\Verts{g(y)}_{2}\right)\right] ,
\end{align*}
where $C_{r,1}=\sup_{x\in\bbR^{\datdim}}\Verts{\nabla\iota_{r}(x)}_{2},$
$C_{r,2}=\sup_{x\in\bbR^{\datdim}}\Verts{\nabla^{2}\iota_{r}(x)}_{\mathrm{op}}.$ 
\end{lem}

\begin{proof}

First, note that 
\begin{align}
\begin{aligned} & \Verts{\nabla T_{\varphi,\rho}\tilde{g}_{r}^{\weight}(x)-\nabla\tilde{g}_{r}^{\weight}(x)}_{\mathrm{op}}=\norm{\Ex_{Z}\Bigl[\nabla\tilde{g}_{r}^{\weight}(x-\rho Z)-\nabla\tilde{g}_{r}^{\weight}(x)\Bigr]}_{\mathrm{op}}\\
 & \leq\underbrace{\norm{\Ex_{Z}\left[\frac{\nabla\tilde{g}_{r}(x-\rho Z)}{\weight(x-\rho Z)}-\frac{\nabla\tilde{g}_{r}(x)}{\weight(x)}\right]}_{\mathrm{op}}}_{\mathrm{(a)}}\\
 & \hphantom{\leq\quad}+\underbrace{\norm{\Ex_{Z}\left[\frac{\nabla\log\weight(t-\rho Z)}{\weight(x-\rho Z)}\otimes\nabla\tilde{g}_{r}(x-\rho Z)-\frac{\nabla\log\weight(x)}{\weight(x)}\otimes\nabla\tilde{g}_{r}(x)\right]}_{\mathrm{op}}}_{\mathrm{(b)}},
\end{aligned}
\label{eq:gv-grad-diff}
\end{align}
In the first line, we have interchanged the gradient and the expectation
operation. We evaluate each term below.

The term (a) is evaluated as 
\begin{align}
\norm{\Ex_{Z}\left[\frac{\nabla\tilde{g}_{r}(x-\rho Z)}{\weight(x-\rho Z)}-\frac{\nabla\tilde{g}_{r}(x)}{\weight(x)}\right]}_{\mathrm{op}} & \leq\underbrace{\norm{\Ex_{Z}\left[\nabla\tilde{g}_{r}(x-\rho Z)\left(\frac{1}{\weight(x-\rho Z)}-\frac{1}{\weight(x)}\right)\right]}_{\mathrm{op}}}_{(\mathrm{a1})}\nonumber \\
 & \hphantom{\leq}\quad+\underbrace{\frac{1}{\weight(x)}\norm{\Ex_{Z}\left[\nabla\tilde{g}_{r}(x-\rho Z)-\nabla\tilde{g}_{r}(x)\right]}_{\mathrm{op}}}_{\mathrm{(a2)}}.\label{eq:grad-diff-bound-first}
\end{align}
The term (a1) is bounded as 
\begin{align*}
\mathrm{(a1)} & \leq\Ex_{Z}\left[\left|\frac{1}{\weight(x-\rho Z)}-\frac{1}{\weight(x)}\right|\Verts{\nabla\tilde{g}_{r}(x-\rho Z)}_{\mathrm{op}}\right]\\
 & \leq\frac{\rho}{\weight(x)}M_{\weight}\Ex_{Z}\left[\norm Z_{2}B_{\weight}(\rho Z)\right]\sup_{x\in\bbR^{\datdim}}\Verts{\nabla\tilde{g}_{r}(x)}_{\mathrm{op}}
\end{align*}
The term (a2) is evaluated as follows. By the mean value theorem,
for some $t\in(0,1),$ 
\[
\nabla\tilde{g}_{r}(x-\rho Z)-\nabla\tilde{g}_{r}(x)=-\rho\nabla^{2}\tilde{g}_{r}(x-t\rho Z)\nmodemul 1Z,
\]
where $(A\times_{1}v)_{ij}=\sum_{k}A_{kij}v_{k}$ for $A\in\bbR^{\datdim_{1}\times\datdim_{2}\times \datdim_{3}}$
and $v\in\bbR^{\datdim_{1}}.$ This implies 
\begin{align*}
\Verts{\nabla\tilde{g}_{r}(x-\rho Z)-\nabla\tilde{g}_{r}(x)}_{\mathrm{op}} & =\rho\Verts{\nabla^{2}\tilde{g}_{r}(x-t\rho Z)\nmodemul 1Z}_{\mathrm{op}}\\
 & =\rho\sup_{\Verts{u_{2}}_{2}=1,\Verts{u_{3}}_{2}=1}\left|\la\nabla^{2}\tilde{g}_{r}(x-t\rho Z),Z\otimes u_{2}\otimes u_{2}\ra\right|\\
 & \leq\rho\Verts Z_{2}\sup_{y\in\bbR^{\datdim}}\Verts{\nabla^{2}\tilde{g}_{r}(y)}_{\mathrm{op}}.
\end{align*}
Thus, the operator norm in the term (a2) is bounded as 

\begin{align*}
\norm{\Ex_{Z}\bigl[\nabla\tilde{g}_{r}(x-\rho Z)-\nabla\tilde{g}_{r}(x)\bigr]}_{\mathrm{op}} & \leq\Ex_{Z}\bigl[\Verts{\nabla\tilde{g}_{r}(x-\rho Z)-\nabla\tilde{g}_{r}(x)}_{\mathrm{op}}\bigr]\\
 & \leq\rho\Ex_{Z}\Verts Z_{2}\sup_{y\in\bbR^{\datdim}}\Verts{\nabla^{2}\tilde{g}_{r}(y)}_{\mathrm{op}}.
\end{align*}
Multiplying by $w(x)^{-1}$ completes the evaluation of (a2). 

Similarly, we can evaluate the term (b) in (\ref{eq:gv-grad-diff}).
We again apply the mean value theorem to obtain 
\begin{align*}
 & \EE_{Z}\left\Vert \frac{\nabla\log\weight(t-\rho Z)}{\weight(x-\rho Z)}-\frac{\nabla\log\weight(x)}{w(x)}\right\Vert _{2}\\
 & \leq\rho\EE_{Z}\left[\Verts Z_{2}\right]\cdot\sup_{y\in\bbR^{\datdim}}\left(\left\Vert \frac{\nabla^{2}\log\weight(y)}{\weight(y)}\right\Vert _{\mathrm{op}}+\frac{1}{\weight(y)}\left\Vert \nabla\log\weight(y)\right\Vert _{2}^{2}\right).
\end{align*}
Then, 
\begin{align*}
\mathrm{(b)} & =\norm{\Ex_{Z}\left[\frac{\nabla\log\weight(t-\rho Z)}{\weight(x-\rho Z)}\otimes\nabla\tilde{g}_{r}(x-\rho Z)-\frac{\nabla\log\weight(x)}{\weight(x)}\otimes\nabla\tilde{g}_{r}(x)\right]}_{\mathrm{op}}\\
 & \leq\frac{1}{\weight(x)}\norm{\Ex_{Z}\left[\frac{\weight(x)\nabla\log\weight(t-\rho Z)}{\weight(x-\rho Z)}\otimes\bigl(\nabla\tilde{g}_{r}(x-\rho Z)-\nabla\tilde{g}_{r}(x)\bigr)\right]}_{\mathrm{op}}\\
 & \hphantom{\leq}\quad+\norm{\Ex_{Z}\left[\left(\frac{\nabla\log\weight(t-\rho Z)}{\weight(x-\rho Z)}-\frac{\nabla\log\weight(x)}{\weight(x)}\right)\otimes\nabla\tilde{g}_{r}(x)\right]}_{\mathrm{op}}\\
 & \leq\frac{1}{\weight(x)}\Ex_{Z}\norm{\frac{w(x)\bigl(\nabla\log\weight(t-\rho Z)\bigr)}{\weight(x-\rho Z)}}_{2}\Verts{\nabla\tilde{g}_{r}(x-\rho Z)-\nabla\tilde{g}_{r}(x)}_{\mathrm{op}}\\
 & \hphantom{\leq}\quad+\Verts{\nabla\tilde{g}_{r}(x)}_{\mathrm{op}}\EE_{Z}\norm{\frac{\nabla\log\weight(t-\rho Z)}{\weight(x-\rho Z)}-\frac{\nabla\log\weight(x)}{w(x)}}_{2}\\
 & \leq\frac{\rho M_{\weight}}{\weight(x)}\Ex_{Z}\left[\Verts Z_{2}B_{\weight}(\rho Z)\right]\sup_{y\in\bbR^{\datdim}}\Verts{\nabla^{2}\tilde{g}_{r}(y)}_{\mathrm{op}}\\
 & \hphantom{\leq}\quad+\frac{\rho}{\weight(x)}\left(N_{\weight}+O_{\weight}\right)\EE_{Z}\left[\Verts Z_{2}\right]\Verts{\nabla\tilde{g}_{r}(x)}_{2}\\
 & \leq\frac{\rho}{\weight(x)}\left\{ M_{\weight}\Ex_{Z}\left[\Verts Z_{2}B_{\weight}(\rho Z)\right]\sup_{y\in\bbR^{\datdim}}\Verts{\nabla^{2}\tilde{g}_{r}(y)}_{\mathrm{op}}+\left(N_{\weight}+O_{\weight}\right)\EE_{Z}\left[\Verts Z_{2}\right]\Verts{\nabla\tilde{g}_{r}(x)}_{\mathrm{op}}\right\} .
\end{align*}

Since the operator norm of $\nabla^{2}\tilde{g}_{r}$ is bounded by
\begin{align}
\begin{aligned} & \sup_{x\in\bbR^{\datdim}}\Verts{\nabla^{2}\tilde{g}_{r}(x)}_{\mathrm{op}}\\
 & \leq\sup_{x\in\bbR^{\datdim}}\left(\iota_{r}(x)\Verts{\nabla^{2}g(x)}_{\mathrm{op}}+2\Verts{\nabla\iota_{r}(x)}_{2}\Verts{\nabla g(x)}_{\mathrm{op}}+\Verts{\nabla^{2}\iota_{r}(x)}_{\mathrm{op}}\Verts{g(x)}_{2}\right).
\end{aligned}
\label{eq:gkdelta-grad2-bound}
\end{align}
Combining the above evaluations, we obtain 
\begin{align*}
 & \Verts{\nabla T_{\varphi,\rho}\tilde{g}_{r}^{\weight}(x)-\nabla\tilde{g}_{r}^{\weight}(x)}_{\mathrm{op}}\\
 & \leq\frac{\rho}{\weight(x)}\left[\left\{ M_{\weight}\Ex_{Z}\left[\norm Z_{2}B_{\weight}(\rho Z)\right]+\left(N_{\weight}+O_{\weight}\right)\EE_{Z}\left[\Verts Z_{2}\right]\right\} \cdot A_{r,1} \right.\\
 & \hphantom{\leq\frac{\rho}{\weight(x)}}\quad\left.+\left(\Ex_{Z}\Verts Z_{2}+M_{\weight}\Ex_{Z}\left[\Verts Z_{2}B_{\weight}(\rho Z)\right]\right)A_{r,1}\right],
\end{align*}
where 
\[
A_{r,1} = \left(\sup_{\Verts y_{2}\leq r}\norm{\nabla g(y)}_{\mathrm{op}}+C_{r,1}\sup_{\Verts y_{2}\leq r}\norm{g(y)}_{\mathrm{2}}\right),
\]
and 
\[
A_{r,2} = \sup_{\Verts y_{2}\leq r}\left(\Verts{\nabla^{2}g(y)}_{\mathrm{op}}+2C_{r,1}\Verts{\nabla g(y)}_{2}+C_{r,2}\Verts{g(y)}_{2}\right)
\]
with $C_{r,1}=\sup_{x\in\bbR^{\datdim}}\Verts{\nabla\iota_{r}(x)}_{2}$ and 
$C_{r,2}=\sup_{x\in\bbR^{\datdim}}\Verts{\nabla^{2}\iota_{r}(x)}_{\mathrm{op}}$. 
\end{proof}
\begin{cor}[Derivative approximation error with respect to solutions of Stein equations]
\label{cor:convolution-grad-diff-bound-stein}Define symbols as in
Lemma \ref{lem:convolution-diff-bound}. Let $v:\bbR^{\datdim}\to\bbR^{\datdim}$
be a function satisfying the growth conditions in Lemma \ref{lem:finite-stein-factors}.
For each fixed $\rho>0,$ we have 
\[
\Verts{\nabla T_{\varphi,\rho}\tilde{g}_{r}^{\weight}(x)-\nabla\tilde{g}_{r}^{\weight}(x)}_{\mathrm{op}}\leq\frac{\rho(1+r^{\polyorder-1})}{\weight(x)}\tilde{u}_{P,\datdim,\weight,\rho}^{(2)},
\]
where 
\begin{align*}
\tilde{u}_{P,\datdim,\weight,\rho}^{(2)} & =\left[M_{\weight}\left\{ \mu_{\varphi,\rho,\weight}+\left(N_{\weight}+O_{\weight}\right)\mu_{\varphi}\right\} \left(2^{-1}\zeta_{2}+C_{r,1}\zeta_{1}\sqrt{\datdim}\right)\right.\\
 & \hphantom{=}\quad+\left.\left(\mu_{\varphi}+M_{\weight}\mu_{\varphi,\rho,\weight}\right)\left(2^{-1}\zeta_{3}+C_{r,1}\zeta_{2}+C_{r,2}\sqrt{\datdim}\zeta_{1}\right)\right].
\end{align*}
\end{cor}

\begin{lem}[Growth of convolution functions]
\label{lem:convolution-decay-rate}
Let $\Verts{\cdot}$ be an arbitrary
norm on $\bbR^{\datdim}.$ Let ${\cal S}$ be a separable Banach space
with norm $\Verts{\cdot}_{{\cal S}}.$ Let $f:\bbR^{\datdim}\to\mathcal{S}$
be a function supported in $\{\Verts x\leq r_{f}\}$ for some $r_{f}>0.$
Let $g:\bbR^{\datdim}\to\bbR$ be a real-valued function satisfying
\begin{align*}
\verts{g(x)} & \leq C_{g}\Delta_{g}\bigl(\Verts x\bigr),
\end{align*}
for $\Verts x\geq r_{g},$ some positive constants $C_{g},$ $r_{g},$
and a non-increasing function $\Delta_{g}.$ Then, the convolution
\[
h(x)=\int f(y)g(x-y)\dd y
\]
satisfies 
\begin{align*}
\Verts{h(x)}_{{\cal S}} & \leq C_{g}\left(\int\Verts{f(y)}_{{\cal S}}\dd y\right)\cdot\Delta_{g}(2^{-1}\Verts x)
\end{align*}
for $\Verts x\geq2(r_{f}\lor r_{g}),$ where $h$ is defined as the
Bochner integral. In particular, if $\Verts f_{\infty}\coloneqq\sup_{x\in\bbR^{\datdim}}\Verts{f(x)}_{{\cal S}}<\infty,$
\[
\Verts{h(x)}_{{\cal S}}\leq C_{g}\Verts f_{\infty}\lambda(\{\Verts x\leq r_{f}\})\Delta_{g}(2^{-1}\Verts x),
\]
where $\lambda$ is the Lebesgue measure. 
\end{lem}

\begin{proof}
Let $r=(r_{f}\lor r_{g}).$ For $\Verts y\leq r$ and $\Verts x>2r\geq2\Verts y,$
we have 
\begin{align*}
\Verts{x-y} & \geq\Verts x-\Verts y\\
 & \geq\frac{\Verts x}{2}>r.
\end{align*}
Thus, under these conditions, we obtain 
\begin{align*}
\Verts{h(x)}_{{\cal S}} & \leq\int\verts{g(x-y)}\Verts{f(y)}_{{\cal S}}\dd y  =\int_{\Verts y\leq r}\verts{g(x-y)}\Verts{f(y)}_{{\cal S}}\dd y\\
 & \leq C_{g}\int_{\Verts y\leq r}\Delta_{g}(\Verts{x-y})\Verts{f(y)}_{{\cal S}}\dd y\\
 & \leq C_{g}\Delta_{g}(2^{-1}\Verts x)\int_{\Verts y\leq r}\Verts{f(y)}_{{\cal S}}\dd y.
\end{align*}
\end{proof}
\section{Known results \label{sec:Known-results}}
\begin{thm}[{Finite Stein factors from Wasserstein decay, \citealt[Theorem 3.2]{Erdogdu2018}}]
\label{thm:finite-stein-factors-reference}Suppose Assumptions \ref{assu:poly-grow-coef},
\ref{assu:dissipativity-diffusion}, and \ref{assu:diff-wasserstein-rate}
hold. Assume $f$ is pseudo-Lipschitz of order $\polyorder$ with
at most degree-$\polyorder$ polynomial growth of its $i$-th derivatives
for $i=2,3,4$. Then, the solution $u_{f}$ to the equation (\ref{eq:SteinEq-generator})
is pseudo-Lipschitz of order $\polyorder$ with constant $\zeta_{1},$
and has $i$th order derivative with degree-$\polyorder$ polynomial
growth for $i=2,3,4:$ 
\[
\Verts{\nabla^{i}u_{f}(x)}_{\mathrm{op}}\leq\zeta_{i}(1+\Verts x_{2}^{\polyorder})\ \text{for }i\in\{2,3,4\},\ \text{and }x\in\bbR^{\datdim}.
\]
The constants $\zeta_{i}$ (called Stein factors) are given as follows:
\begin{align*}
\zeta_{i} & =\tau_{i}+\xi_{i}\int_{0}^{\infty}\rho_{1}(t)\omega_{\polyorder_{m}+1}(t+i-2)\dd t\ \text{for }i=1,2,3,4,
\end{align*}
where 
\[
\omega_{\polyorder_{m}+1}(t)=1+4\rho_{1}(t)^{1-1/(\polyorder_{m}+1)}\rho_{1}(0)^{\frac{1}{2}}\left[1+\frac{1}{\tilde{\alpha}_{\polyorder_{m}+1}^{\polyorder}}\{(1\vee\tilde{\rho}_{\polyorder_{m}+1}(t))8\lambda_{m}\polyorder+3(\polyorder_{m}+1)\beta\}^{\polyorder}\right],
\]
with $\tilde{\alpha}_{1}=\alpha,$ $\tilde{\alpha}_{2}=\inf_{t\geq0}[\alpha-4\polyorder\lambda_{m}(1\vee\tilde{\rho}_{2}(t)]_{+},$
\begin{align*}
\tau_{1}=0,\ \tau_{i} & =\plipconst f{\polyorder}\degnpolycoef f{2:i}{\polyorder}\tilde{\nu}_{1:\polyorder}(\sigma)\kappa_{\polyorder_{m}}(6\polyorder)\ \text{for }i=2,3,4,\\
\xi_{1}=\plipconst f{\polyorder}, & \xi_{i}=\plipconst f{\polyorder}\tilde{\nu}_{1:i}(b)\tilde{\nu}_{0:i-2}(\sigma^{-1})\rho_{1}(0)\omega_{\polyorder_{m}+1}(1)\kappa_{\polyorder_{m}+1}(6\polyorder)^{i-1}\ \text{for }i=2,3,4,
\end{align*}
where $\degnpolycoef fi{\polyorder}=\sup_{x\in\bbR^{\datdim}}\Verts{\nabla^{i}f(x)}_{\mathrm{op}}/(1+\Verts {x}_2^{\polyorder}),$
$\degnpolycoef f{a:b}{\polyorder}\coloneqq\max_{i=a,\dots,b}\degnpolycoef fi{\polyorder},$
$\tilde{\nu}_{a:b}(g)$ is a constant whose precise form is given
in the proof of \citet[Theorem 3.2]{Erdogdu2018}, and 
\[
\kappa_{\polyorder_{m}+1}(\polyorder)=2+\frac{2\beta}{\alpha}+\frac{\polyorder\lambda_{m}}{\alpha}+\frac{\tilde{\alpha}_{\polyorder_{m}+1}}{\alpha}\left(\frac{4\polyorder\lambda_{m}+6(\polyorder_{m}+1)\beta}{2r\tilde{\alpha}_{\polyorder_{m}+1}}\right).
\]
\end{thm}

\begin{thm}[{\citealt[Theorem 4.15]{EliasM.Stein1971}}]
\label{thm:Bochner-Riesz-mean-func} If $\Phi:\bbR^{\datdim}\to[0,1]$
is the function defined by 
\[
\Phi(t)=\begin{cases}
\bigl(1-\Verts t_{2}^{2}\bigr)^{\delta} & \Verts t_{2}\leq1\\
0 & \Verts t_{2}>1,
\end{cases}
\]
 where $\delta>0,$ then 
\[
\hat{\Phi}(x) %
= 2^{\delta} \Gamma(\delta+1)\Verts{x}_2^{-\datdim/2 - \delta} J_{\datdim/2+\delta}\bigl( \Verts{x}_2\bigr)
\]
with $J_{\datdim/2+\delta}$ the Bessel function of the first kind
with order $\datdim/2+\delta.$
\end{thm}

\begin{rem*}
The above result differs from the cited result \citep[Theorem 4.15]{EliasM.Stein1971}
in the definition of the Fourier transform. We use 
\[
\hat{f}(\omega)=\frac{1}{(2\pi)^{\datdim/2}}\int f(x)e^{-i\la x,\omega\ra}\dd x
\]
whereas \citet{EliasM.Stein1971} uses 
\[
\tilde{f}(\omega)=\int f(x)e^{-i2\pi\la x,\omega\ra}\dd x.
\]
We have $\tilde{f}(\omega)=(2\pi)^{\datdim/2}\hat{f}(2\pi\omega).$ The same
applies to the inverse transform. 
\end{rem*}

\section{Miscellaneous results \label{sec:Miscellaneous-results}}
\begin{lem}[Constants for multiquadratic functions]
\label{lem:multiquadratic-helper}
Let $\weight(x)=\Bigl(a+b\Verts{x-\mu}_{2}^{2}\Bigr)^{\polyorder}$
with $a>0,$ $b>0,$ $\polyorder\in \bbR$, and $\mu\in\bbR^{\datdim}.$
Then,
\begin{align*}
B_{\weight}(z) & \coloneqq\sup_{x\in\bbR^{\datdim},u\in[0,1]}\frac{\weight(x)}{\weight(x-uz)}\\
& \leq\left\{1+2\left(1+\frac{b\Verts z_{2}^{2}}{a}\right)\right\}^{\verts\polyorder},
\end{align*}
\[
M_{\weight}\coloneqq\sup_{x\in\bbR^{\datdim}}\Verts{\nabla\log\weight(x)}_{2}=\sup_{x\in\bbR^{\datdim}}\frac{2b\verts\polyorder\Verts{x-\mu}_{2}}{a+b\Verts{x-\mu}_{2}^{2}}\leq\verts\polyorder\sqrt{\frac{b}{a}},
\]
 and 
\[
N_{\weight}\coloneqq\sup_{x\in\bbR^{\datdim}}\left\lVert\frac{\nabla^{2}\log\weight(x)}{\weight(x)}\right\rVert_{2}\leq6\verts\polyorder\frac{b}{a}.
\]
If $\polyorder \geq -1$, then 
\begin{align*}
O_{\weight} \coloneqq \sup_{x\in\bbR^{\datdim}} \frac{\left\lVert\nabla \log \weight(x)\right\rVert^2_2}{\weight(x)} 
&\leq (2b\polyorder)^2 \sup_{x\in\bbR^{\datdim}}  \frac{\Verts{x}_2^2}{\bigl(a+b\Verts{x}_2^2\bigr)^{2+\polyorder}} \\
&\leq \frac{4b\polyorder^2}{a^{\polyorder+1} } \left\{\frac{(1+\polyorder)^{1+\polyorder}}{(2+\polyorder)^{2+\polyorder}}\lor 1\{\polyorder=-1\}\right\}.
\end{align*}
\end{lem}

\begin{proof}
    Suppose $\polyorder\geq0$. With $\bar{x}=x-\mu,$ we have
\begin{align*}
\frac{\weight(x+uz)}{\weight(x)} & =\left(\frac{a+b\Verts{\bar{x}+uz}_{2}^{2}}{a+b\Verts{\bar{x}}_{2}^{2}}\right)^{\polyorder}\leq\left(1+\frac{\Verts{\bar{x}+uz}_{2}^{2}}{(a/b)+\Verts{\bar{x}}_{2}^{2}}\right)^{\polyorder}\\
 & \leq\left(1+2\frac{\Verts{\bar{x}}_{2}^{2}+\Verts z_{2}^{2}}{(a/b)+\Verts{\bar{x}}_{2}^{\polyorder}}\right)^{\polyorder}\\
 & \leq\left\{1+2\left(1+\frac{b\Verts z_{2}^{2}}{a}\right)\right\}^{\polyorder}.
\end{align*}
The first claim follows from the observation 
\[
\sup_{x\in\bbR^{\datdim},u\in[0,1]}\frac{\weight(x)}{\weight(x-uz)}\leq\sup_{u\in[0,1]}\sup_{x\in\bbR^{\datdim}}\frac{\weight(x+uz)}{\weight(x)}.
\]
The derivation above applies to $\polyorder<0$ (replace $\polyorder$ with $\verts{\polyorder}$). 
The second and final claims can be checked easily.  
The third claim follows from 
\begin{align*}
N_{\weight} & \coloneqq\sup_{x\in\bbR^{\datdim}}\left\Vert \frac{\nabla^{2}\log\weight(x)}{\weight(x)}\right\Vert _{\mathrm{op}}=\sup_{x\in\bbR^{\datdim}}\frac{1}{\weight(x)}\left\Vert \left(\frac{\nabla^{2}\weight(x)}{\weight(x)}-\frac{\nabla w(x)\otimes\nabla\weight(x)}{\weight(x)^{2}}\right)\right\Vert _{\mathrm{op}}\\
 & \leq\sup_{x\in\bbR^{\datdim}}8b^{2}\verts\polyorder\left\Vert \frac{\bar{x}\otimes\bar{x}}{\bigl(a+b\Verts{\bar{x}}_{2}^{2}\bigr)^{2}}\right\Vert _{\mathrm{op}}+2\verts\polyorder b\left\Vert \frac{I}{a+b\Verts{\bar{x}}_{2}^{2}}\right\Vert _{\mathrm{op}}\\
 & =\sup_{x\in\bbR^{\datdim}}8\verts\polyorder\frac{b\Verts{\bar{x}}_{2}^{2}}{\bigl(a+b\Verts{\bar{x}}_{2}^{2}\bigr)^{2}}+2\verts\polyorder b\frac{1}{a+b\Verts{\bar{x}}_{2}^{2}}\leq6\verts\polyorder\frac{b}{a}
\end{align*}
\end{proof}
\begin{lem}[Derivative constants of a multi-quadratic function]
\label{lem:multi-quad-derivatives}
Let $\polyorder \in \bbR$. 
The multiquadratic function $\weight(x) = (1+\Verts{x}^2_2)^\polyorder$ satisfies the following: 
\begin{align*}
    &\Verts{\nabla \weight(x)}_{\mathrm{op}} \leq \verts{\polyorder}\weight(x), 
    \Verts{\nabla^2 \weight(x)}_{\mathrm{op}} \leq \verts{\polyorder(\polyorder+1)} \weight(x),\ \text{and}\\
    &\Verts{\nabla^3 \weight(x)}_{\mathrm{op}} \leq \verts{\polyorder(\polyorder-1)(\polyorder+4)}\weight(x)
\end{align*}
\end{lem}
\begin{proof}
    The results follow from the following expressions: 
    \begin{align*}
        \nabla \weight(x) &= 2\polyorder (1+\Verts{x}_2^2)^{\polyorder-1} x,\\
        \nabla^2 \weight(x) &= 4\polyorder(\polyorder-1) (1+\Verts{x}_2^2)^{\polyorder-2} x\otimes x + 2\polyorder(1+\Verts{x}_2^2)^{\polyorder-1} \idmat, \\
        \nabla^3 \weight(x) &= 8\polyorder(\polyorder-1)(\polyorder-2) (1+\Verts{x}_2^2)^{\polyorder-3} x\otimes x\otimes x \\
        &\quad + 4\polyorder(\polyorder-1) (1+\Verts{x}_2^2)^{\polyorder-2} \sum_{i=1}^\datdim \left(e_i \otimes e_i \otimes x + e_i \otimes x \otimes e_i + x\otimes e_i \otimes e_i\right). 
    \end{align*}
\end{proof}
\begin{lem}[$\calC^2$-cutoff function]
\label{lem:cutoff-bound-C2}Let $f(t)=t^{3}1_{(0,\infty)}(t).$ Let
\[
\text{\ensuremath{h(t)}}=\frac{f\bigl(r+\delta-t\bigr)}{f\bigl(r+\delta-t\bigr)+f\bigl(t-r\bigr)}.
\]
 Then, $h$ is twice continuously differentiable; we have $h(t)=0$
for $t\ge r+\delta,$ $h(t)=1$ for $t\leq r,$ and $0<h(t)<1$ otherwise.
Moreover, we have 
\begin{align*}
 & \left|\dv{h}{t}\right|(t)\leq3\delta^{-1},\\
 & \left|\dv[2]{h}{t}\right|(t)\leq\left(72+\frac{35+13\sqrt{13}}{12}\right)\delta^{-1}+2^{3+2/3}\delta^{-2}.
\end{align*}
\end{lem}

\begin{proof}
The first claim follows from the twice continous differentiablity
of $f,$ and the second claim holds by the definition of $f.$ To
evaluate the derivatives, we consider the interval $(r,r+\delta),$
as $h$ is constant outside this interval. By reparametrizing $t=\theta(1+\theta)^{-1}\delta+r$
with $\theta\in(0,\infty),$ 
\[
h(\theta)=\frac{1}{1+\theta^{3}}.
\]
Thus, 
\begin{align*}
\left|\dv{h}{t}\right|=\left|\dv{h}{\theta}\dv{\theta(t)}{t}\right| & \leq3\delta^{-1}\frac{\theta^{2}(1+\theta)^{2}}{(1+\theta^{3})^{2}}\\
 & \leq3\delta^{-1},
\end{align*}
and
\begin{align*}
\left|\dv[2]{h}{t}\right| & \le\left|\dv[2]{h}{\theta}\dv{\theta(t)}{t}\right|+\left|\dv{h}{\theta}\dv[2]{\theta(t)}{t}\right|\\
 & \leq \delta^{-1}\left(\frac{18\theta^{4}}{(1+\theta^{3})^{3}}+\frac{6\theta}{(1+\theta^{3})^{2}}\right)(1+\theta)^{2}+ \frac{3\theta^{2}}{(1+\theta^{3})^{2}}\cdot\frac{2(1+\theta)^{3}}{\delta^{2}} \\
 & \leq\left(18\cdot4+6\cdot\frac{35+13\sqrt{13}}{72}\right)\delta^{-1}+2^{3+2/3}\delta^{-2}
\end{align*}
 
\end{proof}
\begin{cor}[$\calC^2$-cutoff function on $\bbR^{\datdim}$]
\label{lem:cutoff-ball-bound-C2} $f(t)=t^{3}1_{(0,\infty)}(t).$
Let $1_{r,\delta}(x)=h\bigl(\Verts x_{2}\bigr)$ be a $\calC^{2}$-function
defined by 
\[
\text{\ensuremath{h(t)}}=\frac{f\bigl(r+\delta-t\bigr)}{f\bigl(r+\delta-t\bigr)+f\bigl(t-r\bigr)}.
\]
Then, we have $1_{r,\delta}(x)=1$ for $\Verts x_{2}\leq r,$ $0<1_{r,\delta}(x)<1$
for $r<\Verts x_{2}<r+\delta,$ and $1_{r,\delta}(x)=0$ for $\Verts x_{2}\geq r+\delta;$
the gradient $\nabla1_{r,\delta}(x)$ vanishes outside $\{x\in\bbR^{\datdim}:r<\Verts x_{2}<r+\delta\}.$
Furthermore, we can uniformly bound the derivative norms $\Verts{\nabla_{x}1_{r,\delta}(x)}_{2}$
and $\Verts{\nabla_{x}1_{r,\delta}(x)}_{2}$ by constants depending
only on $\delta.$ Specifically, 
\begin{align*}
\Verts{\nabla_{x}1_{r,\delta}(\Verts x_{2})}_{2}=\left\Vert \frac{x}{\Verts x_{2}}\left.\dv{h}{t}\right|_{t=\Verts x_{2}}\right\Vert _{2}\leq3\delta^{-1},
\end{align*}
and 
\begin{align*}
\Verts{\nabla_{x}^{2}1_{r,\delta}(x)}_{\mathrm{op}} & <\left|\dv[2]{h}{t}\bigl(\Verts x_{2}\bigr)\right|+\max_{r\leq t\leq r+\delta}\frac{2}{t}\left|\dv{h}{t}\bigl(\Verts x_{2}\bigr)\right|\\
 & \leq\left(72+\frac{35+13\sqrt{13}}{12}+\frac{6}{r}\right)\delta^{-1}+2^{3+2/3}\delta^{-2}
\end{align*}
\end{cor}

\begin{proof}
We check the statement about the second derivatives. For $r<\Verts x_{2}<r+\delta,$
\begin{align*}
\Verts{\nabla^{2}1_{r,\delta}(x)}_{\mathrm{op}} & =\norm{\nabla\left(h'(\Verts x)\frac{x}{\Verts x_{2}}\right)}_{\mathrm{op}}\\
 & =\norm{\nabla h'(\Verts x_{2})\otimes\frac{x}{\Verts x_{2}}}_{\mathrm{op}}+\frac{\verts{h'(\Verts x)}}{\Verts x_{2}}\norm{\idmat}_{\mathrm{op}}+\frac{\verts{h'(\Verts x)}}{\Verts x^{3}}\Verts{x\otimes x}_{\mathrm{op}}\\
 & \leq\verts{h''(\Verts x_{2})}+\frac{2}{\Verts x_{2}}\verts{h'(\Verts x)}.
\end{align*}
The rest of the proof follows from Lemma \ref{lem:cutoff-bound-C2}.
\end{proof}
\begin{lem}[Student's $t$-distributions are dissipative]
\label{lem:dissipative-student}For $\nu>2,$ let the density of the standard multivariate
$t$-distribution be 
\[
p(x)=\frac{\Gamma\left(\frac{\nu+\datdim}{2}\right)}{\Gamma\left(\frac{\nu}{2}\right)\nu^{\frac{\datdim}{2}}\pi^{\frac{\datdim}{2}}}\left(1+\frac{\Verts x_{2}^{2}}{\nu}\right)^{-\frac{\nu+\datdim}{2}}.
\]
Let $m(x)=(1+\nu^{-1}\Verts x_{2}^{2})\idmat.$ Then, Assumption \ref{assu:poly-grow-coef}
holds with $\lambda_{b}=(\nu+\datdim-2)/(2\nu)$ and $\lambda_{m}=\nu^{-1},$
and Assumption \ref{assu:dissipativity} with $\alpha=1-2\nu^{-1},$
$\beta_{1}=0,$ and $\beta_{0}=\datdim.$ Moreover, the diffusion
(\ref{eq:ito_diffusion}) satisfies the dissipativity condition in
Proposition \ref{prop:wasserstein-decay-uniform-diss} with $\polyorder=1$
and $\sigma(x)=\sqrt{1+\nu^{-1}\Verts x_{2}^{2}}\idmat.$ 
\end{lem}

\begin{proof}
We have
\begin{align*}
\nabla\log p(x)= & -\frac{\nu+\datdim}{\nu}\frac{x}{1+\nu^{-1}\Verts x_{2}^{2}},\ \text{and}\\
2b(x) & =m(x)\nabla\log p(x)+\la\nabla,m(x)\ra=-\frac{\nu+\datdim-2}{\nu}x.
\end{align*}
Thus, 
\begin{align*}
\Verts{b(x)}_{2} & =\frac{\nu+\datdim-2}{2\nu}\Verts x\ \text{and}\ \\
\Verts{m(x)}_{\mathrm{op}} & =1+\nu^{-1}\Verts x^{2}=\nu^{-1}(\nu+\Verts x_{2}^{2}).
\end{align*}
The dissipativity holds since 
\begin{align*}
2\la b(x),x\ra+\Verts{\sigma(x)}_{\mathrm{F}}^{2} & =-\frac{\nu+\datdim-2}{\nu}\Verts x^{2}+\datdim(1+\nu^{-1}\Verts x_{2}^{2})\\
 & =-(1-2\nu^{-1})\Verts x^{2}+\datdim.
\end{align*}
Moreover, we have 
\begin{align*}
 & 2\la b(x)-b(y),x-y\ra+\Verts{\sigma(x)-\sigma(y)}_{\mathrm{F}}^{2}-\Verts{\sigma(x)-\sigma(y)}_{\mathrm{op}}^{2}\\
 & =-\frac{\nu+\datdim-2}{\nu}\Verts{x-y}_{2}^{2}+(\datdim-1)\left(\sqrt{1+\nu^{-1}\Verts x_{2}^{2}}-\sqrt{1+\nu^{-1}\Verts y_{2}^{2}}\right)^{2}\\
 & \leq-\frac{\nu+\datdim-2}{\nu}\Verts{x-y}_{2}^{2}+\frac{\datdim-1}{\nu}\Verts{x-y}_{2}^{2}\\
 & =-\left(1-\frac{1}{\nu}\right)\Verts{x-y}_{2}^{2},
\end{align*}
showing the uniform dissipativity. 
\end{proof}

\begin{lem}[{Derivative norm estimates of Gaussian smoothing; an extension of \citealp[Lemma 2.2]{Mackey_2016}}\label{lem:plip-Gauss-conv-derivative-bounds}]
Let $G$ be a $\datdim$-dimensional standard normal vector, and $s>0$. 
Let $f:\bbR^{\datdim}\to\bbR$ be a pseudo-Lipschitz function of order $s$ with $\plipconst{f}{s} \leq 1$. 
Then, the Gaussian convolution $f_{t}(x)=\EE[f(x+tG)]$ satisfies, for each $t>0$,  
\begin{align*}
    M_{1, s}(f_{t})\leq 1+ C_{t,s},\ 
    M_{2, s}(f_{t})  \leq \frac{1}{t}\left(\sqrt{\frac{2}{\pi}} + C_{t,s}\right),\  
    M_{3, s}(f_{t}) \leq \frac{\sqrt{2}}{t^2} ( 1 + C_{t,s})
\end{align*}
where $M_{i, s}(f_{t})=\sup_{x\in\bbR^{\datdim}}\Verts{\nabla^{i}f_{t}(x)}_{\mathrm{op}} / (1+\Verts{x}_2^s)$ for $i\in\{1,2,3\}$, 
and 
\begin{equation*}
    C_{t, } =  1\{s>0\} \cdot (1\lor2^{s-1})\Bigl(1 + t^{s}\sqrt{\Ex_G[\Verts{G}_2^{2s}]}\Bigr). 
\end{equation*}
\end{lem}
\begin{proof}
    We address $s>0$, as the case $s=0$ is treated in \citet[Lemma 2.2]{Mackey_2016}. The following proof closely follow theirs. 
    
    Since a pseudo-Lipschitz function is locally Lipschitz, by Rademacher's theorem, the gradient of $f$ exists almost everywhere and satisfies 
    \begin{equation}
        \Verts{\nabla f(x)}_2 \leq 1 + \Verts{x}_2^{s}. \label{eq:plip-grad-upperbound}
    \end{equation}
    Moreover, from the proof of \citet[Lemma 2.2]{Mackey_2016}, we have 
    \begin{align*}
        \nabla f_t(x) &= \int \nabla f(x+tz) \phi(z) \dd z = \frac{1}{t} \int z f(x+tz) \phi(z) \dd z\\
        \nabla^2 f_t(x) &= \frac{1}{t} \int  \bigl(\nabla f(x+tz) \otimes z\bigr) \phi(z) \dd z 
            = \frac{1}{t^2} \int f(x+tz)  (z\otimes z - \idmat)  \phi(z) \dd z\\
        \nabla^3 f_t(x) &= \frac{1}{t^2} \int \nabla f(x+tz) \otimes (z\otimes z - \idmat) \phi(z) \dd z,
    \end{align*}
    where $\phi(z)$ is the density of the standard normal distribution. 
    
    From the first equality and the estimate \eqref{eq:plip-grad-upperbound}, we obtain 
    \begin{align*}
        \Verts{\nabla f_t(x)}_{\mathrm{op}} 
        & \leq  \int \bigl(1+c_{s}\Verts{x}_2^{s} + c_{s}t^{s}\Verts{z}_2^{s}\bigr) \phi(z) \dd z, 
    \end{align*}
    where $c_s = 1 \lor 2^{s-1}$. 
    From the second expression, 
    for any $v_1, v_2 \in \bbR^{\datdim}$ with $\Verts{v_1} =\Verts{v_2}=1$, 
    \begin{align*}
         \verts{ \la \nabla^2 f_t(x), v_1 \otimes v_2\ra}
         &= \frac{1}{t} \left\lvert \int  \la \nabla f(x+tz), v_1\ra \la z, v_2\ra \phi(z) \dd z \right \rvert\\
         &\leq \frac{1}{t} \int  \Verts{\nabla f(x+tz)}_{2} \verts{\la z, v_2\ra} \phi(z) \dd z \\
         &\leq \frac{1}{t} \left\{ \sqrt{\frac{2}{\pi}}\bigl (1 + c_{s}\Verts{x}_2^{s} \bigr) +  c_{s}t^{s}\sqrt{\int \Verts{z}_2^{2s} \phi(z) \dd z}\right\}, 
    \end{align*}
    where we have used the triangle inequality and the Cauchy-Schwarz inequality in the first inequality; 
    the second inequality follows from the estimate \eqref{eq:plip-grad-upperbound} $\la z,v_2\ra$ being a standard normal variable, and again the Cauchy-Schwarz inequality (for the integral). 

    Similarly, from the third equation, additionally taking any $v_3\in \bbR^{\datdim}$ with $\Verts{v_3}_2=1$, 
    \begin{align*}
         &\verts{\la \nabla^3 f_t(x), v_1\otimes v_2 \otimes v_3\ra} \\
         &= \frac{1}{t^2} \left \lvert \int \la \nabla f(x+tz), v_1\ra  \bigl(\la z_, v_2\ra \la z,v_3\ra - \la v_2, v_3 \ra \bigr) \phi(z) \dd z \right \rvert\\
         &\leq \frac{\sqrt{2}}{t^2}\left\{ \int (1+c_{s}\Verts{x}_2^{s} + c_{s}t^{s}\Verts{z}_2^{s} )\bigl\lvert\la z_, v_2\ra \la z,v_3\ra - \la v_2, v_3 \ra \bigr\rvert \phi(z) \dd z\right\}\\
         &\leq \frac{1}{t^2}\left( 1+c_{s}\Verts{x}_2^{s} + c_{s}\sqrt{\int \Verts{z}_2^{2s}\phi(z) \dd z } \right) \cdot \sqrt{ \int  \lvert\la z_, v_2\ra \la z,v_3\ra - \la v_2, v_3 \ra \bigr\rvert^2 \phi(z) \dd z } \\
         &= \frac{1}{t^2}\left( 1+c_{s}\Verts{x}_2^{s} + c_{s}\sqrt{\int \Verts{z}_2^{2s}\phi(z) \dd z } \right) \cdot \sqrt{ \la v_2, v_3\ra^2 + \Verts{v_2}_2\Verts{v_3}_2  } \\
         &\leq \frac{\sqrt{2}}{t^2}\left\{ (1+c_{s}\Verts{x}_2^{s}) + c_{s} t^{s}\sqrt{\int \Verts{z}_2^{2s}\phi(z)\dd z} \right\},
    \end{align*}
    where the second equality follows from Isserlis’ theorem. 
\end{proof}

\end{appendix}
\end{document}